\documentclass[12pt]{article}
\usepackage{fullpage}

\usepackage[T1]{fontenc}
\usepackage[utf8]{inputenc}
\usepackage{lmodern}
\usepackage{microtype}
\usepackage{amsmath,amssymb}
\usepackage{graphicx}
\usepackage{booktabs}
\usepackage{hyperref}
\usepackage{url}
\usepackage{color}
\usepackage{xcolor}
\usepackage{ulem}
\usepackage{enumitem}
\usepackage{marvosym}
\usepackage{amsfonts}

\usepackage{xcolor}
\usepackage[full]{textcomp}
\usepackage[osf]{newtxtext}

\usepackage{microtype}

\usepackage{amssymb}          
\usepackage{mathtools}

\usepackage{tikz}
\usepackage{amsthm}           
\usepackage{mhequ}
\usepackage{hyperref}

\newcommand{\eqdef}{\stackrel{\mbox{\tiny def}}{=}}

\numberwithin{equation}{section}

\DeclareMathOperator{\id}{id}
\def\Wick#1{\mathopen{:}#1\mathclose{:}}

\newcommand{\EE}{\mathbb{E}}     
\newcommand{\RR}{\mathbb{R}}   
\newcommand{\TT}{\mathbb{T}}

\newcommand{\cC}{\mathcal{C}}
\newcommand{\dD}{\mathcal{D}}

\let\f\frac

\def\d{\partial}

\newcommand{\eps}{\varepsilon}

\colorlet{symbols}{blue!90!black}
\colorlet{testcolor}{green!60!black}

\tikzset{
	smalldot/.style={circle,fill=symbols,draw=symbols, solid,inner sep=0pt,minimum size=0.5mm},
}

\def\scal#1{\langle#1\rangle}
\def\cent#1{\mathopen{{\langle\kern-0.3em\rangle}}#1\mathclose{{\langle\kern-0.3em\rangle}}}
\def\bscal#1{\big\langle#1\big\rangle}
\def\Bscal#1{\Big\langle#1\Big\rangle}

\makeatletter
\def\DeclareSymbol#1#2#3{\expandafter\gdef\csname MH@symb@#1\endcsname{\tikz[baseline=#2,scale=0.15,draw=symbols]{#3}}\expandafter\gdef\csname MH@symb@#1s\endcsname{\scalebox{0.65}{\tikz[baseline=#2,scale=0.15,draw=symbols]{#3}}}}
\def\<#1>{\csname MH@symb@#1\endcsname}
\makeatother

\AtBeginDocument{%
\crefalias{filejlemma}{lemma}%
\crefalias{filejproposition}{proposition}%
\crefalias{filecthm}{theorem}%
\crefalias{fileccor}{corollary}%
\crefalias{fileclem}{lemma}%
\crefalias{filecprop}{proposition}%
\crefalias{filecconj}{conjecture}%
\crefalias{filecdefinition}{definition}%
\crefalias{filecremark}{remark}%
\crefalias{filecexample}{example}%
\crefalias{filecnotation}{remark}%
\crefalias{fileqfourtheorem}{theorem}%
\crefalias{fileqfourdefinition}{definition}%
\crefalias{fileqfourconjecture}{conjecture}%
\crefalias{fileqfourlemma}{lemma}%
\crefalias{fileqfourcorollary}{corollary}%
\crefalias{fileqfourproposition}{proposition}%
\crefalias{fileqfourobservation}{remark}%
\crefalias{fileqfourremark}{remark}%
\crefalias{fileqfourproblem}{problem}%
\crefalias{fileqfourassumption}{assumption}%
\crefalias{fileqfourclaim}{claim}%
\crefalias{fileqfourfact}{fact}%
\crefalias{fileqfourexample}{example}%
\crefalias{fileftheorem}{theorem}%
\crefalias{fileflemma}{lemma}%
\crefalias{filefcorollary}{corollary}%
\crefalias{filefcor}{corollary}%
\crefalias{filefproposition}{proposition}%
\crefalias{filefconjecture}{conjecture}%
\crefalias{filefquestion}{question}%
\crefalias{filefremark}{remark}%
\crefalias{filefexample}{example}%
\crefalias{filefnotation}{remark}%
\crefalias{filefobservation}{remark}%
\crefalias{filefrant}{remark}%
\crefalias{filefwarning}{warning}%
\crefalias{filefassumption}{assumption}%
\crefalias{filefaxiom}{axiom}%
\crefalias{filefdefinition}{definition}%
\crefalias{filehtheorem}{theorem}%
\crefalias{filehcorollary}{corollary}%
\crefalias{filehlemma}{lemma}%
\crefalias{filehobservation}{remark}%
\crefalias{filehproposition}{proposition}%
\crefalias{filehdefinition}{definition}%
\crefalias{filehclaim}{claim}%
\crefalias{filehfact}{fact}%
\crefalias{filehassumption}{assumption}%
\crefalias{filehwarning}{warning}%
\crefalias{filehconjecture}{conjecture}%
\crefalias{fileadefinition}{definition}%
\crefalias{filealemma}{lemma}%
\crefalias{fileaproposition}{proposition}%
\crefalias{fileacorollary}{corollary}%
\crefalias{filearemark}{remark}%
\crefalias{fileelemma}{lemma}%
\crefalias{fileglemma}{lemma}%
}

\DeclareSymbol{0}{0}{\draw[white] (-.4,0) -- (.4,0); \draw (0,0)  -- (0,1.8) node[smalldot] {};}
\DeclareSymbol{1}{-1}{\draw[white] (-.4,0) -- (.4,0); \draw (0,-0.5)  -- (0,1.3) node[smalldot] {};}
\DeclareSymbol{2}{-1}{\draw (-0.7,1.3) node[smalldot] {} -- (0,-0.5) -- (0.7,1.3) node[smalldot] {};}
\DeclareSymbol{3}{1}{\draw (-1,1.5) node[smalldot] {} -- (0,0) -- (1,1.5) node[smalldot] {}; \draw (0,2) node[smalldot] {} -- (0,0);}
\DeclareSymbol{20}{-.2}{\draw (-0.7,1.5) node[smalldot] {} -- (0,0.5); \draw (0.7,1.5) node[smalldot] {} -- (0,0.5) -- (0,-0.5);}
\DeclareSymbol{30}{0}{\draw (-1,1.5) node[smalldot] {} -- (0,0.5) -- (1,1.5) node[smalldot] {}; \draw (0,2) node[smalldot] {} -- (0,0.5) -- (0,-.5);}
\DeclareSymbol{31}{0}{\draw (-1,1.5) node[smalldot] {} -- (0,0.5) -- (1,1.5) node[smalldot] {}; \draw (0,2) node[smalldot] {} -- (0,0.5) -- (0,-.5) -- (1,0.5) node[smalldot] {} ;}

\DeclareMathOperator{\vecop}{vec}
\DeclareMathOperator{\diag}{diag}
\DeclareMathAlphabet{\catsymbfont}{U}{rsfs}{m}{n}

\newcommand{\bR}{\mathbb{R}}
\newcommand{\co}{\colon}
\newcommand{\scrS}{\mathscr{S}}
\newcommand{\aO}{{\catsymbfont{O}}}

\usepackage{float}
\usepackage{alltt}
\usepackage{comment}
\usepackage{mathrsfs}
\usepackage{etoolbox}
\IfFileExists{bbm.sty}{\usepackage{bbm}}{\providecommand{\mathbbm}[1]{\mathbb{#1}}}
\IfFileExists{stmaryrd.sty}{\usepackage{stmaryrd}}{}
\providecommand{\widebar}[1]{\overline{#1}}
\IfFileExists{upgreek.sty}{\usepackage{upgreek}}{}
\IfFileExists{cases.sty}{\usepackage{cases}}{}
\IfFileExists{subcaption.sty}{\usepackage{subcaption}}{}
\IfFileExists{easytable.sty}{\usepackage[thinlines]{easytable}}{}
\IfFileExists{adjustbox.sty}{\usepackage[export]{adjustbox}}{}
\IfFileExists{appendix.sty}{\usepackage{appendix}}{}
\IfFileExists{thmtools.sty}{\usepackage{thmtools}}{}
\IfFileExists{stackengine.sty}{\usepackage{stackengine}}{}
\IfFileExists{scalerel.sty}{\usepackage{scalerel}}{}
\IfFileExists{xy.sty}{\usepackage[all]{xy}}{}
\IfFileExists{tikz-cd.sty}{\usepackage{tikz-cd}}{}
\IfFileExists{xypic.sty}{\usepackage{xypic}}{}
\IfFileExists{arcs.sty}{\usepackage{arcs}}{}
\IfFileExists{pgfplotstable.sty}{\usepackage{pgfplotstable}}{}
\IfFileExists{colortbl.sty}{\usepackage{colortbl}}{}
\makeatletter
\@ifundefined{pgfplotsset}{}{\pgfplotsset{compat=1.16}}
\makeatother
\IfFileExists{cleveref.sty}{\usepackage[capitalize,noabbrev,nameinlink]{cleveref}}{}
\usetikzlibrary{positioning,arrows,decorations.pathreplacing,calc}
\providecommand{\web}[1]{\href{#1}{\nolinkurl{#1}}}
\providecommand{\LaTeXunderbrace}{\underbrace}
\providecommand{\cref}[1]{\ref{#1}}
\providecommand{\Cref}[1]{\ref{#1}}
\providecommand{\crefalias}[2]{}
\makeatletter
\newcommand{\PatchBibSection}{\let\SavedMergedSection\section\def\section{\@ifstar\subsubsection\subsubsection}}
\newcommand{\EndPatchBibSection}{\let\section\SavedMergedSection}
\makeatother

\title{First Proof}
\author{ Mohammed Abouzaid\footnote{\Letter\ Corresponding author, Email: abouzaid@stanford.edu}
\\
  \textit{Stanford University} 
  \and
Andrew J. Blumberg\\
\textit{Columbia University} 
\and
    Martin Hairer \\
\textit{EPFL and Imperial}
\and
Joe Kileel\\
\textit{University of Texas at Austin}
\and
Tamara G. Kolda \\
\textit{MathSci.ai}
\and
Paul D. Nelson\\
\textit{Aarhus University}
\and
Daniel Spielman \\
\textit{Yale University}
\and 
Nikhil Srivastava\footnote{\Letter\ Corresponding author, Email: nikhil@math.berkeley.edu}
\\
\textit{University of California, Berkeley}
\and 
Rachel Ward\footnote{\Letter\ Corresponding author, Email: rward@math.utexas.edu} \\
\textit{University of Texas at Austin}
\and
Shmuel Weinberger\\
\textit{University of Chicago}
\and
Lauren Williams\footnote{\Letter\ Corresponding author, Email: williams@math.harvard.edu} \\
\textit{Harvard University}
}
\date{\today}
\begin{document}
\maketitle

\emph{In this Arxiv preprint v2, we include an additional appendix with author-written solutions and comments for each of the ten questions, as posted to \web{https://1stproof.org} on February 13, 2026. Information about the next round of First Proof can be found on \web{https://1stproof.org}}

\begin{abstract}
To assess the ability of current AI systems to correctly answer research-level mathematics questions, we share a set of  ten math questions which have arisen naturally in the research process of the authors.  The questions had not been shared publicly  until now; the answers are known to the authors of the questions but will remain encrypted for a short time. 
\end{abstract}

\newpage 

\section{Introduction}

In baking, the {\it first proof}, or bulk fermentation process, is a crucial step in which one lets the entire batch of dough ferment as one mass, before dividing and shaping it into loaves.  
  
This manuscript represents our preliminary efforts to come up with an objective and realistic methodology for assessing the capabilities of AI systems to autonomously solve research-level math questions. After letting these ideas ferment in the community, we hope to be able to produce a more structured benchmark in a few months.

One of our primary goals is to develop a sophisticated understanding of the role that AI tools could play in the workflow of professional mathematicians.  While commercial AI systems are undoubtedly already at a level where they are useful tools for mathematicians\footnote{For instance, mathematicians are using AI tools to do literature searches, check manuscripts for  errors, write computer code, and bounce ideas.}, {\bf it is not yet clear where AI systems stand at solving research-level math questions on their own, without an expert in the loop}.  At the moment, most math benchmarks assess the performance of AI systems on math contest questions, an artificial domain that does not reflect the practice of creative mathematics by researchers.  

Evaluation of research capabilities is a challenging task.  As frontier AI systems are now highly capable of searching the literature and translating mathematical questions from one format to another, it is challenging to disentangle problem-solving capabilities from search capabilities when conducting such an assessment.   Our core observation is that an ideal test should involve {\bf research math questions which arose naturally in the process of a mathematician's own research, were subsequently solved by the mathematician, but have not yet been posted to the internet.}  

Towards this end, we present a diverse set of 10 research-level math questions, drawn from the mathematical fields of  algebraic combinatorics, spectral graph theory, algebraic topology, stochastic analysis, symplectic geometry, representation theory, lattices in Lie groups, tensor analysis, and numerical linear algebra, each of which came about naturally in the research process for one of the authors (sometimes together with collaborators).  Each question has been solved by the author(s) of the question with a proof that is roughly five pages or less,  but the answers are not yet posted to the internet.  The page restriction is due to the technical limitations of current publicly available AI systems, and this means that many of the questions on our list are not of sufficient importance to qualify as publishable research on their own, but are smaller components in future publications.

Most of the questions that we have collected are extracted from lemmas arising in larger works whose main results go beyond what current systems are capable of tackling. Significant effort is required to identify such lemmas as crucial steps in these works.   

Before explaining the nature of our evaluation, we will try to be clear about {\bf what math research is}.  Contrary to the popular conception that research is only about finding solutions to well-specified, age-old problems (e.g., Fermat's Last Theorem), most of the important parts of modern research involve  
figuring out what the question actually is  
and  
developing frameworks within which it can be answered.   
Perelman's proof of the Poincaré conjecture was a stunning achievement.  But in order for it to be possible, Thurston had to develop a new way of thinking about geometric objects and Hamilton had to invent a new kind of dynamics explaining how such objects change.

Our `first proof' experiment is focused on the final and most well-specified
stage of math research, in which the question and frameworks are already
understood. We do not address the selection of questions to study, the formulation of new definitions, and the development of novel theories. 
We wish to be clear that our choice of emphasis on proving well-formed statements is
driven by the judgment that this is a first step; evaluation of the performance
of frontier systems on the higher-level research tasks above is also essential.

The answers to our set of ten research level math questions have been encrypted and posted to 
\url{https://1stproof.org}.  
The authors will release the answers on February 13, 2026. 
{\bf We invite the community to experiment with our ten questions before the answers are released, and to share their results and observations online.} Ideally, participants should share a complete transcript of their interaction with an AI system. In this process, we hope to gain insight into questions such as: What is an appropriate prompting strategy? What format should an answer take and how should it be graded? Are there data contamination issues we have missed? 
We hope to use this understanding to design a more formal benchmark. A few months later, we plan to finalize a second set of questions; 
we are open to devising agreements to test 
AI models
on these questions prior to making them public.

Unlike other proposed math research benchmarks (see Section~\ref{sec:related}), our question list should not be considered a benchmark in its current form.  For one, our questions are not numerous enough to be considered a benchmark.   By construction, producing research-level math questions with answers which have not yet been published, and whose answers are a certain length, requires substantial human effort. A typical mathematician might create and address 
a few such questions a year. Additionally,
we have not specified a formal grading scheme for answers. While we have found correct answers to each of the questions, correct answers are not always  unique ---  there may be  multiple proofs or, alternatively, multiple counterexamples. 
This makes assessment more challenging, as it must at present be done by a human expert.

Compared to previous assessments of AI systems in completing tasks related to mathematical research (discussed in Section~\ref{sec:related} below), to the best of our knowledge, ours is the first  to simultaneously have all of the following features:
\begin{itemize}[noitemsep]
    \item The questions are sampled from the true distribution of questions that mathematicians are currently working on. Their answers are proofs, which at present must be graded by humans.
    \item The answers have never appeared on the internet, in talks, or in any public forum. This eliminates a substantial data contamination problem.  
    \item The questions are being made public in this document. This means they cannot be reused in the future, but they can be examined by everyone. 
    \item We allow models unfettered access to outside resources such as Internet searches, bringing them closer to representing real-world assessments.
\end{itemize}

We ran 
preliminary tests on many of our ten questions using GPT 5.2 Pro and Gemini 3.0 Deepthink; we briefly discuss our mitigation strategy for data contamination
in Section~\ref{sec:implementation}.  Our tests indicate that ---  when the system is given one shot to produce the answer --- the best publicly available AI systems struggle to answer many of our questions.  
In the interest of following a clear protocol, 
we chose not to iteratively interact with the systems, or even re-run the queries.
However, we expect that through such interactions we would 
be able 
to coax the systems to produce better answers. \\

\noindent {\bf Conflicts of interest.} 
No funding was received for the design or implementation of this
project. None of the authors of this report was employed by or consulted with
AI companies during the project, nor will they do so while contributing to it.\\

\noindent {\bf Acknowledgment.} We thank the Simons Institute for the Theory of Computing for hosting the organizational meeting of this project in early December 2025, with support from the Director's Opportunity Fund.  PN is supported by a research grant (VIL54509) from VILLUM FONDEN.  This statement reflects author support and does not imply sponsor involvement in the benchmark.

\section{The questions}\label{sec:problems}
\begin{enumerate}
\item Let $\mathbb{T}^3$ be the three dimensional unit size torus and let $\mu$ be the $\Phi^4_3$ measure on the space of distributions $\mathcal{D}'(\mathbb{T}^3)$. Let $\psi : \mathbb{T}^3 \to \mathbb{R}$ be a smooth function that is not identically zero and let $T_\psi : \mathcal{D}'(\mathbb{T}^3) \to \mathcal{D}'(\mathbb{T}^3)$ be the shift map given by $T_\psi(u) = u + \psi$ (with the usual identification of smooth functions as distributions). Are the measures $\mu$ and $T_\psi^* \mu$ equivalent? Here, equivalence of measures is in the sense of having the same null sets and $T_\psi^*$ denotes the pushforward under $T_\psi$.

\item Let \(F\) be a non-archimedean local field with ring of integers \(\mathfrak o\).  Let $N_r$ denote the subgroup of $\mathrm{GL}_{r}(F)$ consisting of upper-triangular unipotent elements.  Let \(\psi:F\to \mathbb C^\times\) be a nontrivial additive character of conductor \(\mathfrak o\), identified in the standard way with a generic character of $N_r$.
Let \(\Pi\) be a generic irreducible admissible representation of \(\mathrm{GL}_{n + 1}(F)\), realized in its \(\psi^{-1}\)-Whittaker model \(\mathcal W(\Pi,\psi^{-1})\).  Must there exist \(W\in \mathcal W(\Pi,\psi^{-1})\) with the following property?

Let $\pi$ be a generic irreducible admissible representation of \(\mathrm{GL}_{n}(F)\), realized in its $\psi$-Whittaker model \(\mathcal W(\pi,\psi)\).  Let $\mathfrak{q}$ denote the conductor ideal of $\pi$, let \(Q\in F^\times\) be a generator of \(\mathfrak q^{-1}\), and set
\[
  u_Q := I_{n+1} + Q\,E_{n,n+1} \in \mathrm{GL}_{n + 1}(F),
\]
where \(E_{i, j}\) is the matrix with a \(1\) in the \((i, j)\)-entry and \(0\) elsewhere.  For some \(V\in \mathcal W(\pi,\psi)\), the local Rankin--Selberg integral
\[
  \int_{N_n\backslash \mathrm{GL}_{n}(F)} W(\operatorname{diag}(g,1) u_Q)\,V(g)\,|\det g|^{s-\frac12}\,dg
\]
is finite and nonzero for all \(s\in\mathbb C\).

\item Let $\lambda=(\lambda_1 > \dots > \lambda_n \geq 0)$ be a partition with distinct parts.  Assume moreover that
$\lambda$ is {\it restricted}, in the sense that it has a unique part of size $0$ and no part of size $1$.
Does there exist a nontrivial Markov chain on $S_n(\lambda)$ whose stationary distribution is given by
$$\frac{F^*_{\mu}(x_1,\dots,x_n; q=1,t)}{P^*_{\lambda}(x_1,\dots,x_n;
q=1,t)} \text{ for }\mu\in S_n(\lambda)$$
where $F^*_{\mu}(x_1,\dots,x_n; q,t)$ and
$P^*_{\lambda}(x_1,\dots,x_n;q,t)$ are the
interpolation ASEP polynomial and interpolation Macdonald polynomial,
respectively?  If so, prove that the Markov chain you construct has the
desired stationary distribution.  By ``nontrivial'' we mean that the
transition probabilities of the Markov chain should not be described
using the polynomials $F_{\mu}^*(x_1,\dots,x_n; q,t)$. 

\item Let $p(x)$ and $q(x)$ be two monic polynomials of degree $n$:
\[
p(x) = \sum_{k=0}^n a_k x^{n-k} \quad \text{and} \quad q(x) = \sum_{k=0}^n
b_k x^{n-k}
\]
where $a_0 = b_0 = 1$. Define $p \boxplus_n q(x)$ to be the polynomial
\[
(p \boxplus_n q)(x) = \sum_{k=0}^n c_k x^{n-k}
\]
where the coefficients $c_k$ are given by the formula:
\[
c_k = \sum_{i+j=k} \frac{(n-i)! (n-j)!}{n! (n-k)!} a_i b_j
\]
for $k = 0, 1, \dots, n$.
For a monic polynomial $p(x)=\prod_{i\le n}(x- \lambda_i)$, define 
$$\Phi_n(p):=\sum_{i\le n}(\sum_{j\neq i} \frac1{\lambda_i-\lambda_j})^2$$ and $\Phi_n(p):=\infty$ if $p$ has a multiple root.
Is it true that if $p(x)$ and $q(x)$ are monic real-rooted polynomials of
degree $n$, then
$$\frac1{\Phi_n(p\boxplus_n q)} \ge \frac1{\Phi_n(p)}+\frac1{\Phi_n(q)}?$$

\item Fix a finite group $G$.  Let $\aO$ denote an incomplete transfer
system associated to an $N_\infty$ operad.  Define the slice
filtration on the $G$-equivariant stable category adapted to $\aO$ and
state and prove a characterization of the $\aO$-slice connectivity of
a connective $G$-spectrum in terms of the geometric fixed points.

\item For a graph $G = (V, E)$, let $G_S = (V, E(S,S))$ denote the graph with the same vertex set, 
but only the edges between vertices in $S$. Let $L$ be the Laplacian matrix of $G$ and let $L_S$ be the Laplacian of $G_S$. 
I say that a set of vertices $S$ is $\epsilon$-light if the matrix $\epsilon L - L_S$ is positive semidefinite. 
Does there exist a constant $c > 0$ so that for every graph $G$ and every $\epsilon$ between $0$ and $1$, $V$ contains an $\epsilon$-light subset $S$ of size at least $c \epsilon |V|$? 

\item Suppose that $\Gamma$ is a uniform lattice in a real semi-simple group, and that $\Gamma$ contains some 2-torsion. Is it possible for $\Gamma$ to be the fundamental group of a compact manifold without boundary whose universal cover is acyclic over the rational numbers $\mathbb{Q}$?

\item   A polyhedral Lagrangian surface $K$ in $\bR^4$ is a finite polyhedral complex all of whose faces are Lagrangians, and which is a topological submanifold of $\bR^4$. A Lagrangian smoothing of $K$ is a Hamiltonian isotopy $K_t$ of smooth Lagrangian submanifolds, parameterised by $(0,1]$, extending to a topological isotopy, parametrised by $[0,1]$, with endpoint $K_0 = K$.

Let $K$ be a polyhedral Lagrangian surface with the property that exactly $4$ faces meet at every vertex. Does $K$ necessarily have a Lagrangian smoothing? 

\item Let $n \geq 5$.  
Let $A^{(1)}, \ldots, A^{(n)} \in \mathbb{R}^{3 \times 4}$ be Zariski-generic.   
For $\alpha, \beta, \gamma, \delta \in [n]$, construct $Q^{(\alpha \beta \gamma \delta)} \in \mathbb{R}^{3 \times 3 \times 3 \times 3}$ so that its $(i, j, k, \ell)$ entry for $1 \leq i, j, k, \ell \leq 3$ is given by $Q^{(\alpha \beta \gamma \delta)}_{i j k \ell} = \det [A^{(\alpha)}(i, :); A^{(\beta)}(j, :); A^{(\gamma)}(k, :); A^{(\delta)}(\ell, :)]$.
Here $A(i, :)$ denotes the $i$th row of a matrix $A$, and semicolon denotes vertical concatenation. 
We are interested in algebraic relations on the set of tensors $\{Q^{(\alpha \beta \gamma \delta)} : \alpha, \beta, \gamma, \delta \in [n] \}$.

More precisely, does there exist a polynomial map $\mathbf{F}: \mathbb{R}^{81n^4} \rightarrow \mathbb{R}^N$ that satisfies the following three properties?
\smallskip
\begin{itemize}\setlength\itemsep{0.5em}
\item The map $\mathbf{F}$ does not depend on $A^{(1)}, \ldots A^{(n)}$. 
\item The degrees of the coordinate functions of $\mathbf{F}$ do not depend on $n$.
\item Let $\lambda \in \mathbb{R}^{n \times n \times n \times n}$ satisfy 
$\lambda_{\alpha \beta \gamma \delta} \neq 0$ for precisely $\alpha, \beta, \gamma, \delta \in [n]$ that are not identical.  Then $\mathbf{F}(\lambda_{\alpha \beta \gamma \delta} Q^{(\alpha \beta \gamma \delta)} : \alpha, \beta, \gamma, \delta \in [n]) = 0$ holds if and only if there exist $u, v, w, x \in (\mathbb{R}^*)^n$ such that $\lambda_{\alpha \beta \gamma \delta} = u_{\alpha} v_{\beta} w_{\gamma} x_{\delta}$ for all $\alpha, \beta, \gamma, \delta \in [n]$ that are not identical. 
\end{itemize}

\item Given a $d$-way tensor $\mathcal{T} \in \mathbb{R}^{n_1 \times n_2 \times \cdots \times n_d}$ 
such that the data is unaligned (meaning the tensor $\mathcal{T}$ has missing entries),
we consider the problem of computing a CP decomposition of rank $r$ where some modes are infinite-dimensional and constrained to be in a Reproducing Kernel Hilbert Space (RKHS). 
We want to solve this using an alternating optimization approach, and our question is focused on the mode-$k$ subproblem for an infinite-dimensional mode. 
For the subproblem, then CP factor matrices 
$A_1, \dots, A_{k-1}, A_{k+1}, \dots, A_d$ are fixed, and we are solving for $A_k$.

Our notation is as follows.
Let $N = \prod_i n_i$ denote the product of all sizes.
Let $n \equiv n_k$ be the size of mode $k$, let
$M = \prod_{i\neq k} n_i$ be the product of all dimensions except $k$, and assume $n \ll M$.
Since the data are unaligned, this means only a subset of $\mathcal{T}$'s entries are observed, and we let $q \ll N$ denote the number of observed entries.
We let $T \in \mathbb{R}^{n \times M}$ denote the mode-$k$ unfolding of the tensor $\mathcal{T}$ with all missing entries set to zero.
The $\vecop$ operations creates a vector from a matrix by stacking its columns,
and we let $S \in \mathbb{R}^{N \times q}$ denote the selection matrix (a subset of the $N \times N$ identity matrix) such that $S^T \vecop(T)$ selects the $q$ known entries of the tensor $\mathcal{T}$ from the vectorization of its mode-$k$ unfolding.
We let $Z = A_d \odot \cdots \odot A_{k+1} \odot A_{k-1} \odot \cdots \odot A_1 \in \mathbb{R}^{M \times r}$ be the Khatri-Rao product of the factor matrices corresponding to all modes except mode $k$.
We let $B = TZ$ denote the MTTKRP of the tensor $\mathcal{T}$ and Khatri-Rao product $Z$.

We assume $A_k = KW$ where
$K \in \mathbb{R}^{n \times n}$ denotes the psd RKHS kernel matrix for mode $k$.
The matrix $W$ of size $n \times r$ is the unknown for which we must solve. 
The system to be solved is
\begin{equation} 
	\left[
    (Z \otimes K)^T S
    S^T (Z \otimes K)
    + \lambda (I_r \otimes K) 
  \right] \vecop(W)
	= (I_r \otimes K) 
	\vecop( B ). \nonumber 
\end{equation}
Here, $I_r$ denotes the $r \times r$ identity matrix.
This is a system of size $nr \times nr$
Using a standard linear solver costs $O(n^3 r^3)$, 
and explicitly forming the matrix is an additional expense.

Explain how an iterative preconditioned conjugate gradient linear solver can be used to solve this problem more efficiently. Explain the method and choice of preconditioner. Explain in detail how the matrix-vector products are computed and why this works. Provide complexity analysis. 
We assume $n,r < q \ll N$. Avoid any computation of order $N$.

\end{enumerate}

\section{Related work}\label{sec:related}

As mentioned earlier, there have been several proposed math research benchmarks.  We discuss a few of them here.

 FrontierMath \cite{frontier} is a benchmark of ``several hundred unpublished, expert-level mathematics problems that take specialists hours to days to solve.''  
It was funded by OpenAI.  Presently, the FrontierMath problems are private
(apart from 12 examples that are publicly available). OpenAI has access to a subset of FrontierMath problems  and solutions, and EpochAI has access to the full set of solutions. The FrontierMath problems are structured so that each final answer is an integer or symbolic expression, which makes them automatically gradable, as well as amenable to post-training via reinforcement learning. 

IMProofBench \cite{schmitt2025improofbench} is a broader mathematical proof benchmark, designed to evaluate the ability of AI systems to create research-level mathematical proofs.  The problems are designed to allow for automatic grading of subquestions, but still require human experts to fully verify correctness. 
 The IMProofBench questions are private. 

The RealMath benchmark for research-level math questions \cite{zhang2025realmath} scrapes (i.e. collects papers automatically from)
math and computer science categories in \href{https://arxiv.org}{arXiv.org}, skewing toward fields with ``constructive'' theorems like probability and statistics.  It only scrapes questions posted after the ``training data cutoff'' of the AI models being tested, where training data cutoff refers to the final date from which web data was collected and used for training data.  Like FrontierMath, the RealMath questions are designed to facilitate automatic grading, with a final short symbolic or numeric answer. Unlike FrontierMath and IMProofBench, the RealMath questions are public and intended to be refreshed every so often to avoid data contamination.

\section{Implementation details}\label{sec:implementation}

Over the span of a few weeks, we tested roughly 20 research-level math questions using Gemini 3 Pro, GPT-5.1 Pro, and then GPT-5.2 Pro when GPT 5.2 Pro became available. The final selection of questions used the following criteria:
  \begin{enumerate}[noitemsep]
  \item Use of the AI system did not reveal the existence of a previous answer to the question that was unknown to the authors.
  \item A one page statement was sufficient for the systems to ``understand'' the formulation of the question, i.e. it was able to reformulate the question in its own language before starting to answer it. 
  \item Agreement was reached with the authors of the question to release a human generated proof within the required parameters (length and timeframe).
 \item No member of the team contributed more than one problem.
   \end{enumerate}
The reason for testing more than 10 questions was to probe the ``boundary'' between the types of questions the models can solve and the types of questions beyond their reach.  To minimize data contamination, we turned off the option to share data for training and improving models, but we are aware that data is still retained for 3 days by Google, and 30 days by OpenAI\footnote{According to our reading of the OpenAI \href{https://help.openai.com/en/articles/8983778-chat-and-file-retention-policies-in-chatgpt}{Terms of Service}, a chat can be retained longer than 30 days if the chat has been de-identified and disassociated from the author.  According to our reading of the Gemini \href{https://support.google.com/gemini/answer/13594961}{Terms of Service}, chats reviewed by human reviewers may be retained for up to 3 years.}.  Throughout the process, we have endeavored  to keep the answers to our questions private. We have uploaded encrypted answers to the private repository,  
\url{https://1stproof.org}.
We will make the answers publicly available about a week 
after we release the questions.

\section{Discussion}
We have presented a set of ten research-level mathematics questions.  
As mentioned earlier, mathematical research consists of multiple components, including:
\begin{itemize} [noitemsep]
\item creating and selecting the questions to study, which will guide and shape the field;
\item developing novel theories for approaching these questions, including formalizing new definitions and frameworks;
\item finding answers to the selected questions, and rigorously proving that these answers are correct.
\end{itemize}
Our `first proof' experiment is focused on the final, most well-specified, and most measurable stage of mathematical research, that is, finding answers to the selected questions. We do not address the  question of evaluating whether AI systems can reasonably create  questions to study, or develop novel theories. 

We plan to create a second set of questions of the same nature as the ones in Section~\ref{sec:problems} in the coming months, and we are open to devising agreements to test frontier AI systems on the second set of questions before we release them.  We hope that this second set of questions can serve as a form of benchmark for testing the capabilities of AI. 

Beyond the next release, depending on technological developments, we plan to release additional sets of questions by removing some of the artificial constraints we imposed on our chosen questions, such as length, as well as to explore ways of measuring performance along other aspects of the work of research mathematics.

\appendix

\section{First Proof Solutions and comments}

On February $4$ and $5$, 2026, we tested the questions on Gemini 3.0 Deep Think and ChatGPT 5.2 Pro
with the following prompts.
\begin{quote}
    {\bf Prompt 1.}
    The following is a research-level math question.  The question has an answer, but it might not appear on the internet. Please make a best effort to provide a rigorous and complete answer to the question. Write the output as a compilable LaTeX document using the standards of rigor and scholarship that prevail in the mathematical literature. 

    {\bf Prompt 2 (Internet Discouraged).}
    The following is a research-level math question.  The question has an answer. Please make a best effort to provide a rigorous and complete answer to the question. Write the output as a compilable LaTeX document using the standards of rigor and scholarship that prevail in the mathematical literature.  Do not use web search, but instead try to reason through the answer.
    
\end{quote}
In the following subsections, we comment briefly on the best LLM solutions that we obtained in these internal tests.

\subsection{Question 1: Martin Hairer}

In this case, a note with a very short sketch of proof (far short of the level of detail one would expect
for a published article) was posted on the author's homepage some time ago. The answer given by
GPT-Pro simply quotes that note, claiming that it contains a detailed proof of the result. 
This is incorrect and it is despite the LLM being 
specifically instructed to comply with ``mathematics publication'' levels of scholarship. (Taking for granted
a result that is merely stated in an unpublished note with a very rough sketch of proof is not 
considered acceptable in the mathematics literature.)

Another behaviour we observed was that the LLM would take as a premise
the (wrong!) statement that the $\Phi^4_3$ measure is equivalent to the free field measure, from which 
it then correctly deduces the (incorrect) claim that the $\Phi^4_3$ measure is quasi-invariant under
smooth shifts.

\subsection{Question 2: Paul Nelson}

In some attempts, the LLM constructed $W$ depending on $\pi$, but the problem asks for a single $W$ that works for all $\pi$.  This is a critical condition; without it, the problem is much easier and the solution is well-known.  In some (but not all) cases, the LLM noted that it had solved a weaker problem.

In the best attempt in our trial runs, 
ChatGPT 5.2 Pro identified a suitable choice of $W$ and reduced (as in our solution) to exhibiting $V$ for which the integral
$\int_{\mathrm{GL}_n(\mathfrak{o})} V(g) \psi(- Q g_{n n}) \, d g$ does not vanish.  This nonvanishing is the key point.

ChatGPT then attempted to choose $V$ so that the integrand is constant on its support, which, if possible, would make the nonvanishing clear.  This strategy is unviable.  For instance, when $n = 1$, $V$ must be (a nonzero multiple of) a character of $F^\times$ and the integral is a normalized Gauss sum; in particular, the integrand is typically non-constant.  For larger $n$, the unviability follows similarly by considering the action of the center.

To identify the specific error in the attempted solution, we look for the first place asserting stronger support properties of $V$ than are generally true.  The culprit is the support condition claimed in the ``standard Howe-vector existence result,''
which never holds: it contradicts the fact that $V$ has a central character.

\subsection{Question 3: Lauren Williams}
The best solution that LLM's produced for Question 3 in our internal experiments was to use the Metropolis-Hastings algorithm to produce a Markov chain whose stationary distribution had the desired formula.  However, by design, the Metropolis-Hastings algorithm uses the desired formula to define its transition rates.  This algorithm can be used to cook up a Markov chain with \emph{any} desired distribution.  Hence this is considered a ``trivial'' solution to the problem (which specifically asked that the transition probabilities not be described in terms of the interpolation polynomials).  Sometimes the LLM's would give a slight variant of the above trivial solution where they would replace the interpolation polynomials by an equivalent formula for them (the signed multiline queue formula of Ben Dali--Williams).   

Another common response given by LLM's was to change the problem to a related but different, and already-solved problem, namely, 
to replace interpolation ASEP and interpolation Macdonald polynomials by ASEP and Macdonald polynomials.  In this case the solution to this problem is the $t$-Push TASEP and was given in a paper by Ayyer, Martin, and Williams.  

\subsection{Question 4: Nikhil Srivastava}
The only attempt at the general $n\ge 4$ case of this question was made by ChatGPT Pro 5.2 with the no internet prompt. After collecting some standard  facts in the first three pages, its plan was to execute Blachman's approach to the classical Stam inequality (Section 4). In this approach the key step is to identify the score function of a sum of independent random variables $X+Y$ as a conditional expectation of the score function of $X$ conditioned on $X+Y$, in the appropriate joint probability space, after which the inequality reduces to Cauchy-Schwartz. The main difficulty is finding an analogue of this joint probability space in the finite free setting.

The LLM attempted to find a probability space in which a score function could live by considering the random matrix model for the finite free convolution  $r(x)=p\boxplus_n q(x)=\mathbb{E} \mathrm{det}(xI-A-UBU^T)$. It gathered some facts about $r(x)$ for large real $x$ away from the roots, asserted wrongly that $\Phi_n(r)$ can be read off from residues of $(r'(x)/r(x))'$ at the roots of $r(x)$, and then asserted that the proof can be finished via the residue calculus without giving details. This sequence of steps did not make sense to me.

At a conceptual level, this proof strategy cannot succeed because only the score function of $r(x)$ is considered, and the score functions of $p(x),q(x)$ are never mentioned. It also does not exploit the fact that $\boxplus_n$ preserves real roots, which must be used since the inequality is not true for arbitrary polynomials.

\subsection{Question 5: Andrew J. Blumberg}

The best solutions by Gemini and ChatGPT 5.2 Pro contained an essentially correct statement of the definition of the $\mathcal{O}$-slice filtration and the connectivity characterization.  The proofs offered, like the proof from the work with Michael A. Hill and Tyler Lawson which generated this question, closely follow the basic outline of a previous paper by Hill-Yarnall.  However, in each case, some of the details were either sketched or slightly garbled.  For example, the ChatGPT solution claims to be working in the $\mathcal{O}$-stable category, but is breezy about what is required (and subsequent statements it makes are then missing hypotheses).  Section 4 introduces and uses the notion of ``geometric objects'' from Hill-Yarnall without defining them.  The Gemini solution outline an argument for sufficiency of the condition which is more of a sketch than an argument.

A number of LLM runs produced serious hallucinations, citing lemmas that did not exist from Hill-Hopkins-Ravenel or in one case confabulating an entire paper and attributing the result to this putative source.  Some also contained seriously false statements, for example about the spectra to which the tom Dieck splitting applies.

\subsection{Question 6: Daniel Spielman}

Gemini asserted that it presented a proof of the existence of a constant that satisfied Question 6. 
But, after some correct statements, it presented a very vague explanation of how the proof could be finished. To me, it seems unlikely that the approach can be turned into a correct proof.

ChatGPT 5.2 Pro asserted that it could not answer the question.
So, it instead offered a correct upper bound of $1/2$ on the constant, if it exists.

\subsection{Question 7: Shmuel Weinberger}

In the no internet version, Theorem~4 and in the internet version it
is Lemma~5, are false (they are the same statement). The
counterexample is $\bR^1$ and $f$ is a translation. It has no fixed points,
but its Lefschetz number in their sense is $-1$.

The AI proofs only use finite complex and
Poincar\'e duality. However, Fowler's paper shows that if $\Gamma$ is a
lattice in a linear semisimple group $G$, then taking a homomorphism
from $\Gamma$ to a finite group $\Delta$, with kernel $\Gamma_0$ torsion free,
the product $M^3 \times (K\backslash G/\Gamma_0 \times E\Delta)/\Delta$, where $E\Delta$ is a
contractible space with free $\Delta$ action, and $M^3$ is any closed
hyperbolic $3$-manifold, has the rational type of a finite complex, and
satisfies Rational Poincar\'e duality. It has fundamental group
$\pi_1(M^3) \times \Gamma$ which is a lattice in $\mathrm{SO}(3,1) \times G$. This shows that
all such proofs must fail.

 Some proofs try to use ``multiplicativity of Euler
characteristic in finite covers''. This is false for infinite
complexes with finitely generated homology over $\mathbb{Q}$. The simplest
example I know is the following: Consider the universal cover of $\bR P^2$
wedge an infinite number of $S^2$'s. It has an involution, and $\pi_2$ is
$\mathbb{Z}[-1] + \mathbb{Z}[\mathbb{Z}/2]^\infty$. ($\mathbb{Z}[-1]$ is $\mathbb{Z}$ acted on by the involution by
multiplication by $-1$.) This module is, after tensoring with $\mathbb{Z}[1/2]$ a
free $\mathbb{Z}[1/2][\mathbb{Z}/2]$ module, so one can use a free basis to equivariantly
attach $D^3 \times \mathbb{Z}/2$'s to kill the homology ($=$homotopy). The new space
will be rationally acyclic, and both it and its quotient under $\mathbb{Z}/2$
will be, and will have rational Euler characteristic $= 1$.

\subsection{Question 8: Mohammed Abouzaid}
The best two solutions produced during testing both correctly identified the existence of a local smoothing near every vertex; the proof uses essentially the same basic linear algebra argument that appears in the human solution. The proof then proceeds to perform a local-to-global gluing argument. It was a priori clear that there must be a gap in this argument because the LLM solution refers to the existence of a linear symplectic transformation that brings a neighbourhood of each vertex and each edge into a standard position, but fails to discuss the compatibility between these choices. In the case of the solution produced by the model which was not discouraged to use the internet, the error was finally identified, after a careful reading, in Step 3 of the Proof of Theorem 1: the LLM system asserted that one can choose disjoint neighbourhoods of the edges and of the vertices. In the other case, the error is in Step 2: the model performs a local move near vertices, which changes the local geometry near the edges, invalidating the application of the edge move. 

The errors in these solutions can be repaired at the cost of significant computations of changes of coordinates, which would become extremely burdensome in any generalisation. The point of the solution we provide is to obtain a proof which avoids (most of) the hard work, and which experts can readily generalise to other symplectic manifolds (in any dimension).

\subsection{Question 9: Joe Kileel}
The best LLM answer found during testing was NoInternet-040226.
This is an essentially correct answer.  
It constructs the same algebraic relations as in my own answer, namely the various $5 \times 5$ minors of the four $3n \times 27n^3$ flattenings of the block tensor  assembling together the $Q^{(\alpha \beta \gamma \delta)}$.
The proof by the LLM that the algebraic relations satisfy the desired properties differs from my own argument.  
The LLM considers a torus action on an appropriate Grassmannian, argues the stabilizer of a generic point is 1-dimensional, and  uses this to show separability of $\lambda$ in a somewhat fidgety way.  
By contrast, I directly constrain $\lambda$ by considering  certain selected  algebraic relations.  
Some other LLM answers produced during testing were incorrect, and claimed that no algebraic relations exist that satisfy the desired properties.  
Those answers seemed to get confused about the question setup midway through.
My question is closely related to a work I published with  Miao and Lerman in 2024 (\url{https://proceedings.neurips.cc/paper_files/paper/2024/hash/80cddcdd52c84d19b8b4a27a8e8c17d8-Abstract-Conference.html}). Indeed, it is a fourth-order variant of Theorem 2 in that paper which concerns the third-order case.  Therefore, if LLMs locate and understand that paper they would have a warm-start for this question.

\subsection{Question 10: Tammy Kolda}

The best LLM solution was correct and better than the solution I provided in that it lowered the computational complexity. Most importantly, it had an insight that was obvious in hindsight but that I had not seen yet myself. Since LLMs are well known to surface existing solutions, I tried search on ``subsampled kronecker product matvec'' and found that the main idea in the solution exists in \url{https://arxiv.org/pdf/1601.01507}. (I am not sure if this is the only source of the solution, but it is at least one such solution.)

The LLM solution did not meet the standards of including appropriate citations, but it was otherwise a good solution. The solution I had provided included a transformation of the problem that the LLM did not do, but the problem was open-ended and this was not necessary. I am planning to borrow aspects of the LLM solution, although I hope to do a better job at attribution of the ideas.

\section{The human-generated solutions to our problems}
Our solutions appear below. They are identical to the solutions which were encrypted on February 5, 2026, and released on February 13, 2026.

\subsection{Question 1: Martin Hairer.}
\textit{Authors:} Martin~Hairer and Jacopo Peroni\par\medskip
\textit{Title:} (Lack of) quasi-shift invariance of the $\Phi^4_3$ measure\par\medskip

This is a simplified version of a result that will appear as part of \cite{f10:Jacopo}. 
The proof relies strongly on the ideas from \cite{f10:HKN24}.

\begingroup
\providecommand{\eqdef}{\stackrel{\mbox{\tiny def}}{=}}

\numberwithin{equation}{section}

\newtheorem{filejlemma}{Lemma}[section]
\newtheorem{filejproposition}[filejlemma]{Proposition}

\providecommand{\id}{\operatorname{id}}
\def\Wick#1{\mathopen{:}#1\mathclose{:}}

\providecommand{\EE}{\mathbb{E}}     
\providecommand{\RR}{\mathbb{R}}      
\providecommand{\TT}{\mathbb{T}}
\providecommand{\N}{\mathbb{N}}

\providecommand{\cC}{\mathcal{C}}
\providecommand{\dD}{\mathcal{D}}

\let\f\frac

\def\d{\partial}

\providecommand{\eps}{\varepsilon}

\colorlet{symbols}{blue!90!black}
\colorlet{testcolor}{green!60!black}

\tikzset{
	smalldot/.style={circle,fill=symbols,draw=symbols, solid,inner sep=0pt,minimum size=0.5mm},
}

\def\scal#1{\langle#1\rangle}
\def\cent#1{\mathopen{{\langle\kern-0.3em\rangle}}#1\mathclose{{\langle\kern-0.3em\rangle}}}
\def\bscal#1{\big\langle#1\big\rangle}
\def\Bscal#1{\Big\langle#1\Big\rangle}

\makeatletter
\def\DeclareSymbol#1#2#3{\expandafter\gdef\csname MH@symb@#1\endcsname{\tikz[baseline=#2,scale=0.15,draw=symbols]{#3}}\expandafter\gdef\csname MH@symb@#1s\endcsname{\scalebox{0.65}{\tikz[baseline=#2,scale=0.15,draw=symbols]{#3}}}}
\def\<#1>{\csname MH@symb@#1\endcsname}
\makeatother

\DeclareSymbol{0}{0}{\draw[white] (-.4,0) -- (.4,0); \draw (0,0)  -- (0,1.8) node[smalldot] {};}
\DeclareSymbol{1}{-1}{\draw[white] (-.4,0) -- (.4,0); \draw (0,-0.5)  -- (0,1.3) node[smalldot] {};}
\DeclareSymbol{2}{-1}{\draw (-0.7,1.3) node[smalldot] {} -- (0,-0.5) -- (0.7,1.3) node[smalldot] {};}
\DeclareSymbol{3}{1}{\draw (-1,1.5) node[smalldot] {} -- (0,0) -- (1,1.5) node[smalldot] {}; \draw (0,2) node[smalldot] {} -- (0,0);}
\DeclareSymbol{20}{-.2}{\draw (-0.7,1.5) node[smalldot] {} -- (0,0.5); \draw (0.7,1.5) node[smalldot] {} -- (0,0.5) -- (0,-0.5);}
\DeclareSymbol{30}{0}{\draw (-1,1.5) node[smalldot] {} -- (0,0.5) -- (1,1.5) node[smalldot] {}; \draw (0,2) node[smalldot] {} -- (0,0.5) -- (0,-.5);}
\DeclareSymbol{31}{0}{\draw (-1,1.5) node[smalldot] {} -- (0,0.5) -- (1,1.5) node[smalldot] {}; \draw (0,2) node[smalldot] {} -- (0,0.5) -- (0,-.5) -- (1,0.5) node[smalldot] {} ;}
\setcounter{equation}{0}
Let $\TT^3$ be the three dimensional unit size torus and let $\mu$ be the $\Phi^4_3$ measure on the space of distributions $\dD'(\TT^3)$. Let $\psi : \TT^3 \to \RR$ be a smooth function that is not identically zero and let $T_\psi : \dD'(\TT^3) \to \dD'(\TT^3)$ be the shift map given by $T_\psi(u) = u + \psi$ (with the usual identification of smooth functions as distributions). Is the statement ``the measures $\mu$ and $T_\psi^* \mu$ are equivalent'' true? Here, equivalence of measures is in the sense of having the same null sets and $T_\psi^*$ denotes the pushforward under $T_\psi$.

\subparagraph{Some Context}

One of the very few interacting quantum field theories that can be rigorously constructed is the
so-called (bosonic) $\Phi^4$ theory in (space-time) dimensions $2$ and $3$. It has long been known that
in dimension $2$ and finite volume there is a natural identification between the Hilbert space of the
interacting theory and that of the corresponding free theory. On the other hand, 
Glimm \cite{f10:Glimm} observed that this is no longer the case in dimension $3$. At the level of the 
corresponding Euclidean theories (which are represented by probability measures on the
space of Schwartz distributions on the corresponding space-time), this translates into the fact that the 
$\Phi^4$ measure $\mu$ and the corresponding free field measure $\nu$ are equivalent in dimension $2$
but mutually singular in dimension $3$. In fact, there is a sense in which the dimension that delimits
between the two behaviors is $8/3$. It is then natural to ask in which dimensions $\mu$
has the weaker property that $\mu$ and $T_\psi^* \mu$ are equivalent for smooth $\psi$. Here it turns out
that the borderline dimension is $3$, and the question probes on which side it falls.

\paragraph{An incomplete heuristic}

Regarding the proof, a tempting heuristic is to use the fact that one should think of $\mu$ as having the
density with respect to ``Lebesgue measure on $\dD'$'' (which of course doesn't exist) proportional to
\begin{equ}
\exp \Bigl(- \int_{\TT^3} \Bigl(\frac12 |\nabla\Phi(x)|^2 + \frac 14|\Phi(x)|^4 - \frac C2 |\Phi(x)|^2\Bigr)\,dx\Bigr)\;,
\end{equ}
where $C$ is a (diverging) constant of the form $C = 3c_1 - 9 c_2$, where
$c_1$ is the expectation of $|\Phi(x)|^2$ under the free field measure $\nu$ (which is of course infinite)
and $c_2$ is an additional logarithmically divergent constant. The density of $T_\psi^* \mu$
with respect to $\mu$ is then formally given by
\begin{equs}
\exp \Bigl(- \int_{\TT^3} &\Bigl(\frac12 |\nabla\psi(x)|^2 + \frac 14|\psi(x)|^4 + \Phi(x) \Delta \psi(x)
- \Phi(x)\psi^3(x) \\
& - \psi(x) \bigl(\Phi^3(x) - C\Phi(x)\bigr) + \frac{\psi^2(x)}{2} \bigl(3\Phi^2(x) - C\bigr)\Bigr)\,dx\Bigr)\;,
\end{equs}
Since the terms on the first line are well-defined for smooth $\psi$ and 
one expects $\Phi^3 - C\Phi$ and $\Phi^2 - c_1$ to be quite well-behaved, the additional
logarithmically divergent term proportional to $c_2$ causes this ``density'' to diverge, suggesting
(correctly) that $\mu$ and $T_\psi^* \mu$ are mutually singular.

There are at least two problems with such an approach. First, $\Phi^3 - C\Phi$ 
does actually \textit{not} define a random distribution, whether $\Phi$ is distributed according
to $\mu$ or to the free field $\nu$ (which guides the heuristic). This is because if it were,
it would have a covariance behaving like $|x-y|^{-3}$ around the diagonal, which is not integrable 
in dimension $3$. The second problem is that such an argument suggests that, if 
$\mu_n = \exp(- f_n)\,\nu$ for some ``nice'' probability measure $\nu$ and functions $f_n$ that 
fail to converge to a ``nice'' limit, then $\mu_n$ fails to converge to a limit $\mu$. This 
of course is not true: for a suitable (diverging) sequence of constants $c_n$, the sequence
$f_n(x) = c_n + n \cos(nx)$ is such that if $\nu$ is Lebesgue measure on $[0,1]$, then $\mu_n$ 
converges weakly to Lebesgue measure even though the log-densities $f_n$ fail to converge.
Any proof needs to be based on a different approach or to satisfactorily address these problems.

\subparagraph{Notations}

We fix a space-time white noise $\xi$ on $\RR \times \TT^3$.
We define \<1> as the stationary solution to the linear equation
    \[
        (\d_t + 1-\Delta)\<1> = \xi, \qquad \mbox{on } \RR \times \TT^3\;.
    \]
    (We use the convention that symbols represent random space-time distributions rather than elements of a regularity structure.)
Starting from this process, we define \<2> and \<3> as its Wick square and cube respectively, which are given by
\begin{equ}
\<2> = \lim_{N \to \infty} H_2(\<1>_N,c_{N})\;,\qquad
\<3> = \lim_{N \to \infty} H_3(\<1>_N,c_{N})\;,
\end{equ}
where $\<1>_N = P_N \<1>$ and $c_{N} = \EE \<1>_N^2$ (which is constant in space and time).
Here, $P_N$ denotes the projection onto Fourier modes with $|k| \le N$ and $H_n$ denotes the $n$th
Hermite polynomial normalised such that $H_0 \equiv 1$, $H_n' = nH_{n-1}$, and 
$\EE H_n(Z,1) = 0$ for a normal random variable $Z$. 
The first convergence takes place in the space of continuous functions of time with values
in $\cC^{-1-2\kappa}$, while the second convergence takes place in the space-time parabolic 
space $\cC^{-\frac32-3\kappa}$.

With these notations in place, we define \<20> as the stationary solution to 
    \[
        (\d_t + 1-\Delta)\<20> = \<2>\;,
    \]
    and similarly for \<30>.
    For a more comprehensive and pedagogical introduction to the general tree-like notation, we refer the reader to \cite{f10:MWX15}.
    We also write ``$\prec$'' for Bony's paraproduct (in space) as defined for example in \cite[Sec.~2.1]{f10:GIP12}
    and, given a random $N$-dependent process $w$, we will sometimes use the physicists shorthand notation $\Wick{w^k}$
    instead of $H_k(w,c_{N})$.

\subparagraph{Answer and Proof}

The statement is \textbf{false}. In particular, for any smooth function $\psi \not \equiv 0$ and any choice of the
parameters involved in the definition of $\mu$ (mass and coupling constant, provided that the latter is non-zero),
the measures $\mu$ and $T_\psi^* \mu$ are  mutually singular. 

For notational simplicity, we fix the mass and the
coupling constant to $1$, but this has no incidence on the proof.
Our main starting point is the following statement, a proof of which 
can be found for example in \cite{f10:Konstantin} and \cite[Lemma 4.19]{f10:EW24} for \eqref{e:processu}, combined with \cite{f10:CC18}
for \eqref{e:paracontrolled} (see Ansatz~2.11 there). 
Throughout this proof, $\kappa>0$ is chosen small enough 
($\kappa = 1/100$ is certainly sufficient).

\begin{filejproposition} \label{f10:v-regularity}
    There exists a stationary process $v$ that is almost surely continuous
    in time with values in $\cC^{1-2\kappa}(\TT^d)$ and such that the process
    \begin{equ}[e:processu]
        u = \<1> - \<30> + v\;,
    \end{equ}
	is stationary with fixed time distribution equal to $\mu$. Furthermore, the process $v$ is such that
    \begin{equ}[e:paracontrolled]
        v = -3 \bigl((v -\<30>) \prec \<20>\bigr) + v^\sharp\;,
    \end{equ}
    where $v^\sharp$ is continuous with values in $\cC^{1+4\kappa}(\TT^d)$. 
\end{filejproposition}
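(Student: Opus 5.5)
The plan is to realise $\mu$ as the invariant measure of the stochastic quantisation equation
\begin{equ}
(\d_t + 1-\Delta) u = -\Wick{u^3} + \xi\;,
\end{equ}
solved on all of $\RR\times\TT^3$ via the Da Prato--Debussche/paracontrolled decomposition. The proposition then comes from reading off the equation satisfied by the remainder. Write $u = \<1> - \<30> + v$ and expand $\Wick{u^3}$ formally, so that $\<3>$ cancels against $-\<30>$. This leaves
\begin{equ}
(\d_t+1-\Delta) v = -3\bigl(v-\<30>\bigr)\<2> - 3\bigl(v-\<30>\bigr)^2\<1> - \bigl(v-\<30>\bigr)^3
\end{equ}
plus the usual renormalisation counterterms. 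The right-hand side contains the ill-defined products $v\<2>$, $\<30>\<2>$ and $\<30>^2\<1>$.

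I would follow Catellier--Chouk (their Ansatz~2.11). I posit $v = -3\bigl((v-\<30>)\prec\<20>\bigr) + v^\sharp$ and show that the resonant products $\<20>\circ\<2>$, $\<30>\circ\<2>$ and related trees are well-defined after subtracting the logarithmic counterterm $c_2$. This uses the standard stochastic estimates for the enhanced noise in $\cC^{-\alpha}$ spaces. The commutator lemma of \cite{f10:GIP12} then closes a fixed point for $(v,v^\sharp)$ in $\cC^{1-2\kappa}\times\cC^{1+4\kappa}$. Schauder estimates give regularity $1+4\kappa$ for $v^\sharp$, because the remaining right-hand side is in $\cC^{-1+6\kappa}$ once the bad paraproduct part is removed. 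This gives local well-posedness.

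Next I need global-in-time existence together with stationarity. For this I would invoke the a priori bounds of Mourrat--Weber ("coming down from infinity"), as in \cite{f10:Konstantin} and \cite[Lemma 4.19]{f10:EW24}. These give moments of $\|v(t)\|_{\cC^{1-2\kappa}}$ that are uniform in time and in the initial condition.

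Stationarity is obtained in three steps:
\begin{itemize}
\item Start the equation from time $-T$ with zero initial datum for $v$ and let $T\to\infty$.
\item Use the uniform bounds for tightness.
\item Use uniqueness of the invariant measure, or equivalently convergence of the Galerkin-regularised dynamics whose invariant measures $\mu_N$ converge to $\mu$, to identify the one-time law of $u$ as $\mu$.
\end{itemize}
Since the triple $(\<1>,\<30>,\<20>)$ is jointly stationary as a stationary Gaussian functional, a limit point built jointly with the noise yields a stationary pair $(u,v)$.

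The main obstacle is this last step: constructing $v$ stationary and \emph{jointly} coupled with the fixed noise $\xi$, rather than only in law, while keeping the paracontrolled structure \eqref{e:paracontrolled} in the limit. My first attempt would be a Krylov--Bogoliubov argument on the extended state $(\xi\text{-shifted enhanced noise}, v)$. I would then pass the paracontrolled identity to the limit using continuity of the paraproduct, and obtain the bound on $v^\sharp$ from the uniform a priori estimates.
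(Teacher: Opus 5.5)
Your plan is correct and is essentially the paper's route. The paper gives no self-contained proof: it cites \cite{f10:Konstantin} and \cite[Lemma~4.19]{f10:EW24} for the stationary coupling $u=\<1>-\<30>+v$ with fixed-time law $\mu$, and \cite{f10:CC18} (Ansatz~2.11) for the paracontrolled decomposition, i.e.\ the same combination of paracontrolled local theory, Mourrat--Weber-type a priori bounds and a stationarity argument that you reconstruct. The step you flag as the main obstacle is simpler if you run the Galerkin dynamics from its invariant measure $\mu_N$ and pass to the limit jointly with the enhanced noise: this gives stationarity and the fixed-time law $\mu$ at once, with no $T\to\infty$ limit and no uniqueness of the invariant measure, and a coupling on a possibly enlarged probability space is all the subsequent argument needs.
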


It was furthermore shown in \cite[Lemmas~3.1 \& 3.4]{f10:HKN24} (but see \cite{f10:Hai14a} for a similar
result using a slightly different regularisation) that the 
processes $\<1>$ and $\<30>$ are almost surely continuous in time with values in 
$\cC^{-\frac12-\kappa}$ and $\cC^{\frac12-3\kappa}$ respectively.

Before we proceed, we remind some notations and preliminary results.
First of all, we define the additional diverging constant
\begin{equation}
    c_{N,2} := \EE \left [ \<2>_N \<20>_N \right ],
\end{equation}
where $\<2>_N:=P_N\<2>$ and $\<20>_N := P_N\<20>$.
The main ingredient of our proof is the event
\begin{equ}
    B^{\gamma}:= \left \{ u \in \dD' : \lim_{N \to \infty}\left \langle (\log N)^{-\gamma} \left ( H_3 \left ( P_N u; c_{N} \right ) + 9c_{N,2} P_Nu\right ),\psi \right \rangle_{\TT^3} = 0 \right \},
\end{equ}
which will be used to distinguish between the shifted and the non-shifted measures.
Here, the limit $N \to \infty$ is restricted to en exponentially growing sequence, for example $N \in 2^{\N}$.

We will also use the following two technical lemmas whose proofs can be found in Section~\ref{f10:proof-lemma-section}. These are very similar to \cite[Lemma 3.11 and Lemma 3.12]{f10:HKN24}.
\begin{filejlemma} \label{f10:Z_to_zero_sigma}
Let  $\gamma > \frac12$. Then, for any fixed $t>0$,
        \begin{equ}
            \lim_{N\to\infty} (\log N) ^{-\gamma} \Wick{\left (  \<1>_N \right )^3}(t)  = 0
        \end{equ}
        almost surely in $\cC^{-\frac{3}{2}}\bigl ( \TT^{3} \bigr )$ and in $L^p \bigl ( \Omega; \cC^{-\frac{3}{2}} \bigl ( \TT^3 \bigr ) \bigr )$ for any $p >0$.
\end{filejlemma}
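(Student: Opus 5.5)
Write $X_N := \Wick{(\<1>_N)^3}(t) = H_3(\<1>_N(t), c_N)$ at fixed $t$, an element of the third Wiener chaos. The plan is a standard chaos-moment bound in Besov spaces, followed by Borel--Cantelli along the sequence $N\in 2^{\N}$ and a monotonicity argument for the remaining $N$. The key point is that $X_N$ itself does not converge in $\cC^{-3/2}$: its Littlewood--Paley blocks at frequency $2^j$ have size of order $2^{3j/2}$ up to a logarithmic factor. The loss at the critical regularity $-\frac32$ is only a factor $\sqrt{\log N}$, and this is why any $\gamma>\frac12$ suffices.

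\emph{Step 1 (second moment of blocks).} I would compute $\EE|\Delta_j X_N(x)|^2$ explicitly. The covariance of $\<1>$ at fixed time has Fourier multiplier $\frac{1}{2(1+|k|^2)}$, so
\begin{equ}
\EE|\Delta_j X_N(x)|^2 = 6\sum_{k} \rho_j(k)^2 \sum_{\substack{k_1+k_2+k_3=k\\ |k_i|\le N}} \prod_{i=1}^3 \frac{1}{2(1+|k_i|^2)}\;.
\end{equ}
The inner triple convolution of $|k|^{-2}$ in dimension $3$ behaves like $|k|^{-3}$, up to a $\log$ correction, with cutoff at $N$. Summing over $|k|\sim 2^j$ therefore gives
\begin{equ}
\EE|\Delta_j X_N(x)|^2 \lesssim 2^{3j}\,(1+\log N)\;,
\end{equ}
uniformly in $j$. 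More precisely, I expect a bound of the form $2^{3j}(1+ \min(j,\log N))$, which is at worst $2^{3j}\log N$. The Fourier cutoff $|k_i|\le N$ also gives $\Delta_j X_N = 0$ for $2^j \gtrsim 3N$, so only $j\lesssim \log N$ contributes.

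\emph{Step 2 (moments in $\cC^{-3/2}$).} By Gaussian hypercontractivity on the third chaos, $\EE|\Delta_j X_N(x)|^p \lesssim_p (2^{3j}\log N)^{p/2}$. To pass from pointwise to supremum bounds, I would use the Besov embedding $B^{-3/2+3/p}_{p,p}\hookrightarrow \cC^{-3/2}$. This requires $\sum_j 2^{(-3/2+3/p)jp}\,\EE\|\Delta_j X_N\|_{L^p}^p$, whose terms are $\lesssim 2^{3j}(\log N)^{p/2}$ and are summed over $j\lesssim \log_2 N$, which produces an unwanted factor $N^3$.

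The fix is to lose a little regularity by working in $\cC^{-3/2-\delta}$. Then $\EE\|X_N\|_{\cC^{-3/2-\delta}}^p \lesssim (\log N)^{p/2}$, which already implies the $L^p$ statement in $\cC^{-3/2-\delta}$. To reach exactly $\cC^{-3/2}$, I would instead bound $\sup_j 2^{-3j/2}\|\Delta_j X_N\|_{L^\infty}$ directly. For each fixed $j$, the block $\Delta_j X_N$ is a trigonometric polynomial of degree $\sim 2^j$. Combining a Bernstein-type sup bound via a union over a grid of $\sim 2^{3j}$ points with a Gaussian-chaos tail estimate gives
\begin{equ}
\EE\sup_x |\Delta_j X_N(x)|^p \lesssim \bigl(2^{3j}\log N\bigr)^{p/2}\, j^{3p/2}\;.
\end{equ}
The extra polynomial factor in $j$ costs only powers of $\log N$, since $j\lesssim \log N$. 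I expect this bookkeeping to be the main obstacle: with the crude estimate the net power of $\log N$ may exceed $\frac12$.

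To avoid this, I would use a sharper union bound. This means taking $p$ of order $j$, so that chaos tails of order $e^{-c\lambda^{2/3}}$ absorb the $2^{3j}$ grid points. It also means summing over the $\lesssim\log N$ blocks with $\ell^p$ rather than $\ell^\infty$ losses. The aim is to show $\EE\|X_N\|_{\cC^{-3/2}}^p \lesssim_p (\log N)^{p/2+\epsilon}$ for any $\epsilon>0$; any $\epsilon<\gamma-\frac12$ is then enough. If this sharp control fails, the fallback is to interpret the convergence in the slightly weaker space $\cC^{-3/2-\delta}$, which is all that is used later when pairing against the smooth $\psi$.

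\emph{Step 3 (conclusion).} Given the moment bound, $(\log N)^{-\gamma}X_N\to0$ in $L^p(\Omega;\cC^{-3/2})$ because $(\log N)^{\epsilon+1/2-\gamma}\to0$. For almost sure convergence along $N=2^m$, Markov's inequality with large $p$ gives
\begin{equ}
\mathbb{P}\bigl(\|(\log N)^{-\gamma}X_N\|>\eta\bigr)\lesssim m^{-p(\gamma-1/2-\epsilon)}\;,
\end{equ}
which is summable in $m$. Borel--Cantelli then yields almost sure convergence along $N\in2^{\N}$, which is the sequence used in $B^\gamma$.
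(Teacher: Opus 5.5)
Your Step~2 fallback, combined with Step~3, is a correct proof, and it is essentially the paper's argument. Both run a pointwise second-moment computation, then Gaussian hypercontractivity, then an embedding into a H\"older--Besov space, then Borel--Cantelli along $N\in 2^{\mathbb{N}}$. You use Littlewood--Paley blocks and $B^{-3/2-\delta+3/p}_{p,p}\hookrightarrow\mathcal{C}^{-3/2-\delta}$. The paper instead uses $\langle\nabla\rangle^{-3/2}$, the $W^{-3/2,2p}$ norm, and $W^{-3/2,2p}\hookrightarrow\mathcal{C}^{-3/2-3/(2p)}$. Either way one only lands in $\mathcal{C}^{-3/2-}$, and that suffices because the term is later only paired with the smooth $\psi$.

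Your loss $\delta$ is not cosmetic. The block variances are $\mathbb{E}|\Delta_jX_N(x)|^2\asymp 2^{3j}\bigl(1+\log(N2^{-j})\bigr)$. It is the weight $2^{-\delta j}$ that makes $\sum_j 2^{(3-\delta p)j}\bigl(1+\log(N2^{-j})\bigr)^{p/2}$ dominated by bounded $j$, giving $\lesssim(\log N)^{p/2}$. The paper's computation works at regularity exactly $-\frac32$ with an $\ell^2$ sum over frequencies. There the sum $\sum_{|\omega_i|\le N}\langle\omega_1+\omega_2+\omega_3\rangle^{-3}\prod_i\langle\omega_i\rangle^{-2}$ is actually of order $(\log N)^2$, not the $\log N$ recorded in the paper. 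The reason is that for each dyadic pair $\rho\le R\le N$, the region where all $|\omega_i|\sim R$ and $|\omega_1+\omega_2+\omega_3|\sim\rho$ contributes order one. That would force $\gamma>1$, which contradicts the main argument's use of $c_{N,2}\gtrsim\log N$. One needs $\langle\nabla\rangle^{-3/2-\kappa}$ there, which is exactly what your $\delta$ supplies.

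Drop the attempt to reach exactly $\mathcal{C}^{-3/2}$ with a bound $(\log N)^{p/2+\epsilon}$; this is not a bookkeeping issue. Take $j\approx\frac12\log_2 N$. The normalized block $2^{-3j/2}\Delta_jX_N$ has pointwise variance of order $\log N$ and decorrelates beyond distance $2^{-j}$. It therefore takes about $N^{3/2}$ nearly independent values, so its supremum should be of order $\sqrt{\log N}\cdot\sqrt{\log N^{3/2}}\asymp\log N$. This already holds for a Gaussian field with this covariance, and third-chaos tails are heavier. So $\|X_N\|_{\mathcal{C}^{-3/2}}$ is expected to grow like $\log N$, and the statement read literally in $\mathcal{C}^{-3/2}$ would require $\gamma>1$.

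Two smaller corrections follow.
\begin{itemize}
\item In Step~1, the triple convolution $\sum_{k_1+k_2+k_3=k,\,|k_i|\le N}\prod_i\langle k_i\rangle^{-2}$ does not behave like $|k|^{-3}$; that is the real-space decay $G(x)^3\sim|x|^{-3}$. For $|k|\lesssim N$ it is $\asymp 1+\log(N/\langle k\rangle)$, essentially flat in $k$, which is why the annulus sum produces the factor $2^{3j}$. The refined bound is therefore $2^{3j}\bigl(1+\log(N2^{-j})\bigr)$, largest for the low blocks, not $2^{3j}(1+\min(j,\log N))$. The cruder bound $2^{3j}\log N$ that you actually use is correct.
\item Borel--Cantelli gives almost sure convergence only along $N\in 2^{\mathbb{N}}$. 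There is no monotonicity in $N$ to exploit, and $\sum_N(\log N)^{-a}$ diverges for every $a$. This is also all the paper's argument gives, and all that the event $B^\gamma$ needs.
\end{itemize}
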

\begin{filejlemma} \label{f10:cN2_div_frac}
    For $N$ large, one has   $ c_{N,2} \gtrsim 
            \log N $.
\end{filejlemma}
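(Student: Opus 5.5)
We need to prove Lemma \ref{f10:cN2_div_frac}: $c_{N,2} = \EE[\<2>_N \<20>_N] \gtrsim \log N$. The plan is to compute the expectation exactly in Fourier space and reduce the bound to an explicit lattice sum.

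First I would write the covariance of $\<1>$ at fixed time in Fourier space. Since $\<1>$ is stationary and solves $(\d_t+1-\Delta)\<1>=\xi$, its space-time covariance is
\[
\EE \hat{\<1>}(t,k)\overline{\hat{\<1>}(s,k)} = \frac{e^{-\langle k\rangle^2|t-s|}}{2\langle k\rangle^2},\qquad \langle k\rangle^2 := 1+4\pi^2|k|^2 .
\]
By Wick's theorem, the Wick square $\<2>$ then has a space-time covariance in Fourier mode $m$ given by
\[
C_2(m,\tau) = 2\sum_{k+\ell=m} \frac{e^{-(\langle k\rangle^2+\langle \ell\rangle^2)|\tau|}}{4\langle k\rangle^2\langle \ell\rangle^2}.
\]
This sum converges, because each summand is of size $|k|^{-4}$ in three dimensions.

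Next I would express $\<20>(t) = \int_{-\infty}^t e^{-(t-s)(1-\Delta)}\<2>(s)\,ds$. Projecting both factors with $P_N$ gives
\[
c_{N,2} = \sum_{|m|\le N} \int_0^\infty e^{-\langle m\rangle^2 r} C_2(m,r)\,dr
= \sum_{|m|\le N}\ \sum_{k+\ell=m} \frac{1}{2\langle k\rangle^2\langle \ell\rangle^2\bigl(\langle m\rangle^2+\langle k\rangle^2+\langle \ell\rangle^2\bigr)} .
\]
Every term in this sum is positive, so a lower bound can be obtained by discarding terms.

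The final step is to estimate this double sum, and it is the only nontrivial step. I would restrict to the region where $|k|$, $|\ell|$ and $|m|$ are all comparable to a dyadic scale $2^j$, for instance $2^j\le |k|,|\ell|,|m|\le 2^{j+1}$ with the angle between $k$ and $\ell$ bounded away from $\pi$. The number of such pairs $(k,\ell)$ is of order $2^{6j}$, and each summand is of order $2^{-6j}$. Each dyadic shell with $2^{j+1}\le N$ therefore contributes a positive constant, uniformly in $j$. Summing over the roughly $\log_2 N$ such shells yields $c_{N,2}\gtrsim \log N$.

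For the lattice-point count, I would compare the sum with the integral $\int\!\!\int |k|^{-2}|\ell|^{-2}(|k|^2+|\ell|^2+|k+\ell|^2)^{-1}\,dk\,d\ell$ over the same region. This integral is scale invariant, which is exactly where the logarithm comes from. The comparison is routine, since the integrand varies by at most a bounded factor on unit cubes once $|k|,|\ell|\ge 1$.
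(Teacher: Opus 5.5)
Your proposal is correct and follows essentially the same route as the paper: you expand $c_{N,2}$ via Wick's theorem and the Ornstein--Uhlenbeck covariance into a positive lattice sum of $\langle k\rangle^{-2}\langle \ell\rangle^{-2}\bigl(\langle k\rangle^2+\langle \ell\rangle^2+\langle k+\ell\rangle^2\bigr)^{-1}$, and then bound it below by discarding terms. The only difference is in the final count: you use dyadic shells on which all three frequencies are comparable, each contributing $\gtrsim 1$, whereas the paper restricts to $|\omega_1|\le|\omega_2|$, bounds the denominator by a multiple of $1+|\omega_2|^2$, and compares with a radial integral giving $\int_0^N \frac{r}{1+r^2}\,dr \simeq \log N$.
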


The following results are essentially standard, but we recall their statements for later reference.

\begin{filejlemma}\label{f10:lem:convSimple}
For any polynomial $P$, the expression $\<1>_N P(\<30>_N)$ converges almost surely to some finite limit in 
$\cC^{-\frac12 -\kappa}$.
\end{filejlemma}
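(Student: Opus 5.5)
The plan is to combine the regularity of $\<30>$ with the paraproduct decomposition. Since $\<30>$ is almost surely continuous in time with values in $\cC^{\frac12-3\kappa}$ and $\<30>_N = P_N\<30>$, standard Fourier-multiplier bounds for $P_N$ give $\<30>_N \to \<30>$ in $\cC^{\frac12-4\kappa}$ almost surely. Composition with a polynomial is continuous on the positive-regularity Hölder space $\cC^{\frac12-4\kappa}$, which is an algebra. Hence $P(\<30>_N) \to P(\<30>)$ in $\cC^{\frac12-4\kappa}$ almost surely.

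It therefore suffices to understand $\<1>_N$ multiplied by a converging sequence $f_N \to f$ in $\cC^{\frac12-4\kappa}$. Here $\<1>_N \to \<1>$ in $\cC^{-\frac12-\kappa}$, and the sum of regularities is $\frac12 - 4\kappa - \frac12 - \kappa < 0$, so the product is not classically defined. We decompose it with Bony's paraproduct,
\[
\<1>_N f_N = \<1>_N \prec f_N + \<1>_N \succ f_N + \<1>_N \circ f_N ,
\]
where $\succ$ denotes the paraproduct with the roles reversed and $\circ$ the resonant term.

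The two paraproduct terms are continuous bilinear maps in these topologies. The term $f_N \prec \<1>_N$ converges in $\cC^{-\frac12-\kappa}$, and $\<1>_N \prec f_N$ converges in $\cC^{-5\kappa}$, hence also in $\cC^{-\frac12-\kappa}$.

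The obstacle is the resonant term, which requires exploiting the stochastic structure of $\<30>$. I would write $P(\<30>_N) = P(0) + \<30>_N\, Q(\<30>_N)$ for a polynomial $Q$, and then paralinearise: $Q(\<30>_N)\<30>_N = Q'(\ldots)\prec\<30>_N + R_N$, with $R_N$ converging in $\cC^{1-8\kappa}$. The remainder $R_N$ pairs with $\<1>_N$ without difficulty, since $1-8\kappa-\frac12-\kappa>0$.

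This reduces the problem to the terms $\<1>_N \circ (g_N \prec \<30>_N)$. By the commutator lemma of \cite{f10:GIP12}, these equal $g_N (\<1>_N\circ\<30>_N)$ plus a convergent commutator. So everything hinges on the convergence of the resonant product $\<1>_N\circ\<30>_N$ in $\cC^{-5\kappa}$.

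To settle that last term, I would expand it in Wiener chaos. It splits into a fourth-chaos part and a second-chaos part, with no zeroth-chaos contraction: the would-be contraction is $3\,\EE[\<1>_N \<20>_N]$-type terms times $\<1>$, which is odd, so the Wick product contains no constant. The fourth- and second-chaos parts are controlled by standard covariance estimates, namely power counting with degree $-\frac12 + \frac12 > -\frac32$ in parabolic dimension $5$, together with Kolmogorov and hypercontractivity, and convergence along $N\in 2^{\N}$ is Borel–Cantelli summable. The expected delicate point is confirming that the contraction $\<1>\,\<30>$ yields only $3\,\<1>\cdot\EE[\<1>\<1>]$-type terms $\simeq \<2>$-convolutions that are finite without renormalisation. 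This is the standard fact that the tree $\<31>$ needs no counterterm in $\Phi^4_3$, which I would quote from \cite{f10:CC18,f10:MWX15}.
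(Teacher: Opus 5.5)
Your proposal is correct and is essentially the paper's argument. The paper's proof just says that paralinearisation and the commutator estimate (Lemmas~2.6 and~2.4 of \cite{f10:GIP12}) reduce the claim to $P(x)=x$, i.e.\ to convergence of $\<1>_N\circ\<30>_N$, which it then quotes from \cite[Sec.~4.4]{f10:CC18}. Your exponent bookkeeping is right, with $P'(\<30>_N)$ as the first slot of the commutator (not $Q'$): the paraproducts, the remainder $R_N\in\cC^{1-8\kappa}$, and $\beta+\gamma=-5\kappa<0<\frac12-9\kappa=\alpha+\beta+\gamma$.

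Two small corrections to the parts you add beyond the paper. First, the absence of a zeroth-chaos term is not a parity effect, since there are four noises. It holds because $\<30>_N$ lies in the third homogeneous chaos, so $\EE[\<1>_N\<30>_N]=0$. The only contraction is then $3\,(G\cdot C)*\<2>$, with $G$ the heat kernel and $C$ the covariance of $\<1>$; the product $G\cdot C$ has parabolic homogeneity $-4>-5$ and is therefore integrable. Second, take $\<30>_N\to\<30>$ in $\cC^{\frac12-4\kappa}$ from the stochastic moment and Borel--Cantelli estimates rather than from deterministic multiplier bounds. The sharp projection onto a Euclidean ball $\{|k|\le N\}$ in dimension $3$ has $L^\infty$ operator norm growing like $N$, which a loss of $\kappa$ derivatives cannot absorb.
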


\begin{proof}
By paralinearisation and standard commutator estimates (see \cite[Lems~2.4 \& 2.6]{f10:GIP12}) it suffices to 
consider the case $P(x) = x$. This is by now standard, see for example \cite[Sec.~4.4]{f10:CC18}.
\end{proof}

\begin{filejlemma}\label{f10:lem:convParacontrolled}
Let $v$ be a process satisfying the decomposition \eqref{e:paracontrolled}. Then, the expressions
$\Wick{\<1>^2_N}\<30>_N - 3c_{N,2} \<1>_N$ and $\Wick{\<1>_N^2}v_N + 3c_{N,2} (v_N - \<30>_N)$ both converge
almost surely to finite limits in $\cC^{-1 -2\kappa}$ as $N \to \infty$.
\end{filejlemma}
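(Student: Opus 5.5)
The proof will follow the standard renormalisation argument for such products. For the first expression, $\Wick{\<1>_N^2}\<30>_N - 3c_{N,2}\<1>_N$, expand the product into Wiener chaos components with respect to the white noise $\xi$. Here $\<30>_N$ lies in the third chaos and $\Wick{\<1>_N^2}$ in the second. The product therefore splits into a fifth-chaos term, a third-chaos term coming from one contraction, and a first-chaos term coming from two contractions.

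The fifth-chaos term is the Wick product $\<1>^2\<30>$ (regularised). It converges by the usual second-moment kernel estimates combined with hypercontractivity and a Kolmogorov-type argument. Its regularity is $-1-\kappa+\frac12-3\kappa > -1-2\kappa$ at leading order, and since $\<30>$ has positive regularity the product is in fact classical.

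The third-chaos term with a single contraction pairs one factor $\<1>_N$ with one leg of $\<30>_N$. The contracted kernel $\int P_t\, C_N$ is integrable, so this term is again finite.

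The two-contraction term is the dangerous one. It equals $3\int K(z-w)\,C_N(z-w)^2\,\<1>_N(w)\,dw$, whose kernel behaves like $|z-w|^{-5}$ in parabolic scaling $5$. This is logarithmically divergent, and subtracting its integral, which is exactly $3c_{N,2}$ (compare the definition $c_{N,2}=\EE[\<2>_N\<20>_N]$), leaves $3\int K C_N^2(z-w)\,(\<1>_N(w)-\<1>_N(z))\,dw$. This remainder converges in $\cC^{-1/2-\kappa-}$, and hence in $\cC^{-1-2\kappa}$. In practice I would simply cite \cite[Sec.~4.4]{f10:CC18} or \cite{f10:MWX15}, where this resonant term $\<2>\circ\<30>$ (i.e.\ $\<31>$) is handled with precisely this renormalisation.

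For the second expression, insert the decomposition \eqref{e:paracontrolled} of $v$ and write $\Wick{\<1>_N^2}v_N = \Wick{\<1>_N^2}\prec v_N + \Wick{\<1>_N^2}\succ v_N + \Wick{\<1>_N^2}\circ v_N$. The two paraproducts are well-defined, since $v\in\cC^{1-2\kappa}$ and $\<2>\in\cC^{-1-2\kappa}$, and they converge. The contribution of $v^\sharp\in\cC^{1+4\kappa}$ to the resonant product converges because the regularities sum to $2\kappa>0$.

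For the remaining term $\Wick{\<1>_N^2}\circ\bigl(-3(v-\<30>)\prec\<20>\bigr)_N$, apply the commutator lemma \cite[Lem.~2.4]{f10:GIP12}. It is equal to $-3(v-\<30>)\,(\<2>_N\circ\<20>_N)$ modulo a convergent commutator; the commutator converges since $\frac12-3\kappa-1-2\kappa+1-2\kappa>0$. The stochastic object $\<2>_N\circ\<20>_N - c_{N,2}$ converges in $\cC^{-4\kappa}$ by the standard computation: its expectation is exactly $c_{N,2}$ up to a convergent term, and its second- and fourth-chaos components are finite. Multiplying by $-3(v-\<30>)$ then produces the counterterm $+3c_{N,2}(v-\<30>)$, and multiplication by this positive-regularity function preserves convergence.

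The main obstacle is bookkeeping rather than estimation. The projections $P_N$ act on $v$ and $\<30>$ (so we really have $v_N$, $\<30>_N$), and the subtractions in the two statements must match exactly with the constant $c_{N,2}$ as defined. I would handle this with commutator estimates between $P_N$ and the paraproducts, which cost only $N^{-\delta}$ times a slight loss of regularity, uniformly in $N$. Almost sure convergence along $N\in 2^{\mathbb N}$ then follows from moment bounds decaying like $N^{-\delta}$ together with Borel–Cantelli.
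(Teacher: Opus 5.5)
Your outline is correct and follows the paper's route. The paper's proof simply cites Catellier--Chouk for the renormalised product $\Wick{\<1>_N^2}\<30>_N-3c_{N,2}\<1>_N$, and Catellier--Chouk together with the commutator lemma of Gubinelli--Imkeller--Perkowski for the second expression; your chaos decomposition, paracontrolled/commutator argument and $P_N$ bookkeeping are what those references carry out.

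Two slips need fixing. First, $\<2>\cdot\<30>$ is not a classical product: the regularities sum to $-\frac12-5\kappa<0$, which is exactly why a counterterm is needed, and its fifth-chaos component converges only through the stochastic estimates. Second, the two-contraction term carries the combinatorial weight $2!\binom{2}{2}\binom{3}{2}=6$, not $3$; with $c_{N,2}=2\int K\,C_N^2$, that factor $6$ is what makes its divergent part equal to $3c_{N,2}\<1>_N$.
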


\begin{proof}
Regarding the first expression, its convergence was essentially for example in \cite[Sec.~4.6]{f10:CC18}. (The
approximation used there is slightly different, but the differences are unimportant.)
Regarding the second expression, the claim follows from \cite[Sec.~4.5]{f10:CC18} (modulo again unimportant changes in 
the approximation scheme), combined with the commutator estimate \cite[Lem.~2.4]{f10:GIP12}.
\end{proof}

We now turn to the proof of the main claim. For this, we first claim that if $u$ is as in \eqref{e:processu},
then, for any fixed $t$, one has $u(t) \in B^\gamma$.
Indeed, writing $u_N$ as a shorthand for $P_N u$ and expanding the Wick power, we have
    \begin{align*}
        (\log N)^{-\gamma} &H_3 \left ( u_N; c_{N} \right )
        =(\log N)^{-\gamma} H_3 \left (  \left ( \<1>_N - \<30>_N + v_N \right ); c_{N} \right )\\
        &=(\log N)^{-\gamma} \sum_{i=0}^3 \binom{3}{i} \Wick{\left (  \<1>_N \right )^i} \left (- \<30>_N + v_N \right )^{3-i}\\
        &=(\log N)^{-\gamma} \sum_{i=0}^3 \sum_{j=0}^{3-i} \binom{3}{i}\binom{3-i}{j} \Wick{\left (  \<1>_N \right )^i} \left ( - \<30>_N \right )^{j}\left ( v_N \right )^{3-i-j}\\
        &=(\log N)^{-\gamma}\Wick{ \<1>_N^3}
        - 3(\log N)^{-\gamma}\Wick{\<1>_N^2} \, \<30>_N  + 3(\log N)^{-\gamma}\Wick{\<1>_N^2}  \, v_N \\
        &\quad+ (\log N)^{-\gamma} \sum_{\substack{0 \le i+j \le 3 \\ (i,j) \neq (3,0),(2,1),(2,0)}} \binom{3}{i}\binom{3-i}{j} \Wick{\left (  \<1>_N \right )^i} \left ( - \<30>_N \right )^{j}\left ( v_N \right )^{3-i-j}.
    \end{align*}
    The first term $(\log N)^{-\gamma}\Wick{\<1>_N^3}$ and the terms present in the last sum all converge to $0$ by Lemma~\ref{f10:Z_to_zero_sigma} (given that $\gamma > \frac{1}{2}$), standard product estimates 
(e.g. \cite[Theorem 2.5]{f10:Bon81} or \cite[Proposition 2.3]{f10:MWX15}) and Lemma~\ref{f10:lem:convSimple}.
    
    It therefore remains to show that $-\Wick{\<1>^2_N}\<30>_N + \Wick{\<1>_N^2}v_N + 3c_{N,2} u_N$ also
converges to zero almost surely in the sense of distributions. We rewrite this term as
\begin{equ}
\Wick{\<1>_N^2}v_N + 3c_{N,2} (v_N - \<30>_N) -\bigl(\Wick{\<1>^2_N}\<30>_N - 3c_{N,2} \<1>_N\bigr)\;.
\end{equ}
By Lemma~\ref{f10:lem:convParacontrolled} we know that this expression converges to an element of $\cC^{-1-2\kappa}(\TT^d)$, whence we conclude that
    \[
        \bigl\langle\left ( \log N \right )^{-\gamma} \left ( -\Wick{\<1>^2_N}\<30>_N + \Wick{\<1>_N^2}v_N + 3c_{N,2} u_N \right ),\psi\bigr\rangle \xrightarrow{N \to \infty} 0
    \]
    almost surely, thus proving that $\mu(B^{\gamma}) = 1$.
    
    In order to conclude the proof, it suffices to show that $u + \psi \not \in B^\gamma$. 
For this, we expand similarly to before the expression appearing in this event as
    \begin{align*}
        (\log N)^{-\gamma}&H_3 \left( \left ( u_N + \psi_N \right ); c_{N} \right ) + (\log N)^{-\gamma}9c_{N,2} \left ( u_N + \psi_N \right )\\
        &=(\log N)^{-\gamma} \sum_{i=0}^3 \binom{3}{i} \Wick{\left ( u_N \right )^i} \left ( \psi_N \right )^{3-i} + (\log N)^{-\gamma}9c_{N,2} \left ( u_N + \psi_N \right )\\
        &=(\log N)^{-\gamma}\Wick{\left ( u_N \right )^3} + (\log N)^{-\gamma}9c_{N,2} u_N\\
         &\qquad + 3(\log N)^{-\gamma} \Wick{\left ( u_N \right )^2} \left ( \psi_N \right ) + 3(\log N)^{-\gamma} \left ( u_N \right ) \left ( \psi_N \right )^2\\
         &\qquad+ (\log N)^{-\gamma}\left ( \psi_N \right )^3 + (\log N)^{-\gamma}9c_{N,2} \psi_N.
    \end{align*}
    The sum of the first two terms was just shown to converge to $0$ almost surely in $\cC^{-\frac{3}{2}-3\kappa}(\TT^d)$ for $N\to \infty$.
    
Since $\Wick{u_N^2}$ and $u_N$ both converge to finite distributional limits almost surely by Lemma~\ref{f10:lem:convSimple}, 
the next three terms also converge to $0$ almost surely.
    
    Concerning the last element however, we know from Lemma \ref{f10:cN2_div_frac} that
    \[
        \left ( \log N\right )^{-\gamma} c_{N,2} \gtrsim \left ( \log N\right )^{-\gamma+1}\;.
    \]
Since the contribution of this term to the expression in the event $B^\gamma$ is given by
\begin{equ}
9\left ( \log N\right )^{-\gamma} c_{N,2} \scal{\psi_N,\psi}\;,
\end{equ}
and since $\scal{\psi_N,\psi} \to \|\psi\|^2 > 0$, this diverges, whence we conclude that 
$u + \psi \not \in B^\gamma$ and therefore $\bigl(T_\psi^* \mu\bigr)(B^\gamma) = 0$, so that 
$T_\psi^* \mu$ and $\mu$ are mutually singular.

\paragraph{Proof of the lemmas} \label{f10:proof-lemma-section}

\begin{proof}[Proof of Lemma \ref{f10:Z_to_zero_sigma}]
    We use the embedding $W^{\beta, 2p} \hookrightarrow W^{\beta - \frac{d}{2p}, \infty} \hookrightarrow \cC^{\beta - \frac{d}{2p}}$, with $\beta = -\frac{d}{2}$.
    Using the definition of $W^{\beta,2p}$ norm and the equivalence of moments for Gaussian polynomials, one has
    \begin{align*}
        \EE \left [ \left \lVert (\log N)^{-\gamma} \Wick{\left ( \<1>_N \right )^J} \right \rVert^{2p}_{W^{-\frac{3}{2}, 2p}} \right ] \lesssim \int_{\TT^d} \EE \left [ (\log N)^{-2\gamma} \left | \left \langle \nabla \right \rangle^{-\frac{3}{2}} \Wick{\left ( \<1>_N \right )^3} \right |^{2} \right ]^p dx\;.
    \end{align*}
 Since one has
    \begin{align*}
        \EE \left [ (\log N)^{-2\gamma} \left | \left \langle \nabla \right \rangle^{-\frac{3}{2}} \Wick{\left ( \<1>_N \right )^3} \right |^{2} \right ]
        &\lesssim (\log N)^{-2\gamma} \sum_{|\omega_i| \le N} \left \langle \omega_1 + \dots + \omega_3 \right \rangle^{-3} \prod_{i=1}^3 \left \langle \omega_i \right \rangle^{-2}\\
        &\lesssim (\log N)^{-2\gamma} \sum_{r_1=0}^N \frac{r_1^{2}}{\left ( 1+r_1^2\right )^{\frac{5}{2}}} r_1^{2}
        \lesssim (\log N)^{-2\gamma + 1}\;,
    \end{align*}
the desired result follows from a standard Borel--Cantelli argument.
\end{proof}
Next, we prove Lemma~\ref{f10:cN2_div_frac}, which provides a lower bound on the parameter $\gamma$. This bound ensures that the event $A^\gamma$ (or $B^\gamma$) is distinguishable under the shifted measure when compared to the non-shifted one.
\begin{proof}[Proof of Lemma \ref{f10:cN2_div_frac}]
    Expanding the definition of $c_{N,2} := \EE \left [ \<2>_N \<20>_N \right ]$, we get
    \begin{equs}
	    c_{N,2} &= 2\sum_{\substack{\omega_1 + \omega_2 = \omega_3 \\ |\omega_i| \le N}} \int_{\RR} \hat{P}_{t-u}(\omega_3) \int_{\RR} \hat{P}_{t-u_1}(\omega_1)\hat{P}_{u-u_1}(-\omega_1)du_1\\
        \quad&\times\int_{\RR}\hat{P}_{t-u_2}(\omega_2)\hat{P}_{u-u_2}(-\omega_2)du_2du\\
        &\simeq \sum_{\substack{\omega_1 + \omega_2 = \omega_3 \\ |\omega_i| \le N}} \int_{\RR} e^{-|t-u|\langle \omega_3 \rangle^2} \frac{e^{-|t-u|\langle \omega_1 \rangle^2}}{\langle \omega_1 \rangle^{2}} \frac{e^{-|t-u|\langle \omega_2 \rangle^2}}{\langle \omega_2 \rangle^{2}}du\\
        & \gtrsim \sum_{\substack{|\omega_i| \le N}} \frac{1}{\langle \omega_1 \rangle^{2}}\frac{1}{\langle \omega_2 \rangle^{2}}\frac{1}{\langle \omega_1 \rangle^2+\langle \omega_2 \rangle^2+\langle \omega_1+\omega_2 \rangle^2}\\
        & \gtrsim \sum_{\substack{|\omega_i| \le N}} \frac{1}{1 + |\omega_1|^{2}}\frac{1}{1 + |\omega_2|^{2}}\frac{1}{1+|\omega_1|^{2} \vee |\omega_2|^{2}}\\
        & \gtrsim \sum_{\substack{|\omega_1| \le |\omega_2| \le N}} \frac{1}{1 + |\omega_1|^{2}}\frac{1}{1 + |\omega_2|^{4}}\;.
    \end{equs}
Bounding the sum by an integral, we finally conclude that this expression is bounded
from below by a multiple of
    \begin{equ}
       \int_0^N \frac{r^2}{1+r^2} \int_r^\infty \frac{s^2}{1+s^4}\,ds\,dr
       \gtrsim \int_0^N  \frac{r}{1+r^2}\,dr \simeq \log N\;,
      \end{equ}
     as claimed.
\end{proof}

{\PatchBibSection

\EndPatchBibSection}
\endgroup

\subsection{Question 2: Paul Nelson}
\textit{Author:} Paul D. Nelson\par\medskip
\begingroup
\theoremstyle{plain}
\newtheorem{filedtheorem}{Theorem}
\newtheorem{filedlemma}[filedtheorem]{Lemma}
\newtheorem{filedproposition}[filedtheorem]{Proposition}
\theoremstyle{definition}

\newtheorem{fileddefinition}[filedtheorem]{Definition}
\setcounter{equation}{0}
\subparagraph{Question}
Let \(F\) be a non-archimedean local field with ring of integers \(\mathfrak o\).  Let $N_r$ denote the subgroup of $\mathrm{GL}_{r}(F)$ consisting of upper-triangular unipotent elements.  Let \(\psi:F\to \mathbb C^\times\) be a nontrivial additive character of conductor \(\mathfrak o\), identified in the standard way with a generic character of $N_r$.  Let \(\Pi\) be a generic irreducible admissible representation of \(\mathrm{GL}_{n + 1}(F)\), realized in its \(\psi^{-1}\)-Whittaker model \(\mathcal W(\Pi,\psi^{-1})\).  Must there exist \(W\in \mathcal W(\Pi,\psi^{-1})\) with the following property?

Let $\pi$ be a generic irreducible admissible representation of \(\mathrm{GL}_{n}(F)\), realized in its $\psi$-Whittaker model \(\mathcal W(\pi,\psi)\).  Let $\mathfrak{q}$ denote the conductor ideal of $\pi$, let \(Q\in F^\times\) be a generator of \(\mathfrak q^{-1}\), and set
\[
  u_Q := I_{n+1} + Q\,E_{n,n+1} \in \mathrm{GL}_{n + 1}(F),
\]
where \(E_{i, j}\) is the matrix with a \(1\) in the \((i, j)\)-entry and \(0\) elsewhere.  For some \(V\in \mathcal W(\pi,\psi)\), the local Rankin--Selberg integral
\[
  \int_{N_n\backslash \mathrm{GL}_{n}(F)} W(\operatorname{diag}(g,1) u_Q)\,V(g)\,|\det g|^{s-\frac12}\,dg
\]
is finite and nonzero for all \(s\in\mathbb C\).

\subparagraph{Statement}

Let \(F\) be a non-archimedean local field with ring of integers \(\mathfrak o\). Let \(\psi:F\to \mathbb C^\times\) be a nontrivial additive character of conductor \(\mathfrak o\).  We write
\begin{equation*}
  G_r:=\operatorname{GL}_r(F),
\end{equation*}
and let \(N_r<G_r\) denote the subgroup of upper-triangular unipotent elements.  We embed \(G_n \hookrightarrow G_{n + 1}\) as the upper-left block.  We write \(E_{i j}\) for the matrix with a \(1\) in the \((i, j)\)-entry and \(0\) elsewhere.

A more precise form of the following ``lemma'' will appear in forthcoming joint work with Subhajit Jana.  It says informally that pure unipotent translates of fixed vectors in the Whittaker model of a representation of \(G_{n+1}\) may serve as test vectors for Rankin--Selberg integrals against all representations of \(G_n\) with a given conductor.

\begin{filedtheorem}\label{f04:thm:main}
  Let \(\Pi\) be a generic irreducible admissible representation of \(G_{n+1}\), realized in its \(\psi^{-1}\)-Whittaker model \(\mathcal W(\Pi,\psi^{-1})\). Then there exists \(W\in \mathcal W(\Pi,\psi^{-1})\) with the following property.  Let $\pi$ be a generic irreducible admissible representation of \(G_n\), realized in its $\psi$-Whittaker model \(\mathcal W(\pi,\psi)\).  Let $\mathfrak{q}$ denote the conductor ideal of $\pi$, let \(Q\in F^\times\) be a generator of \(\mathfrak q^{-1}\), and set
  \[
    u_Q := I_{n+1} + Q\,E_{n,n+1} \in G_{n + 1}.
  \]
  There exists \(V\in \mathcal W(\pi,\psi)\) so that the local Rankin--Selberg integral
  \[
    \int_{N_n\backslash G_n} W(\operatorname{diag}(g,1) u_Q)\,V(g)\,|\det g|^{s-\frac12}\,dg
  \]
is finite and nonzero for all \(s\in\mathbb C\).
\end{filedtheorem}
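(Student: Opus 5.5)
The plan is to take $W$ to be the Whittaker function whose restriction to the mirabolic subgroup, read through the Kirillov model, is supported on $N_n\,\mathrm{GL}_n(\mathfrak o)$. Concretely, $W(\operatorname{diag}(g,1)) = \mathbf 1_{N_n \mathrm{GL}_n(\mathfrak o)}(g)\,\psi^{-1}(\text{upper-triangular part})$, normalized so that $W(1)=1$. Such a $W$ exists because the Kirillov model of $\Pi$ contains every function in $C_c^\infty(N_n\backslash G_n,\psi^{-1})$ (Bernstein--Zelevinsky / Jacquet--Piatetski-Shapiro--Shalika). Note that $W$ depends only on $\Pi$, as the statement requires.

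Next I would compute $W(\operatorname{diag}(g,1)u_Q)$. Since $\operatorname{diag}(g,1)u_Q = u'\operatorname{diag}(g,1)$, where $u' = I_{n+1} + Q\,(g e_n)\, e_{n+1}^T$ lies in the unipotent radical of the mirabolic, we get
\[
W(\operatorname{diag}(g,1)u_Q) = \psi^{-1}(Q\,g_{nn})\,W(\operatorname{diag}(g,1)).
\]
The character of $N_{n+1}$ only sees the $(n,n+1)$ entry, which is $Q g_{nn}$. The integrand is therefore supported on $N_n\mathrm{GL}_n(\mathfrak o)$. This makes the integral absolutely convergent and independent of $s$, because $|\det g|=1$ on the support. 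Using the Iwasawa-type decomposition $N_n\backslash N_n\mathrm{GL}_n(\mathfrak o) \simeq (N_n\cap \mathrm{GL}_n(\mathfrak o))\backslash \mathrm{GL}_n(\mathfrak o)$, and checking that the $\psi$-equivariances of $V$ and of $W$ cancel, the integral reduces up to a positive constant to
\[
\int_{\mathrm{GL}_n(\mathfrak o)} V(k)\,\psi(-Q k_{nn})\,dk.
\]
So the entire problem becomes exhibiting $V \in \mathcal W(\pi,\psi)$ for which this integral is nonzero.

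The main obstacle is this nonvanishing. A support trick making the integrand constant cannot work, since it contradicts the central character (already for $n=1$ the integral is a Gauss sum). I would instead use the projection onto the $\mathrm{GL}_n(\mathfrak o)$-isotypic piece and the newform theory of Jacquet--Piatetski-Shapiro--Shalika. Let $K_1(\mathfrak q)$ be the subgroup of $\mathrm{GL}_n(\mathfrak o)$ whose last row is congruent to $(0,\dots,0,1)$ mod $\mathfrak q$, and let $V_0$ be the newvector, fixed by $K_1(\mathfrak q)$ with $V_0(1)=1$. The function $\phi(k)=\psi(-Qk_{nn})$ is right-invariant under $K_1(\mathfrak q)$, because $Q\mathfrak q=\mathfrak o$. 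Taking $V = \pi(h)V_0$ for suitable $h \in \mathrm{GL}_n(\mathfrak o)$, the integral becomes $\langle \pi(\bar\phi_h)V_0\rangle$ evaluated at $1$.

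I would then average over $K_1(\mathfrak q)\backslash \mathrm{GL}_n(\mathfrak o)$, which is parametrized by bottom rows mod $\mathfrak q$. The integral becomes a finite character sum $\sum_{x} \psi(-Q x_n)\,V_0(k_x)$. I would compute $V_0(k_x)$ through the action of the mirabolic and the center, using that the newvector restricted to the mirabolic is the characteristic function of $\mathrm{GL}_{n-1}(\mathfrak o)$ in the Kirillov model. Then I would show that the sum is a nonzero Gauss-sum-type quantity, of absolute value $\asymp |\mathfrak q|^{-1/2}\cdot\mathrm{vol}$. The fallback, if this explicit computation stalls, is linear independence. The functionals $V\mapsto\int V(k)\psi(-Qk_{nn})dk$ cannot vanish on all of $\pi$, since otherwise the translates $\pi(k)$ would kill the $\phi$-isotypic component. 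I would argue that this component is nonzero because $\phi$ has conductor exactly $\mathfrak q$ and $\pi$ contains the $K_1(\mathfrak q)$-type with nonzero Whittaker value at the identity.
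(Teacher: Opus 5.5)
The Kirillov-model vector $W$ you choose and your reduction to the nonvanishing of $\Lambda(V):=\int_{\mathrm{GL}_n(\mathfrak o)}V(k)\,\psi(-Qk_{nn})\,dk$ agree with the paper. The gap is in that nonvanishing, which is the whole difficulty.

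First, $\phi(k)=\psi(-Qk_{nn})$ is \emph{left}-invariant under $K_1(\mathfrak q)$, not right-invariant. Right multiplication by $I+xE_{1n}\in K_1(\mathfrak q)$ replaces $k_{nn}$ by $k_{nn}+xk_{n1}$. Since $V_0$ is right-invariant, the two invariances do not combine into your finite sum over bottom rows.

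If you do the computation correctly for $V=V_0$ (with $\mathfrak q\neq\mathfrak o$), averaging over right $K_1(\mathfrak q)$-translates kills the integrand off $K_0(\mathfrak q)$. On $K_0(\mathfrak q)$ one has $V_0(k)=\omega_\pi(k_{nn})$, so $\Lambda(V_0)$ is a positive multiple of the Gauss sum $\sum_{u\in(\mathfrak o/\mathfrak q)^\times}\omega_\pi(u)\psi(-Qu)$. This sum vanishes whenever $\mathfrak q\subseteq\mathfrak p^2$ and the conductor of $\omega_\pi$ strictly contains $\mathfrak q$. An example is $n=2$ with any $\pi$ of conductor $\mathfrak p^2$ and unramified central character, such as the principal series induced from $(\chi,\chi^{-1})$ with $\chi$ unitary of conductor $\mathfrak p$. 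So the newvector itself can fail, and you give no mechanism for choosing $h$.

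Your explicit route also rests on a false input. The function $g\mapsto V_0(\operatorname{diag}(g,1))$ is supported on $N_{n-1}\mathrm{GL}_{n-1}(\mathfrak o)$ only when $L(s,\pi)=1$. In any case, the mirabolic, the center and right $K_1(\mathfrak q)$-invariance determine $V_0$ on $\mathrm{GL}_n(\mathfrak o)$ only on $K_0(\mathfrak q)$, not at the other $k_x$ your sum needs.

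Your fallback restates the problem rather than solving it. Because $\phi$ is left $K_1(\mathfrak q)$-invariant, $\Lambda(V)=(\pi(\phi)V)(1)$, where $\pi(\phi)$ maps $\pi$ into the line $\pi^{K_1(\mathfrak q)}$. So the question is exactly whether $\pi(\phi)\neq 0$, i.e.\ whether $\phi$ survives the equivariant map from functions on $K_1(\mathfrak q)\backslash\mathrm{GL}_n(\mathfrak o)$ to $\pi$ determined by $V_0$. Knowing that $\pi^{K_1(\mathfrak q)}\neq 0$ and that $\phi$ is new of level $\mathfrak q$ does not decide this; the Gauss-sum computation above shows that natural pieces of $\phi$ can be killed.

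The missing idea is the paper's. Take $V=\pi(d_Q)V_0$ with $d_Q=\operatorname{diag}(1,\dots,1,Q)\notin\mathrm{GL}_n(\mathfrak o)$, and prove nonvanishing with the Godement--Jacquet functional equation:
\begin{itemize}
\item Realize $\varepsilon(\tfrac12,\pi,\psi)W_0(g\,d_Q^{-1}u_Q)$ as $\int_{N_n}\beta(\det xg)\Phi(xg)\psi(x)\,dx$, with $\Phi(x)=\psi(-x_{nn})1_{M_n(\mathfrak o)}(xd_Q^{-1})$ and $\tilde\beta(s)=\varepsilon(\tfrac12+s,\pi,\psi)/L(\tfrac12+s,\pi)$.
\item The conductor enters through $\varepsilon(\tfrac12+s,\pi,\psi)=|Q|^{-s}\varepsilon(\tfrac12,\pi,\psi)$.
\item On the dual side, $\beta^\sharp(\det g)\widehat{\Phi}(g)=|Q|^n1_{K_1(\mathfrak q)}(g)$.
\end{itemize}
The Rankin--Selberg integral then equals $\varepsilon(\tfrac12,\pi,\psi)^{-1}|Q|^{n/2}\mathrm{vol}(K_1(\mathfrak q))\neq0$. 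Once this is known, it does follow abstractly that some $\mathrm{GL}_n(\mathfrak o)$-translate of $V_0$ also works, but only downstream of this input.
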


\subparagraph{Context}

Rankin--Selberg local zeta integrals arise as proportionality factors relating global Rankin--Selberg integrals and $L$-functions.  The above result provides test vectors, obtained via pure translates of fixed vectors, that work simultaneously for all representations of the smaller group having some given conductor.  Such results are sometimes useful in global applications because they relate problems concerning $L$-functions (subconvexity, moment asymptotics, ...) to problems concerning automorphic forms (quantitative equidistribution, ...).  The $n = 1$ case follows from standard properties of Gauss sums and stationary phase analysis in one variable; it has been applied in, e.g., \cite{f04:MR780071, f04:michel-2009}.  For general \(n\), \cite{f04:MR4053059} contains a similar result, but with an average over many unipotent translates rather than just one.

\subparagraph{Proof}\label{f04:sec:proof}
We first sketch the argument.  The basic idea is to apply the Godement--Jacquet functional to the Whittaker function on the smaller group.  This is readily seen to relate the unipotent-shifted Rankin--Selberg integral to an integral involving a translate of the standard congruence subgroup $K_1(\mathfrak{q}) \leq \mathrm{GL}_n(\mathfrak{o})$, consisting of matrices whose last row is congruent to \((0,\dotsc,0,1)\) modulo \(\mathfrak q\).  We then conclude via newvector theory.

Turning to details, we recall that $F$ is a non-archimedean local field, with ring of integers $\mathfrak{o}$.  We denote by $\mathfrak{p}$ the maximal ideal and $q$ the residue field cardinality.  We set $K_r := \mathrm{GL}_r(\mathfrak{o})$ and equip $G_r$ and $N_r$ with the Haar measures assigning volume one to $K_r$ and $N_r \cap K_r$, respectively.  As in the theorem statement, we write $\Pi$ (resp.\ $\pi$) for a generic irreducible representation of $G_{n + 1}$ (resp.\ $G_n$).

We continue to denote by $\mathfrak{q}$ the conductor ideal of $\pi$, defined to be the smallest ideal for which $\pi$ has a nonzero vector fixed by $K_1(\mathfrak{q})$.  We choose a generator $Q$ for $\mathfrak{q}^{-1}$, so that $\lvert Q \rvert =[\mathfrak{o} : \mathfrak{q}]$.  We recall (see \cite{f04:MR620708, f04:MR3138844}) that $\lvert Q \rvert$ (and hence $\mathfrak{q}$) may also be characterized in terms of the local $\varepsilon$-factor of $\pi$:
\begin{equation}\label{f04:eq:cuipy1wony}
  \varepsilon(\tfrac{1}{2} + s, \pi, \psi) = \lvert Q \rvert^{- s} \varepsilon(\tfrac{1}{2}, \pi, \psi).
\end{equation}

We recall the functional equation of Godement--Jacquet \cite[Theorem 3.3]{f04:MR0342495}.

\begin{filedlemma}\label{f04:lemma:cq4itap0v1}
  Let $f$ be a matrix coefficient of $\pi$, and let $\phi \in \mathcal{S}(M_n(F))$.  For $s \in \mathbb{C}$, the local zeta integral
  \begin{equation}\label{f04:eq:cq73uvikdb}
    Z(\phi, f, s) := \int_{G_n}
    \phi(g) f(g) \lvert \det g \rvert^{\frac{n - 1}{2} + s} \, d g,
  \end{equation}
  converges absolutely for $\Re(s)$ sufficiently large.  It extends to a meromorphic function on the complex plane for which the ratio
  \begin{equation*}
    \frac{Z(\phi, f, s)}{L(s, \pi)}
  \end{equation*}
  is holomorphic.  It satisfies the local functional equation
  \begin{equation}\label{f04:eq:cq4itiiq9o}
    \gamma(s, \pi, \psi) Z(\phi, f, s)
    = Z(\phi^\wedge, f^\vee, 1-s),
  \end{equation}
  where
  \begin{equation*}
    \gamma(s, \pi, \psi) = \varepsilon(s, \pi, \psi) \frac{L(1 - s, \tilde{\pi})}{L(s, \pi)},
  \end{equation*}
  with $\tilde{\pi}$ the contragredient of $\pi$, and where the Fourier transform is defined by
  \begin{equation*}
    f^\vee (g) := f(g^{-1}),
  \end{equation*}
  \begin{equation*}
    \phi^\wedge(x) := \int_{M_n(F)}
    \phi(y) \psi(\operatorname{trace}(x y)) \, d y,
  \end{equation*}
  with $M_n$ the space of $n \times n$ matrices and the Haar measure normalized to be self-dual with respect to $\psi$.  Moreover, both of the zeta integrals in \eqref{f04:eq:cq4itiiq9o} converge absolutely provided that, e.g., $\pi$ is unitary and generic and $\Re(s) = 1/2$.
\end{filedlemma}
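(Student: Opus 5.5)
The plan is to follow the original strategy of Godement--Jacquet \cite{f04:MR0342495}. The result is classical, so I would only indicate the structure and cite the reference for the routine estimates.

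\emph{Convergence.} Write $f(g) = \langle \pi(g) v, \tilde v\rangle$. By Jacquet's theory of asymptotics, on the Cartan decomposition $g = k_1 a k_2$ with $a$ in the dominant torus, $f$ is a finite sum of finite functions (characters times polynomials in the valuations) times Schwartz-type cutoffs. The exponents occurring are the central exponents of the Jacquet modules of $\pi$. Since $\phi$ is compactly supported on $M_n(F)$, the integral only sees $g$ with bounded entries. The factor $\lvert\det g\rvert^{\frac{n-1}{2}+s}$ together with the Haar measure density in Cartan coordinates then gives absolute convergence for $\Re(s)\gg 1$. For $\pi$ unitary and generic, I would use the bounds toward Ramanujan. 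Concretely, the exponents of a unitary generic representation lie strictly in the range forcing integrability at $\Re(s) = 1/2$, for example via the Jacquet--Shalika bound $\lvert\alpha\rvert < q^{1/2}$ on Satake-type parameters. This applies to both $Z(\phi,f,s)$ and $Z(\phi^\wedge,f^\vee,1-s)$, since $f^\vee$ is a matrix coefficient of the unitary generic $\tilde\pi$.

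\emph{Meromorphy and the ratio $Z/L$.} The zeta integrals are rational functions of $q^{-s}$. This follows from the same asymptotic expansion, since summing the geometric series in the valuations of $a$ produces rational functions. Their $\mathbb{C}$-span is a fractional ideal of $\mathbb{C}[q^{\pm s}]$ containing the constants, which is checked by taking $\phi$ supported near the identity. One defines $L(s,\pi)$ as its normalized generator, so $Z/L$ is a polynomial in $q^{\pm s}$, in particular holomorphic. To match this with the standard $L$-factor, I would reduce to supercuspidals via parabolic induction, where one checks $L=1$ because matrix coefficients are compactly supported modulo the center.

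\emph{Functional equation.} This is the main step. Consider the two trilinear forms
\[
(\phi,v,\tilde v)\mapsto Z(\phi,f_{v,\tilde v},s), \qquad (\phi,v,\tilde v)\mapsto Z(\phi^\wedge,f^\vee_{v,\tilde v},1-s).
\]
Both transform in the same way under the action of $G_n\times G_n$ given by $\phi(x)\mapsto\phi(g_1^{-1}xg_2)$ and by $\pi\otimes\tilde\pi$, twisted by the same power of $\lvert\det\rvert$. The key point is that Fourier transform intertwines left/right translation with right/left translation by inverses up to $\lvert\det\rvert^{\pm n}$; this accounts for $s\mapsto 1-s$ and $f\mapsto f^\vee$.

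The step I expect to be the main obstacle is the multiplicity-one statement. For $q^{-s}$ outside a finite set, the space of such invariant forms is at most one-dimensional. I would prove it by Bruhat theory on the $G_n\times G_n$-orbits in $M_n(F)$, which are stratified by rank. The restriction to $\mathcal S(G_n)$ is unique by Frobenius reciprocity, namely integration against the matrix coefficient. The lower-rank strata contribute only for finitely many $q^{-s}$, by a Jacquet-module exponent argument. This yields a meromorphic $\gamma(s,\pi,\psi)$ satisfying \eqref{f04:eq:cq4itiiq9o}.

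Finally, I would define $\varepsilon$ by $\gamma=\varepsilon\,L(1-s,\tilde\pi)/L(s,\pi)$ and show that it is a monomial $c\,q^{-ms}$. This follows by applying the functional equation twice, using $\phi^{\wedge\wedge}(x)=\phi(-x)$ to get $\gamma(s,\pi,\psi)\gamma(1-s,\tilde\pi,\psi)=\omega_\pi(-1)$. Combined with the ideal-generator property of both $L$-factors, this forces $\varepsilon$ to be a unit in $\mathbb{C}[q^{\pm s}]$.
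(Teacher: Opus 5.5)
The paper does not prove this lemma. It is quoted from Godement--Jacquet (Theorem~3.3 of the cited monograph), and the closing claim about absolute convergence at $\Re(s)=\tfrac12$ for unitary generic $\pi$ is stated without argument. Your proposal is therefore a reconstruction, and it is correct in outline.

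\textbf{Where you follow the classical outline.} The convergence argument, the rationality, and the definition of $L(s,\pi)$ as the normalized generator of the fractional ideal of zeta integrals are the standard ones. So is the proof that $\varepsilon$ is a monomial, via $\gamma(s,\pi,\psi)\gamma(1-s,\tilde\pi,\psi)=\omega_\pi(-1)$ combined with the ideal property.

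\textbf{Where you genuinely differ.} The functional equation is proved by a different route from the cited source.
\begin{itemize}
\item Godement--Jacquet first treat supercuspidal $\pi$. There the coefficients are compactly supported modulo the centre, so both sides of the functional equation converge absolutely on a common strip and a Tate-style Fubini identity applies. They then pass to subquotients of parabolically induced representations, which expresses $\gamma(s,\pi,\psi)$ in terms of the supercuspidal support.
\item Your route is a multiplicity-one statement for $G_n\times G_n$-equivariant trilinear forms on $\mathcal S(M_n(F))\otimes\pi\otimes\tilde\pi$. It uses the rank stratification of $M_n(F)$, Frobenius reciprocity on the open orbit, and central exponents of Jacquet modules on the singular strata.
\end{itemize}
Your argument is softer and avoids all computation, but it only produces some rational $\gamma$. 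The inductive route also computes $L$ and $\varepsilon$ from supercuspidal data. That computation is what identifies the Godement--Jacquet factors with the standard ones. It also underlies facts the paper uses immediately afterwards: that $L(s,\pi)^{-1}$ is a polynomial of degree at most $n$ in $q^{-s}$, and that the conductor is read off from $\varepsilon(\tfrac12+s,\pi,\psi)=\lvert Q\rvert^{-s}\varepsilon(\tfrac12,\pi,\psi)$. Your ``matching'' step would in effect have to redo that inductive analysis.

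\textbf{Two points to tighten.}
\begin{itemize}
\item The continued zeta integrals have poles, so uniqueness cannot be applied to them pointwise as they stand. At fixed $s$, apply it instead to the everywhere-defined forms $Z(\phi,f,s)/L(s,\pi)$ and $Z(\phi^\wedge,f^\vee,1-s)/L(1-s,\tilde\pi)$. The first is nonzero at every $s$ because $1$ lies in the span of these normalized forms. This produces $\varepsilon$ directly.
\item For ramified $\pi$, ``Satake parameters'' is not the right language. State the convergence input as $\lvert\Re t_i\rvert<\tfrac12$ for $\pi\cong\mathrm{Ind}(\delta_1\lvert\det\rvert^{t_1}\otimes\cdots\otimes\delta_k\lvert\det\rvert^{t_k})$ with $\delta_i$ tempered. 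Add the fact that tempered zeta integrals converge absolutely for $\Re(s)>0$.
\end{itemize}
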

We recall that a matrix coefficient of $\pi$ is a linear combination of functions of the form $f(g) = \ell(g v)$, where $v \in \pi$ and $\ell$ lies in the contragredient of $\pi$ (i.e., the admissible dual).  The conclusions of Lemma \ref{f04:lemma:cq4itap0v1} remain valid for more general coefficients of $\pi$.  For instance, suppose more generally that $f$ is of the same form, but with $\ell$ allowed to be any linear functional on $\pi$ (not necessarily in the admissible dual).  Given $\phi$ as above, we may choose a compact open subgroup $U$ of $G_n$ under which $\phi$ is bi-invariant.  The integrals in question do not change if we then replace $f$ by its two-sided average with respect to $U$, which has the effect of replacing $v$ by its average $v^U \in \pi^U$ and $\ell$ with its projection $\ell^U$ to the dual of $\pi^U$, extended by zero on the kernel of the averaging operator $\pi \rightarrow \pi^U$.  In particular, by specializing to the case that $\ell$ is a Whittaker functional on $\pi$, we see that such identities remain valid when $f$ is a Whittaker function for $\pi$.

We denote by $\mathcal{S}^{e}(F^\times)$ the space of all Schwartz--Bruhat functions $\beta \in \mathcal{S}(F^\times)$ such that $\beta(x y) = \beta(x)$ whenever $\lvert y \rvert = 1$, or equivalently, for which $\beta(x)$ depends only upon $\lvert x\rvert$.  We note that each $\beta \in \mathcal{S}^{e}(F^\times)$ satisfies the Mellin inversion formula
\begin{equation}\label{f04:eq:cq73uvecsz}
  \beta(y) = \int_{(\sigma)} \tilde{\beta}(s) \lvert y \rvert^s
  \, d s,
  \qquad \tilde{\beta}(s) := \int_{F^\times} \beta(y) \lvert y \rvert^{-s} \,d^\times y.
\end{equation}

For $\beta \in \mathcal{S}^e(F^\times)$, we define the transform $\beta^\sharp := \beta^{\sharp, \pi}$ of $\beta$ by
\begin{equation*}
    \beta^\sharp(y) :=
    \int_{(\sigma)}
    \frac{
      \tilde{\beta}(s)
      \lvert y \rvert^{- s}
      \, d s}{\gamma(\tfrac{1}{2} + s, \pi, \psi)},
  \end{equation*}
initially for $\sigma$ large enough.
\begin{filedlemma}\label{f04:example:cq4jqru7rh}
  Define $\beta$ via Mellin inversion \eqref{f04:eq:cq73uvecsz} by
  \begin{equation*}
    \tilde{\beta}(s) := \frac{\varepsilon(\tfrac{1}{2} + s, \pi, \psi)}{L(\tfrac{1}{2} + s, \pi)}.
  \end{equation*}
  Then:
  \begin{enumerate}
  \item $\beta$ is supported on $\{y : \lvert Q \rvert \leq \lvert y \rvert \leq \lvert Q \rvert q^n \}$ and takes the value $\varepsilon(\tfrac{1}{2}, \pi, \psi)$ on $\{y : \lvert y \rvert = \lvert Q \rvert\}$.
  \item $\beta^\sharp$ is supported on $\{y : 1 \leq \lvert y \rvert \leq q^n\}$ and takes the value $1$ on $\{y : \lvert y \rvert = 1\} = \mathfrak{o}^\times$.
  \end{enumerate}
\end{filedlemma}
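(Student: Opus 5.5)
The plan is to compute both Mellin transforms explicitly. Each turns out to be a finite Laurent polynomial in $q^{-s}$, and Mellin inversion \eqref{f04:eq:cq73uvecsz} turns each monomial into the indicator function of a single shell $\{\lvert y\rvert = q^m\}$.

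The basic dictionary is as follows. For $m\in\mathbb Z$, let $\mathbf 1_m$ denote the indicator function of $\{y : \lvert y\rvert = q^m\}$; it lies in $\mathcal S^e(F^\times)$. With the multiplicative Haar measure normalized so that $\mathfrak o^\times$ has volume one, we get
\[
\tilde{\mathbf 1}_m(s) = q^{-ms}.
\]
Every element of $\mathcal S^e(F^\times)$ is a finite linear combination of the $\mathbf 1_m$. The measure $ds$ in \eqref{f04:eq:cq73uvecsz} is normalized on a period $\sigma + i[0,2\pi/\log q)$ so that inversion holds. With this normalization, $\int_{(\sigma)} q^{-ms}\lvert y\rvert^{s}\,ds$ equals $\mathbf 1_m(y)$. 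Since the integrand is entire in $s$, this holds for every $\sigma$.

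For part (1), I would use two standard facts. First, $L(s,\pi)^{-1}$ is a polynomial $P(q^{-s})$ of degree at most $n$ with $P(0)=1$, by the Godement--Jacquet theory. Second, by \eqref{f04:eq:cuipy1wony}, $\varepsilon(\tfrac12+s,\pi,\psi) = \lvert Q\rvert^{-s}\varepsilon(\tfrac12,\pi,\psi)$. Write
\[
L(\tfrac12+s,\pi)^{-1}=\sum_{k=0}^n a_k q^{-ks}, \qquad a_0=1.
\]
Let $\lvert Q\rvert = q^{c}$. Then
\[
\tilde\beta(s) = \varepsilon(\tfrac12,\pi,\psi)\sum_{k=0}^n a_k q^{-(c+k)s}.
\]
By the dictionary, this gives
\[
\beta = \varepsilon(\tfrac12,\pi,\psi)\sum_{k=0}^n a_k \mathbf 1_{c+k}.
\]
This $\beta$ lies in $\mathcal S^e(F^\times)$ and is supported on $\lvert Q\rvert\le\lvert y\rvert\le\lvert Q\rvert q^n$. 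On $\lvert y\rvert=\lvert Q\rvert$ it takes the value $\varepsilon(\tfrac12,\pi,\psi)a_0 = \varepsilon(\tfrac12,\pi,\psi)$, as required.

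For part (2), I would substitute the definition of $\gamma$ from Lemma~\ref{f04:lemma:cq4itap0v1}. The factors $\varepsilon(\tfrac12+s,\pi,\psi)$ and $L(\tfrac12+s,\pi)$ cancel, leaving
\[
\frac{\tilde\beta(s)}{\gamma(\tfrac12+s,\pi,\psi)} = \frac{1}{L(\tfrac12-s,\tilde\pi)} = \sum_{k=0}^n b_k q^{ks}, \qquad b_0=1.
\]
Here I use that $L(s,\tilde\pi)^{-1}$ is likewise a polynomial in $q^{-s}$ of degree at most $n$ with constant term $1$. The result is entire, so the defining contour integral for $\beta^\sharp$ is independent of $\sigma$ and needs no analytic continuation.

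Since $\beta^\sharp$ is defined with $\lvert y\rvert^{-s}$ rather than $\lvert y\rvert^{s}$, I would substitute $s\mapsto -s$. The monomial $q^{ks}$ then becomes $q^{-k(-s)}$, which corresponds to $\tilde{\mathbf 1}_k$ evaluated at $-s$. This yields
\[
\beta^\sharp = \sum_{k=0}^n b_k\mathbf 1_k,
\]
which is supported on $1\le\lvert y\rvert\le q^n$ and equals $b_0=1$ on $\mathfrak o^\times$.

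No step is a serious obstacle. The only care needed is bookkeeping: the sign convention in the exponent for $\beta^\sharp$ versus \eqref{f04:eq:cq73uvecsz}, and the normalizations of $d^\times y$ and $ds$. These normalizations only affect overall constants, which are pinned down by the requirement that inversion hold.
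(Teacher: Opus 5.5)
Your proposal is correct and follows the same route as the paper. The paper also uses $\varepsilon(\tfrac12+s,\pi,\psi)=\lvert Q\rvert^{-s}\varepsilon(\tfrac12,\pi,\psi)$ to write $\tilde\beta$ and the transform of $\beta^\sharp$ as polynomials with constant term $1$. For $\beta^\sharp$ the transform reduces, as you found, to $1/L(\tfrac12-s,\tilde\pi)$, i.e.\ $\widetilde{\beta^\sharp}(s)=1/L(\tfrac12+s,\tilde\pi)$. The supports and leading values are then read off by Mellin inversion. You spell out the shell-by-shell bookkeeping that the paper leaves implicit, including the normalization $\operatorname{vol}(\mathfrak o^\times)=1$, which the stated values tacitly require.
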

\begin{proof}
  We appeal to the characterization \eqref{f04:eq:cuipy1wony} of $\lvert Q \rvert$.  We note first that $\beta^\sharp$ has Mellin transform
  \begin{equation*}
    \widetilde{\beta^\sharp} (s) = \frac{1}{L(\tfrac{1}{2} + s, \tilde{\pi})}.
  \end{equation*}
  Since the inverse $L$-values appearing above are monic polynomials in $q^{- s}$ of degree at most $n$, we see by Mellin inversion that $\beta$ and $\beta^\sharp$ have the claimed properties.
\end{proof}

\begin{filedlemma}\label{f04:lemma:cq4itbu91o}
  Assume that $\pi$ is unitary and generic.  We then have the identity of absolutely convergent integrals
  \begin{equation}\label{f04:eq:cq4jqjygcl}
    \int_{G_n}
    \phi(g)
    f(g)
    \beta(\det g)
    \lvert \det g \rvert^{\frac{n}{2}}
    \, d g
    =
    \int_{G_n}
    \phi^\wedge(g)
    f^\vee(g)
    \beta^\sharp(\det g) \lvert \det g \rvert^{\frac{n}{2}} \, d g.
  \end{equation}
\end{filedlemma}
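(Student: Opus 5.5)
We prove the identity by Mellin decomposition of $\beta$ in the determinant variable, applying the Godement--Jacquet functional equation (Lemma~\ref{f04:lemma:cq4itap0v1}) separately to each piece. The hypotheses are those of Lemma~\ref{f04:lemma:cq4itbu91o}: $\phi\in\mathcal S(M_n(F))$, $f$ a (generalized) matrix coefficient of $\pi$, and $\beta\in\mathcal S^e(F^\times)$.

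\emph{Step 1: left side.} Insert the inversion formula \eqref{f04:eq:cq73uvecsz} for $\beta(\det g)$:
\begin{equation*}
\int_{G_n}\phi(g)f(g)\beta(\det g)\lvert\det g\rvert^{\frac n2}\,dg=\int_{(\sigma)}\tilde\beta(s)\,Z(\phi,f,\tfrac12+s)\,ds,
\end{equation*}
since $\lvert\det g\rvert^{\frac n2+s}=\lvert\det g\rvert^{\frac{n-1}{2}+(\frac12+s)}$. The interchange of the $s$- and $g$-integrals is legitimate for $\sigma$ large. There the zeta integral converges absolutely, and $\tilde\beta(s)$ is a Laurent polynomial in $q^{-s}$ because $\beta$ is compactly supported in $F^\times$ and depends only on $\lvert y\rvert$. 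The contour is compact, since $s\mapsto q^{-s}$ is periodic and we integrate over one period.

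\emph{Step 2: functional equation and contour shift.} Apply \eqref{f04:eq:cq4itiiq9o} at $\tfrac12+s$, which gives
\begin{equation*}
Z(\phi,f,\tfrac12+s)=\frac{Z(\phi^\wedge,f^\vee,\tfrac12-s)}{\gamma(\tfrac12+s,\pi,\psi)}.
\end{equation*}
We want to move the contour to $\sigma=0$, where Lemma~\ref{f04:lemma:cq4itap0v1} guarantees absolute convergence of both zeta integrals for unitary generic $\pi$. The integrand $\tilde\beta(s)Z(\phi,f,\tfrac12+s)$ equals $\tilde\beta(s)\cdot(\text{entire})\cdot L(\tfrac12+s,\pi)$. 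For unitary generic $\pi$, the poles of $L(\tfrac12+s,\pi)$ lie in $\Re s<0$, since the Satake/Langlands parameters of a unitary generic representation have exponents of absolute value $<\tfrac12$. Hence no poles are crossed in the strip $0\le\Re s\le\sigma$. The contour is compact, so the horizontal edges cancel by periodicity and the shift is justified.

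\emph{Step 3: right side.} On $\Re s=0$, expand
\begin{equation*}
Z(\phi^\wedge,f^\vee,\tfrac12-s)=\int\phi^\wedge(g)f^\vee(g)\lvert\det g\rvert^{\frac n2-s}\,dg,
\end{equation*}
which converges absolutely. Interchange the $s$- and $g$-integrals by Fubini, which is justified by compactness of the contour and absolute convergence at $\Re s=0$ uniformly in $s$, since $\lvert\,\lvert\det g\rvert^{-s}\rvert=1$ there. The inner integral becomes
\begin{equation*}
\int_{(0)}\frac{\tilde\beta(s)\lvert\det g\rvert^{-s}}{\gamma(\tfrac12+s,\pi,\psi)}\,ds.
\end{equation*}
This is $\beta^\sharp(\det g)$, provided the contour defining $\beta^\sharp$ (initially $\sigma$ large) can also be moved to $0$. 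The integrand there is
\begin{equation*}
\tilde\beta(s)\,\varepsilon(\tfrac12+s,\pi,\psi)^{-1}\,\frac{L(\tfrac12+s,\pi)}{L(\tfrac12-s,\tilde\pi)}.
\end{equation*}
Its possible poles come only from $L(\tfrac12+s,\pi)$ and lie in $\Re s<0$; the zeros of $1/L(\tfrac12-s,\tilde\pi)$ contribute none. So this shift also crosses no poles, and the right side of \eqref{f04:eq:cq4jqjygcl} follows.

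\emph{Main obstacle.} The delicate point is the analytic justification: the absence of poles in $0\le\Re s\le\sigma$, and the Fubini interchange. Both rest on the unitary-generic bound for the exponents of $\pi$, together with the periodicity and compactness of the contours. I would handle both contour shifts uniformly by this pole-location argument.
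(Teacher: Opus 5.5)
Your proposal is correct and follows the paper's argument: insert the Mellin expansion of $\beta$, recognize $\int \tilde\beta(s)\, Z(\phi,f,\tfrac12+s)\,ds$, apply the Godement--Jacquet functional equation on $\Re s=0$, and swap the integrals using absolute convergence there. The differences are minor. The paper inserts Mellin inversion directly at $\sigma=0$, which is legitimate because $\tilde\beta$ is a Laurent polynomial in $q^{-s}$ and Lemma~\ref{f04:lemma:cq4itap0v1} gives absolute convergence at that line, so your detour through large $\sigma$ and a contour shift is unnecessary. Your check that the contour defining $\beta^\sharp$ can be moved to $\Re s=0$, because $L(\tfrac12+s,\pi)$ has no poles in $\Re s\ge 0$, makes explicit a step the paper leaves implicit.
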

\begin{proof}
  Starting with the left hand side, we insert the Mellin expansion of $\beta$, with $\sigma = 0$.  The resulting double integral over $g$ and $s$ converges absolutely, so we may swap the order.  We recognize the result as the integral $\int_{(0)} \tilde{\beta}(s) Z(\phi, f, \tfrac{1}{2} + s) \, d s$ involving the Godement--Jacquet zeta integral \eqref{f04:eq:cq73uvikdb}.  We now apply the local functional equation and expand the result as
  \begin{equation*}
    \int_{(0)}
    \frac{\tilde{\beta}(s)}{\gamma(\tfrac{1}{2} + s, \pi, \psi)}
    \left(
      \int_{G_n}
      \phi^\wedge(g) f^\vee(g) \lvert \det g \rvert^{\frac{n}{2} - s}
      \, d g
    \right)
    \, d s.
  \end{equation*}
  This double integral again converges absolutely, so we may rearrange it to obtain the stated identity.
\end{proof}
For the same reasons as indicated following the statement of Lemma \ref{f04:lemma:cq4itap0v1}, such identities persist for more general coefficients than matrix coefficients, and in particular, when $f$ is a Whittaker function.

Recall that we embed $G_n \hookrightarrow G_{n + 1}$ as the upper-left block.  We set
\begin{equation}\label{f04:eq:cq4jqk9bz0}
  W_0(g):=\int_{N_n} 1_{K_n}(x g)\,\psi(x)\,d x,
\end{equation}
which defines a Whittaker function on $G_n$ and extends, by the theory of the Kirillov model \cite{f04:MR748505}, to an element of $\mathcal{W}(\Pi,\psi^{-1})$ on $G_{n+1}$.

For $x \in F$ and $y \in F^\times$, we set
\begin{equation*}
  d_y:=\operatorname{diag}(1,\dotsc,1,y)\in G_n \hookrightarrow G_{n + 1},
  \qquad
  u_x:=I_{n+1}+ x E_{n,n+1} \in N_{n + 1}.
\end{equation*}
We then define
\begin{equation*}
  t_Q:= d_Q^{-1} u_Q = u_1 d_Q^{-1}.
\end{equation*}
\begin{filedlemma}\label{f04:proposition:cq4jqgh7yi}
  There exist $\beta \in \mathcal{S}^e(F^\times)$ and $\phi \in \mathcal{S}(M_n(F))$ so that for all $g \in G_n$, we have
  \begin{equation}\label{f04:eq:cq4jqjeuyf}
    \int_{N_n} \beta(\det x g) \phi(x g) \psi(x) \, d x = \varepsilon(\tfrac{1}{2}, \pi, \psi) W_0(g t_{Q})
  \end{equation}
  and
  \begin{equation}\label{f04:eq:cq4jqjy05j}
    \beta^\sharp(\det g) \phi^\wedge(g) = \lvert Q \rvert^n 1_{K_1(\mathfrak{q})}(g).
  \end{equation}
\end{filedlemma}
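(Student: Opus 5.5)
The plan is to take $\beta$ to be the function constructed in Lemma~\ref{f04:example:cq4jqru7rh}, i.e.\ $\tilde\beta(s)=\varepsilon(\tfrac12+s,\pi,\psi)/L(\tfrac12+s,\pi)$, and $\phi$ to be the inverse Fourier transform of a suitably scaled indicator of a translated lattice. Concretely, I will prescribe
\begin{equation*}
  \phi^\wedge := \lvert Q\rvert^n\, 1_{\Lambda},\qquad \Lambda:=\{x\in M_n(\mathfrak o):\ (x_{n1},\dotsc,x_{n,n-1},x_{nn})\equiv(0,\dotsc,0,1)\bmod \mathfrak q\}.
\end{equation*}
Here $\Lambda$ is a coset of a lattice, so $\phi^\wedge$, and hence $\phi$, is Schwartz--Bruhat.

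First I verify \eqref{f04:eq:cq4jqjy05j}. By Lemma~\ref{f04:example:cq4jqru7rh}, $\beta^\sharp(\det g)\neq0$ forces $\lvert\det g\rvert\ge 1$. On the other hand, $g\in\Lambda\subset M_n(\mathfrak o)$ forces $\lvert\det g\rvert\le1$. Hence $g\in K_n$, where $\beta^\sharp(\det g)=1$, and $g\in\Lambda\cap K_n=K_1(\mathfrak q)$. This gives $\beta^\sharp(\det g)\phi^\wedge(g)=\lvert Q\rvert^n1_{K_1(\mathfrak q)}(g)$, as required.

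Next I compute $\phi$ by Fourier inversion for the self-dual measure. The annihilator of $\{x\in M_n(\mathfrak o): \text{last row}\in\mathfrak q\}$ under $(x,y)\mapsto\psi(\operatorname{tr}xy)$ is $M_n(\mathfrak o)\,d_Q$, i.e.\ matrices whose last column lies in $Q\mathfrak o$ and whose other entries are integral. The translate by $E_{nn}$ contributes the phase $\psi(\mp h_{nn})$. With $\operatorname{vol}(\Lambda)=\lvert Q\rvert^{-n}$ cancelling the prefactor, I expect
\begin{equation*}
  \phi(h)=\psi(-h_{nn})\,1_{M_n(\mathfrak o)}(h d_Q^{-1}),
\end{equation*}
up to the sign convention in the Fourier transform, which I will track carefully. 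Now insert $h=xg$ into the left side of \eqref{f04:eq:cq4jqjeuyf}. The support of $\beta$ (Lemma~\ref{f04:example:cq4jqru7rh}) gives $\lvert\det h\rvert\ge\lvert Q\rvert$, i.e.\ $\lvert\det(hd_Q^{-1})\rvert\ge1$, while integrality gives $\le1$. Thus $hd_Q^{-1}\in K_n$ and $\lvert\det h\rvert=\lvert Q\rvert$, where $\beta=\varepsilon(\tfrac12,\pi,\psi)$. The left side then becomes
\begin{equation*}
  \varepsilon(\tfrac12,\pi,\psi)\int_{N_n}1_{K_n}(xgd_Q^{-1})\,\psi(-(xg)_{nn})\,\psi(x)\,dx .
\end{equation*}
Since $x$ is unipotent upper triangular, $(xg)_{nn}=g_{nn}$, so the phase factors out of the integral, leaving $\psi(-g_{nn})\,W_0(gd_Q^{-1})$ by \eqref{f04:eq:cq4jqk9bz0}.

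Finally, I match this with $W_0(gt_Q)=W_0(g u_1 d_Q^{-1})$ in $\mathcal W(\Pi,\psi^{-1})$. I write $g u_1 = u' g$ with $u'\in N_{n+1}$ having last column $g e_n$, whose $(n,n+1)$ entry is $g_{nn}$. The $\psi^{-1}$-equivariance then gives $W_0(gt_Q)=\psi^{-1}(g_{nn})W_0(gd_Q^{-1})$. This agrees with the previous display provided the signs are consistent. If they are not, I replace $E_{nn}$ by $-E_{nn}$ in the definition of $\Lambda$; this still yields $K_1(\mathfrak q)$ in the intersection after using $\lvert\det\rvert=1$, or equivalently I adjust the Fourier sign convention.

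The main obstacle is this bookkeeping: the signs of the phases, the self-dual measure normalization giving exactly $\lvert Q\rvert^n$, and confirming that the support constraints from $\beta$ and $\beta^\sharp$ pin down $\lvert\det\rvert$ exactly. The support constraints are the crux, and they come directly from Lemma~\ref{f04:example:cq4jqru7rh}.
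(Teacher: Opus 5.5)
Your proposal is correct and is essentially the paper's proof. You take the same $\beta$, your Fourier inversion produces exactly the paper's $\phi(h)=\psi(-h_{nn})\,1_{M_n(\mathfrak o)}(hd_Q^{-1})$, and both identities are verified by the same support and unipotent-shift arguments; the only difference is that you prescribe $\phi^\wedge=\lvert Q\rvert^n 1_{E_{nn}+d_Q^{-1}M_n(\mathfrak o)}$ and invert, whereas the paper prescribes $\phi$ and computes $\phi^\wedge$. With the paper's convention the phases already match (both sides carry $\psi(-g_{nn})$), so your sign fallback is never needed. That is fortunate, because it would fail: $(-E_{nn}+d_Q^{-1}M_n(\mathfrak o))\cap K_n$ is the translate of $K_1(\mathfrak q)$ whose last row is $\equiv(0,\dots,0,-1)$, not $K_1(\mathfrak q)$ itself, unless $2\in\mathfrak q$.
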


\begin{proof}
  We set
  \begin{equation*}
    \phi_0 := 1_{M_n(\mathfrak{o})},
  \end{equation*}
  \begin{equation}\label{f04:eq:cq4jqv74tm}
    \phi(x) := \psi(- x_{nn}) \phi_0(x d_{Q}^{-1}).
  \end{equation}
  and take $\beta$ as in Lemma \ref{f04:example:cq4jqru7rh}, so that in particular,
  \begin{equation}\label{f04:eq:cq73v9ep9j}
    \beta|_{Q \mathfrak{o}} = \varepsilon(\tfrac{1}{2}, \pi, \psi) 1_{Q \mathfrak{o}^\times}
  \end{equation}
  and
  \begin{equation}\label{f04:eq:cq73v91fxa}
    \beta^\sharp |_{\mathfrak{o}} = 1_{\mathfrak{o}^\times}.
  \end{equation}
  We must verify the relations \eqref{f04:eq:cq4jqjeuyf} and \eqref{f04:eq:cq4jqjy05j}.

  We start with \eqref{f04:eq:cq4jqjeuyf}.  Recall from \eqref{f04:eq:cq4jqk9bz0} that $W_0$ is the $\psi^{-1}$-Whittaker function $W_0(g) = \int_{N_n} 1_{K_n}(x g) \psi(x) \, d x$.  In particular,
  \begin{equation}\label{f04:eq:cq4jqk3rcx}
    W_0(g t_{Q}) = W_0(g u_1 d_{Q}^{-1}) = \psi(-g_{n n}) W_0(g d_{Q}^{-1}).
  \end{equation}
  Using this identity, we may rewrite the desired relation \eqref{f04:eq:cq4jqjeuyf} as
  \begin{equation}\label{f04:eq:cq6s1gjyh4}
    \int_{N_n} \beta(\det (x g)) \phi(x g) \psi(x) \, d x = \varepsilon(\tfrac{1}{2}, \pi, \psi) \psi(- g_{n n}) W_0(g d_{Q}^{-1}).
  \end{equation}
  We verify this as follows.  First, we see from the definition \eqref{f04:eq:cq4jqv74tm} and the identity $(xg)_{nn} = g_{nn}$ that for $x \in N_n$ and $g \in G_n$, we have
  \begin{equation}\label{f04:eq:relation-phi-phi0}
    \phi(x g) = \psi(- g_{n n}) \phi_0(x g d_Q^{-1}).
  \end{equation}
  Next, we have
  \begin{align*}
    \beta(\det g) \phi_0(g d_Q^{-1}) &= \varepsilon(\tfrac{1}{2}, \pi, \psi)
                                1_{Q \mathfrak{o}^\times}(\det g)
                                \phi_0(g d_Q^{-1}) \\
                              &=
                                \varepsilon(\tfrac{1}{2}, \pi, \psi)
                                1_{K_n}(g d_Q^{-1}).
  \end{align*}
  (In the first step, we use that $\phi_0(g d_Q^{-1})$ is nonzero only if $\det(g) \in Q \mathfrak{o}$ and apply \eqref{f04:eq:cq73v9ep9j}.  In the second step, we use that $1_{K_n}(g) = 1_{\mathfrak{o}^\times}(\det g) \phi_0(g)$ and $\det(d_Q) = Q$, which gives $1_{Q \mathfrak{o}^\times}(\det g) \phi_0(g d_Q^{-1}) = 1_{K_n}(g d_Q^{-1})$.)  Combining the above identities, we obtain
  \begin{equation*}
    \beta(\det(x g)) \phi(x g) = \varepsilon(\tfrac{1}{2}, \pi, \psi) \psi(- g_{n n}) 1_{K_n}(x g d_Q^{-1}).
  \end{equation*}
  Integrating both sides against $\psi(x) \, d x$ gives \eqref{f04:eq:cq6s1gjyh4}, as required.

  We verify \eqref{f04:eq:cq4jqjy05j} as follows (here $E_{ij}$ denotes the elementary matrix):
  \begin{align*}
    \beta^\sharp(\det g) \phi^\wedge(g)
    &=
      1_{\mathfrak{o}^\times}(\det g) \phi^\wedge(g)\\
    &=
      1_{\mathfrak{o}^\times}(\det g)
      \lvert Q \rvert^n \phi_0^\wedge(d_{Q}(g - E_{nn})) \\
    &=
      \lvert Q \rvert^n
      1_{\mathfrak{o}^\times}(\det g)
      1_{M_n(\mathfrak{o})}(d_{Q}(g - E_{nn})) \\
    &=
      \lvert Q \rvert^n
      1_{K_1(\mathfrak{q})}(g).
  \end{align*}
  Here, for the first step, we observed that $\phi^\wedge(x)$ is nonzero only if $x \in E_{nn} + d_Q^{-1} M_n(\mathfrak{o}) \subseteq M_n(\mathfrak{o})$, so that, in particular, $\det x \in \mathfrak{o}$; we then applied \eqref{f04:eq:cq73v91fxa}.  For the second step, we applied the general Fourier analytic calculation
  \begin{equation}\label{f04:eq:cq4jqkb0q8}
    \phi^\wedge(x) = \lvert Q \rvert^n \phi_0^\wedge(d_{Q}(x - E_{nn})).
  \end{equation}
  For the third, we applied the Fourier self-duality $\phi_0^\wedge = \phi_0 = 1_{M_n(\mathfrak{o})}$.  For the final step, we use that $K_1(\mathfrak{q})$ consists of all $x \in M_n(F)$ for which $d_Q (x - E_{nn}) \in M_n(\mathfrak{o})$ and $\det x \in \mathfrak{o}^\times$.
\end{proof}

For $W \in \mathcal{W}(\Pi, \psi^{-1})$, $V \in \mathcal{W}(\pi, \psi)$, and $s \in \mathbb{C}$, we define the Rankin--Selberg integral
\begin{equation}\label{f04:eq:cq4fl8vokh}
  \ell_{\mathrm{RS}}(s, W, V) := \int_{N_n \backslash G_n} W (\operatorname{diag}(g, 1)) \, V(g) \, \lvert \det g \rvert^{s - \frac{1}{2}} \, d g.
\end{equation}
The following result verifies Theorem \ref{f04:thm:main} in a more precise form.
\begin{filedproposition}\label{f04:proposition:cq4jqgikw1}
  Let $W_0 \in \mathcal{W}(\Pi, \psi^{-1})$ be such that for all $g \in G_n$, we have
  \begin{equation*}
    W_0(g) = \int_{N_n} 1_{K_n}(x g) \psi(x) \, d x.
  \end{equation*}
  Let $V \in \mathcal{W}(\pi, \psi)$ denote the normalized newvector (i.e., the unique $K_1(\mathfrak{q})$-invariant vector for which $V(1) = 1$, see \cite{f04:MR620708, f04:MR3138844}).  Then for all $s \in \mathbb{C}$, we have
  \begin{equation}\label{f04:eq:cq4jqi2rfd}
    \ell_{\mathrm{RS}}(s, u_Q W_0, d_Q V) = c \lvert Q \rvert^{- \frac{n}{2}},
  \end{equation}
  where
  \begin{equation}\label{f04:eq:cq4jqlhqy9}
    c := \varepsilon(\tfrac{1}{2}, \pi, \psi)^{-1} \lvert Q \rvert^n \operatorname{vol}(K_1(\mathfrak{q})) \asymp 1.
  \end{equation}
\end{filedproposition}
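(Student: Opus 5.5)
The plan is to reduce the shifted Rankin--Selberg integral to a Godement--Jacquet type integral over all of $G_n$ by means of Lemma~\ref{f04:proposition:cq4jqgh7yi}. I will then apply the identity of Lemma~\ref{f04:lemma:cq4itbu91o}, and finally evaluate the dual side using newvector theory.

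\emph{Step 1 (change of variables).} By definition, $(u_Q W_0)(\operatorname{diag}(g,1)) = W_0(g u_Q)$ and $(d_Q V)(g) = V(g d_Q)$. I substitute $g \mapsto g d_Q^{-1}$, which is legitimate on $N_n\backslash G_n$ since $d_Q$ acts on the right. Using $d_Q^{-1} u_Q = t_Q$, this gives
\begin{equation*}
  \ell_{\mathrm{RS}}(s, u_Q W_0, d_Q V) = \lvert Q\rvert^{-(s-\frac12)} \int_{N_n\backslash G_n} W_0(g t_Q)\, V(g)\, \lvert \det g\rvert^{s-\frac12}\, dg .
\end{equation*}
(I will double-check the sign convention $\lvert\det d_Q\rvert=\lvert Q\rvert$ against the normalization $\lvert Q\rvert=[\mathfrak o:\mathfrak q]$.)

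\emph{Step 2 (unfolding).} I insert \eqref{f04:eq:cq4jqjeuyf} for $W_0(g t_Q)$. Then I use $V(xg)=\psi(x)V(g)$ for $x\in N_n$ and fold the $N_n$-integral back into $G_n$. The result is
\begin{equation*}
  \varepsilon(\tfrac12,\pi,\psi)^{-1}\lvert Q\rvert^{-(s-\frac12)} \int_{G_n} \beta(\det g)\,\phi(g)\,V(g)\,\lvert\det g\rvert^{s-\frac12}\,dg .
\end{equation*}
The support conditions now make the $s$-dependence disappear. First, $\phi(g)\neq0$ forces $g d_Q^{-1}\in M_n(\mathfrak o)$, which bounds $\lvert\det g\rvert$ from one side by $\lvert Q\rvert$. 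Second, by Lemma~\ref{f04:example:cq4jqru7rh}, $\beta$ is supported on $\lvert\det g\rvert\ge\lvert Q\rvert$. Hence on the support $\lvert \det g\rvert=\lvert Q\rvert$ (in the paper's normalization), so $\lvert\det g\rvert^{s-\frac12}$ can be traded for $\lvert Q\rvert^{s-\frac12-\frac n2}\lvert\det g\rvert^{\frac n2}$. This cancels the prefactor $\lvert Q\rvert^{-(s-\frac12)}$, leaving
\begin{equation*}
  \varepsilon(\tfrac12,\pi,\psi)^{-1}\lvert Q\rvert^{-\frac n2}\int_{G_n}\phi\,V\,(\beta\circ\det)\,\lvert\det\rvert^{\frac n2}.
\end{equation*}
This expression is independent of $s$, and the integral is over a compact set, so finiteness is clear.

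\emph{Step 3 (functional equation and newvector).} I apply Lemma~\ref{f04:lemma:cq4itbu91o} with $f=V$, which is allowed by the remark that such identities persist for Whittaker functions. By \eqref{f04:eq:cq4jqjy05j}, the right-hand side becomes
\begin{equation*}
  \lvert Q\rvert^n\int_{K_1(\mathfrak q)}V(g^{-1})\,dg=\lvert Q\rvert^n\operatorname{vol}(K_1(\mathfrak q))\,V(1),
\end{equation*}
using the $K_1(\mathfrak q)$-invariance of the newvector and $V(1)=1$. This yields exactly $c\lvert Q\rvert^{-n/2}$. The bound $c\asymp1$ follows from $\lvert\varepsilon(\tfrac12,\pi,\psi)\rvert=1$ (for unitary $\pi$) and the standard formula $\operatorname{vol}(K_1(\mathfrak q))\asymp[\mathfrak o:\mathfrak q]^{-n}$.

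\emph{Main obstacle.} The main obstacle is that Lemma~\ref{f04:lemma:cq4itbu91o} is stated only for unitary $\pi$, while the theorem concerns all generic $\pi$. My first attempt will use the fact that, after Step 2, both sides of \eqref{f04:eq:cq4jqjygcl} are integrals of compactly supported locally constant functions. I will then argue by analytic continuation in a twisting parameter: replace $\pi$ by $\pi\otimes\lvert\det\rvert^{z}$. This changes neither $\mathfrak q$, nor $\beta$ and $\beta^\sharp$ up to the explicit factor $\lvert Q\rvert^{-z}$ coming from \eqref{f04:eq:cuipy1wony}, nor the support computations. Both sides are then Laurent polynomials in $q^{-z}$ that agree whenever the twist is unitary, hence agree identically. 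If this does not cover all non-unitary generic $\pi$, I fall back on proving the identity directly from Godement--Jacquet's functional equation in the region of convergence, followed by meromorphic continuation; the identity is polynomial in $q^{\pm s}$ on both sides.
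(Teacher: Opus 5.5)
Your proposal is correct and follows the paper's proof essentially step for step: homogeneity under $g\mapsto g d_Q^{-1}$, unfolding via Lemma~\ref{f04:proposition:cq4jqgh7yi}, the identity of Lemma~\ref{f04:lemma:cq4itbu91o}, and $K_1(\mathfrak q)$-invariance of the newvector. The only cosmetic difference is that you read off $s$-independence from the supports of $\phi$ and $\beta\circ\det$ after unfolding, whereas the paper uses that $t_Q W_0$ is supported on $\det^{-1}(Q\mathfrak o^\times)$ and then specializes to $s=\tfrac{n+1}{2}$.

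On your ``main obstacle'', which the paper passes over silently: the integrals you compare are literally unchanged under $\pi\mapsto\pi\otimes\lvert\det\rvert^{z}$ (with $V$, $\beta$, $\beta^\sharp$ transformed accordingly), so the twisting trick only reaches essentially unitary $\pi$. It is your fallback that handles general generic $\pi$. By the functional equation, $\tilde\beta(s)Z(\phi,V,\tfrac12+s)$ and $\tilde\beta(s)\gamma(\tfrac12+s,\pi,\psi)^{-1}Z(\phi^\wedge,V^\vee,\tfrac12-s)$ are the same Laurent polynomial in $q^{-s}$. The two sides of \eqref{f04:eq:cq4jqjygcl} are its constant term, computed with $\Re s\gg0$ and with $\Re s\ll0$ respectively, each in its own range of absolute convergence.
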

\begin{proof}
  We note first that, by a change of variables, we have the homogeneity property
  \begin{equation}\label{f04:eq:cq4jqpwhnv}
    \ell_{\mathrm{RS}}(s, u_{Q} W_0, d_{Q}V) = \lvert Q \rvert^{- \left( s - \frac{1}{2} \right)} \ell_{\mathrm{RS}}(s, t_{Q} W_0, V).
  \end{equation}
  In view of this, the desired identity \eqref{f04:eq:cq4jqi2rfd} is equivalent to
  \begin{equation}\label{f04:eq:cq4jqjdmzj}
    \ell_{\mathrm{RS}}(s, t_{Q} W_0, V) = c \lvert Q \rvert^{s - \frac{n+1}{2}}.
  \end{equation}
  Next, since $W_0$ is supported on $\det^{-1}(\mathfrak{o}^\times)$, we see that the translate $t_{Q} W_0$ is supported on $\det^{-1}(Q \mathfrak{o}^\times)$, so the left hand side of \eqref{f04:eq:cq4jqjdmzj} is a constant multiple of $\lvert Q \rvert^s$.  For this reason, it suffices to verify \eqref{f04:eq:cq4jqjdmzj} for (say) $s = \tfrac{n+1}{2}$, where our task is to check that $\ell_{\mathrm{RS}}(\tfrac{n+1}{2}, t_{Q} W_0, V) = c$.
  Inserting definitions and unfolding, we obtain, with $f(g) := V(g)$,
  \begin{align*}
    \varepsilon(\tfrac{1}{2}, \pi, \psi) \ell_{\mathrm{RS}}(\tfrac{n+1}{2}, t_{Q} W_0, V)
    &\overset{\eqref{f04:eq:cq4fl8vokh}}{=}
      \varepsilon(\tfrac{1}{2}, \pi, \psi)
      \int_{N_n \backslash G_n} W_0(g t_Q) V (g) \lvert \det (g) \rvert^{\frac{n}{2}} \, d g
    \\
    &\overset{\eqref{f04:eq:cq4jqjeuyf}}{=}
      \int_{G_n}
      \phi(g)
      f(g)
      \beta(\det g)
      \lvert \det g \rvert^{n/2}
      \, d g \\
    &\overset{\eqref{f04:eq:cq4jqjygcl}}{=}
      \int_{G_n}
      \phi^\wedge(g)
      f^\vee(g)
      \beta^\sharp(\det g) \lvert \det g \rvert^{n/2} \, d g \\
    &\overset{\eqref{f04:eq:cq4jqjy05j}}{=}
      \lvert Q \rvert^n \int_{K_1(\mathfrak{q})} V(g^{-1}) \lvert \det g \rvert^{n/2} \, d g \\
    &= \lvert Q \rvert^n \operatorname{vol}(K_1(\mathfrak{q})),
  \end{align*}
  where in the final step, we use the $K_1(\mathfrak{q})$-invariance of $V$, the normalization $V(1) = 1$, and the fact that $\lvert \det g \rvert = 1$ on $K_1(\mathfrak{q})$.  Thus \eqref{f04:eq:cq4jqjdmzj} holds.
\end{proof}


\def\cprime{$'$} \def\cprime{$'$} \def\cprime{$'$} \def\cprime{$'$}

{\PatchBibSection

\EndPatchBibSection}
\endgroup

\subsection{Question 3: Lauren Williams}
\textit{Authors:} Houcine Ben Dali; Lauren Kiyomi Williams\par\medskip
\textit{Title:} A probabilistic interpretation for interpolation Macdonald polynomials\par\medskip
\begingroup
\allowdisplaybreaks
\newtheorem{filecthm}{Theorem}[section]
\newtheorem{fileccor}[filecthm]{Corollary}
\newtheorem{fileclem}[filecthm]{Lemma}
\newtheorem{filecprop}[filecthm]{Proposition}
\newtheorem{filecconj}{Conjecture}
\newtheorem{filecdefprop}[filecthm]{Definition-Proposition}
\theoremstyle{definition}
\newtheorem{filecdefinition}[filecthm]{Definition}
\newtheorem{filecremark}[filecthm]{Remark}
\newtheorem{filecexample}[filecthm]{Example}
\newtheorem{filecnotation}[filecthm]{Notation}

\AtBeginEnvironment{appendices}{\crefalias{section}{appendix}}

\makeatletter
\renewcommand*\theHfilecthm{\thesection.\arabic{filecthm}}
\renewcommand*\theHfilecdefinition{\thesection.\arabic{filecdefinition}}
\renewcommand*\theHfilecremark{\thesection.\arabic{filecremark}}
\renewcommand*\theHfilecexample{\thesection.\arabic{filecexample}}
\renewcommand*\theHfilecprop{\thesection.\arabic{filecprop}}
\renewcommand*\theHfileclem{\thesection.\arabic{fileclem}}
\renewcommand*\theHfileccor{\thesection.\arabic{fileccor}}
\renewcommand*\theHfilecnotation{\thesection.\arabic{filecnotation}}
\makeatother

\providecommand{\centeredl}[1]{\parbox[][1.5cm][c]{3.5cm}{\centering #1}   }
\providecommand{\centered}[1]{\parbox[][1.5cm][c]{7cm}{\centering #1}   }

\providecommand{\bfp}{\boldsymbol{p}}
\providecommand{\bfx}{\boldsymbol{x}}
\providecommand{\ie}{\textit{i.e.}}

\providecommand{\MLQ}{\operatorname{MLQ}}
\providecommand{\SMLQ}{\operatorname{MLQ^{\pm}}}
\providecommand{\Tab}{\operatorname{Tab}}
\providecommand{\inv}{\operatorname{inv}}
\providecommand{\wt}{\operatorname{wt}}
\providecommand{\wtp}{\textup{wt}'}
\providecommand{\skipped}{\operatorname{skip}}
\providecommand{\free}{\operatorname{free}}
\providecommand{\emp}{\operatorname{empty}}
\providecommand{\hook}{\operatorname{hook}}
\providecommand{\classic}{\operatorname{cl}}
\providecommand{\primed}{\operatorname{pr}}
\providecommand{\sign}{\operatorname{sign}}
\providecommand{\inc}{\operatorname{inc}}
\providecommand{\odd}{\operatorname{odd}}
\providecommand{\even}{\operatorname{even}}
\providecommand{\SSYT}{\operatorname{SSYT}}

\providecommand{\negative}{\operatorname{neg}}
\providecommand{\coinv}{\operatorname{coinv}}
\providecommand{\maj}{\operatorname{maj}}
\providecommand{\arm}{\operatorname{arm}}
\providecommand{\armp}{\textup{arm}'}
\providecommand{\leg}{\operatorname{leg}}
\providecommand{\ball}{\operatorname{ball}}
\providecommand{\pair}{\operatorname{pair}}
\providecommand{\rev}{\operatorname{rev}}
\providecommand{\Span}{\operatorname{Span}}
\providecommand{\Pack}{\operatorname{Pack}}
\providecommand{\Supp}{\operatorname{Supp}}
\providecommand{\tH}{\widetilde{H}^{(q,t)}}
\providecommand{\xx}{\mathbf x}
\providecommand{\J}{{J}^{(\alpha)}}
\providecommand{\Jsh}{{J}^{*}}
\providecommand{\JJack}{{\mathfrak J}^{(\alpha,\gamma)}}
\providecommand{\normJ}{{j}^{(q,t)}}
\providecommand{\C}{\mathcal C}
\providecommand{\mcP}{\mathcal P}
\providecommand{\Ni}{\mathcal N^{(i)}}
\providecommand{\E}{\mathcal E}
\providecommand{\ZZ}{\mathbb Z}
\providecommand{\QQ}{\mathbb Q}
\providecommand{\QT}{\mathcal{Q}}
\providecommand{\VV}{V_{\lambda}^*}
\providecommand{\G}{{Q^{\pm}}}
\providecommand{\mcG}{\mathcal{G}}
\providecommand{\Gbar}{\widebar{\mathcal{G}}}
\providecommand{\Qbar}{\widebar{Q}}
\providecommand{\DD}{D} 
\providecommand{\NN}{\mathbb N}
\providecommand{\YY}{\mathbb Y}
\providecommand{\OO}{\mathbb{O}}
\providecommand{\On}{S_n^{\pm}}
\providecommand{\PP}{\mathbb P}
\providecommand{\MM}{T}
\providecommand{\tPP}{\widetilde{\mathbb P}}
\providecommand{\PPone}[1]{\mathbb P^{(1)}_{#1}}
\providecommand{\PPtwo}[1]{\mathbb P^{(2)}_{#1}}
\providecommand{\mfp}{\mathfrak p}
\providecommand{\mfq}{\mathfrak q}
\providecommand{\mfr}{\mathfrak r}
\providecommand{\hatfstar}[1]{\widehat{F^*_{#1}}}
\providecommand{\Estar}[2]{{E^{*#1}_{#2}}}
\providecommand{\tb}{c}

\providecommand{\fstar}[2]{{F^{*#1}_{#2}}}
\providecommand{\WS}[1]{{\widetilde{#1}}}
\definecolor{green}{RGB}{43,92,47}
\definecolor{blue}{RGB}{40,68,104}
\definecolor{red}{RGB}{254, 113, 96}
\definecolor{purple}{RGB}{102,0,51}
\definecolor{gray}{RGB}{224,224,224}
\definecolor{lightpurple}{RGB}{255, 249, 242}
\definecolor{blue}{RGB}{40,68,104}
\hypersetup{colorlinks = true, urlcolor=blue}
\let\underbrace\LaTeXunderbrace

\def\LW#1{(\textcolor{teal}{LW:#1})}
\def\HBD#1{(\textcolor{violet}{HBD:#1})}

\setcounter{equation}{0}
The following problem and solution have since appeared as part of 
\cite{f03:BenDaliWilliams2026}.
\subparagraph{The problem}

Let $\lambda=(\lambda_1 > \dots > \lambda_n \geq 0)$ be a partition with distinct parts.  Assume moreover that 
$\lambda$ is \emph{restricted}, in the sense that it has a unique part of size $0$ and no part of size $1$.
Does there exist a nontrivial Markov chain on $S_n(\lambda)$  whose stationary distribution is given by
$$\frac{F^*_{\mu}(x_1,\dots,x_n; q=1,t)}{P^*_{\lambda}(x_1,\dots,x_n;
q=1,t)} \text{ for }\mu\in S_n(\lambda)$$
where $F^*_{\mu}(x_1,\dots,x_n; q,t)$ and
$P^*_{\lambda}(x_1,\dots,x_n;q,t)$ are the
interpolation ASEP polynomial and interpolation Macdonald polynomial,
respectively?  If so, prove that the Markov chain you construct has the
desired stationary distribution.  By ``nontrivial'' we mean that the
transition probabilities of the Markov chain should not be described
using the polynomials $F_{\mu}^*(x_1,\dots,x_n; q,t)$.

\subparagraph{The solution}

The answer to the question is yes, as we 
explain below.  
 For $1\leq k\leq n$, we define
\begin{equation}\label{f03:eq:1minuspk}
\mfp_k:=\frac{t^{-n+1}(1-t)}{x_k-t^{-n+2}}
\in \QQ(t,x_1,\dots,x_n) \quad \text{and}\quad \mfq_k:= \frac{(1-t)x_k}{x_k-t^{-n+2}}\in \QQ(t,x_1,\dots,x_n).
\end{equation}

If  $0<t<1$ and $x_i>t^{-n+1}$ for $1\leq i\leq n$, then $\mfp_k$ and $\mfq_k$ are probabilities. 

\begin{filecdefinition}
\label{f03:def:intpushTASEP}
Fix a partition $\lambda=(\lambda_1 \geq \dots \geq \lambda_n)$ with $\lambda_n=0$.
The \emph{interpolation $t$-Push TASEP} with 
\emph{content $\lambda$} is a Markov chain
on $S_n(\lambda)$; we think of its states as configurations of particles on a ring labeled by $\lambda_1,\dots, \lambda_n$, where state $\eta$ corresponds to having  a particle labeled $\eta_j$ at position $j$.  Moreover, there is a \emph{bell} attached to each particle.
The transitions from $\eta\in S_n(\lambda)$ are as follows.
\begin{enumerate}[label=(Step \arabic*)]
   \item[(Step 0)]\label{f03:Step0} The bell at position $j$ rings with probability 
   $$P_j  =\frac{\prod_{k<j}\left(x_k-\frac{1}{t^{n-2}}\right)\prod_{k>j}\left(x_k-\frac{1}{t^{n-1}}\right)}{e^*_{n-1}(\bfx;t)}, $$
   where 
   $e^*_{n-1}(\bfx;t)=\sum_{j=1}^n\prod_{k<j}\left(x_k-\frac{1}{t^{n-2}}\right)\prod_{k>j}\left(x_k-\frac{1}{t^{n-1}}\right)$.
      \item\label{f03:Step1} 
      The particle at position $j$, say with label $a$, is activated, and starts traveling clockwise according to the rules of the \emph{$t$-Push TASEP}.  That is, suppose  there are $m$ ``weaker'' particles in the system, i.e. particles whose labels are less than $a$,  including vacancies (label $0$).
   Then with probability $\frac{t^{k-1}}{[m]_t}$ the activated particle will move to the location of the $k$th of these weaker particles.  If this location contains a particle with positive label, then that particle becomes active, and chooses a weaker particle to displace in the same way.  The procedure continues until the active particle arrives at a vacancy.  

   At the end of this step, position $j$ is vacant, and 
   we regard this vacancy as a particle labeled $a:=0$.

   \item\label{f03:Step2}
   The particle labeled $a:=0$ 
   now goes to position $1$ and starts traveling clockwise.

   When it gets to site $k$ for $1\leq k\leq j-1$ containing a particle with label $b \geq 0$, it 
   skips over that site with probability

       $1-\mfp_k$ 
       if $b\geq a$,
       and $1-\mfq_k$ if $b<a$;

      otherwise it settles at that site, activating/ displacing the site's particle.

Once it activates a new particle, 

the old particle settles at site $k$ and the new active particle continues to travel clockwise towards position $j$, activating a new particle according to the rule above. 
   The active particle stops once it displaces/activates another particle or arrives at position $j$, in which case it settles in position $j$.  
\end{enumerate}
We denote the resulting configuration by
$\nu$ and the transition probability by 
$\PP(\eta,\nu)$.

Moreover, we let $\PPone{\lambda,j}=\PPone{j}$ and $\PPtwo{\lambda,j}=\PPtwo{j}$ denote the transition probabilities associated with \ref{f03:Step1} and \ref{f03:Step2}, respectively. We then have, for $\mu,\nu\in S_n(\lambda)$, 
$$\PP(\mu,\nu)=\sum_{1\leq j\leq n}P_j\sum_{\rho\in S_n(\lambda):\rho_j=0}\PPone{j}(\mu,\rho)\PPtwo{j}(\rho,\nu).$$
\end{filecdefinition}

\begin{filecthm}\label{f03:thm:main}
    In the interpolation $t$-Push TASEP with content
    $\lambda=(\lambda_1,\dots,\lambda_n)$ and parameters
    $\bfx=(x_1,\dots,x_n)$ and $t$, the stationary probability of $\mu\in S_n(\lambda)$ is given by 
    $$\pi^*_{\lambda}(\mu) = \frac{F^*_{\mu}(\bfx; 1, t)}{P^*_{\lambda}(\bfx; 1, t)}.$$
\end{filecthm}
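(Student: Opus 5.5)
The plan is to prove stationarity by verifying the balance equation
\[
\sum_{\mu\in S_n(\lambda)} F^*_\mu(\bfx;1,t)\,\PP(\mu,\nu)=F^*_\nu(\bfx;1,t)\qquad\text{for all }\nu\in S_n(\lambda),
\]
after which dividing by $P^*_\lambda=\sum_\mu F^*_\mu$ gives the normalized statement. Rather than attack this identity in one step, I will factor the chain through the intermediate states $\rho$ with $\rho_j=0$, exactly as in the decomposition $\PP=\sum_j P_j\,\PPone{j}\PPtwo{j}$ of Definition~\ref{f03:def:intpushTASEP}. Step 1 is (up to relabeling positions) the $t$-Push TASEP of Ayyer--Martin--Williams, which is known to be stationary for ASEP polynomials at $q=1$. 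Step 2 is a sequence of local Hecke-type moves with probabilities $\mfp_k,\mfq_k$, which I expect to encode the action of interpolation Demazure--Lusztig operators that pass from ordinary to interpolation polynomials.

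The key steps, in order, are as follows.

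\textbf{(a) Algebraic tools.} I will collect the tools for $F^*_\mu$ at $q=1$: the exchange relations under the Hecke generators $T_i$ (which swap $\mu_i,\mu_{i+1}$ with coefficients in $t$ and $x_i,x_{i+1}$), and the cyclic-shift/vanishing property relating $F^*_\mu$ to $F^*$ of the rotated composition, with a factor $(x_n - t^{-n+1})$-type coming from the Knop--Sahi recursion.

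\textbf{(b) An intermediate family.} I will define
\[
G_j(\rho):=\sum_{\mu}F^*_\mu\,\PPone{j}(\mu,\rho)\quad\text{for }\rho_j=0,
\]
and prove a closed formula expressing it as an ``ordinary-like'' polynomial with a vacancy at $j$. I expect this to be a product of $\prod_{k<j}(x_k-t^{-n+2})\prod_{k>j}(x_k-t^{-n+1})$, matching $P_j$ up to the normalization $e^*_{n-1}$, and a lower-rank object $F^*$ in the remaining variables. This is the multispecies queue analogue of the Ayyer--Martin--Williams computation, and I would prove it using the signed multiline queue formula of Ben Dali--Williams, tracking how the top ball at position $j$ is removed.

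\textbf{(c) Reinsertion.} I will show that
\[
\sum_j\frac{P_j}{\,\cdot\,}\sum_\rho G_j(\rho)\,\PPtwo{j}(\rho,\nu)=F^*_\nu.
\]
I will induct on the path of the traveling particle from position $1$ to $j$. Each skip-or-settle decision at site $k$ corresponds to applying $(1-\mfp_k)$ or $\mfp_k$ (respectively $\mfq_k$) and should match one application of the interpolation Hecke operator relation between adjacent compositions. The denominators $x_k-t^{-n+2}$ in $\mfp_k,\mfq_k$ should cancel against the factors $(x_k-t^{-n+2})$ produced in (b) for $k<j$, which explains the asymmetry between $k<j$ and $k>j$ in $P_j$.

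I expect step (b) to be the main obstacle, namely identifying the correct intermediate polynomial after Step 1 and proving the push-dynamics identity for interpolation (rather than homogeneous) polynomials. The inhomogeneous lower-order terms mean that the Ayyer--Martin--Williams argument does not transfer verbatim. My first attempt would be to work directly with signed multiline queues restricted to restricted $\lambda$. The hypotheses of a unique zero part and no part equal to $1$ should prevent sign cancellations from interfering with a weight-preserving bijection between (queue, push-path) pairs. Failing that, I would verify the identity via the characterization of $F^*_\mu$ by vanishing at the points $\tilde\nu$ together with degree, which reduces (b) to checking evaluations at the interpolation points.
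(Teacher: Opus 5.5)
Your framing (verify $\sum_\mu F^*_\mu\,\mathbb{P}(\mu,\nu)=F^*_\nu$ by factoring through the intermediate states $\rho$ with $\rho_j=0$) is the right one. However, the substance of (b) and (c) is not supplied, and the structural guess in (b) is false.

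Take $n=2$ and $\lambda=(a,0)$ with $a\ge 2$, which is restricted. For each $j$, Step 1 sends both states to the unique $\rho$ with $\rho_j=0$ with probability $1$. Hence $G_j(\rho)=F^*_{(a,0)}+F^*_{(0,a)}=P^*_{(a,0)}=(e^*_1)^a$, where $e^*_1=x_1+x_2-1-t^{-1}$. Set $D_j=\prod_{k<j}(x_k-t^{-n+2})\prod_{k>j}(x_k-t^{-n+1})$, so $D_1=x_2-t^{-1}$ and $D_2=x_1-1$. Then $(e^*_1)^a$ is divisible by neither $D_1$ nor $D_2$, so your predicted factorization of $G_j(\rho)$ fails.

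In general, by the column-removal identity below, $G_j(\rho)=e^*_{n-1}\sum_\mu a^\mu_\rho\,F^*_{\mu^-}$. The factor produced by Step 1 is therefore $e^*_{n-1}$, not $D_j$, and no closed form for the remaining sum is needed. Correspondingly, the denominators $x_k-t^{-n+2}$ of $\mathfrak{p}_k,\mathfrak{q}_k$ cancel against the numerator of the bell probability $P_j=D_j/e^*_{n-1}$, not against anything coming out of Step 1. So the bookkeeping you set up in (b)--(c) would not close.

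The two identities that actually drive the proof are absent from your plan.

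First, the column-removal identity $F^*_\eta(\mathbf{x};1,t)=e^*_{n-1}(\mathbf{x};t)\,F^*_{\eta^-}(\mathbf{x};1,t)$ for $\eta\in S_n(\lambda)$, where $\eta^-_i=\max(\eta_i-1,0)$. It follows from the weak-reordering theorem together with $P^*_\lambda(\mathbf{x};1,t)=\prod_i e^*_{\lambda'_i}(\mathbf{x};t)$. This is where restrictedness is used, and it has nothing to do with sign cancellations:
\begin{itemize}
\item having no part equal to $1$ makes $\eta\mapsto\eta^-$ injective on $S_n(\lambda)$;
\item the unique zero makes the removed column have length exactly $n-1$.
\end{itemize}

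Second, the Ben Dali--Williams recursion $F^*_\nu=\sum_\eta\bigl(\sum_\kappa a^\eta_\kappa c^\kappa_\nu\bigr)F^*_{\eta^-}$, which comes from the signed multiline-queue formula. Here $a^\eta_\kappa$ is the generating function of classical two-line queues and $c^\kappa_\nu$ that of unsigned versions of signed two-line queues.

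The dynamics are then matched to these queues directly, rather than through interpolation Hecke operators (for which you never specify an identity):
\begin{itemize}
\item $\mathbb{P}^{(1)}_j(\mu,\rho)=a^\mu_\rho$, by Ayyer--Martin--Williams;
\item $P_j\,\mathbb{P}^{(2)}_j(\rho,\nu)=c^\rho_\nu/e^*_{n-1}$, because each ball or pairing weight of the unique unsigned queue, divided by the corresponding factor of $D_j$, is exactly one skip or settle probability.
\end{itemize}
Hence $\mathbb{P}(\mu,\nu)=\sum_\rho a^\mu_\rho c^\rho_\nu/e^*_{n-1}$. Inserting the column-removal identity into the recursion is then literally the balance equation.

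Your fallback of verifying (b) at interpolation points cannot rescue the plan, because the target polynomial you propose there is the wrong one.
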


\subparagraph{The proof} 

Recall the notion of classical two-line queues 
from \cite{f03:CorteelMandelshtamWilliams2022} and signed two-line queues from \cite{f03:BenDaliWilliams2025} together with their weight functions.  (Here we specialize  $q=1$.)

Let  $\mathcal{Q}_\kappa^\eta$ denote the set of 
classical two-line queues with top row $\eta=(\eta_1,\dots,\eta_n)$ and bottom row $\kappa=(\kappa_1,\dots,\kappa_n)$, and  let
$a^\eta_{\kappa}$  
denote  the weight generating function of $\mathcal{Q}_{\kappa}^{\eta}$.
\begin{equation}\label{f03:eq:generating_function0}
    a_\kappa^\eta=a_\kappa^\eta(t):=\sum_{Q\in\mathcal{Q}_\kappa^\eta}\wt_{\pair}(Q).
\end{equation}

Let $\mcG^\alpha_\mu$ denote the
set of signed two-line queues with 
top row $\alpha=(\alpha_1,\dots,\alpha_n)$ and 
bottom row $\mu=(\mu_1,\dots,\mu_n)$, and let
$b_\mu^\alpha$  denote the weight generating function of $\mcG^\alpha_\mu$. 
\begin{equation}\label{f03:eq:generating_function1}
    b_\mu^\alpha=b_\mu^\alpha(t):=\sum_{Q\in\mcG_\mu^\alpha}\wt_{\pair}(Q).
\end{equation}

Let $\wt(Q):=\wt_{\pair}(Q)\wt_{\ball}(Q)$ be the product of the pair weight and \emph{the ball weight}.

We obtain
\begin{equation}\label{f03:eq:generating_function2}
    \wt_\alpha b_\mu^\alpha=\sum_{Q\in\mcG_\mu^\alpha}\wt(Q),
    \ \text{ where } \ 
  \wt_\alpha:=\prod_{k:\,\alpha_k>0}x_k  \prod_{k:\, \alpha_k<0}\frac{-1}{t^{n-1}}.
\end{equation}

\begin{filecdefinition}\label{f03:def:Gbar}
Given a signed two-line queue $Q\in \mcG^\alpha_{\mu}$, we associate to it an \emph{unsigned version} $\Qbar$ obtained by forgetting the signs of the balls in the top row. The composition we read in the bottom row (respectively the top row) of $\Qbar$ is $\mu$ (respectively $\lVert \alpha\rVert)$, where 
$$\lVert \alpha\rVert=(|\alpha_1|,\dots,|\alpha_n|).$$
We then define $\Gbar_\mu^{\kappa}$ as the set of paired ball systems obtained by applying this operation on $Q\in \mcG^\alpha_\mu$, where $\alpha\in \ZZ^n$ satisfying $\lVert\alpha\rVert=\kappa$.
\end{filecdefinition}

This leads us to define the following weights. Fix $\Qbar\in \Gbar^{\kappa}_\mu$:
\begin{itemize}
    \item A nontrivial pairing $p$ in $\Qbar$ has the weight
    \begin{equation}\label{f03:eq:pairing_weight_unsigned}
  \wt(p)=(1-t)t^{\skipped(p)}.      
    \end{equation}

    \item Let $B$ be a ball labeled $a>0$ in column $k$ and such that the ball below is labeled $b$ (If $B$ has a vacancy below it, we take $b=0$.) We define the weight of $B$ by:
\begin{equation}\label{f03:eq:weights_unsigned_balls}
\wt(B):=
\begin{cases}
    x_k-\frac{1}{t^{n-1}}&\text{if $b=a$,}\\
    x_k&\text{if $b>a$,}\\
    \frac{1}{t^{n-1}}&\text{if $b<a$}.
\end{cases}    
\end{equation}    
The weight of $\Qbar$ is defined by
$$\wt(\Qbar):=\prod_{B \text{ in the top row}}\wt(B) \prod_{p \text{ nontrivial pairing}}\wt(p).$$
\end{itemize}
 We then have the following lemma.

 \begin{fileclem}\label{f03:lem:forget_signs}
 Fix a partition $\lambda$ with distinct parts and two compositions $\kappa,\mu\in S_n(\lambda)$. Let $\Qbar\in \Gbar^{\kappa}_\mu$. Then
 $$\wt(\Qbar)=\sum_{Q}\wt(Q),$$
where the sum is taken over all signed two-line queues $Q$ from which $\Qbar$ is obtained by forgetting signs.
 \end{fileclem}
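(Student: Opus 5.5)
The statement is local and multiplicative, so I will prove it ball by ball. A signed two-line queue $Q$ and its unsigned version $\Qbar$ have the same underlying ball system: the same positions, the same labels in absolute value, and the same pairings. Forgetting signs changes only the top-row signs. The preimages of a fixed $\Qbar$ therefore correspond to admissible sign assignments on the top-row balls of $\Qbar$. Since the parts of $\lambda$ are distinct, every positive label occurs at most once. So a top-row ball labeled $a>0$ in column $k$ has, below it, either the same label $a$ (a trivial pairing), a larger label $b>a$, or a smaller label $b<a$ (including a vacancy, $b=0$).

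The first step is to recall, from the definition of signed two-line queues in \cite{f03:BenDaliWilliams2025}, which signs are allowed in each of these three situations and how the pair weight $\wt_{\pair}$ depends on the signs. My expectation is as follows. A trivial pairing ($b=a$) allows both signs, and neither choice affects any pair weight. A ball with $b>a$ is forced to be positive. A ball with $b<a$ is forced to be negative.

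I further expect the skip counts $\skipped(p)$ of the nontrivial pairings to be computed from the unsigned positions and the pairing order alone. If so, $\wt_{\pair}(Q)=\prod_p (1-t)t^{\skipped(p)}$ is the same for every lift and agrees with the pairing factor in $\wt(\Qbar)$ from \eqref{f03:eq:pairing_weight_unsigned}. Any convention where a sign affects the pairing or skipping must be checked against the definition. This is the step I expect to be the main obstacle: making sure the signed pairing rules (for instance, negative balls pairing with smaller labels or with vacancies) give exactly this trichotomy and sign-independent skip counts, possibly up to a sign in the pair weight that is absorbed into $\wt_\alpha$.

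Granting this, the sum over lifts factorizes as $\wt_{\pair}(\Qbar)\prod_B \bigl(\sum_{\text{allowed signs of }B}\wt_{\ball}\bigr)$, using the ball weights read off from \eqref{f03:eq:generating_function2}: a positive ball in column $k$ contributes $x_k$, and a negative ball contributes $-1/t^{n-1}$. Each factor can then be checked against \eqref{f03:eq:weights_unsigned_balls}. For $b=a$ the two signs give $x_k-\frac{1}{t^{n-1}}$. For $b>a$ the forced positive sign gives $x_k$. For $b<a$ the forced negative sign gives $-\frac{1}{t^{n-1}}$. This last factor matches the stated weight $\frac{1}{t^{n-1}}$ only up to a sign.

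I would therefore account for this sign either by showing that the signed pair weight of such a pairing carries a compensating factor $-1$, or by a global parity argument. In the latter, the number of balls with $b<a$ is determined by $\Qbar$, and its parity should be even, or otherwise compensated within the pair weights. Verifying this sign bookkeeping is the other delicate point. Once it is settled, multiplying the factors gives $\wt(\Qbar)=\sum_Q\wt(Q)$.
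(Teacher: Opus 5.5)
Your overall route is the paper's: lift $\overline{Q}$ by choosing signs on its top-row balls, use the trichotomy ($b=a$: either sign; $b>a$: forced $+$; $b<a$ or vacancy: forced $-$), and multiply local factors. Your trichotomy is exactly the one the paper uses. But the argument does not close, because the only point beyond the trichotomy --- the sign --- is left unresolved, and your two expectations about it contradict each other.

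Since labels are distinct, a ball is trivially paired if and only if $b=a$. So every nontrivially paired ball has a forced sign, and $\mathrm{wt}_{\mathrm{pair}}(Q)$ is indeed the same for all lifts. However, it is \emph{not} equal to $\prod_p (1-t)t^{\mathrm{skip}(p)}$, contrary to your claim that it agrees with the pairing factor of $\overline{Q}$. The fact the paper invokes, from the definition of signed two-line queues, is that a nontrivial pairing attached to a negative top-row ball has its weight multiplied by $-1$. Trivial pairings have weight $1$ for either sign. This local $-1$ cancels the sign of the ball weight $-\frac{1}{t^{n-1}}$ for each forced-negative ball with $b<a$, giving $\frac{1}{t^{n-1}}(1-t)t^{\mathrm{skip}(p)}$ pairing by pairing. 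In the case $b=a$ no sign appears, and the two lifts sum to $x_k-\frac{1}{t^{n-1}}$.

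Your fallback, a global parity argument, fails: the number of top-row balls with a smaller label or a vacancy below can be odd. Take the Step~2 transition in which the vacancy displaces a single particle, which then travels on and settles at position $j$. The corresponding $\overline{Q}$ has exactly one such ball. With unsigned pairing weights the sum over lifts would therefore equal $-\mathrm{wt}(\overline{Q})$. Nor can $\mathrm{wt}_\alpha$ absorb a sign, since it is determined by the signs. The missing step is to read off, from the definition of signed two-line queues, this sign flip of the pairing weight for negative, nontrivially paired balls. With that fact in hand, your factorization proves the lemma.
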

\begin{proof}
    We consider all the possible ways of ``adding signs'' to the balls in the top row of $\Qbar$ to obtain a signed two-line queue. Fix such a ball $B$ labeled $a>0$:
    \begin{itemize}
        \item if $B$ has below it a vacancy or a ball labeled $b<a$, then 

        we must assign a $-$ sign to $B$. 
        \item if $B$ has a ball labeled $b>a$ below it, then 

        we must assign a $+$ sign to $B$.
        \item if $B$ has a ball labeled $b=a$ below it, then 

        we can give $B$ a $+$ or $-$ sign.
    \end{itemize}
We then check that the possible signs for each ball $B$ is consistent with the choice of weights in \cref{f03:eq:weights_unsigned_balls}. In particular, one notices that when a ball $B$ is given a $-$ sign, the ball weight should be multiplied by $-1$ when we go from $\Qbar$ to $Q$, but the weight of the pairing connected to $B$ is also multiplied by $-1$.

\end{proof}

Given $\kappa\in S_n(\nu)$, we define $\tb_\nu^{\kappa}$ by 
\begin{equation}\label{f03:eq:def_c}
  \tb_{\nu}^{\kappa}:=\sum_{\alpha:\, \lVert \alpha\rVert =\kappa}\wt_\alpha b^{\alpha}_{\nu}.  
\end{equation}

 We get the following corollary obtained by combining \cref{f03:eq:generating_function2} and \cref{f03:lem:forget_signs}.
\begin{fileclem}\label{f03:lem:c}
 Fix $\lambda$ a partition with distinct parts, and $\kappa,\mu\in S_n(\lambda)$. Then
 $$c^\kappa_\mu=\sum_{\Qbar\in \Gbar^{\kappa}_\mu} \wt(\Qbar).$$
\end{fileclem}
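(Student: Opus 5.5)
The identity follows by unfolding the definition \eqref{f03:eq:def_c} and regrouping signed queues according to their unsigned shadows. By \eqref{f03:eq:def_c} and \eqref{f03:eq:generating_function2},
\[
c^\kappa_\mu=\sum_{\alpha:\,\lVert\alpha\rVert=\kappa}\wt_\alpha b^\alpha_\mu=\sum_{\alpha:\,\lVert\alpha\rVert=\kappa}\ \sum_{Q\in\mcG^\alpha_\mu}\wt(Q).
\]
So $c^\kappa_\mu$ is the total weight $\wt(Q)=\wt_{\pair}(Q)\wt_{\ball}(Q)$ over the disjoint union $\bigsqcup_{\lVert\alpha\rVert=\kappa}\mcG^\alpha_\mu$. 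The unions are disjoint because a signed queue determines its top row $\alpha$.

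Next, consider the forgetful map $Q\mapsto\Qbar$ of Definition~\ref{f03:def:Gbar} on this disjoint union. By that definition, its image is exactly $\Gbar^\kappa_\mu$. Indeed, the top row of $\Qbar$ is $\lVert\alpha\rVert=\kappa$ and its bottom row is $\mu$, and every element of $\Gbar^\kappa_\mu$ arises from some such $Q$. Therefore I will partition the double sum into fibres of this map:
\[
c^\kappa_\mu=\sum_{\Qbar\in\Gbar^\kappa_\mu}\ \sum_{Q\mapsto\Qbar}\wt(Q).
\]
By Lemma~\ref{f03:lem:forget_signs}, which applies since $\lambda$ has distinct parts and $\kappa,\mu\in S_n(\lambda)$, each inner fibre sum equals $\wt(\Qbar)$. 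This gives the claim.

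The only point needing care is that the fibre over $\Qbar$ in Lemma~\ref{f03:lem:forget_signs} is taken over all signed queues with any top row $\alpha$ satisfying $\lVert\alpha\rVert=\kappa$, not just a single $\alpha$. This matches the range of $\alpha$ in \eqref{f03:eq:def_c}. I would also verify that sign choices producing different $\alpha$ are counted once each, which holds because the map to $\alpha$ is well defined. There is no real obstacle; the corollary is a regrouping of a finite sum.
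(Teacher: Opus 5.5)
Your proposal is correct and follows the paper's own argument. The paper also obtains the lemma by combining \eqref{f03:eq:generating_function2}, which rewrites each $\mathrm{wt}_\alpha b^\alpha_\mu$ as a sum of $\mathrm{wt}(Q)$ over $\mathcal{G}^\alpha_\mu$, with Lemma~\ref{f03:lem:forget_signs}; your regrouping into fibres of the sign-forgetting map just makes that combination explicit.
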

    Since $\lambda$ has distinct parts, $\Gbar^{\kappa}_\nu$ is either empty or contains exactly one element.

Fix a weakly order-preserving function $\phi:\NN\rightarrow\NN$.  Fix two partitions $\lambda$ and $\kappa$ such that $\phi(\lambda)=\kappa$.
For $\eta\in S_n(\kappa)$, define 
$$G^*_\eta(\bfx;t):=\sum_{\rho\in S_n(\lambda):\, \phi(\rho)=\eta} F^*_\rho(\bfx;1,t).$$ Let $G_{\eta}$ be the top homogeneous part of $G^*_\eta$.

The following is an analogue of \cite[Theorem 4.18]{f03:AyyerMartinWilliams2025}, and 
can be proved in essentially the same way, using interpolation analogues of results from \cite{f03:AlexanderssonSawhney2019}.

\begin{filecthm}\label{f03:thm:weak-reordering}
    Fix $\lambda$ and $\kappa$ as above. For all $\eta\in S_n(\kappa)$, we have at $q=1$ that 
    $$\frac{G^*_\eta(\bfx;t)}{P^*_\lambda(\bfx;1,t)}=\frac{F^*_\eta(\bfx;1,t)}{P^*_\kappa(\bfx;1,t)}.$$
\end{filecthm}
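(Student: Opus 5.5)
We will follow the approach of \cite[Theorem 4.18]{f03:AyyerMartinWilliams2025}: show that $G^*_\eta(\bfx;t)$ and $F^*_\eta(\bfx;1,t)$ are proportional, with a constant independent of $\eta$, and identify that constant by summing over $\eta$. Summing the definition of $G^*_\eta$ over all $\eta\in S_n(\kappa)$ gives
\[
\sum_{\eta\in S_n(\kappa)} G^*_\eta=\sum_{\rho\in S_n(\lambda)}F^*_\rho(\bfx;1,t)=P^*_\lambda(\bfx;1,t).
\]
Here we use the standard identity $P^*_\lambda=\sum_{\rho\in S_n(\lambda)}F^*_\rho$ at $q=1$, which holds for interpolation ASEP polynomials exactly as in the homogeneous case. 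Likewise $\sum_\eta F^*_\eta=P^*_\kappa$. It therefore suffices to prove
\[
G^*_\eta=\frac{P^*_\lambda}{P^*_\kappa}\,F^*_\eta \quad\text{for all }\eta\in S_n(\kappa).
\]

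\textbf{Step 1 (exchange relations).} The family $\{F^*_\eta\}_{\eta\in S_n(\kappa)}$ is characterized, up to a global scalar, by the interpolation Hecke exchange relations at $q=1$. For $\eta_i>\eta_{i+1}$ these read $\tilde T_i F^*_\eta=F^*_{s_i\eta}$, where $\tilde T_i$ is the interpolation Demazure--Lusztig operator, a Hecke generator adapted to the shifted variables. For $\eta_i=\eta_{i+1}$ the relation is that $F^*_\eta$ is a $t$-symmetric eigenvector of $T_i$. We will show that $\{G^*_\eta\}$ satisfies the same relations.
\begin{itemize}
\item If $\eta_i>\eta_{i+1}$, then $\phi$ being weakly order preserving means every $\rho$ with $\phi(\rho)=\eta$ has $\rho_i>\rho_{i+1}$. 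The map $\rho\mapsto s_i\rho$ is a bijection from the fiber over $\eta$ to the fiber over $s_i\eta$, so summing the relations for $F^*_\rho$ gives the relation for $G^*$.
\item If $\eta_i=\eta_{i+1}$, the fiber over $\eta$ is closed under $s_i$. It splits into pairs $\{\rho,s_i\rho\}$ with $\rho_i>\rho_{i+1}$ and fixed points with $\rho_i=\rho_{i+1}$. We need each pair sum $F^*_\rho+F^*_{s_i\rho}=(1+\tilde T_i)F^*_\rho$ to be a $T_i$-eigenvector with the correct eigenvalue. This is the interpolation analogue of the Alexandersson--Sawhney symmetry lemma, and follows from the quadratic relation $(\tilde T_i-t)(\tilde T_i+1)=0$.
\end{itemize}

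\textbf{Step 2 (uniqueness).} By Step 1, $G^*_\eta=c\,F^*_\eta$ for a scalar $c$ independent of $\eta$, which may a priori be a rational function of $\bfx$. To justify the uniqueness claim, fix $\eta$ dominant (the partition $\kappa$). Every $F^*_\eta$, and every $G^*_\eta$, is obtained from the one at the dominant weight by applying the operators $\tilde T_i$. So the two families agree up to the common factor $c=G^*_\kappa/F^*_\kappa$, and it is enough for the relations to pin down each member from the dominant one.

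\textbf{Step 3 (identifying $c$).} Summing over $\eta$ gives $P^*_\lambda=c\,P^*_\kappa$, so $c=P^*_\lambda/P^*_\kappa$, which is the claim.

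\textbf{Main obstacle.} The hard part is the interpolation version of the symmetry statement in the equal-parts case of Step 1. In the homogeneous setting it relies on the $q=1$ factorization properties of ASEP polynomials. I would first try to deduce it from the homogeneous statement: compare top homogeneous parts, then use the vanishing characterization of interpolation polynomials to lift the identity. If that fails, I would verify the operator identity directly from the explicit form of $\tilde T_i$.
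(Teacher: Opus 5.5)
\textbf{There is a genuine gap, in your Step 2.}

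Your Step 1 is fine, and it is easier than you suggest. When $\eta_i=\eta_{i+1}$, apply the quadratic relation $(T_i-t)(T_i+1)=0$ to $F^*_\rho$. This shows at once that $F^*_\rho+F^*_{s_i\rho}=(1+T_i)F^*_\rho$ is a $t$-eigenvector of $T_i$. So the ``main obstacle'' you single out is not where the difficulty lies.

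The gap is that the exchange relations do \emph{not} determine $\{F^*_\eta\}_{\eta\in S_n(\kappa)}$ up to a scalar.
\begin{itemize}
\item $T_i$ commutes with multiplication by $c$ only when $c$ is symmetric in $x_i,x_{i+1}$. So from $G^*_\kappa=c\,F^*_\kappa$ you can deduce $G^*_{s_i\kappa}=T_i(cF^*_\kappa)=c\,F^*_{s_i\kappa}$ only if $s_ic=c$.
\item The relations at the dominant weight only make $c$ invariant under the $s_i$ with $\kappa_i=\kappa_{i+1}$.
\item Concretely, take $g$ symmetric within each block of equal parts of $\kappa$ but not fully symmetric. Propagating $gF^*_\kappa$ by the $T_i$ gives a different family that satisfies every relation in your Step 1.
\item Step 3 divides $\sum_\eta G^*_\eta$ by $\sum_\eta F^*_\eta$. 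That step presupposes exactly the full symmetry of $c$, which has not been established.
\end{itemize}

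What is missing is an independent identity at a single weight showing that $G^*_{\eta_0}/F^*_{\eta_0}$ is fully symmetric. Once you have that, your propagation and summation argument would finish the proof. This identity is a genuinely $q=1$ statement linking two \emph{different} orbits. For instance, take $\lambda=(2,1,0)$ and $\phi$ merging the labels $1$ and $2$. The theorem then says that at $q=1$
\[
F^*_{(2,1,0)}+F^*_{(1,2,0)}=\bigl(P^*_{(2,1,0)}/P^*_{(1,1,0)}\bigr)F^*_{(1,1,0)},
\]
and no Hecke exchange relation inside either orbit can produce this.

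The paper, following Ayyer--Martin--Williams, takes this input from interpolation analogues of the Alexandersson--Sawhney results on Macdonald polynomials at $q=1$. Your proposal has no substitute for it.

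Your fallback (compare top homogeneous parts, then lift via the vanishing characterization) is aimed at the wrong step. It would also need care in any case. The vanishing conditions that characterize interpolation polynomials are a generic-$q$ statement: the vanishing points involve $q^{\mu_i}$ and collapse at $q=1$. The identity you need holds only at $q=1$.
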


Given a composition $\rho$, let $\rho^-:=(\rho^-_1,\dots,\rho^-_n)$,
where
$\rho^-_i=\max(\rho_i-1,0)$.
\begin{fileccor}\label{f03:cor:eta_minus}
    Consider a composition $\rho$ with $\rho_i\neq 1$ for any $1\leq i\leq n$. Let $k$ be the number of non-zero parts of $\rho$. Set $\eta=\rho^-$.
    We then have at $q=1$,
$$F^*_{\rho}(\bfx;1,t)=F^*_{\eta}(\bfx;1,t)\cdot e^*_{k}(\bfx;t).$$
\end{fileccor}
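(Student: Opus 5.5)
We deduce the corollary from Theorem~\ref{f03:thm:weak-reordering} by choosing the weakly order-preserving map $\phi(m)=\max(m-1,0)$. Let $\lambda$ be the partition obtained by sorting $\rho$, and put $\kappa:=\phi(\lambda)=\lambda^-$. Since $\rho$ has no part equal to $1$, the preimage of $\eta=\rho^-$ under $\phi$, restricted to $S_n(\lambda)$, should consist of $\rho$ alone. To see this, suppose $\rho'\in S_n(\lambda)$ satisfies $\phi(\rho')=\eta$. At a position with $\eta_i>0$ we get $\rho'_i=\eta_i+1=\rho_i$. At a position with $\eta_i=0$ we get $\rho'_i\in\{0,1\}$. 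Because $\lambda$ has no part equal to $1$, and the multiset of parts of $\rho'$ is that of $\lambda$, every such $\rho'_i$ must be $0$. Hence $\rho'=\rho$, and so $G^*_\eta=F^*_\rho(\bfx;1,t)$.

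Theorem~\ref{f03:thm:weak-reordering} then gives
\[
F^*_\rho(\bfx;1,t)=F^*_\eta(\bfx;1,t)\,\frac{P^*_\lambda(\bfx;1,t)}{P^*_\kappa(\bfx;1,t)}.
\]
It remains to show that
\[
P^*_\lambda(\bfx;1,t)=e^*_k(\bfx;t)\,P^*_{\lambda^-}(\bfx;1,t),
\]
where $k$ is the number of nonzero parts of $\lambda$. This is the interpolation analogue of the $q=1$ factorization $P_\lambda(\bfx;1,t)=\prod_i e_{\lambda'_i}(\bfx;t)$ of Macdonald polynomials into elementary symmetric functions, indexed by the columns of $\lambda$. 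Removing the first column of $\lambda$, which has length $k$, produces $\lambda^-$.

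I plan to prove this identity as follows. First, note that at $q=1$ the right side $e^*_k\,P^*_{\lambda^-}$ is symmetric, has degree $|\lambda|$, and has top homogeneous part $e_k P_{\lambda^-}(\bfx;1,t)=P_\lambda(\bfx;1,t)$. Second, check the vanishing conditions that characterize $P^*_\lambda$: the product must vanish at the interpolation points $\bar\nu$ for every $|\nu|\le|\lambda|$ with $\nu\neq\lambda$. For $\nu\not\supseteq\lambda$, either $\nu^-\not\supseteq\lambda^-$, in which case $P^*_{\lambda^-}$ vanishes, or $\nu$ has fewer than $k$ nonzero parts, in which case the interpolation elementary function $e^*_k$ should vanish. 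The main obstacle is making this work at $q=1$, where the interpolation points degenerate and the uniqueness characterization is delicate. Care is also needed to match the specific normalization $e^*_k(\bfx;t)$ with the shifts $t^{-n+1},t^{-n+2}$ used earlier in the paper.

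If this vanishing argument proves too delicate, I would fall back on a bijective proof through Lemma~\ref{f03:lem:c} and the signed multiline queue formula. In a multiline queue for $\rho$ at $q=1$, the bottom-most row corresponds to the column of length $k$. Its balls are paired and their weights summed, and these weights ($x_k-t^{-(n-1)}$, $x_k$, $t^{-(n-1)}$, and so on) should sum to $e^*_k$ independently of the rest of the queue. The remaining rows then form a multiline queue for $\rho^-=\eta$. This route would prove the corollary directly, without passing through the identity for $P^*_\lambda$.
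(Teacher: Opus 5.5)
Your reduction is the paper's own. You use the same map $\phi(m)=\max(m-1,0)$ and the same observation that $\rho$ is the only element of $S_n(\lambda)$ mapping to $\eta$; you argue this more carefully than the paper, and correctly. You then apply Theorem~\ref{f03:thm:weak-reordering}, which leaves exactly the identity $P^*_\lambda(x;1,t)=e^*_k(x;t)\,P^*_{\lambda^-}(x;1,t)$. The paper does not prove this identity. It imports the $q=1$ column factorization $P^*_\lambda(x;1,t)=\prod_i e^*_{\lambda'_i}(x;t)$ from Dolega (2017) and Ben Dali--Williams (2025). The ratio $e^*_k$ is then immediate, because $\lambda^-$ is $\lambda$ with its first column (of length $k$) removed. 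The gap is that the proof you propose for this identity does not work, so as written the proposal does not establish the corollary.

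The vanishing argument fails for two independent reasons.

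First, at $q=1$ the interpolation nodes $\overline{\nu}$, whose coordinates have the form $q^{\nu_i}t^{c_i}$, no longer depend on $\nu$. The vanishing conditions collapse to conditions at a single point (up to permutation), which cannot pin down a polynomial of degree $|\lambda|$. This is not a delicacy to be handled with care; the tool itself is gone at $q=1$.

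Second, your case split is wrong even for generic $q$. $P^*_{\lambda^-}(\overline{\nu})=0$ exactly when $\nu\not\supseteq\lambda^-$, not when $\nu^-\not\supseteq\lambda^-$. Take $\lambda=(2)$ and $\nu=(1)$: then $\nu^-=\varnothing\not\supseteq(1)$ and $\nu$ has $k=1$ nonzero part. Yet $e^*_1P^*_{(1)}=(P^*_{(1)})^2$ does not vanish at $\overline{(1)}$, while $P^*_{(2)}$ does. So the identity is false at generic $q$, and no generic-$q$ vanishing argument followed by specialization can produce it.

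The multiline-queue fallback does not rescue this. In a queue with bottom row $\rho$, the rows above read a composition that varies and is summed over, not $\rho^-$. This is precisely the recursion $F^*_\nu=\sum_\eta F^{*\eta}_{\nu}F^*_{\eta^-}$ used in the proof of Theorem~\ref{f03:thm:main}. The bottom-row weights depend on that composition and on the pairing, so nothing factors off as $e^*_k$ without an additional identity as strong as the one you are trying to prove.

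The missing ingredient is the known $q=1$ factorization of $P^*_\lambda$ into $e^*_{\lambda'_i}$. With that cited, your argument is complete and coincides with the paper's.
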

\begin{proof}
Let $\lambda$ and $\kappa$ be the two partitions obtained by reordering $\rho$ and $\eta$, respectively.
  Consider the weakly order-preserving function $\phi:i\mapsto \max(i-1,0)$. We then have $\phi(\rho)=\eta$. Since $\lambda$ does not have parts of size 1, and $\phi$ is bijective from $\{0,2,3,\dots\}$ to $\{0,1,2,\dots\}$, then $\rho$ is the unique composition in $S_n(\lambda)$ such that $\phi(\rho)=\eta$ and we have $G^*_\eta=F^*_\rho$. It follows then from \cref{f03:thm:weak-reordering} that
  $$\frac{F^*_{\rho}(\bfx;1,t)}{P^*_{\lambda}(\bfx;1,t)}=\frac{F^*_{\eta}(\bfx;1,t)}{P^*_{\kappa}(\bfx;1,t)}.$$
We now recall that at $q=1$, we have from \cite{f03:Dolega2017, f03:BenDaliWilliams2025} that 

    \begin{equation}\label{f03:eq:factorization}
      P^*_\lambda(x_1,\dots,x_n;1,t)=\prod_{1\leq i\leq \lambda_1}P^*_{\lambda'_i}(x_1,\dots,x_n;1,t)=\prod_{1\leq i\leq \lambda_1}e^*_{\lambda'_i}(x_1,\dots,x_n;t),
    \end{equation}
    where $\lambda'$ is the partition conjugate to $\lambda$.  Using this plus the fact that $\kappa$ is obtained from $\lambda$ by removing the largest column (of size $k$), we get that
$$\frac{P^*_{\lambda}(\bfx;1,t)}{P^*_{\kappa}(\bfx;1,t)}=e^*_{k}(\bfx;t),$$
which implies that $F^*_{\rho}(\bfx;1,t)=F^*_{\eta}(\bfx;1,t)\cdot e^*_{k}(\bfx;t).$
\end{proof}

\begin{filecprop}\label{f03:prop:Ptwo}
     Fix $\rho,\nu\in S_n(\lambda)$, and let $j$ be the index such that $\rho_j=0$. We have 
     $$\PPtwo{j}(\rho,\nu)=\frac{c^\rho_\nu}{\prod_{k<j}\left(x_k-\frac{1}{t^{n-2}}\right)\prod_{k>j}\left(x_k-\frac{1}{t^{n-1}}\right)},$$
     or equivalently,
     $$ P_j\cdotp \PPtwo{j}(\rho,\nu)=\frac{c^\rho_\nu}{e_{n-1}^*},$$
     where $c^\rho_\nu$ is the coefficient from \cref{f03:eq:def_c}, 
     i.e. 
     the generating function for the set $\Gbar_{\nu}^\rho$.
\end{filecprop}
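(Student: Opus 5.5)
The plan is to compute $\PPtwo{j}(\rho,\nu)$ directly from the dynamics of Step~2, and to match it term by term with the unique unsigned paired ball system in $\Gbar^\rho_\nu$, whose weight equals $c^\rho_\nu$ by \cref{f03:lem:c}. Since $\lambda$ has distinct parts, $\Gbar^\rho_\nu$ is empty or a singleton. So it suffices to build a bijection between trajectories of Step~2 from $\rho$ to $\nu$ and elements of $\Gbar^\rho_\nu$. There is at most one trajectory, because the trajectory is determined by the final configuration. The second formula then follows at once by multiplying by $P_j$, whose numerator is exactly the denominator in the first formula.

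\textbf{The bijection.} Put $\rho$ on the top row and $\nu$ on the bottom row. A trajectory of Step~2 is a sequence of active labels $0=a_0<a_1<\dots$ (or with ties, according to the skipping rule) travelling from position $1$ toward $j$. It is encoded by the pairing in which the ball at column $k$ of the top row is paired to the label settling at column $k$ in the bottom row. Sites that are skipped correspond to trivial pairings (the same label stays in place). An activation at site $k$ corresponds to a label change in column $k$. The final settling at position $j$ accounts for the vacancy $\rho_j=0$.

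I would check that this recovers exactly the pairing conventions of the unsigned two-line queues in \cref{f03:def:Gbar}. I expect (I am not sure) that the queueing order inherited from \cite{f03:CorteelMandelshtamWilliams2022} is what makes the correspondence bijective.

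\textbf{Matching weights.} Each column $k<j$ contributes a factor. Skipping a site with $b\ge a$ has probability $1-\mfp_k=(x_k-t^{-n+1})/(x_k-t^{-n+2})$. Skipping with $b<a$ has probability $1-\mfq_k=(t x_k - t^{-n+2})/(x_k-t^{-n+2})$, after simplification. Settling has probability $\mfp_k$ or $\mfq_k$.

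Multiplying by the denominator factor $x_k-t^{-n+2}$ for $k<j$, the numerators should become the ball weights of \eqref{f03:eq:weights_unsigned_balls}:
\begin{itemize}
\item $x_k-t^{-n+1}$ when the ball below carries the same label;
\item $(1-t)t^{-n+1}$ or $(1-t)x_k$ at activations, combining into the pairing factor $(1-t)$ times $t^{-(n-1)}$ or $x_k$ according to whether $b<a$ or $b>a$.
\end{itemize}
For $k>j$, Step~2 never visits the site. Each ball there sits above itself, giving weight $x_k-t^{-n+1}$, which cancels with the denominator factor. This is why the denominator uses $t^{n-1}$ for $k>j$ and $t^{n-2}$ for $k<j$.

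\textbf{Main obstacle.} The hardest step is the bookkeeping of the skip factors $t^{\skipped(p)}$ in \eqref{f03:eq:pairing_weight_unsigned}. A skip over a site with a weaker label contributes $t x_k-t^{-n+2}$, which is not literally a ball weight. One must show that, after regrouping, the extra factors of $t$ reproduce exactly $t^{\skipped(p)}$ for the pairing that passes over those columns, while the remaining factor $x_k-t^{-n+1}$ is the trivial-ball weight.

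My first attempt would be to handle this column by column. I would induct on $j$, or on the number of activations, and verify a local identity for each segment between consecutive activations. The goal is to check that the product of skip probabilities over a segment equals $t^{\#\{\text{skipped weaker sites}\}}\prod(x_k-t^{-n+1})/(x_k-t^{-n+2})$.
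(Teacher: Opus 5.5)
Your strategy is the paper's: identify the unique Step~2 trajectory from $\rho$ to $\nu$ with the unique element $\overline{Q}\in\overline{\mathcal G}^{\rho}_{\nu}$, then divide $\operatorname{wt}(\overline Q)$ column by column by $D=\prod_{k<j}(x_k-t^{-n+2})\prod_{k>j}(x_k-t^{-n+1})$. Your algebra for $1-\mathfrak p_k$ and for $1-\mathfrak q_k=t(x_k-t^{-n+1})/(x_k-t^{-n+2})$ is right. So are the uniqueness of the trajectory, the cancellation for $k>j$, and the passage to $P_j\cdot\mathbb P^{(2)}_j$.

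What is wrong is the encoding you wrote down, and correcting it is exactly what removes your ``main obstacle''.
\begin{itemize}
\item The active labels are not increasing. Settling on a stronger particle (probability $\mathfrak p_k$) raises the active label, settling on a weaker one (probability $\mathfrak q_k$) lowers it, and with distinct parts there are no ties.
\item More importantly, pairings are not vertical. The bottom label under a top ball only fixes that ball's weight. The nontrivial pairing of $\rho_k$ joins column $k$ of the top row, where that particle is displaced, to the column $k'$ of the bottom row where it next settles (possibly $k'=j$).
\end{itemize}
With your vertical encoding every pairing would have $\operatorname{skip}(p)=0$, and the factors of $t$ coming from $1-\mathfrak q_k$ would have nowhere to go.

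Once this is fixed, no induction is needed. Step~2 visits each site $1,\dots,j-1$ exactly once, in order. Hence the stretch between consecutive activations at $k$ and $k'$ is travelled by the single particle $\rho_k$. Every column strictly between them is trivially paired, and those with a smaller label are exactly the balls skipped by the pairing of $\rho_k$. The initial stretch is travelled by the vacancy, which only skips with probability $1-\mathfrak p_k$ and carries no pairing.

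The paper then redistributes $(1-t)t^{\operatorname{skip}(p)}$ as a factor $1-t$ on the departing top ball and one factor $t$ on each skipped ball. After this, every column $k<j$ carries exactly one of $x_k-t^{-n+1}$, $t(x_k-t^{-n+1})$, $(1-t)t^{-n+1}$, $(1-t)x_k$. Divided by $x_k-t^{-n+2}$, these are $1-\mathfrak p_k$, $1-\mathfrak q_k$, $\mathfrak p_k$, $\mathfrak q_k$. Note that the $1-t$ produced when a particle settles at $k'$ belongs to the pairing leaving $k'$, not to the one arriving there.

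Finally, the correspondence rests on the signed two-line queues of Ben Dali--Williams, not on the classical queueing conventions of Corteel--Mandelshtam--Williams. Classical queues encode Step~1, where only weaker particles are displaced. Step~2 also lets a weaker particle (or the vacancy) displace a stronger one. Then a top ball leaves its column although the label below it is smaller or a vacancy, which is the forced minus-sign case in the lemma on forgetting signs.
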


The  idea of the proof below is that a signed two-line queue encodes Step 2 of the 
interpolation $t$-Push TASEP.  

\begin{proof}
 Note that \ref{f03:Step2} of \cref{f03:def:intpushTASEP} is encoded by 
an element of a set $\Gbar^{\rho}_{\nu}$ (see \cref{f03:def:Gbar}).  

Indeed, the transition in \ref{f03:Step2} from the configuration $\rho$ to the configuration $\nu$ is possible if and only there is an
element $\Qbar$ in $\Gbar^{\rho}_\nu$ (recall that this set contains at most one element).
More precisely,  a particle labeled $a>0$ which moved from position $k\in \llbracket n\rrbracket$ to a position $k'$, corresponds to a non trivial pairing in $\Qbar$ connecting a ball labeled $a$ in column $k$ of the top row to a ball labeled $a$ in column $k'$ of the bottom row. Particles which do not move correspond to trivial pairings. 

We now claim that $\wt(\Qbar)$ divided by 
$D:=\prod_{k<j}\left(x_k-\frac{1}{t^{n-2}}\right)\prod_{k>j}\left(x_k-\frac{1}{t^{n-1}}\right)$ gives $\PPtwo{j}(\rho,\nu)$.  We will prove the claim below by showing that each 
ball or pairing weight in $\wt(\Qbar)$,
divided by one of the factors in $D$, equals one of the skipping/ displacement
probabilities from \cref{f03:Step2}

(whose product is $\PPtwo{j}(\rho,\nu)$).  Note that in what follows,
instead of associating the weight 
$(1-t)t^{\skipped(p)}$ to each nontrivial pairing,
we will associate $(1-t)$ to the top ball in each nontrivial pairing,
and a factor of $t$ to each skipped ball.
\begin{itemize}
    \item Each ball in column $k>j$ of $\Qbar$ is necessarily trivially paired, since no ball in position $k>j$ get skipped or displaced in \ref{f03:Step2}.
      In $\Qbar$ this ball gets  weight $x_k-\frac{1}{t^{n-1}}$; when we divide this weight by the $k$th factor of $D$, we get $1$, which corresponds to the fact that balls in position $k>j$ do not contribute to $\PPtwo{j}(\rho,\nu)$.
        \item A ball in $\Qbar$ labeled $b$ in column $k<j$ which is trivially paired, and which is not skipped by a ball $a>b$, also has weight $x_k-\frac{1}{t^{n-1}}$. When we divide this weight by the $k$th factor of $D$, we get $1-\mfp_k$ (see \eqref{f03:eq:1minuspk}).  This is what we desired, because such a trivial pairing in $\Qbar$ corresponds to a particle labeled $b$ which is skipped over by a particle with a smaller label, and hence contributes $1-\mfp_k$
        to $\PPtwo{j}(\rho,\nu)$.
    \item A ball in $\Qbar$ labeled $b$ in column $k<j$ which is trivially paired, and which is skipped by a ball $a>b$, gets a weight $t(x_k-\frac{1}{t^{n-1}})$.  When we divide this weight by the 
    $k$th factor of $D$, we get $1-\mfq_k$
    (see \eqref{f03:eq:1minuspk}).
    This is what we desired, because such a trivial pairing corresponds to a particle labeled $b$ skipped over by a particle with a larger label, and hence contributes $1-\mfq_k$ to $\PPtwo{j}(\rho,\nu)$.
    \item A ball labeled $b$ in the top row of $\Qbar$ in column $k<j$ which has a ball labeled $a<b$ below it gets a weight $(1-t)\frac{1}{t^{n-1}}$
       (the factor $(1-t)$ is the nontrivial pairing weight). When we divide this weight by the $k$th factor of $D$, we get $\mfp_k$.  This is what we desired, because this pairing corresponds to a particle labeled $b$ being displaced by a particle with a smaller label, and hence contributing $\mfp_k$ 
    to $\PPtwo{j}(\rho,\nu)$.
    \item A ball labeled $b$ in the top row of 
    $\Qbar$ in column $k<j$ which has a ball labeled $a>b$ below it gets a weight $(1-t)x_k$ (the factor $(1-t)$ is the nontrivial pairing weight). 
     When we divide this weight by the $k$th factor of $D$, we get $\mfq_k$.  This is what we desired, because this pairing
    corresponds to a particle labeled $b$ being displaced by a particle with a larger label, and hence contributing $\mfq_k$ to $\PPtwo{j}(\rho,\nu)$.\qedhere
\end{itemize}
\end{proof}

\begin{filecprop}\label{f03:prop:intTASEP_SMLQ}
If $\lambda$ is restricted, and $\mu,\nu\in S_n(\lambda)$, then    
\begin{align*}
    \PP(\mu,\nu)=\sum_{\rho\in S_n(\lambda)}  \frac{a^\mu_\rho c^\rho_{\nu}}{e^*_{n-1}}.
\end{align*}
\end{filecprop}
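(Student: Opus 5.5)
We start from the decomposition in Definition~\ref{f03:def:intpushTASEP},
$$\PP(\mu,\nu)=\sum_{j}P_j\sum_{\rho:\rho_j=0}\PPone{j}(\mu,\rho)\PPtwo{j}(\rho,\nu).$$
Proposition~\ref{f03:prop:Ptwo} gives $P_j\PPtwo{j}(\rho,\nu)=c^\rho_\nu/e^*_{n-1}$ for the unique $j$ with $\rho_j=0$. The index $j$ is unique because $\lambda$ is restricted and so has exactly one zero part. Each $\rho\in S_n(\lambda)$ therefore determines its $j$, and the double sum collapses to
$$\PP(\mu,\nu)=\sum_{\rho\in S_n(\lambda)}\PPone{j(\rho)}(\mu,\rho)\,\frac{c^\rho_\nu}{e^*_{n-1}}.$$
It then suffices to prove, for every $\mu,\rho\in S_n(\lambda)$ with $\rho_j=0$, that
$$\PPone{j}(\mu,\rho)=a^\mu_\rho .$$
Here $a^\mu_\rho$ is the $q=1$ pair-weight generating function of classical two-line queues with top row $\mu$ and bottom row $\rho$.

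To prove this identity I would use the standard correspondence between classical two-line queues and $t$-Push TASEP moves, as in Ayyer--Martin--Williams and Corteel--Mandelshtam--Williams. Step~1 starts with the particle at position $j$, which has label $a=\mu_j$ and rings its bell. It pushes successively weaker particles, each chosen with probability $t^{k-1}/[m]_t$, until the cascade ends at a vacancy. The unique zero of $\mu$ (say at position $j_0$) becomes occupied, and position $j$ becomes the new zero.

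I would then read this chain of displacements as a two-line queue. Balls of label $b$ in top column $i$ pair with label-$b$ balls in bottom column $i'$, where $i'$ is the position to which particle $b$ moved; unmoved particles give trivial pairings. The pairing rule in a two-line queue processes labels from largest to smallest, and each ball chooses among the available positions with $t$-weights given by skips. Matching this rule against the push dynamics should show the following: the weight $(1-t)t^{\mathrm{skip}}$ of a nontrivial pairing, combined with the normalisation of the free pairings of a given label, equals $t^{k-1}/[m]_t$.

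This requires care, because in $a^\mu_\rho$ the pairings of classical two-line queues at $q=1$ have weight $t^{\mathrm{skip}}(1-t)/(1-t^{m})$. That weight is exactly $t^{k-1}/[m]_t$, where $m$ is the number of available weaker slots. The main obstacle is to verify that the set of "available" bottom-row positions in the queue pairing rule coincides with the set of weaker particles (including the vacancy) in the Push TASEP at that stage. A related point is the cyclic order: counting starts from the current active position and goes clockwise. I would also check that every $\rho$ reachable in Step~1 comes from exactly one queue, because the labels are distinct, and that queues with more than one nontrivial chain are impossible. Both facts follow from distinct parts together with the single vacancy.

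With $\PPone{j}(\mu,\rho)=a^\mu_\rho$ established, substituting into the collapsed sum gives $\PP(\mu,\nu)=\sum_\rho a^\mu_\rho c^\rho_\nu/e^*_{n-1}$, which proves the proposition.
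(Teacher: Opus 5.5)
Your proposal is correct and follows the paper's proof. The paper also collapses the double sum over $j$ and $\rho$ with $\rho_j=0$ using the unique zero part and Proposition~\ref{f03:prop:Ptwo}, i.e.\ $P_j\,\mathbb{P}^{(2)}_j(\rho,\nu)=c^\rho_\nu/e^*_{n-1}$; the only difference is that it obtains $\mathbb{P}^{(1)}_j(\mu,\rho)=a^\mu_\rho$ by citing Lemma~5.4 of Ayyer--Martin--Williams rather than re-deriving it. Your sketch of that identity via the two-line-queue/push correspondence is the right one, and the point to make explicit is that the vacated site $j$ (the empty column of the bottom row $\rho$) is never available during the cascade, so the free bottom balls seen by a displaced ball of label $b$ are exactly the particles weaker than $b$ together with the original vacancy.
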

\begin{proof}
    Combining 
    \cite[Lemma 5.4]{f03:AyyerMartinWilliams2025}
     and \cref{f03:prop:Ptwo}, we get
    \begin{align*}
    \PP(\mu,\nu)
    &=\sum_{1\leq j\leq n}P_j\sum_{\rho\in S_n(\lambda):\rho_j=0}\PPone{j}(\mu,\rho)\PPtwo{j}(\rho,\nu)\\
    &=\sum_{1\leq j\leq n}\sum_{\rho\in S_n(\lambda):\rho_j=0}  \frac{a^\mu_\rho c^\rho_{\nu}}{e^*_{n-1}}\\
    &=\sum_{\rho\in S_n(\lambda)}  \frac{a^\mu_\rho c^\rho_{\nu}}{e^*_{n-1}}.\qedhere
\end{align*}
\end{proof}

\begin{proof}[Proof of \cref{f03:thm:main}]
Fix a restricted partition $\lambda$.

Let $\nu\in S_n(\lambda)$.
From 
\cite[Theorem 1.15 and Lemma 5.6]{f03:BenDaliWilliams2025}, 
we have
 \begin{equation*}
   F^*_\nu(\bfx;1,t)=\sum_{\eta\in\NN^n}{F^{*\eta}_\nu}(\bfx;t) F^*_{\eta^{-}}(\bfx;1,t), 
 \end{equation*}
 where 
 $$F^{*\eta}_{\nu}(\bfx;t) :=\sum_{\alpha\in\ZZ^n}b_\nu^\alpha \wt_\alpha a^\eta_{\lVert \alpha\rVert}=\sum_{\kappa\in \NN^n}a_\kappa^\eta c^\kappa_\nu .$$
But we know from 

\cref{f03:cor:eta_minus} that
$$F^*_{\eta^-}(\bfx;1,t)=\frac{F^*_{\eta}(\bfx;1,t)}{e_{n-1}^*(\bfx;t)},$$
we use here the fact that $\eta$ has a unique part of size 0.

Hence
\begin{equation*}
   F^*_\nu(\bfx;1,t)=\sum_{\eta\in\NN^n}F^*_{\eta}(\bfx;1,t)\sum_{\kappa\in \NN^n}\frac{a_\kappa^\eta c^\kappa_\nu}{e_{n-1}^*(\bfx;t)},
 \end{equation*}
which can be rewritten using the transition probabilities of the interpolation $t$-Push TASEP (\cref{f03:prop:intTASEP_SMLQ}) we get
\begin{equation*}
   F^*_\nu(\bfx;1,t)=\sum_{\eta\in\NN^n}F^*_{\eta}(\bfx;1,t)\PP(\eta,\nu).
 \end{equation*}
This proves that $F^*_\mu(\bfx;1,t)$ are proportional to the stationary distribution of the interpolation $t$-Push TASEP $\pi^*_\lambda(\mu)$.
Finally, we use 
the fact that 
$P^*_\lambda=\sum_{\mu\in S_n(\lambda)}F^*_\mu$ to deduce that $\frac{F^*_\mu(\bfx;1,t)}{P^*_\lambda(\bfx;1,t)}=\pi^*_\lambda(\mu).$

\end{proof}

{\PatchBibSection
\noindent\emph{}\par
\EndPatchBibSection}
\endgroup

\subsection{Question 4: Nikhil Srivastava}
\textit{Authors:} Jorge Garza Vargas, Nikhil Srivastava, and Zack Stier\par\medskip
\textit{Title:} The finite free Stam inequality\par\medskip
\begingroup
\providecommand{\goe}{G^{N, \bR}}
\providecommand{\gse}{G^{N, \bH} }
\providecommand{\xgoe}{X_N^{\bR}}
\providecommand{\xgse}{X_N^{\bH}}
\providecommand{\pgoe}{\Phi^{\bR}}
\providecommand{\gue}{G^N}

\providecommand{\haaro}{O^{N, \bR}}
\providecommand{\haars}{O^{N, \bH}}

\providecommand{\yo}{Y_N^{\bR}}
\providecommand{\ys}{Y_N^{\bH}}

\providecommand{\infgoe}{\nu^{\mathbb{R}}}
\providecommand{\infgse}{\nu^{\mathbb{H}}}

\providecommand{\po}{\Psi_h^{\bR}}
\providecommand{\pse}{\Psi_h^{\bH}}
\providecommand{\numo}{f_h^{\bR}}
\providecommand{\deno}{g_q^{\bR}}

\providecommand{\bN}{\mathbb{N}}

\providecommand{\wgo}{\wg^O}

\providecommand{\trace}{\mathrm{Tr}}
\providecommand{\tr}{\textnormal{tr}}

\providecommand{\calA}{\mathcal{A}}
\providecommand{\calT}{\mathcal{T}}
\providecommand{\calB}{\mathcal{B}}
\providecommand{\calE}{\mathcal{E}}
\providecommand{\calX}{\mathcal{X}}
\providecommand{\calH}{\mathcal{H}}
\providecommand{\calK}{\mathcal{K}}
\providecommand{\calO}{\mathcal{O}}
\providecommand{\calD}{\mathcal{D}}
\providecommand{\calU}{\mathcal{U}}
\providecommand{\calP}{\mathcal{P}}
\providecommand{\B}{\mathcal{B}}
\providecommand{\calF}{\mathcal{F}}
\providecommand{\jcal}{\mathcal{J}}
\providecommand{\ecal}{\mathcal{E}}
\providecommand{\calM}{\mathcal{M}}
\providecommand{\calv}{\mathcal{V}}

\providecommand{\bZ}{\mathbb{Z}}
\providecommand{\bC}{\mathbb{C}}

\providecommand{\Span}{\mathrm{Span}}

\renewcommand{\P}{\mathbb{P}}
\providecommand{\E}{\mathbb{E}}

\providecommand{\supp}{\mathrm{supp}}

\providecommand{\diag}{\mathrm{diag}}

\providecommand{\Spec}{\mathrm{Spec}}
\providecommand{\spec}{\mathrm{spec}}

\providecommand{\dalpha}{\dot{\alpha}}
\providecommand{\dbeta}{\dot{\beta}}
\providecommand{\dgamma}{\dot{\gamma}}

\providecommand{\les}{\lesssim}

\providecommand{\bS}{\mathbb{S}}

\providecommand{\re}{\mathrm{Re}}
\providecommand{\im}{\mathrm{Im}}

\providecommand{\bw}{w}

\providecommand{\R}{\mathbb{R}}
\providecommand{\C}{\mathbb{C}}
\providecommand{\bR}{\mathbb{R}}

\providecommand{\tv}{\tilde{v}}

\providecommand{\num}[1]{\#(#1)}

\providecommand{\qplus}{\hat{q}}

\providecommand{\rplus}{\mathbb{R}_{>0}}

\providecommand{\op}{\mathrm{op}}

\providecommand{\labs}[1]{\left|#1\right|}

\providecommand{\dif}[1]{\left(\frac{d}{dx}\right)^{#1}}

\providecommand{\cov}{\mathrm{Cov}}

\providecommand{\difff}{\frac{d}{dx}}

\providecommand{\var}{\mathrm{Var}}

\providecommand{\dis}{\mathrm{Dis}}

\providecommand{\an}{A_N}
\providecommand{\bn}{B_N}

\providecommand{\bF}{\mathbb{F}}

\providecommand{\calS}{\mathcal{S}}

\providecommand{\sobrho}[2]{\|#1\|_{C^{#2}[-\rho, \rho]}}
\providecommand{\sobnorm}[2]{\|#1\|_{C^{#2}[-K, K]}}

\providecommand{\abs}[1]{|#1|}

\providecommand{\wg}{\mathrm{Wg}}

\providecommand{\bH}{\mathbb{H}}

\providecommand{\infnorm}[1]{\|#1\|_{C^0[-K, K]}}

\providecommand{\lpl}{K}

\providecommand{\gt}{\mathrm{GT}}

\providecommand{\leb}{\mathrm{Leb}}

\providecommand{\disc}{\mathrm{Disc}}

\providecommand{\vol}{\mathrm{Vol}}

\providecommand{\haar}{\mathrm{Haar}}

\allowdisplaybreaks

\providecommand{\calpha}{\overset{\circ}{\alpha}}
\providecommand{\ceta}{\overset{\circ}{\beta}}

\providecommand{\cu}{\overset{\circ}{u}}
\providecommand{\cw}{\overset{\circ}{w}}

\providecommand{\calV}{\mathcal{V}}
\providecommand{\calW}{\mathcal{W}}

\providecommand{\bone}{\mathbbm{1}}

\providecommand{\rs}{r^{(\sigma)}}

\providecommand{\gdown}{\gamma^{\downarrow}}

\providecommand{\palpha}{E[\alpha|\gamma]}
\providecommand{\pbeta}{E[\beta|\gamma]}

\providecommand{\jaca}{\frac{d\gamma}{d\alpha}}
\providecommand{\jacb}{\frac{d\gamma}{d\beta}}

\providecommand{\jacobian}{\jac}
\providecommand{\jac}{J_{\boxplus_n}}

\providecommand{\roots}{\Omega_{\boxplus_n}}
\providecommand{\rootsi}{\Omega_{\boxplus_n, i}}
\providecommand{\dermap}{ \Omega_{\partial_x}}
\providecommand{\dermapi}{\Omega_{\partial_x, i}}

\providecommand{\score}{\mathscr{J}_n}
\providecommand{\scoreder}{\mathscr{J}_{n-1}}

\providecommand{\jacder}{J_{\partial_x}}

\providecommand{\hessder}{H_{\partial_x}^{(i)}}
\providecommand{\hess}{\mathrm{Hess}}
\providecommand{\hessconv}{H_{\boxplus_n}^{(i)}}


\setcounter{equation}{0}
\numberwithin{equation}{section}

\newtheorem{fileqfourtheorem}{Theorem}[section]
\newtheorem{fileqfourdefinition}{Definition}[section]
\newtheorem{fileqfourconjecture}{Conjecture}[section]
\newtheorem{fileqfourlemma}{Lemma}[section]
\newtheorem{fileqfourcorollary}{Corollary}[section]
\newtheorem{fileqfourproposition}{Proposition}[section]
\newtheorem{fileqfourobservation}[fileqfourtheorem]{Observation}
\newtheorem{fileqfourremark}{Remark}[section]
\newtheorem{fileqfourproblem}[fileqfourtheorem]{Problem}
\newtheorem{fileqfourassumption}[fileqfourtheorem]{Assumption}
\newtheorem{fileqfourclaim}[fileqfourtheorem]{Claim}
\newtheorem{fileqfourfact}[fileqfourtheorem]{Fact}
\newtheorem{fileqfourexample}[fileqfourtheorem]{Example}
Let $\boxplus_n$ and $\Phi_n(\cdot)$ be defined as in the problem statement.
In this note we prove the following result, which was conjectured by D. Shlyakhtenko.
\begin{fileqfourtheorem}
\label{f02:thm:stamineq}
    Let $p(x)$ and $q(x)$ be any two monic real-rooted polynomials of degree $n$. Then 
    $$\frac{1}{\Phi_n(p\boxplus_n q)} \geq  \frac{1}{\Phi_n(p)} + \frac{1}{\Phi_n(q)}.$$
\end{fileqfourtheorem}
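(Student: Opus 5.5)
The plan is to run Blachman's proof of the classical Stam inequality, with the \emph{root map} of $\boxplus_n$ playing the role of the joint probability space. Write $a,b,c\in\mathbb{R}^n$ for the roots of $p$, $q$ and $r:=p\boxplus_n q$. Define the score vector $\mathscr{J}(a)_i:=\sum_{j\neq i}(a_i-a_j)^{-1}$, so that $\Phi_n(p)=|\mathscr{J}(a)|^2$. By continuity it suffices to treat $p,q$ with simple roots. On that set $c=c(a,b)$ is a smooth map, since $\boxplus_n$ preserves real-rootedness and, I expect, strictly preserves simplicity. Let $M:=\partial c/\partial a$ and $N:=\partial c/\partial b$ be its Jacobians.

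\textbf{Step 1: score identities via heat flow.} Let $H_t$ be the rescaled Hermite polynomials. They form a semigroup under $\boxplus_n$, and the roots of $p\boxplus_n H_t$ evolve by $\dot a_i=\kappa\,\mathscr{J}(a)_i$ for a constant $\kappa$; both facts come from the coefficient formula via the identity $p\boxplus_n H_t=e^{-\kappa' t\,\partial_x^2}p$. Because $\boxplus_n$ is associative and commutative, $(p\boxplus_n H_{s})\boxplus_n (q\boxplus_n H_{u})=r\boxplus_n H_{s+u}$. Differentiating in $s$ and $u$ at $0$ gives
\[
\mathscr{J}(c)=M\,\mathscr{J}(a)=N\,\mathscr{J}(b),
\]
and hence, for every $\lambda\in[0,1]$,
\[
\mathscr{J}(c)=\lambda M\mathscr{J}(a)+(1-\lambda)N\mathscr{J}(b).
\]
These identities are the finite free analogue of ``the score of $X+Y$ is a conditional expectation of the score of $X$.''

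\textbf{Step 2: contraction.} If we can show
\[
MM^T+NN^T\preceq I \quad\text{on } \mathbf{1}^\perp,
\]
then the proof closes. All score vectors sum to zero, and $M,N$ map $\mathbf{1}^\perp$ to $\mathbf{1}^\perp$. The latter holds because $\sum_i c_i=\sum_i a_i+\sum_i b_i$, which gives column sums $1$, and because translating $a$ translates $c$, which gives row sums $1$. Cauchy--Schwarz for the block operator $[M\ N]$ then yields
\[
\Phi_n(r)\le\lambda^2\Phi_n(p)+(1-\lambda)^2\Phi_n(q).
\]
Optimizing at $\lambda=\Phi_n(q)/(\Phi_n(p)+\Phi_n(q))$ gives exactly the claimed inequality. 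Note that the triangle inequality alone would only give $\Phi_n(r)\le\min(\Phi_n(p),\Phi_n(q))$, so the joint bound is genuinely needed.

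\textbf{Step 3: the main obstacle is proving $MM^T+NN^T\preceq I$.} First I will compute $M$ by implicit differentiation of $r(c_i)=0$. Since $r=\sum_k$ (coefficients linear in $p$), we have $\partial r/\partial a_j=-\big(p/(x-a_j)\big)\boxplus_n q$. This gives
\[
M_{ij}=\frac{\big[(p/(x-a_j))\boxplus_n q\big](c_i)}{r'(c_i)},
\]
and $N_{ij}$ similarly. By interlacing, since $\boxplus_n q$ preserves interlacing with $r$, these entries are nonnegative, so $M$ and $N$ are doubly stochastic. That already gives $\|M\|,\|N\|\le 1$, but not the joint bound.

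For the joint bound I would first use the symmetric structure. The quantity $\big((p/(x-a_j))\boxplus_n (q/(x-b_k))\big)(c_i)$ should serve as a ``joint density'' $P_{ijk}\ge 0$ whose two marginals recover $M_{ij}$ and $N_{ik}$, up to normalization by $n\,r'(c_i)$. Then $[M\ N]$ acting on $(x,y)$ is the conditional expectation of $x_j+y_k$ given $i$ under a genuine probability on triples. Jensen, combined with the fact that the $j$- and $k$-marginals of $P$ are uniform (so that the cross term $\mathbb{E}[x_jy_k]$ vanishes for mean-zero $x,y$ if $j,k$ are independent under $P$), would then give
\[
|Mx+Ny|^2\le|x|^2+|y|^2.
\]
Verifying this independence or factorization of the joint law is the crux. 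If it fails, I would instead try to prove the operator inequality directly via a random-matrix model $\mathbb{E}\det(xI-A-UBU^T)$ with Haar orthogonal $U$, where the needed orthogonality is supplied by the Haar average.
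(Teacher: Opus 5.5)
\textbf{Gap: the joint contraction is never proved, and the proposed route to it cannot work.}

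Your Steps 1 and 2 match the paper's own argument. The heat-flow identity $\mathscr{J}(c)=M\mathscr{J}(a)=N\mathscr{J}(b)$ is the paper's Observation on score vectors, and the Blachman optimization is its final step. So you correctly reduce everything to the joint contraction $\|Mx+Ny\|^2\le\|x\|^2+\|y\|^2$ for $x,y\perp\mathbf{1}$, which is the paper's Proposition and the substance of its proof.

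Your candidate $P_{ijk}$ is not a probability. Since $f\boxplus_n q'=(f\boxplus_n q)'$, summing over $k$ gives $[(p/(x-a_j))\boxplus_n q]'(c_i)$, a derivative, not the numerator of $M_{ij}$. Summing over $j,k$ gives $r''(c_i)$, so with your normalization the total mass is $\tfrac{2}{n}\sum_i\mathscr{J}(c)_i=0$. For instance, at $n=2$ one gets $P_{ijk}=1/(4r'(c_i))$, which is positive for $i=1$ and negative for $i=2$.

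More fundamentally, no coupling of the kind you want exists. For $n=2$, write $d_a=(a_1-a_2)/2$ and similarly $d_b$, so that $d_c^2=d_a^2+d_b^2$. Then $M$ and $N$ act on $\mathbf{1}^\perp$ as multiplication by $s=d_a/d_c$ and $t=d_b/d_c$, with $s^2+t^2=1$. Hence, with $e=(1,-1)$, the contraction is an \emph{equality} at $(x,y)=(se,te)$. In any conditional-expectation representation with $j,k$ independent and uniform, Jensen is an equality only if $x_j+y_k$ is a function of $i$. But $x_j+y_k$ takes at least three values with positive probability. So the inequality is sharp in a way no Blachman-type coupling can reproduce.

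The random-matrix fallback is only a hope. The roots of $\mathbb{E}\det(xI-A-UBU^T)$ are not averages of eigenvalues, so the Haar average does not act on $M,N$ directly.

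\textbf{The missing idea (the paper's route).} The paper proves the contraction by second-order analysis rather than probability. For $w=(u,v)$ with $u,v\perp\mathbf{1}$, differentiate the variance additivity $\mathrm{Var}(\gamma)=\mathrm{Var}(\alpha)+\mathrm{Var}(\beta)$ twice along $(\alpha+tu,\beta+tv)$; the means stay constant. This gives
\[
\|Jw\|^2=\|w\|^2-\sum_i\gamma_i\,w^T\mathrm{Hess}(\gamma_i)\,w ,
\]
so it suffices that $\sum_i\gamma_i\,\mathrm{Hess}(\gamma_i)\succeq 0$.

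That follows because $f(x,a,b)=\sum_{\pi\in S_n}\prod_i(x-a_i-b_{\pi(i)})$ is hyperbolic in the direction $e_1$; this is where real-rootedness of $\boxplus_n$ enters. By Bauschke et al., the sum of the $k$ largest roots is convex for each $k$. Hence $\sum_i c_i\gamma_i$ is convex for any $c_1\ge\dots\ge c_n$, and one takes $c=\gamma$.

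Your observation that $M$ and $N$ are doubly stochastic is correct, but it only gives $\|M\|,\|N\|\le 1$ separately. The joint bound needs this second-order, convexity-based information.

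Separately, your reduction to simple roots still needs an argument. The paper uses $T_\epsilon=(1-\epsilon\,d/dx)^n$, which commutes with $\boxplus_n$ and makes $p$, $q$ and $p\boxplus_n q$ all simple-rooted.
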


\subparagraph{Notation and preliminaries}
\paragraph{Polynomials and the finite free convolution}

Given a polynomial $p(x)$ of degree $n$ we say that $\alpha=(\alpha_1, \dots, \alpha_n)$ is a vector of roots for $p(x)$ if the $\alpha_i$ are the roots of $p(x)$. We will say that $\alpha$ is ordered if $\alpha_1\geq \cdots \geq \alpha_n$.  Recall that for monic polynomials $p(x)$ and $q(x)$, $p(x)\boxplus_n q(x)$ may be expressed as:
\begin{equation}\label{f02:eqn:convroots} p(x)\boxplus_n q(x) = \sum_{\pi\in S_n} \prod_{i=1}^n(x-\alpha_i-\beta_{\pi(i)}),\end{equation}
where $\alpha$ and $\beta$ are vectors of roots for $p(x)$ and $q(x)$, respectively, and $S_n$ is the symmetric group on $n$ elements (see Theorem 2.11 of \cite{f02:marcus2022finite} for a proof). Walsh \cite{f02:walsh1922location} proved that if $p(x)$ and $q(x)$ are real-rooted, then so is $p(x)\boxplus_n q(x)$. Therefore, the finite free convolution induces a map
$$\roots: \bR^{n}\times \bR^n \to \bR^n,$$
where if $\alpha$ and $\beta$ are vectors of roots for $p(x)$ and $q(x)$, then $\roots(\alpha, \beta)$ is defined to be the ordered vector of roots for $p(x)\boxplus_n q(x)$.  

Other than the fact that $\boxplus_n$ preserves real-rootedness, our proof will crucially exploit each of the following well-known properties of the finite free convolution. In what follows we will use $\bone_n$ to denote the all-ones vector of dimension $n$. 
We will use the notation $$m_k(\alpha):= \frac{1}{n}\sum_{i=1}^n \alpha_i^k \quad \text{and} \quad \var(\alpha) := m_2(\alpha)-m_1(\alpha)^2.$$

\begin{fileqfourproposition}[Properties of $\boxplus_n$]
\label{f02:thm:ffconvolution}
    If $\alpha, \beta\in \bR^n$ and  $\gamma =\roots( \alpha, \beta)$, then:
    \begin{enumerate}[label=\roman*), ref=\roman*]
        \item \label{f02:item:additivity} (Additivity) $m_1(\gamma)=m_1(\alpha)+m_1(\beta)$ and $\var(\gamma) = \var(\alpha)+\var(\beta)$. 
        \item \label{f02:item:translations} (Commutation with translation)   For all $t\in \bR$, $\roots(\alpha+t\bone_n ,\beta) = \gamma +t\bone_n$ and  $\roots(\alpha , \beta + t \bone_n) = \gamma+t\bone_n$.  
    \end{enumerate}
\end{fileqfourproposition}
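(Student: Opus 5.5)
Both properties can be read off from the two descriptions of $\boxplus_n$ already at hand: the coefficient formula from the problem statement, and the permutation expansion \eqref{f02:eqn:convroots}. I will use the coefficient formula for additivity and the permutation formula for translations. Real-rootedness enters only through Walsh's theorem, which guarantees that $\gamma=\roots(\alpha,\beta)$ is a well-defined real ordered vector. The identities themselves are purely algebraic in the elementary symmetric functions.

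\emph{Additivity.} Write $e_1(\alpha)=\sum_i\alpha_i$ and $e_2(\alpha)=\sum_{i<j}\alpha_i\alpha_j$. Then for $p$ we have $a_1=-e_1(\alpha)$ and $a_2=e_2(\alpha)$, and similarly for $q$ and for $r=p\boxplus_n q$.

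\begin{enumerate}[label=(\alph*)]
\item The $k=1$ case of the coefficient formula gives $c_1=a_1+b_1$, since both weights equal $\frac{(n-1)!\,n!}{n!\,(n-1)!}=1$. Hence $e_1(\gamma)=e_1(\alpha)+e_1(\beta)$, and dividing by $n$ gives $m_1(\gamma)=m_1(\alpha)+m_1(\beta)$.

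\item The $k=2$ case gives
\[ c_2=a_2+b_2+\tfrac{(n-1)^2}{n(n-1)}a_1b_1, \]
that is,
\[ e_2(\gamma)=e_2(\alpha)+e_2(\beta)+\tfrac{n-1}{n}e_1(\alpha)e_1(\beta). \]

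\item Next I express the variance through $e_1,e_2$. From $\sum_i\alpha_i^2=e_1^2-2e_2$ we get
\[ n\var(\alpha)=\bigl(1-\tfrac1n\bigr)e_1(\alpha)^2-2e_2(\alpha). \]

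\item Substituting the expressions for $\gamma$ from (a) and (b) into this formula, the terms in $e_1(\alpha)^2$, $e_1(\beta)^2$, $e_2(\alpha)$ and $e_2(\beta)$ reproduce $n\var(\alpha)+n\var(\beta)$. The only remaining terms are the cross terms:
\[ 2\bigl(1-\tfrac1n\bigr)e_1(\alpha)e_1(\beta)-2\tfrac{n-1}{n}e_1(\alpha)e_1(\beta)=0. \]
This proves $\var(\gamma)=\var(\alpha)+\var(\beta)$.
\end{enumerate}

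\emph{Translation.} Set $r(x)=p\boxplus_n q(x)$. In \eqref{f02:eqn:convroots}, replacing $\alpha$ by $\alpha+t\bone_n$ turns each factor $x-\alpha_i-\beta_{\pi(i)}$ into $(x-t)-\alpha_i-\beta_{\pi(i)}$. So the convolution of the translated polynomial with $q$ is exactly $r(x-t)$. Its roots are $\gamma_i+t$, and since a common shift preserves order, the ordered root vector is $\gamma+t\bone_n$. The statement for $\beta+t\bone_n$ follows in the same way, or from the symmetry of $\boxplus_n$, which is evident from \eqref{f02:eqn:convroots} after reindexing $\pi\mapsto\pi^{-1}$.

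None of these steps is a real obstacle. The only point requiring care is the bookkeeping in the $k=2$ coefficient and checking that the cross terms cancel exactly. The factor $\frac{n-1}{n}$ coming from the weight $\frac{(n-1)!^2}{n!\,(n-2)!}$ is precisely what makes the variance, rather than the second moment, additive.
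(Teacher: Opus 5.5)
Your proof is correct and takes the same route as the paper. Additivity comes from the $k=1,2$ cases of the coefficient formula together with the Newton identity $\sum_i\alpha_i^2=e_1(\alpha)^2-2e_2(\alpha)$, and commutation with translation comes directly from the permutation expansion \eqref{f02:eqn:convroots}; your explicit check that the $\frac{n-1}{n}e_1(\alpha)e_1(\beta)$ cross terms cancel simply writes out the computation the paper leaves implicit.
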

\begin{proof}
	(i) Follows from the definition of $p\boxplus_n q$ in terms of the coefficients of $p$ and $q$ and the Newton identities. (ii) Follows from \eqref{f02:eqn:convroots}. 
\end{proof}

\paragraph{The heat flow and the finite free Fisher information} 
Given a vector of roots $\alpha\in \bR^n$ we will define the its finite free score vector $\score(\alpha) \in (\bR\cup\{\infty\})^n$ as 
$$\score(\alpha) := \left( \sum_{j : j \neq i} \frac{1}{\alpha_i-\alpha_j} \right)_{i=1}^n. $$
Given a real-rooted polynomial $p(x)$ with vector of roots $\alpha$, define its finite free Fisher information as 
$$\Phi_n(p) := \|\score(\alpha)\|^2.$$
The following fact will allow us to write the finite free Fisher information of the polynomial $p(x)$ in terms of the dynamics of its roots under the reverse heat flow. It was shown to us by D. Shlyakhtenko.

\begin{fileqfourlemma}[Score vectors as derivatives]
\label{f02:lem:score}
	Assume $p(x)$ has simple roots. Let $p_t(x):= \exp\left( -\frac{t}{2} \partial_x^2  \right)p(x)$ and let $\alpha(t) = (\alpha_1(t), \dots, \alpha_n(t))$ be the ordered vector of roots of $p_t(x)$. Then 
     $$\alpha_i'(0) = \sum_{j : j\neq i} \frac{1}{\alpha_i-\alpha_j},$$
    and in particular $\alpha'(0) = \score(\alpha)$. 
\end{fileqfourlemma}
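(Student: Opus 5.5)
Since $p_t = \exp(-\tfrac t2\partial_x^2)p$ is a polynomial whose coefficients depend polynomially on $t$, I will work with the identity $p_t(\alpha_i(t)) = 0$ and differentiate it at $t=0$.

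\emph{Smoothness of the roots.} For $t$ near $0$ we have
\[
p_t(x) = p(x) - \tfrac t2 p''(x) + O(t^2),
\]
with coefficients polynomial in $t$. Because the roots $\alpha_i$ of $p$ are simple, $p'(\alpha_i)\neq 0$. The implicit function theorem then gives, for each $i$, a unique smooth branch $\alpha_i(t)$ with $\alpha_i(0)=\alpha_i$ and $p_t(\alpha_i(t))=0$. These branches are real: they are real-analytic solutions of a real equation with nonvanishing derivative, and $p_t$ is real-rooted for small $t$ in any case. By continuity they remain distinct and keep their order for $t$ near $0$, so they coincide with the ordered roots of $p_t$.

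\emph{Differentiation.} Differentiating $p_t(\alpha_i(t))=0$ at $t=0$ gives
\[
\partial_t p_t(\alpha_i)\big|_{t=0} + p'(\alpha_i)\,\alpha_i'(0) = 0 .
\]
Since $\partial_t p_t|_{t=0} = -\tfrac12 p''$, this yields
\[
\alpha_i'(0) = \frac{p''(\alpha_i)}{2p'(\alpha_i)} .
\]

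\emph{Identifying the ratio.} It remains to check that $\dfrac{p''(\alpha_i)}{2p'(\alpha_i)} = \displaystyle\sum_{j\neq i}\frac{1}{\alpha_i-\alpha_j}$. Write $p(x) = (x-\alpha_i)\,r(x)$ with $r(x)=\prod_{j\neq i}(x-\alpha_j)$. Then $p' = r + (x-\alpha_i)r'$ and $p'' = 2r' + (x-\alpha_i)r''$. Evaluating at $x=\alpha_i$ gives $p'(\alpha_i)=r(\alpha_i)$ and $p''(\alpha_i)=2r'(\alpha_i)$. Hence
\[
\frac{p''(\alpha_i)}{2p'(\alpha_i)} = \frac{r'(\alpha_i)}{r(\alpha_i)} = \sum_{j\neq i}\frac{1}{\alpha_i-\alpha_j},
\]
where the last equality is the logarithmic derivative of $r$. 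This proves $\alpha'(0)=\score(\alpha)$.

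\emph{Main obstacle.} The only point needing care is the smoothness and ordering of the root branches in the first step. The simplicity hypothesis on the roots handles it completely; everything else is a direct calculation.
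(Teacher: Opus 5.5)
Your proposal is correct and follows the same route as the paper: you implicitly differentiate $p_t(\alpha_i(t))=0$ at $t=0$, use $\partial_t p_t|_{t=0}=-\tfrac12 p''$ to get $\alpha_i'(0)=p''(\alpha_i)/(2p'(\alpha_i))$, and then identify this with the score sum. Beyond the paper, you supply details it leaves implicit: the implicit-function-theorem argument that the ordered roots form smooth real branches near $t=0$, and the factorization $p=(x-\alpha_i)r(x)$ that turns the ratio into $\sum_{j\neq i}(\alpha_i-\alpha_j)^{-1}$.
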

\begin{proof}
	Since the $\alpha_i(t)$ are continuous in $t$, the roots remain simple in a neighborhood of $t=0$. Implicitly differentiating the expression
	$$p(\alpha_i(t))-tp''(\alpha_i(t))/2+t^2R(\alpha_i(t),t)=0$$
	(where $R(x,t)$ is a polynomial) at $t=0$ one obtains
	$$ \alpha_i'(0)=\frac{1}{2}\frac{p''(\alpha_i)}{p'(\alpha_i)},$$
	which is equal to the advertised expression.
\end{proof}

\subparagraph{Proof of Stam's inequality}
We now prove Theorem \ref{f02:thm:stamineq}.  The following Lemma allows us to restrict attention to the case when $p,q$, and $p\boxplus_n q$ all have simple roots.

\begin{fileqfourlemma}[Approximation by Simple Rooted Polynomials] Let $\epsilon>0$ and define the differential operator $T_\epsilon:=(1-\epsilon \cdot d/dx)^n$. If $p(x)$ is a monic real-rooted polynomial of degree $n$, then 
	\begin{enumerate}[label=\roman*), ref=\roman*]
	\item $(T_\epsilon p)(x)$ is monic and real-rooted of degree $n$ with simple roots. 
	\item $ \Phi_n(T_\epsilon p)\rightarrow \Phi_n(p)$ as $\epsilon\rightarrow 0$. 
	\item  $(T_\epsilon p)\boxplus_n (T_\epsilon q)= T_\epsilon^2 (p\boxplus_n q)$. 
	\end{enumerate}
\end{fileqfourlemma}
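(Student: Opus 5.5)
Three properties of $T_\epsilon=(1-\epsilon\, d/dx)^n$ are needed. For (i) I would use that $1-\epsilon D$ preserves real-rootedness and strictly separates roots; for (iii) I would express $\boxplus_n$ through differential operators; and for (ii) I would use continuity of roots together with control of how multiple roots split.

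For (i), write $(1-\epsilon D)p = p-\epsilon p'$. Monicity and degree are clear, since $p'$ has degree $n-1$. For real-rootedness with simple roots, observe that $p-\epsilon p' = -\epsilon e^{x/\epsilon}\frac{d}{dx}\bigl(e^{-x/\epsilon}p(x)\bigr)$. If $p$ has distinct real roots $r_1>\dots>r_k$ with multiplicities $m_i$, then $f=e^{-x/\epsilon}p$ vanishes at each $r_i$ and tends to $0$ as $x\to+\infty$. Rolle's theorem then gives one root of $f'$ in each gap $(r_{i+1},r_i)$ and one in $(r_1,\infty)$. In addition, each $r_i$ is a root of $f'$ of multiplicity $m_i-1$. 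Counting gives $(k-1)+1+\sum(m_i-1)=n$, so all roots are real. Each application of $1-\epsilon D$ lowers every multiplicity by one and only creates new simple roots in open gaps, so the maximal multiplicity drops by at least one at each step. After $n$ applications all roots are simple.

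For (ii), the roots of $T_\epsilon p$ depend continuously on $\epsilon$. If $p$ has simple roots, then $\Phi_n$ is continuous near $p$ and the claim is immediate. If $p$ has a multiple root, so that $\Phi_n(p)=\infty$, I must show $\Phi_n(T_\epsilon p)\to\infty$. Suppose a root $r$ has multiplicity $m\ge2$. Then a cluster of $m$ roots of $T_\epsilon p$ converges to $r$ while the other roots stay a bounded distance away. For the topmost root $\alpha_i$ of the cluster, every term $1/(\alpha_i-\alpha_j)$ with $j$ in the cluster is positive, and these terms tend to $+\infty$. The remaining terms stay bounded. Hence $\score(\alpha)_i\to\infty$, and so $\Phi_n\to\infty$. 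This step is the main subtlety, but the positivity trick handles it.

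For (iii), I would use the standard representation: if $p=\hat p(D)x^n$ and $q=\hat q(D)x^n$, then $p\boxplus_n q=\hat p(D)\hat q(D)x^n$. To verify this, write $p(x)=\sum_i a_i x^{n-i}$ and $\hat p(D)=\sum_i a_i\frac{(n-i)!}{n!}D^i$. The product $\hat p(D)\hat q(D)$ applied to $x^n$ has $x^{n-k}$ coefficient $\sum_{i+j=k}a_ib_j\frac{(n-i)!(n-j)!}{n!\,n!}\cdot\frac{n!}{(n-k)!}$, which matches $c_k$ in the definition of $\boxplus_n$. Since $T_\epsilon p=T_\epsilon\hat p(D)x^n$ and differential operators with constant coefficients commute, this gives $T_\epsilon p\boxplus_n T_\epsilon q=T_\epsilon\hat p(D)\,T_\epsilon\hat q(D)\,x^n=T_\epsilon^2\hat p(D)\hat q(D)x^n=T_\epsilon^2(p\boxplus_n q)$. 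There is one point to check: $T_\epsilon$ is a power series in $D$ with constant term $1$, so $T_\epsilon\hat p(D)$ is again a polynomial in $D$ with constant term $1$, which keeps the representation monic. This is consistent, so (iii) follows.
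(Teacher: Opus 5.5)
Your proposal is correct and follows the same route as the paper, which cites Nuij for (i), appeals to continuity of $\Phi_n$ in the roots for (ii), and cites the commutation of $\boxplus_n$ with differential operators for (iii). Your Rolle argument for $e^{-x/\epsilon}p$, your symbol computation $p=\hat p(D)x^n$, and your cluster argument are self-contained proofs of exactly these facts; in particular, showing $\Phi_n(T_\epsilon p)\to\infty$ via the topmost root of a collapsing cluster fills in the multiple-root case, which the paper's one-line continuity justification leaves implicit.
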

	\begin{proof} (i) was shown in \cite{f02:nuij1968note}. (ii) is because $\Phi_n$ is continuous in the roots of $p$, which are continuous in $\epsilon$. (iii) follows because $\boxplus_n$ commutes with differential operators (see e.g. \cite{f02:marcus2022finite}.)\end{proof}
		Thus, establishing Theorem \ref{f02:thm:stamineq} for the simple case implies the general case by using (iii) above and taking $\epsilon\rightarrow 0$. In what follows, $p(x)$ and $q(x)$ are monic real-rooted polynomials, $\alpha$ and $\beta$ are vectors of roots for $p(x)$ and $q(x)$, $\gamma:= \roots(\alpha, \beta)$, and $\alpha,\beta,\gamma$ all have distinct entries, implying that they are smooth functions of the coefficients of the corresponding polynomials. Let $\jac$ denote the Jacobian of $\roots$ at the point $(\alpha, \beta)$.

Our proof can be separated into three steps. The second step is  the most substantial one and we will defer its detailed discussion to Section \ref{f02:sec:understandingjacob}. 

\textbf{Step 1 (Jacobians and score vectors).} We first note that the following relation between score vectors holds.

\begin{fileqfourobservation}[Relating score vectors]
\label{f02:obs:scorevecs}
Using the above notation, for any $a, b \geq 0$ 
$$\jac (a\score(\alpha), b \score(\beta))  = (a+b)\score(\gamma). $$
\end{fileqfourobservation}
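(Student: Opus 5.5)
The plan is to read the Jacobian $\jac$ as the derivative of $\roots$ along a curve, and to realize the directions $a\score(\alpha)$ and $b\score(\beta)$ via the reverse heat flow of Lemma \ref{f02:lem:score}. Concretely, set $p_s := \exp(-\tfrac{as}{2}\partial_x^2)p$ and $q_s := \exp(-\tfrac{bs}{2}\partial_x^2)q$ for small $s\ge 0$. Let $\alpha(s),\beta(s)$ be their ordered root vectors. By Lemma \ref{f02:lem:score}, after rescaling time by $a$ (resp.\ $b$), we get $\alpha'(0)=a\score(\alpha)$ and $\beta'(0)=b\score(\beta)$. Since $\alpha,\beta,\gamma$ have simple roots, $\roots$ is smooth near $(\alpha,\beta)$. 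The chain rule then gives
$$\frac{d}{ds}\Big|_{s=0}\roots(\alpha(s),\beta(s)) = \jac\big(a\score(\alpha), b\score(\beta)\big).$$

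The key step is to identify $\roots(\alpha(s),\beta(s))$, i.e.\ the roots of $p_s\boxplus_n q_s$. I will use the fact, already invoked in the approximation lemma, that $\boxplus_n$ commutes with differential operators in $x$. Polynomials in $\partial_x$ (and exponentials thereof, which truncate on degree-$n$ polynomials) satisfy $(Dp)\boxplus_n q = D(p\boxplus_n q) = p\boxplus_n (Dq)$. Applying this twice gives
$$p_s\boxplus_n q_s = \exp\!\big(-\tfrac{(a+b)s}{2}\partial_x^2\big)(p\boxplus_n q).$$
This is simply the reverse heat flow applied to $r:=p\boxplus_n q$ at time $(a+b)s$. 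If I wanted to avoid quoting commutation, I would check it directly from the coefficient formula for $c_k$, or from \eqref{f02:eqn:convroots}, in the case $D=\partial_x$; this is a short computation with $\partial_x$ shifting coefficients with the factors $(n-i)$. That direct check is the one place needing care.

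Finally, I apply Lemma \ref{f02:lem:score} to $r$, which has simple roots $\gamma$, with time rescaled by $(a+b)$. The derivative at $s=0$ of its ordered root vector is $(a+b)\score(\gamma)$. Continuity of the ordered roots ensures that the ordering is preserved for small $s$, so the ordered-root map agrees with the smooth local root branches. Equating the two expressions for the derivative yields the observation. The edge case $a=b=0$ is trivial, and scaling the time parameter handles everything else. I expect no real obstacle beyond justifying the commutation of $\boxplus_n$ with $\exp(-t\partial_x^2/2)$.
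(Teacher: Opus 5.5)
Your proposal is correct and is essentially the paper's own proof: you run the reverse heat flow on $p$ and $q$ at rates $a$ and $b$, and use the commutation of $\boxplus_n$ with constant-coefficient differential operators to identify $p_{as}\boxplus_n q_{bs}$ with the heat flow of $p\boxplus_n q$ at time $(a+b)s$. Differentiating at $s=0$ via the chain rule and Lemma~\ref{f02:lem:score} then gives the identity. The paper simply quotes the commutation property, so your optional direct check is not needed; it is immediate from the bilinear formula $p\boxplus_n q(x)=\frac{1}{n!}\sum_{i=0}^n p^{(i)}(x)\,q^{(n-i)}(0)$, which is equivalent to the coefficient definition.
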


\begin{proof}
For every $t\geq 0$ let $p_t(x) = \exp(-\tfrac{t}{2} \partial_x^2) p(x)$, let $\alpha(t)$ be the ordered vector of roots of $p_t$, and define $q_t, r_t$ and $\beta(t), \gamma(t)$ in an analogous way. Since the finite free convolution commutes with any differential operator,  it follows that
      $$r_{(a+b)t} = p_{at} \boxplus_n q_{bt}. $$
    Hence  $\gamma((a+b)t) = \roots(\alpha_{at}, \beta_{bt})$ for every $t$. So, if we differentiate this  relation with respect to $t$, using the chain rule for the right-hand side,  we get
      $$(a+b)\gamma'(0) =  \jac \left(\begin{array}{c}
           a \cdot \alpha'(0)  \\ b \cdot\beta'(0)
      \end{array} \right).$$  
    A direct application of  Lemma \ref{f02:lem:score} concludes the proof. 
\end{proof}

\textbf{Step 2 (Understanding the Jacobian).} The substance of our proof lies in understanding  $\jac$. In particular, we will show the following. 

\begin{fileqfourproposition}
\label{f02:prop:singularvalues}
  If $u, v \in \bR^n$ are orthogonal to $\bone_n$ then
    $$\|\jac (u, v)\|^2 \leq \|u\|^2 + \|v\|^2.$$
\end{fileqfourproposition}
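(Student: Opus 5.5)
The plan is to bound the Jacobian by a second-order identity. We differentiate the additivity of variance from Proposition \ref{f02:thm:ffconvolution} twice, and control the resulting Hessian term using convexity properties of hyperbolic polynomials.

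\textbf{Step 1: the second-order identity.} Fix mean-zero directions $u,v$. Write $w=(u,v)$ and consider the curve $\gamma(s):=\roots(\alpha+su,\beta+sv)$, which is smooth near $s=0$ since the roots are simple. By Proposition \ref{f02:thm:ffconvolution}(i), $m_1(\gamma(s))=m_1(\alpha)+m_1(\beta)$ is constant in $s$, and $\var(\gamma(s))=\var(\alpha+su)+\var(\beta+sv)$. Hence
\[
\textstyle\sum_k \gamma_k(s)^2=\sum_i(\alpha_i+su_i)^2+\sum_i(\beta_i+sv_i)^2+\mathrm{const}.
\]
Differentiating twice at $s=0$ gives
\[
\|\jac w\|^2+\sum_k \gamma_k\, w^{T}H_k w=\|u\|^2+\|v\|^2,
\]
where $H_k$ is the Hessian of $\gamma_k$ as a function of $(\alpha,\beta)$. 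The theorem is therefore equivalent to
\[
\textstyle\sum_k\gamma_k\, w^TH_kw\ge 0 .
\]
The same differentiation applied to the mean shows that $\sum_k \gamma_k(\alpha,\beta)$ is affine, so $\sum_k H_k=0$.

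\textbf{Step 2: Abel summation.} With $\gamma$ ordered decreasingly, let $S_m:=\gamma_1+\dots+\gamma_m$, the sum of the $m$ largest roots, viewed as a function of $(\alpha,\beta)$. Summation by parts gives
\[
\textstyle\sum_k\gamma_k H_k=\sum_{m=1}^{n-1}(\gamma_m-\gamma_{m+1})\,\hess S_m+\gamma_n\,\hess S_n .
\]
The last term vanishes because $S_n$ is affine, and each gap $\gamma_m-\gamma_{m+1}$ is nonnegative. It therefore suffices to show that every $S_m$ is convex in $(\alpha,\beta)$, so that $\hess S_m\succeq 0$.

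\textbf{Step 3: convexity via hyperbolicity (main obstacle).} Consider
\[
R(x,\alpha,\beta):=\tfrac1{n!}\sum_{\pi}\prod_i (x-\alpha_i-\beta_{\pi(i)}),
\]
a homogeneous polynomial of degree $n$ in $(x,\alpha,\beta)\in\bR^{1+2n}$. For any real $(\alpha,\beta)$, Walsh's theorem says $t\mapsto R(t,\alpha,\beta)$ is real-rooted. Equivalently, $t\mapsto R\bigl((x,\alpha,\beta)+t e\bigr)$ with $e=(1,0,0)$ is real-rooted for every real point, and $R(e)=1\neq0$. So $R$ is hyperbolic with respect to $e$, and its eigenvalues at the point $(0,-\alpha,-\beta)$ are exactly $\gamma_1,\dots,\gamma_n$.

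Gårding's theory then gives that the sum of the $m$ largest eigenvalues of a hyperbolic polynomial is a convex function on the ambient space. I would cite this via the Bauschke--Güler--Lewis--Sendov result, or derive it from convexity of the hyperbolicity cone of the derivative polynomials $D_e^{\,n-m}$ applied to elementary symmetric functions of the eigenvalues. Restricting to the affine slice $x=0$ shows that $S_m(\alpha,\beta)$ is convex. Composing with $(\alpha,\beta)\mapsto(-\alpha,-\beta)$ preserves convexity, and the sign bookkeeping in the ordering must be checked: the eigenvalues at $(0,-\alpha,-\beta)$ are the roots $\gamma_k$, not their negatives.

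The delicate points are confirming hyperbolicity in the full variable set (here it is exactly Walsh's theorem, applied uniformly in the parameters) and the precise convexity statement for top-$m$ eigenvalue sums. Combining Steps 1–3 gives $\|\jac(u,v)\|^2\le\|u\|^2+\|v\|^2$.
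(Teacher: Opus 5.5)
Your proposal is correct and follows the paper's proof essentially step for step. Differentiating variance additivity twice is the paper's Hessian lemma. The convexity of top-$m$ root sums (Bauschke et al.) for the hyperbolic polynomial $\sum_{\pi}\prod_i(x-a_i-b_{\pi(i)})$ in the direction $e_1$ is exactly how the paper shows that $\sum_i \gamma_i H_i$ is positive semidefinite.

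Your Abel summation is a slightly more careful version of the paper's remark that $\sum_i c_i\lambda_i$ is a positive combination of the partial sums $\sigma_k$. It places the possibly negative coefficient $\gamma_n$ on the affine function $S_n$, which the paper does not make explicit.
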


This proposition will be proven in Section \ref{f02:sec:understandingjacob}, for now we show how it is used.

\textbf{Step 3 (Proof of Theorem \ref{f02:thm:stamineq} \`a la Blachman).} With Observation \ref{f02:obs:scorevecs} and Proposition \ref{f02:prop:singularvalues} in hand we can conclude the proof using the same argument that Blachman used in \cite{f02:blachman2003convolution}. 

\begin{proof}[Proof of Theorem \ref{f02:thm:stamineq}]
    First note that 
    $$\sum_{i=1}^n \sum_{j : j\neq i} \frac{1}{\alpha_i-\alpha_j}=0,$$
   since each term in the sum appears once with a plus and once with a minus. Therefore $\score(\alpha)$ is orthogonal to $\bone_n$ and, arguing analogously,  $\score(\beta)$ is orthogonal to $\bone_n$. So,  Proposition \ref{f02:prop:singularvalues} implies 
   $$\|\jac(a\score(\alpha), b \score(\beta))\|^2 \leq a^2\|\score(\alpha)\|^2+b^2 \|\score(\beta)\|^2.$$
   Combining this with Observation \ref{f02:obs:scorevecs} yields 
   $$(a+b)^2 \|\score(\gamma)\|^2 \leq a^2\|\score(\alpha)\|^2+ b^2\|\score(\beta)\|^2. $$
   Now, by choosing $a= \frac{1}{\|\score(\alpha)\|^2}$ and $b = \frac{1}{\|\score(\beta)\|^2}$, the above inequality turns into 
   $$\left(\frac{1}{\|\score(\alpha)\|^2} + \frac{1}{\|\score(\beta)\|^2} \right)^2 \|\score(\gamma)\|^2 \leq \frac{1}{\|\score(\alpha)\|^2} + \frac{1}{\|\score(\beta)\|^2}, $$
   which after simple algebraic manipulations can be turned into the inequality claimed in Theorem \ref{f02:thm:stamineq}. 
\end{proof}

\paragraph{Understanding $\jac$}
\label{f02:sec:understandingjacob}
Let $(\Omega_{\boxplus_n, 1}, \dots, \Omega_{\boxplus_n, n})$ be the coordinate functions of $\roots$, that is $\gamma_i = \rootsi(\alpha, \beta)$. The starting point of our approach to proving Proposition \ref{f02:prop:singularvalues} is the  observation that the matrix $\jac\jac^*$ is related to the Hessians of the functions $\rootsi$. It will be helpful to introduce the notation
$$\hessconv:= \mathrm{Hess}_{\rootsi}.$$  For this discussion it will prove useful to define the $(2n-2)$-dimensional subspace 
$$\calV = \left\{ (u, v) \in \bR^{n}\times \bR^n : u^* \bone_n = v^*\bone_n=0 \right\}. $$
And, given $w\in \bR^{n}\times \bR^n$ and $f: \bR^{n}\times \bR^n\to \bR^n$ we will use $D_wf$ to denote the directional derivative of $f$ in the direction of $w$, that is $D_w = \sum_i w_i\partial_i$.  
\begin{fileqfourlemma}[The Hessian of $\roots$]
\label{f02:lem:Hessian}
Using the above notation
    \begin{equation}
    \label{f02:eq:Hessian}
    w^* \jac \jac^* w = w^* \Big(I_{n}\oplus I_n- \sum_{i=1}^n \gamma_i \hessconv \Big)w,\quad \forall w\in \calV. 
      \end{equation}
\end{fileqfourlemma}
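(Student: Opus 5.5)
The plan is to exploit the fact that, although the individual root maps $\rootsi$ are complicated, the sum of squares of the roots of $p\boxplus_n q$ is an explicit quadratic polynomial in $(\alpha,\beta)$. Differentiating this identity twice then produces exactly the combination $\jac^*\jac+\sum_i\gamma_i\hessconv$. Here I read the left-hand side of \eqref{f02:eq:Hessian} as the quadratic form $\|\jac w\|^2$ of the $2n\times 2n$ Gram matrix $\jac^*\jac$, since $\jac$ is $n\times 2n$ and $w\in\bR^n\times\bR^n$. Throughout, we are in the setting fixed above where $\alpha,\beta,\gamma$ have distinct entries, so each $\rootsi$ is smooth near $(\alpha,\beta)$.

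\textbf{Step 1: an explicit formula for the sum of squares.} Define $F(\alpha,\beta):=\sum_{i=1}^n \rootsi(\alpha,\beta)^2 = n\, m_2(\gamma)$. By Proposition \ref{f02:thm:ffconvolution}(\ref{f02:item:additivity}), $m_1(\gamma)=m_1(\alpha)+m_1(\beta)$ and $\var(\gamma)=\var(\alpha)+\var(\beta)$. Substituting into $m_2=\var+m_1^2$ gives $m_2(\gamma)=m_2(\alpha)+m_2(\beta)+2m_1(\alpha)m_1(\beta)$, and hence
\[
F(\alpha,\beta)=\|\alpha\|^2+\|\beta\|^2+\tfrac{2}{n}\,(\bone_n^*\alpha)(\bone_n^*\beta).
\]
This holds identically on the open set of $(\alpha,\beta)$ where the roots stay simple, so we may differentiate it.

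\textbf{Step 2: two expressions for the Hessian of $F$.} From the explicit formula, $\hess F = 2(I_n\oplus I_n)+\tfrac{2}{n}\begin{pmatrix}0&\bone_n\bone_n^*\\ \bone_n\bone_n^*&0\end{pmatrix}$. For $w=(u,v)\in\calV$ the off-diagonal part contributes $\tfrac{4}{n}(\bone_n^*u)(\bone_n^*v)=0$. Therefore $w^*\hess F\, w = 2(\|u\|^2+\|v\|^2)$.

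On the other hand, the chain rule applied to $F=\sum_i\rootsi^2$ gives
\[
\hess F = 2\sum_i \nabla\rootsi\,\nabla\rootsi^* + 2\sum_i \rootsi\,\hessconv = 2\jac^*\jac + 2\sum_i\gamma_i\hessconv,
\]
because the rows of $\jac$ are the gradients $\nabla\rootsi^*$.

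\textbf{Step 3: comparison.} Equating the two quadratic forms on $w\in\calV$ and dividing by $2$ yields
\[
\|\jac w\|^2 = w^*\Big(I_n\oplus I_n-\sum_i\gamma_i\hessconv\Big)w,
\]
which is \eqref{f02:eq:Hessian}.

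There is no real obstacle in this argument. The only points needing care are the smoothness of $\rootsi$, which follows from the simple-roots reduction made above, and the observation that the cross term $\tfrac{2}{n}(\bone_n^*\alpha)(\bone_n^*\beta)$ is killed by restricting to $\calV$. That restriction is precisely why the lemma is stated only on $\calV$.
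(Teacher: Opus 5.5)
Your proof is correct and is essentially the paper's argument. Both use additivity of mean and variance to express $\sum_i\gamma_i^2$ in terms of $(\alpha,\beta)$, and both compare second derivatives with the chain-rule expansion $2\|J_{\boxplus_n}w\|^2+2\sum_i\gamma_i\,w^*H^{(i)}_{\boxplus_n}w$. The only difference is bookkeeping: the paper differentiates along the line $t\mapsto(\alpha+tu,\beta+tv)$ and uses that the means stay constant there, while you compute the full Hessian of the explicit quadratic and note that the cross term $\tfrac{4}{n}(\mathbf{1}_n^*u)(\mathbf{1}_n^*v)$ vanishes on $\mathcal{V}$.
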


\begin{proof}
Fix $w=(u, v)\in \calV$ and define 
$$\alpha(t):= \alpha+ tu, \quad \beta(t):=\beta+tv,  \quad \text{and}\quad  \gamma(t):=\roots(\alpha(t), \beta(t)),$$ and note that the variance additivity  from Proposition \ref{f02:thm:ffconvolution} \ref{f02:item:additivity}) implies that
$$m_2(\gamma(t))-m_1(\gamma(t))^2 = m_2(\alpha(t))+ m_2(\beta(t))-(m_1(\alpha(t))^2+m_1(\beta(t))^2). $$
Now, the fact that $(u,v)\in \calV$  implies that the means $m_1(\alpha(t))$ and $m_1(\beta(t))$ are a constant function of $t$ and therefore, again by Proposition \ref{f02:thm:ffconvolution} \ref{f02:item:additivity}), the mean $m_1(\gamma(t))$ is also a constant function of $t$. So, differentiating the above equation twice with respect to $t$ we get
\begin{equation}
\label{f02:eq:secderivatives}
 \left.\partial_t^2 m_2(\gamma(t))\right|_{t=0} = \left.\partial_t^2 \big(m_2(\alpha(t))+m_2(\beta(t))\big)\right|_{t=0} .
 \end{equation}
Now we inspect both sides of the above equation. First
\begin{align}
\nonumber \left. n\, \partial_t^2m_2(\gamma(t))\right|_{t=0} & = \sum_{i=1}^n D_w^2(\gamma_i^2) 
\\ \nonumber & = 2\sum_{i=1}^n \left( (D_w\gamma_i)^2+ \gamma_iD_w^2\gamma_i\right)
\\ \label{f02:eq:secondderofgamma} & = 2\left(w^* \jac \jac^* w+ \sum_{i=1}^n\gamma_i w^* \hessconv w \right). 
\end{align}
Second
\begin{align}
\nonumber n\,\partial_t^2(m_2(\alpha(t))+m_2(\beta(t))) & = \partial_t^2((\alpha+tu)^*(\alpha+tu)+(\beta+tv)^*(\beta+tv)) 
 \\ \nonumber & = 2(u^*u +v^*v) 
 \\ \label{f02:eq:secondderofalphanadbeta} & = 2w^* w.
 \end{align}
Finally, plugging (\ref{f02:eq:secondderofgamma}) and (\ref{f02:eq:secondderofalphanadbeta}) back into (\ref{f02:eq:secderivatives}) yields 
$$w^* \jac \jac^* w+ \sum_{i=1}^n\gamma_i w^* \hessconv w = w^* w,$$
which is equivalent to the advertised result. 
\end{proof}

We now apply a result of Bauschke et al. \cite[Corollary 3.3]{f02:bauschke2001hyperbolic}.

\begin{fileqfourtheorem}[Bauschke et al.]
\label{f02:thm:convexity}
    Let $f\in \bR[x_1, \dots, x_m]$ be a hyperbolic polynomial in the direction $w\in \bR^m$ and for every $a\in \bR^m$ let $\lambda_1(a)\geq  \dots\geq  \lambda_m(a)$ be the roots of $g_a(t):=f(a+tw)$.  Then, for every $k=1, \dots, m$, the function $\sigma_k(a) := \sum_{i=1}^k \lambda_i(a)$ is convex in $a$. 
\end{fileqfourtheorem}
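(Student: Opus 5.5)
The plan is to reduce to the case $k=1$ by building an auxiliary hyperbolic polynomial whose largest root is $\sigma_k/k$, and then to obtain convexity of the largest root from Gårding's theorem on the convexity of hyperbolicity cones. Write $d=\deg f$. The index range in the statement should then be read as $k=1,\dots,d$, since $g_a$ has exactly $d$ roots; this follows from $f(w)\neq 0$, which is part of hyperbolicity.

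\textbf{Step 1 (reduction to $k=1$).} Fix $k$ and define
\[
F_k(a):=\prod_{S\subseteq\{1,\dots,d\},\ |S|=k}\Big(\sum_{i\in S}\lambda_i(a)\Big).
\]
This is a symmetric function of the roots of $g_a(t)=f(a+tw)$. By the fundamental theorem on symmetric polynomials, $F_k(a)$ is a polynomial in the coefficients of $g_a$ divided by the leading coefficient $f(w)\neq 0$. Those coefficients are polynomials in $a$, so $F_k$ is a polynomial on $\bR^m$.

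Since $f(a+sw+tw)=g_a(s+t)$, the roots of $t\mapsto f((a+sw)+tw)$ are $\lambda_i(a)-s$. Hence
\[
F_k(a+sw)=\prod_{|S|=k}\Big(\sum_{i\in S}\lambda_i(a)-ks\Big),
\]
which has only real roots in $s$. Moreover $F_k(w)\neq0$, because $f(w+tw)=(1+t)^d f(w)$ gives $\lambda_i(w)=-1$ for all $i$. So $F_k$ is hyperbolic in direction $w$.

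The roots of $s\mapsto F_k(a+sw)$ are the numbers $\sum_{i\in S}\lambda_i(a)/k$. The largest of these is $\sigma_k(a)/k$. Therefore it suffices to prove that the largest root $\lambda_1$ is convex for an arbitrary hyperbolic polynomial.

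\textbf{Step 2 (largest root via the hyperbolicity cone).} Let $\Lambda=\{a:\lambda_i(a)<0 \text{ for all } i\}$. This is the connected component of $\{f\neq0\}$ containing $w$, namely the hyperbolicity cone. It is closed under positive scaling, since $\lambda_i(ca)=c\lambda_i(a)$ for $c>0$.

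By the shift identity from Step 1, $\lambda_1(a)<s$ holds iff $a-sw\in\Lambda$. Now suppose $\lambda_1(a)<s$ and $\lambda_1(b)<r$, and let $\theta\in[0,1]$. Then
\[
\theta a+(1-\theta)b-(\theta s+(1-\theta)r)w=\theta(a-sw)+(1-\theta)(b-rw)
\]
lies in $\Lambda$, provided $\Lambda$ is convex. This gives $\lambda_1(\theta a+(1-\theta)b)<\theta s+(1-\theta)r$. Letting $s\downarrow\lambda_1(a)$ and $r\downarrow\lambda_1(b)$ yields convexity of $\lambda_1$.

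\textbf{Step 3 (main obstacle: Gårding's convexity of $\Lambda$).} The substantive input is that $\Lambda$ is convex. I would either cite Gårding's theorem or prove it as follows.

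First show that $f$ is hyperbolic in every direction $e\in\Lambda$. Consider $f(a+te+sw)$ and move $e$ along a path to $w$ inside $\Lambda$. Roots in $t$ cannot leave the real axis, because $f$ vanishes nowhere on the path. A standard argument with the complexified parameter $f(a+(i\cdot)e+\dots)$ and Hurwitz's theorem shows the roots stay real.

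Second, for $a,b\in\Lambda$, hyperbolicity in direction $b$ shows that the segment from $a$ to $b$ avoids $\{f=0\}$. Indeed, $f(a+tb)$ has only negative roots by the same continuity argument, so the segment, which corresponds to $t\ge0$, never meets $\{f=0\}$. Hence the segment stays in the component $\Lambda$.

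This continuity-of-roots argument is the delicate part. All other steps are formal manipulations.
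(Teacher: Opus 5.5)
Your proof is correct and is essentially the argument of Bauschke et al.\ that the paper cites without proof. Composing with the symmetric polynomial $\prod_{|S|=k}\sum_{i\in S}y_i$, which is hyperbolic in the all-ones direction, makes $\sigma_k/k$ the largest root of a new hyperbolic polynomial, and G\aa rding's convexity of the hyperbolicity cone then finishes.

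There are two slips to repair. In Step~2 the shift identity gives $\lambda_1(a)<s \iff a+sw\in\Lambda$ (not $a-sw$), so the convex combination should read $\theta(a+sw)+(1-\theta)(b+rw)$. In Step~3, the nonvanishing of $f(e)$ along the path only keeps the degree in $t$ fixed. What actually keeps the roots real is that, for $\epsilon>0$, the roots of $t\mapsto f(a+te+i\epsilon w)$ can never be real, because $f(z+i\epsilon w)\neq 0$ for real $z$. They start in the open lower half-plane at $e=w$ and therefore stay there along the path; one then lets $\epsilon\downarrow 0$ and uses conjugate symmetry.
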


In our context this implies the following. 

\begin{fileqfourcorollary}
\label{f02:cor:psdness}
   For any real numbers $c_1\geq \cdots \geq c_n $, the matrix 
   $\sum_{i=1}^n c_i \hessconv$ is PSD. 
\end{fileqfourcorollary}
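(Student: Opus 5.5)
We plan to apply Theorem \ref{f02:thm:convexity} with the hyperbolic polynomial given by the convolution itself, viewed as a function of the root variables. Concretely, set $m=2n+1$ and consider
$$f(x,a,b) := \sum_{\pi\in S_n}\prod_{i=1}^n\bigl(x-a_i-b_{\pi(i)}\bigr),$$
a polynomial in $(x,a,b)\in\bR\times\bR^n\times\bR^n$. Take the direction $w=e_x$, the unit vector in the $x$-coordinate. For a point $(0,\alpha,\beta)$, the one-variable polynomial $t\mapsto f(te_x+(0,\alpha,\beta))$ is exactly $(p\boxplus_n q)(t)$ by \eqref{f02:eqn:convroots}. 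By Walsh's theorem it is real-rooted for every real $(\alpha,\beta)$. It is also monic of degree $n$ in $t$: the coefficient is $n!$, which is a harmless constant. Hence $f$ is hyperbolic in direction $e_x$. The roots $\lambda_1(x,a,b)\ge\dots\ge\lambda_n(x,a,b)$ of $t\mapsto f(x+t,a,b)$ are $\rootsi(a,b)-x$. Theorem \ref{f02:thm:convexity} then says that each partial sum $\sigma_k=\sum_{i\le k}\lambda_i$ is convex on $\bR^{2n+1}$. Restricting to the slice $x=0$, we get that $(a,b)\mapsto\sum_{i\le k}\Omega_{\boxplus_n,i}(a,b)$ is convex on $\bR^n\times\bR^n$ for every $k$. (Theorem \ref{f02:thm:convexity} is stated with $m$ roots, but the argument applies verbatim when the degree in $t$ is $n<m$; if needed, we would instead cite the general form of the Bauschke et al. result.)

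Next we pass from convexity to positive semidefiniteness of Hessians. We work at points $(\alpha,\beta)$ where $\gamma$ has distinct entries, which is the standing assumption of this section. There each $\Omega_{\boxplus_n,i}$ is smooth in a neighborhood, because simple roots depend smoothly on the coefficients, and the coefficients are polynomial in $(a,b)$. Ordering is preserved locally, so $\sigma_k$ agrees with a smooth function near the point. A smooth convex function has a PSD Hessian, so $S_k:=\sum_{i\le k}\hessconv\succeq 0$ for each $k=1,\dots,n$. Note that $S_n$ is the Hessian of $\sum_i\gamma_i=n(m_1(\alpha)+m_1(\beta))$, which is linear, so $S_n=0$. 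This is consistent with, but not needed for, the claim.

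Finally we use Abel summation. Write $c_{n+1}:=0$ for convenience, and observe that
$$\sum_{i=1}^n c_i\hessconv=\sum_{k=1}^{n-1}(c_k-c_{k+1})S_k + c_n S_n .$$
The coefficients $c_k-c_{k+1}$ are nonnegative, and each $S_k$ is PSD. The last term vanishes because $S_n=0$, so the sign of $c_n$ does not matter. A nonnegative combination of PSD matrices is PSD, which gives the claim.

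The main obstacle is checking the hyperbolicity setup carefully. We must confirm that $f$ is genuinely a polynomial in $(x,a,b)$ and hyperbolic in direction $e_x$, with $f(e_x)\neq0$. We must also confirm that the ordered roots along $e_x$ are exactly $\rootsi-x$, including the sign and ordering conventions: the roots of $t\mapsto f(x+t,\cdot)$ are $\gamma_i-x$, and $\lambda_i$ in decreasing order matches $\gamma$ ordered decreasingly. Everything else is routine smoothness and summation by parts.
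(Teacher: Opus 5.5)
Your proposal is correct and follows essentially the same route as the paper. You use the same hyperbolic polynomial $f(x,a,b)=\sum_{\pi}\prod_i(x-a_i-b_{\pi(i)})$ in the direction $e_x$, the same convexity of the partial sums $\sigma_k$ from Bauschke et al., and the same summation-by-parts rewriting of $\sum_i c_i\lambda_i$ before taking Hessians at $(0,\alpha,\beta)$. Your remark that the top partial sum has vanishing Hessian, because $\sum_i\gamma_i$ is linear in $(\alpha,\beta)$, is a worthwhile precision: it covers the case $c_n<0$, which actually occurs in the application $c_i=\gamma_i$, whereas the paper simply calls $\sum_i c_i\lambda_i$ a ``positive linear combination'' of the $\sigma_k$.
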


\begin{proof}
    Define the multivariate polynomial 
    $$f(x, a_1,\dots, a_n, b_1, \dots,  b_n) := \sum_{\pi\in S_n} \prod_{i=1}^n (x-a_i-b_{\pi(i)}).$$
    Since the above polynomial is homogeneous and the finite free convolution preserves real rootedness, $f$ is hyperbolic in the direction $e_1=(1, 0\cdots, 0)$. Now, by Theorem \ref{f02:thm:convexity} the functions
    $$\sigma_k(x, a, b) = \sum_{i=1}^k \lambda_i(x, a, b)$$
    are convex, where $\lambda_1(x, a, b)\geq \cdots \geq \lambda_n(x, a, b)$ denote the roots of $f((x, a, b)+t e_1)$. And, because the $c_i$ are ordered we moreover have that the function 
    $$L(x, a, b):=\sum_{i=1}^n c_i \lambda_i(x,a, b)$$
    is convex, as it can be written as a positive linear combination of the $\sigma_k$. It follows that $\mathrm{Hess}_{L} = \sum_{i=1}^n c_i \mathrm{Hess}_{\lambda_i}$ at any $(x, a, b)$ is PSD. But, on the other hand, when $x=0$, $a=\alpha$ and $b = \beta$, we have that $\mathrm{Hess}_{\lambda_i} = \hessconv$, which in turn gives that $\sum_{i=1}^n c_i \hessconv$ is PSD. 
\end{proof}

We can now complete the proof of Proposition \ref{f02:prop:singularvalues}. 

\begin{proof}[Proof of Proposition \ref{f02:prop:singularvalues}]
  Let $(u, v)\in \calV$. Then 
  $$\|\jac(u, v)\|^2 = (u, v)^* \jac \jac^* (u, v) = \|u\|^2+\|v\|^2-\sum_{i=1}^n \gamma_i (u, v)^* \hessconv (u, v),$$
where the last equality follows from Lemma \ref{f02:lem:Hessian}. Now, applying Corollary \ref{f02:cor:psdness} with $c_i=\gamma_i$ gives that $\sum_{i=1}^n \gamma_i \hessconv$ is PSD, and hence 
$$\sum_{i=1}^n \gamma_i (u, v)^* \hessconv (u, v)\geq 0.$$
The proof follows from putting the two expressions together. 
\end{proof}

{\PatchBibSection

\EndPatchBibSection}
\endgroup

\subsection{Question 5: Andrew J. Blumberg}
\textit{Authors:} Andrew J.~Blumberg; Michael A.~Hill; Tyler Lawson\par\medskip
\textit{Title:} Generalized equivariant slice categories\par\medskip
\begingroup
\usetikzlibrary{positioning}
\usetikzlibrary{arrows}
\usetikzlibrary{arrows}
\usetikzlibrary{decorations.pathreplacing}
\usetikzlibrary{calc}

\providecommand{\mycases}[1]{\left\{\begin{array}{ll}#1\end{array}\right.}
\providecommand{\Z}{{\mathbb  Z}}
\providecommand{\N}{{\mathbb N}}
\providecommand{\R}{{\mathbb R}}
\providecommand{\F}{{\mathbb F}}
\providecommand{\MUR}{MU_{\R}}
\providecommand{\MSpin}{MSpin}
\providecommand{\Spin}{Spin}
\providecommand{\XiO}{\Xi O}
\providecommand{\smashover}[1]{\underset{#1}{\wedge}}
\providecommand{\timesover}[1]{\underset{#1}{\times}}
\providecommand{\Boxover}[1]{\underset{#1}{\Box}}
\providecommand{\otimesover}[1]{\underset{#1}{\otimes}}

\providecommand{\Hom}{\operatorname{Hom}}
\providecommand{\Der}{\operatorname{Der}}
\providecommand{\Ext}{\operatorname{Ext}}
\providecommand{\Tor}{\operatorname{Tor}}
\providecommand{\coker}{\operatorname{coker}}
\providecommand{\Map}{\operatorname{Map}}
\providecommand{\Sym}{\operatorname{Sym}}
\providecommand{\holim}{\operatorname{holim}}
\providecommand{\colim}{\operatorname{colim}}
\providecommand{\Stab}{\operatorname{Stab}}
\providecommand{\Spec}{\operatorname{Spec}}
\providecommand{\Tel}{\operatorname{Tel}}
\providecommand{\Sub}{\operatorname{Sub}}

\providecommand*{\RO}{RO}

\providecommand{\norm}{N}
\providecommand{\normR}{{}_R\norm}

\providecommand{\res}{res}
\providecommand{\tr}{tr}
\providecommand{\Tr}{tr}

\providecommand{\Cp}[1]{C_{p^{#1}}}
\providecommand{\Cpn}{\Cp{n}}
\providecommand{\cp}[1]{p^{#1}}

\providecommand{\m}[1]{{\protect\underline{#1}}}
\providecommand{\mZ}{\m{\Z}}
\providecommand{\mM}{\m{M}}
\providecommand{\mG}{\m{G}}
\providecommand{\mR}{\m{R}}
\providecommand{\mB}{\m{B}}
\providecommand{\mJ}{\m{J}}
\providecommand{\mH}{\m{H}}
\providecommand{\mN}{\m{N}}
\providecommand{\mpi}{\m{\pi}}
\providecommand{\mn}{\m{n}}
\providecommand{\mA}{\m{A}}
\providecommand{\mX}{\m{X}}
\providecommand{\mSpec}{\m{\Spec}}
\providecommand{\mY}{\m{Y}}
\providecommand{\mGamma}{\m{\Gamma}}
\providecommand{\mSet}{\m{\Set}}
\providecommand{\mcC}{\m{\cC}}
\providecommand{\mTr}{\m{\Tr}}
\providecommand{\hull}{\operatorname{hull}}

\providecommand{\cc}[1]{\mathcal #1}
\providecommand{\cC}{\cc{C}}
\providecommand{\ccD}{\cc{D}}
\providecommand{\cF}{\cc{F}}
\providecommand{\cO}{\cc{O}}
\providecommand{\ccZ}{\cc{Z}}
\providecommand{\cP}{\cc{P}}
\providecommand{\cA}{\cc{A}}

\providecommand{\aC}{\cC}
\providecommand{\aD}{\ccD}
\providecommand{\ob}{\textrm{ob}}
\providecommand{\id}{\textrm{id}}

\providecommand{\TC}{\textnormal{TC}}
\providecommand{\THH}{\textnormal{THH}}
\providecommand{\AQ}{\textnormal{AQ}}
\providecommand{\TAQ}{\textnormal{TAQ}}
\providecommand{\CoInd}{\textnormal{CoInd}}
\providecommand{\Ind}{\textnormal{Ind}}
\providecommand{\HH}{\textnormal{HH}}
\providecommand{\GL}{\textnormal{GL}}
\providecommand{\Sp}{\mathcal Sp}
\providecommand{\Emb}{\textnormal{Emb}}
\providecommand{\Set}{\mathcal Set}
\providecommand{\Com}{\mathcal Comm}
\providecommand{\Comm}{\mathcal Comm}
\providecommand{\Nuca}{\mathcal NUCA}
\providecommand{\Ninfty}{N_\infty}
\providecommand{\Coef}{\mathcal Coef}
\providecommand{\Ens}{\mathcal Ens}
\providecommand{\Top}{\mathcal Top}
\providecommand{\Ab}{\mathcal Ab}
\providecommand{\Alg}{\mathcal Alg}
\providecommand{\cOrb}{\mathcal Orb}
\providecommand{\Scheme}{\mathcal Scheme}
\providecommand{\Tamb}{\mathcal Tamb}
\providecommand{\OTamb}{\cO\mhyphen\Tamb}
\providecommand{\Mackey}{\mathcal Mackey}
\providecommand{\Cat}{\mathcal Cat}
\providecommand{\Symm}{\mathcal Sym}
\providecommand{\mRmod}{\m{R}\mhyphen\mathcal Mod}
\providecommand{\GComm}{G\mhyphen\Comm}
\providecommand{\Green}{\mathcal Green}
\providecommand{\mScheme}{\m{\Scheme}}
\providecommand{\Burn}{\mathcal Burn}
\providecommand{\Coeff}{\mathcal Coeff}
\providecommand{\mCoeff}{\m{\Coeff}}
\providecommand{\Mod}{\mathcal Mod}
\providecommand{\Mods}[1]{#1\mhyphen\Mod}
\providecommand{\RMod}{\Mods{R}}
\providecommand{\mRMod}{R\mhyphen\m{\Mod}}
\providecommand{\STamb}{\m{S}\mhyphen\Tamb}
\providecommand{\SAb}{\m{S}\mhyphen\Ab}
\providecommand{\SMackey}{\m{S}\mhyphen\Mackey}
\providecommand{\Aug}[1]{#1\mhyphen\mathcal Aug}
\providecommand{\OmegaOneO}{\Omega^{1,\mathcal O}_{\m{R}/\m{S}}}
\providecommand{\OmegaOneG}{\Omega^{1,G}_{\m{R}/\m{S}}}
\providecommand{\semidirect}{\ltimes}
\providecommand{\conn}{\operatorname{conn}}
\providecommand{\gconn}{\operatorname{gconn}}
\providecommand{\Fin}{\mathcal Fin}
\providecommand*{\IntOp}{\mathcal IntOp}
\providecommand*{\Tran}{\mathcal Tran}
\providecommand*{\SubPo}{\mathcal SubPo}
\providecommand{\claw}{\operatorname{claw}}
\providecommand{\Aut}{\operatorname{Aut}}
\providecommand{\Cata}{\operatorname{Cat}}
\providecommand{\MRC}{MRC}
\providecommand{\MRCcOMackey}{\MRC_{\cO}\mhyphen\Mackey}
\providecommand{\End}{\operatorname{End}}

\providecommand{\mgconn}{\operatorname{g\m{\conn}}}

\providecommand{\leftadjoint}{\dashv}
\providecommand{\To}{\Rightarrow}

\providecommand{\AN}[1]{{}_{#1}N}

\providecommand{\defn}[1]{{\underline{#1}}}

\providecommand{\mC}{\m{\cC}}

\mathchardef\mhyphen=45

\providecommand{\EM}{Eilenberg-Mac~Lane}

\numberwithin{equation}{section}

\newtheorem{fileftheorem}{Theorem}[section]
\newtheorem{fileflemma}[fileftheorem]{Lemma}
\newtheorem{filefcorollary}[fileftheorem]{Corollary}
\newtheorem{filefcor}[fileftheorem]{Corollary}
\newtheorem{filefproposition}[fileftheorem]{Proposition}
\newtheorem{filefconjecture}[fileftheorem]{Conjecture}
\newtheorem{filefquestion}[fileftheorem]{Question}

\newtheorem*{fileftheoremstar}{Theorem}

\theoremstyle{remark}
\newtheorem{filefremark}[fileftheorem]{Remark}
\newtheorem{filefexample}[fileftheorem]{Example}
\newtheorem{filefnotation}[fileftheorem]{Notation}
\newtheorem{filefobservation}[fileftheorem]{Observation}
\newtheorem{filefrant}[fileftheorem]{Rant}
\newtheorem{filefwarning}[fileftheorem]{Warning}

\theoremstyle{definition}
\newtheorem{filefassumption}[fileftheorem]{Assumption}
\newtheorem{filefaxiom}[fileftheorem]{Axiom}
\newtheorem{filefdefinition}[fileftheorem]{Definition}

\providecommand{\SESM}{Eilenberg--Mac~Lane}
\providecommand{\eqvr}{equivariant}
\providecommand{\defemph}[1]{\textbf{#1}} 
\providecommand{\Ho}{\textrm{Ho}}

\providecommand{\sma}{\otimes}
\providecommand{\sP}{\mathcal{P}}
\providecommand{\htp}{\simeq}
\providecommand{\aO}{\mathcal{O}}
\providecommand{\aU}{\mathcal{U}}
\providecommand{\bE}{\mathbb{E}}
\providecommand{\bO}{\mathbb{O}}
\providecommand{\bU}{\mathbb{U}}
\providecommand{\GTop}{\textrm{GTop}}
\providecommand{\bN}{\mathbb{N}}
\providecommand{\Ev}{\textrm{Ev}}

\setcounter{equation}{0}
\subparagraph{Indexed slice categories}\label{f06:sec: Indexed Slice
  Categories}

(Excerpt from ``Generalized equivariant slice categories'', with Mike
Hill and Tyler Lawson.)

\paragraph{Transfer and indexing systems}

We begin with an ahistorical but geodesic summary of transfer systems
and indexing systems.

\begin{filefdefinition}[{\cite{f06:BBR}, \cite{f06:Rubin21}}]
    A {\emph{transfer system}} on \(G\) is a partial order we will denote by \(\to\) on \(\Sub(G)\) satisfying three properties:
    \begin{enumerate}
        \item it refines subgroup inclusion: if \(H\to K\), then \(H\subseteq K\),
        \item it is conjugation invariant: if \(H\to K\) and \(g\in G\), then \(gHg^{-1}\to gKg^{-1}\), and
        \item it is closed under restriction: if \(H\to K\) and \(J\subseteq K\), then \(H\cap J\to J\).
    \end{enumerate}
\end{filefdefinition}

The collection of all transfer systems on \(G\) forms a poset under refinement, and we will use \(\leq\) for the partial order here.

\begin{filefdefinition}
    Let \(\cO\) be a transfer system on \(G\). A finite \(H\)-set 
    \[
        T=\coprod_{i} H/K_i
    \]
    is {\emph{admissible}} for \(\cO\) if for all \(i\), \(K_i\to H\). The collection of admissible \(H\)-sets for \(\cO\) will be denoted \(\cO(H)\). The collection of all \(\cO(H)\) as \(H\) varies gives an {\emph{indexing system}}.
\end{filefdefinition}

The admissible sets of $\cO$ are closely connected to the norms
structured by an \(\Ninfty\) operad; we will usually also abusively
denote the operad by \(\cO\). Here $i^H_* \colon \Sp^G \to \Sp^H$
denotes the pullback functor along the inclusion $H \to G$ and $N_H^G
\colon \Sp^H \to \Sp^G$ denotes the Hill-Hopkins-Ravenel
norm~\cite{f06:HHR}.

\begin{filefdefinition}
    For a finite \(G\)-set \(T\), we define the \(T\)-norm
    \[
        N^T\colon\Sp^G\to\Sp^G
    \]
    inductively by the formulas
    \begin{enumerate}
        \item \(N^{G/H}(E)=N_H^G i_H^\ast(E)\), and
        \item \(N^{T_0\amalg T_1}(E)=N^{T_0}(E)\otimes N^{T_1}(E)\).
    \end{enumerate}
\end{filefdefinition}

\paragraph{\texorpdfstring{\(\cO\)}{O}-slice filtration}

We now define the slice filtration relative to an indexing system
$\cO$.  We are going to use equivariant localization (more
specifically, nullification) to construct the relative slice towers.
Recall that in the equivariant context, we define local and acylic
objects in terms of conditions on the $G$-space of maps rather than
the non-equivariant space of maps.  The acylic objects form an
equivariant localizing subcategory.  Recall that given a set of
objects in $\Sp^G$, we define the equivariant localizing subcategory
generated by these objects to be the full subcategory of $\Sp^G$
constructed as the closure under homotopy colimits, retracts, and
tensors with orbit spectra.

\begin{filefdefinition}
    If \(\cO\) is an indexing system, then let \(\tau_{\geq n}^{\cO}\) be the equivariant localizing subcategory of $\Sp^G$ generated by 
    \[
    \Big\{ G_+\otimesover{H} N^T S^1\mid T\in \cO(H), |T|\geq n\Big\}.
    \]
    This is the category of \emph{\(\cO\)-slice \(n\)-connective spectra}.
\end{filefdefinition}

\begin{filefremark}
    Given a finite \(G\)-set \(T\), we have an equivariant homeomorphism
    \[
        N^T S^1\cong S^{\mathbb R\cdot T},
    \]
    the representation sphere associated to the permutation representation of \(T\). This means that the \(\cO\)-slice \(n\)-connective spectra can be equivalently viewed as being generated by the representation spheres associated to the permutation representations for admissible sets of cardinality at least \(n\). 
\end{filefremark}

Viewing this instead as a diagram of localizing subcategories (i.e., as a categorical Mackey functor), we are forming the equivariant localizing subcategory generated at \(G/H\) by \(N^T S^1\) for all admissible \(H\)-sets \(T\) of cardinality at least \(n\).

For the next definition, recall that the nullification at a set of objects $\{S_i\}$ in $\Sp^G$ is the left Bousfield localization at the set of terminal maps $\{S_i \to \ast\}$.

\begin{filefdefinition}
    If \(\cO\) is an indexing system, then:
    \begin{itemize}

    \item The \(n\)th \(\cO\)-slice truncation is the functor
    \[
        P^n_{\cO}\colon \Sp^G_{\geq 0}\to\Sp^G_{\geq 0}
    \]
    that is the nullification killing \(\tau_{\geq (n+1)}^{\cO}\). 

\item The \(n\)th \(\cO\)-slice cover is the functor
\[
P_n^{\cO}\colon \Sp^G_{\geq 0}\to\Sp^G_{\geq 0}
\]
defined to be the (homotopy) fiber of the natural map \(Id\Rightarrow P^{n-1}_{\cO}\). 
\end{itemize}
\end{filefdefinition}

The truncation functors are related in the evident fashion as $n$ varies.

\begin{filefproposition}
    For each \(n\geq 0\), we have a natural transformation 
    \[
    P^n_{\cO}(\mhyphen)\Rightarrow P^{n-1}_{\cO}(\mhyphen).
    \]
    These are compatible with the natural nullification functors 
    \[
    Id\Rightarrow P^n_{\cO}(\mhyphen).
    \]
    
    For a connective $G$-spectrum $E$, the natural map 
    \[
        E\to \lim_{\longleftarrow} P^n_{\cO}(E)
    \]
    is always a weak equivalence.
\end{filefproposition}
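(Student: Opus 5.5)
The plan is to treat the three claims in turn. The first two are formal consequences of the nested localizing subcategories. The third, the convergence statement, reduces to comparing the \(\cO\)-slice connectivity with ordinary Postnikov connectivity.

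For the natural transformation, first observe that if \(T\in\cO(H)\) has \(|T|\geq n+1\), then in particular \(|T|\geq n\). Hence the generators of \(\tau^{\cO}_{\geq n+1}\) lie among those of \(\tau^{\cO}_{\geq n}\), and \(\tau^{\cO}_{\geq n+1}\subseteq\tau^{\cO}_{\geq n}\). It follows that every \(P^{n-1}_{\cO}\)-local object (one whose equivariant mapping spaces out of \(\tau^{\cO}_{\geq n}\) are contractible) is also \(P^n_{\cO}\)-local. The universal property of the nullification \(E\to P^n_{\cO}E\) then gives a unique factorization \(P^n_{\cO}E\to P^{n-1}_{\cO}E\) of the map \(E\to P^{n-1}_{\cO}E\). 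Naturality and compatibility with the units follow from the uniqueness part of the universal property. No work is needed here beyond recalling that equivariant nullification is a reflection onto the local objects. Closure under restriction and induction makes this a statement in \(\Sp^G\), not just at each level separately.

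For convergence, let \(F_n\) be the fiber of \(E\to P^n_{\cO}E\), so that \(F_n=P^{\cO}_{n+1}E\in\tau^{\cO}_{\geq n+1}\). Since \(\lim\) commutes with fibers, it suffices to show \(\lim_n F_n\simeq *\). For this I would show that \(\tau^{\cO}_{\geq n}\) consists of spectra that are Postnikov \(\lfloor n/|G|\rfloor\)-connective, or at least have connectivity tending to infinity with \(n\). The key input is that for a finite \(H\)-set \(T\) with \(|T|\geq n\), the sphere \(S^{\mathbb R\cdot T}\) has underlying dimension \(|T|\). Its \(K\)-geometric fixed points are \(S^{|T/K|}\), and \(|T/K|\geq |T|/|K|\geq n/|G|\). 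Hence \(G_+\otimesover{H} S^{\mathbb R\cdot T}\) has all geometric fixed points \((n/|G|-1)\)-connected. Equivalently, by the standard characterization of connectivity via geometric fixed points, it is \(\lceil n/|G|\rceil\)-connective in the homotopy \(t\)-structure, since geometric-fixed-point connectivity bounds the homotopy Mackey functors via the isotropy separation filtration. The class of spectra with this connectivity is closed under homotopy colimits, retracts and tensors with orbits, so the whole localizing subcategory \(\tau^{\cO}_{\geq n}\) lies in \(\Sp^G_{\geq \lceil n/|G|\rceil}\).

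Granting this, each \(F_n\) is \(c_n\)-connective with \(c_n\to\infty\). The Milnor sequence for \(\pi^K_k\lim F_n\) involves \(\lim\) and \(\lim^1\) of \(\pi^K_k F_n\) and \(\pi^K_{k+1}F_n\), and these groups vanish for \(n\) large. Hence \(\lim F_n\simeq *\) and \(E\simeq\lim P^n_{\cO}E\). The main obstacle I expect is the connectivity step: making sure the passage from geometric-fixed-point connectivity to \(t\)-structure connectivity is handled uniformly over subgroups. I would first try an induction over the family of subgroups using isotropy separation, \(E\mathcal F_+\otimes X\to X\to \widetilde{E\mathcal F}\otimes X\). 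If that becomes messy, the fallback is to use only the weaker bound \(c_n\geq \lfloor n/|G|\rfloor\), obtained directly from the cellular dimensions of the generating representation spheres, which suffices for convergence.
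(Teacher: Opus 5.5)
Your proposal is correct and is essentially the paper's argument. The inclusion \(\tau^{\mathcal{O}}_{\geq n+1}\subseteq\tau^{\mathcal{O}}_{\geq n}\) gives the factorization \(P^n_{\mathcal{O}}\Rightarrow P^{n-1}_{\mathcal{O}}\), and convergence follows because the fibers \(P^{\mathcal{O}}_{n+1}E\in\tau^{\mathcal{O}}_{\geq n+1}\) have Postnikov connectivity tending to infinity, since each generator \(G_+\otimes_H N^T S^1\) is at least \(|T|/|H|\)-connective. The paper reads this connectivity off directly as \(|T/H|\), the dimension of \((\mathbb{R}\cdot T)^H\), rather than via geometric fixed points, and it leaves your Milnor-sequence step implicit.
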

\begin{proof}
    The inclusion of categories \(\tau_{n+2}^{\cO}\subset\tau_{n+1}^{\cO}\) induces a natural transformation the other way of nullification functors.  Since we can factor the nullification functor \(P^n_{\cO}\) via this inclusion, the first two statements follow.
    
    For the second, we note that the Postnikov connectivity of \(G_+\otimesover{H} N^T S^1\) for a finite \(H\)-set $T$ is \(|T/H|\). As \(n\) goes to infinity, this also does (at worst as \(|T|/|H|)\). In particular, the map
    \[
        E\to P^n_\cO(E)
    \]
    has coconnectivity going to infinity.
\end{proof}

For any bounded below spectrum \(K\), the same argument shows that the natural map
\[
    K\sma E\to \lim_{\longleftarrow}\big(K\sma P^{n}E\big)
\]
is an equivalence. 

\begin{filefdefinition}
    A \(G\)-spectrum \(E\) is an \(\cO\)-\(n\)-slice if 
    \begin{enumerate}
        \item it is in \(\tau_{\geq n}^{\cO}\), and 
        \item the natural map
    \[
    E\to P^n_{\cO}E
    \]
    is an equivalence.
    \end{enumerate}
\end{filefdefinition}

\begin{filefproposition}
For any indexing system \(\cO\), the ordinary suspension yields maps
    \[
        \Sigma\colon\tau_{\geq k}^{\cO}\to\tau_{\geq (k+1)}^{\cO}.
    \]
\end{filefproposition}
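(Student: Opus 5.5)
Since $\tau_{\geq k}^{\cO}$ is defined as an equivariant localizing subcategory generated by a set of objects, the plan is to check that $\Sigma$ preserves the generating objects and then pass to the closure. The first step is the reduction. The suspension functor $\Sigma\colon\Sp^G\to\Sp^G$ is an exact autoequivalence. It commutes with homotopy colimits and retracts, and with tensoring by orbit spectra, because $\Sigma(G/K_+\otimes X)\simeq G/K_+\otimes\Sigma X$. Hence the preimage $\Sigma^{-1}(\tau_{\geq (k+1)}^{\cO})$ is itself an equivariant localizing subcategory. It therefore suffices to show that it contains each generator $G_+\otimesover{H} N^T S^1$ with $T\in\cO(H)$ and $|T|\geq k$.

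The second step handles the generators. Since $\Sigma(G_+\otimesover{H} X)\simeq G_+\otimesover{H}\Sigma X$, where on the right $\Sigma$ is the $H$-equivariant suspension, we must show that $\Sigma N^T S^1 = S^{1+\mathbb R\cdot T}$ lies in the $H$-level part of $\tau^{\cO}_{\geq (k+1)}$. By the preceding Remark, $N^TS^1\cong S^{\mathbb R\cdot T}$, so
\[
\Sigma N^T S^1\cong S^{\mathbb R\cdot(T\amalg H/H)}\cong N^{T\amalg H/H}S^1 .
\]
Now $T\amalg H/H$ is admissible, because $H\to H$ holds by reflexivity of the partial order. Its cardinality is $|T|+1\geq k+1$. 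So $\Sigma N^T S^1$ is itself a generator of $\tau^{\cO}_{\geq k+1}$ at level $H$, and inducing up along $G_+\otimesover{H}(-)$ gives the claim at level $G$.

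The main point to be careful about is the compatibility of the localizing subcategory with induction and restriction, namely that "generated at $G/H$" in the categorical Mackey functor sense agrees with generating by the induced objects $G_+\otimesover{H}(-)$. This is immediate from the definition as stated, since the generators are already induced objects, so no restriction argument is needed. The identification $N^{T_0\amalg T_1}=N^{T_0}\otimes N^{T_1}$ together with $N^{H/H}S^1=S^1$ gives the displayed equivalence directly, so I do not expect a genuine obstacle.
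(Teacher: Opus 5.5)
Your proposal is correct and follows essentially the same route as the paper. You reduce to the generators because $\Sigma$ commutes with homotopy colimits and induction, and then use $\Sigma N^T S^1\simeq N^{T\amalg \ast}S^1$, where $T\amalg\ast$ is admissible of cardinality $|T|+1$. Your explicit checks — that the preimage $\Sigma^{-1}\bigl(\tau^{\mathcal{O}}_{\geq (k+1)}\bigr)$ is itself an equivariant localizing subcategory, and that admissibility of $T\amalg H/H$ rests on reflexivity $H\to H$ — simply spell out what the paper's short proof leaves implicit.
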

\begin{proof}
    Since suspension commutes with homotopy colimits and induction, it suffices to show this on the generators \(N^T S^1\) as $T$ varies over the admissible sets of $\cO$.  Since $\Sigma N^{T} S^1 \htp N^{T\amalg \ast}S^1$, the result follows: if $T$ is admissible and of cardinality at least \(k\) then $T \amalg \ast$ is admissible and has cardinality at least $k+1$.
\end{proof}

\begin{filefcorollary}
    For any \(k\geq 0\), the \(\infty\)-category of \(\cO\)-\(k\)-slices is discrete.
\end{filefcorollary}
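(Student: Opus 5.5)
The plan is to show that for any two \(\cO\)-\(k\)-slices \(E\) and \(F\), the mapping space \(\Map_{\Sp^G}(E,F)\) has vanishing homotopy groups in positive degrees at every basepoint. I would also show the same for the genuine \(G\)-space of maps, i.e.\ on \(H\)-fixed points for all \(H\le G\). Discreteness of the \(\infty\)-category of \(\cO\)-\(k\)-slices means exactly this, since it is a full subcategory of \(\Sp^G\).

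First, since \(\Sp^G\) is stable, \(\Map(E,F)\) is an infinite loop space, so all path components are equivalent. It therefore suffices to compute homotopy at the zero map, where for \(i\geq 1\)
\[
\pi_i \Map(E,F) \cong [\Sigma^i E, F]^G .
\]
For the fixed-point version, use \(\Map(E,F)^H\simeq \Map(G/H_+\otimes E, F)\), which reduces it to the groups \([G/H_+\otimes \Sigma^i E, F]^G\).

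Second, I would iterate the preceding proposition, that suspension carries \(\tau^{\cO}_{\geq k}\) into \(\tau^{\cO}_{\geq k+1}\). Since \(E\in\tau^{\cO}_{\geq k}\) by condition (1) of the definition of a slice, this gives \(\Sigma^i E\in \tau^{\cO}_{\geq k+i}\subseteq \tau^{\cO}_{\geq k+1}\) for \(i\geq1\). Because \(\tau^{\cO}_{\geq k+1}\) is an equivariant localizing subcategory, it is closed under tensoring with orbits, so \(G/H_+\otimes\Sigma^iE\in\tau^{\cO}_{\geq k+1}\) as well.

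Third, use condition (2): \(F\simeq P^k_{\cO}F\). By definition \(P^k_{\cO}\) is the nullification killing \(\tau^{\cO}_{\geq k+1}\), so its values are local for that class. Concretely, for every acyclic object \(A\in\tau^{\cO}_{\geq k+1}\) the equivariant mapping space \(\Map(A,P^k_\cO F)\) is contractible. Applying this with \(A=G/H_+\otimes\Sigma^iE\) shows that all the groups in the first step vanish for \(i\ge1\). Hence every component of \(\Map(E,F)\), and of each of its fixed-point spaces, is weakly contractible, so the category of \(k\)-slices is equivalent to its homotopy category.

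The main point requiring care is the third step. The locality of a nullification is defined against the generators \(\{G_+\otimes_H N^TS^1 \to \ast\}\), and I must check that it extends to the whole localizing subcategory \(\tau^{\cO}_{\geq k+1}\). The standard argument is that the class of objects \(A\) with \(\Map(A,F)\) contractible is closed under homotopy colimits, retracts, and tensoring with orbits. It contains the generators, so it contains the localizing subcategory they generate. One should also note that \(P^k_\cO\) is defined on \(\Sp^G_{\ge0}\), and \(\Sigma^iE\) remains connective, so no issue arises there.
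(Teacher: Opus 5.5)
Your argument is correct and is essentially the paper's proof. Both identify \(\pi_n\) of the mapping space with \([\Sigma^n E, E']^G\) and show it vanishes, using the suspension proposition (\(\Sigma^n E\in\tau^{\mathcal{O}}_{\geq k+n}\subseteq\tau^{\mathcal{O}}_{\geq k+1}\)) together with the fact that the target slice is \(P^k\)-local. You also make explicit two things the paper leaves implicit: why locality against the generators extends to the whole localizing subcategory, and the fixed-point (genuine \(G\)-space) refinement.
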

\begin{proof}
    If \(E,E'\) are \(\cO\)-\(k\)-slices, then they are both in \(\tau_{\geq k}^{\cO}\). By the usual adjunctions, for all \(n\geq 1\), the higher homotopy group \(\pi_n\) of the mapping space are given by
    \[
        \pi_n\Map(E,E')=[\Sigma^n E,E']^G=0,
    \]
    since the preceding proposition implies that \(\Sigma^n E\in\tau_{\geq (k+n)}^{\cO}\).
\end{proof}

\begin{filefdefinition}
We define {\emph{\(n\)\textsuperscript{th} \(\cO\)-slice}} of a connective \(G\)-spectrum \(E\), denoted \(P^n_{n,\cO}(E)\), to be the homotopy fiber of the natural map
\[
P^n_{\cO}(E)\to P^{n-1}_{\cO}(E).
\]
\end{filefdefinition}

\subparagraph{Characterizing slice towers via connectivity}\label{f06:sec: connectivity}

\paragraph{Geometric fixed points and slice connectivity}

We can detect slice connectivity in terms of the connectivity of the geometric fixed points~\cite{f06:HillYarnall, f06:WilsonSlice}.  To express this, it is convenient to define the following function capturing the structure of the indexing system.

\begin{filefdefinition}
    For any transfer system \(\cO\), we define the {\em characteristic function} of $\cO$ 
    \[
        \chi^{\cO}\colon\Sub(G)\to\Sub(G)
    \]
    by the formula
    \[
        \chi^{\cO}(H)=\min\{K\mid K\to H\}=\bigcap_{K\to H} K.
    \]
\end{filefdefinition}

\subparagraph{The geometric fixed points of \(\tau_{\geq n}^{\cO}\)}

Stable equivalences in $\Sp^G$ can be detected as maps that induce non-equivariant stable equivalences on passage to geometric fixed points for all (closed) subgroups of $G$.  It should thus be very plausible that the connectivity of geometric fixed points is a central notion.

\begin{filefdefinition}
    For a \(G\)-spectrum \(E\), let the {\em geometric connectivity}, denoted \(\mgconn(E)\), be the function from subgroups of \(G\) to \(\Z\cup\{\pm\infty\}\) defined by
    \[
        \mgconn(E)(H):= \conn\big(\phi^H(E)\big).
    \]
\end{filefdefinition}

\begin{fileflemma}\label{f06:lem: Slice Connectivity implies Geometric connectivity}
    Let \(\cO\) be a transfer system. If \(E\in\tau_{\geq n}^{\cO}\), then for all \(H\subset G\), 
    \[
        [H:\chi^{\cO}(H)]\cdot \mgconn(E)(H)\geq n.
    \]
\end{fileflemma}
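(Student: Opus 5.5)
The plan is the usual one: show that the class of spectra satisfying the inequality is an equivariant localizing subcategory, and then check the inequality on the generators. Fix $n$ and let $\mathcal{C}_n$ be the full subcategory of $\Sp^G$ consisting of those $E$ with $[H:\chi^{\cO}(H)]\cdot\mgconn(E)(H)\geq n$ for every $H\subseteq G$.

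\textbf{Step 1: $\mathcal{C}_n$ is an equivariant localizing subcategory.} For each fixed $H$, the condition just says that $\phi^H(E)$ is $m_H$-connective for a certain integer $m_H$. Geometric fixed points preserve homotopy colimits and retracts, and connectivity is stable under both, so $\mathcal{C}_n$ is closed under these operations.

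Closure under tensoring with orbits needs a separate argument. Write $G/K_+\otimes E\simeq G_+\otimes_K i_K^*E$. The geometric fixed points $\phi^H(G_+\otimes_K X)$ split as a wedge over the $K$-conjugates $H^g\subseteq K$ of the terms $\phi^{H^g}(X)$. So it suffices to know two things: $\mathcal{C}_n$ is stable under conjugation, which holds because $\chi^{\cO}$ is conjugation-equivariant; and the index $[H:\chi^{\cO}(H)]$ is the same whether computed for $G$ or for the restricted transfer system on $K$. The second point holds because the restricted transfer system has the same arrows into $H\subseteq K$.

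Since the localizing subcategory $\tau^{\cO}_{\geq n}$ is built from its generators by exactly these operations (homotopy colimits, retracts, tensoring with orbits, and inductions $G_+\otimes_H(-)$), it then suffices to check the inequality on the generators $G_+\otimes_H N^TS^1$. Using the induction formula above, this reduces further to checking $N^TS^1=S^{\mathbb R\cdot T}$ as an $H$-spectrum, for $T\in\cO(H)$ with $|T|\geq n$.

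\textbf{Step 2: the generators.} For a subgroup $L\subseteq H$ we have $\phi^L S^{\mathbb R T}=S^{(\mathbb R T)^L}$. This sphere has dimension $|T/L|$, the number of $L$-orbits of $T$, so its connectivity is $|T/L|$. We therefore need
\[
[L:\chi(L)]\cdot|T/L|\geq |T|,
\]
where $\chi$ is the characteristic function of the transfer system restricted to $H$.

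Write $T=\coprod H/K_i$ with each $K_i\to H$. By the restriction axiom, every $L$-orbit in $T$ has stabilizer of the form $L\cap gK_ig^{-1}$, and conjugation invariance gives $L\cap gK_ig^{-1}\to L$. Hence $\chi(L)\subseteq L\cap gK_ig^{-1}$, and each $L$-orbit has size
\[
[L:L\cap gK_ig^{-1}]\leq[L:\chi(L)].
\]
Summing over the $|T/L|$ orbits gives $|T|\leq |T/L|\cdot[L:\chi(L)]$, which is the required inequality.

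\textbf{Main obstacle.} I expect Step 2 to be straightforward. The main care point is the bookkeeping in Step 1, specifically that $\chi^{\cO}$ is compatible with restriction to subgroups and with conjugation. The key fact is that the set of $K$ with $K\to H$ depends only on $H$, not on the ambient group, so $\chi^{\cO}(H)$ computed in $K$ agrees with $\chi^{\cO}(H)$ computed in $G$. A smaller subtlety is to make sure the connectivity convention is that $S^0$ has connectivity $0$; otherwise the generator computation shifts by one.
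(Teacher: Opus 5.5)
Your proof is correct and rests on the same two ingredients as the paper's. First, the class of spectra satisfying the bound is closed under the operations that generate $\tau^{\mathcal{O}}_{\geq n}$. Second, for representation spheres $\phi^L S^{\mathbb{R}T}=S^{|T/L|}$, and the orbit sizes are bounded by $[L:\chi^{\mathcal{O}}(L)]$.

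Where you differ from the paper is in how subgroups are handled.

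The paper first reduces to $H=G$ ``by restriction.'' It then notes that $\phi^G$ kills every induced generator $G_+\otimes_K N^TS^1$ with $K$ proper. So the orbit count is only needed for admissible $G$-sets at the top group.

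You instead keep all subgroups in play. You show the inequality class is closed under induction, using the wedge decomposition of $\phi^H(G_+\otimes_K X)$ indexed by $(G/K)^H$. You then do the orbit count at every $L\subseteq H$, combining conjugation invariance with the restriction axiom to get $L\cap gK_ig^{-1}\to L$.

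The paper's version is shorter, but its opening reduction tacitly needs that $i_H^*$ carries $\tau^{\mathcal{O}}_{\geq n}$ into the corresponding category for $H$. That in turn uses the double coset formula and the fact that restrictions of admissible sets are admissible. Your argument makes exactly this input explicit: your count at $L$ is the paper's count applied to the admissible $L$-set $i_L^*T$. It also never needs to establish compatibility of $\tau^{\mathcal{O}}_{\geq n}$ with restriction.

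One small addition: the paper also treats these localizing subcategories as closed under extensions. You should record that $\mathcal{C}_n$ is extension-closed, which is immediate from exactness of $\phi^H$.
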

\begin{proof}
    By restriction, it suffices to show this for \(H=G\). Since the geometric fixed points preserve homotopy colimits and extensions, it suffices to show this for generators.  Next, since geometric fixed points applied to an induced $G$-spectrum vanish, we are reduced to considering the case of \(N^T S^1\) for \(T\) an admissible \(G\)-set of cardinality at least \(n\). Decompose $T$ as
    \[
    T=\sum_{H} n_H G/H.
    \]
    The geometric fixed points of \(N^T S^1\) are \(S^{|T/G|}\), and in this case, we have
    \[
    |T/G|=\sum_H n_H.
    \]
    We have by assumption
    \[
    |T|=\sum_{H} n_H[G:H]\geq n,
    \]
    and by definition, \([G:\chi^{\cO}(H)]\) is the maximal element in 
    \[
    \big\{[G:H]\mid G/H\in\cO(\ast)\big\}
    \]
    (and in fact, all others divide it). This gives inequalities
    \[
    [G:\chi^{\cO}(G)]\cdot \sum_{H} n_H\geq \sum_{H} n_H [G:H]\geq n,
    \]
    as desired.
\end{proof}

\begin{filefremark}
    If \(\chi^\cO(G)=\{e\}\), then we recover \cite[Theorem 2.5]{f06:HillYarnall}.
\end{filefremark}

For the converse, we can again use isotropy separation, studying the cofiber sequence
\[
E\mathcal F_+\sma E\to E\to \tilde{E}\mathcal F\sma E.
\]  
The spectrum \(E\mathcal F_+\sma E\) is built out of pieces of the form \(G/H_+\sma E\), so this is in a localizing subcategory if and only if the restrictions are.

\begin{fileflemma}\label{f06:lem: Localizing Cats and Families}
    Let \(\mathcal F\) be a family, and let \(\tau\) be an equivariant localizing subcategory. If \(E\) is any \(G\)-spectrum such that for all \(H\in\mathcal F\), \(i_H^\ast E\in i_H^{\ast}\tau\), then
    \[
    (E\mathcal F_+\sma E)\in\tau
    \]
\end{fileflemma}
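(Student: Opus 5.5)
The plan is to build $E\mathcal F_+\sma E$ out of induced cells and observe that each such cell already lies in $\tau$ by the restriction hypothesis, because an equivariant localizing subcategory is closed under inducing up from subgroups.

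First I will fix a model of $E\mathcal F$: a $G$-CW complex whose cells all have the form $G/H_+\sma D^k$ with $H\in\mathcal F$. Its skeletal filtration presents $E\mathcal F_+$ as a sequential homotopy colimit of skeleta $E\mathcal F^{(k)}_+$. Each skeleton is obtained from the previous one by a cofiber sequence
\[
E\mathcal F^{(k-1)}_+\to E\mathcal F^{(k)}_+\to \bigvee G/H_+\sma S^k .
\]
Smashing this filtration with $E$ gives the corresponding description of $E\mathcal F_+\sma E$. Since $\tau$ is closed under homotopy colimits, including cofibers, wedges and sequential colimits, it suffices to show that
\[
G/H_+\sma S^k\sma E\in\tau \qquad\text{for all } H\in\mathcal F,\ k\ge 0 .
\]
Extensions are handled because a cofiber sequence $A\to B\to C$ with $A,C\in\tau$ exhibits $B$ as the fiber of $C\to\Sigma A$, equivalently as a cofiber of $\Sigma^{-1}C\to A$. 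So I will need $\tau$ closed under desuspension, or alternatively I will build $B$ directly as the cofiber of the attaching map $\Sigma^{-1}C\to A$. In the stable setting the attaching maps go from $\bigvee G/H_+\sma S^{k-1}$, so the cofiber formulation is the natural one.

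Next I will use the untwisting equivalence
\[
G/H_+\sma X\simeq G_+\sma_H i_H^\ast X .
\]
This rewrites each cell as $G_+\sma_H i_H^\ast(S^k\sma E)$. By hypothesis $i_H^\ast E\in i_H^\ast\tau$. Here I interpret $i_H^\ast\tau$ as the $H$-level piece of the categorical Mackey functor, that is, the localizing subcategory of $\Sp^H$ generated by the restrictions of the generators. The key point is that induction $G_+\sma_H(-)$ carries $i_H^\ast\tau$ into $\tau$. I will check this on generators, since induction preserves homotopy colimits and retracts. A generator of $i_H^\ast\tau$ is a restriction $i_H^\ast X$ with $X$ a generator of $\tau$, and
\[
G_+\sma_H i_H^\ast X\simeq G/H_+\sma X ,
\]
which lies in $\tau$ because $\tau$ is closed under tensoring with orbits. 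The suspension $S^k$ is harmless because $\tau$ is closed under colimits, so it is closed under $\Sigma$.

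The main obstacle is the interplay between the definition of $i_H^\ast\tau$ and the closure properties. I must make sure that "tensors with orbit spectra" in the definition of an equivariant localizing subcategory gives exactly the induction-compatibility used above. I must also handle the nonconnective direction: $\tau$ is only assumed closed under colimits, not under desuspension, so the skeletal induction must be phrased purely with cofibers of maps into previously built objects. I would first try to make this precise by citing the Mackey-functor description of $\tau$ mentioned after the definition of $\tau^{\cO}_{\geq n}$.
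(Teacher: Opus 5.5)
Your proposal is correct and follows the paper's argument, which defers to Hill--Yarnall's Lemma~2.4. In both, $E\mathcal F_+\otimes E$ is built from cells $G/H_+\otimes E$ with $H\in\mathcal F$, each of which is $G_+\otimes_H i_H^\ast E\in\tau$ by the untwisting equivalence; your extra bookkeeping (attaching cells along nonnegative suspensions so no desuspension is needed, and checking on generators that induction carries $i_H^\ast\tau$ into $\tau$) makes explicit two points the paper leaves implicit.
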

\begin{proof}
    This follows by the same proof as \cite[Lemma 2.4]{f06:HillYarnall}: the spectrum \(E\mathcal F_+\otimes E\) is in the localizing category generated by \(G/H_+\otimes E\) for \(H\in\mathcal F\). By assumption, we have an inclusion
    \[
        G/H_+\otimes E\cong G_+\otimesover{H} i_H^\ast E\in\tau.
    \]
\end{proof}

\subparagraph*{The \texorpdfstring{\(\cO\)}{O}-slices of geometric spectra}
Our argument will use downward induction on the subgroup lattice, so we will need to understand the \(\cO\)-slice connectivity of \(\tilde{E}\mathcal P\otimes E\), where \(\mathcal P\) is the family of proper subgroups of \(G\).  Recall that a \(G\)-spectrum \(E\) is called ``geometric'' if the natural map
\[
	E\to \tilde{E}\mathcal P\otimes E
\]
is an equivalence \cite[Definition 6.10]{f06:HillSlicePrimer}, and a Mackey functor \(\mM\) is geometric if \(H\mM\) is. The proof of \cite[Theorem 6.7]{f06:HillSlicePrimer} goes through essentially without change to show the following.

\begin{fileflemma}\label{f06:lem: Slice Connectivity of Geometric Things}
    Let \(\mM\) be a geometric Mackey functor. For any \(\cO\), 
    \[
        \Sigma^k H\mM
    \]
    is a \(k\cdot [G:\chi^{\cO}G]\)-\(\cO\)-slice.
\end{fileflemma}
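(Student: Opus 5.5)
Write $m=[G:\chi^{\cO}G]$ and $E=\Sigma^k H\mM$. We must check the two conditions in the definition of an $\cO$-$km$-slice: that $E\in\tau_{\geq km}^{\cO}$, and that $E\to P^{km}_{\cO}E$ is an equivalence, i.e. $[X,E]^G=0$ for every generator $X=G_+\otimesover{H}N^T S^1$ with $T\in\cO(H)$ and $|T|\geq km+1$. We follow \cite[Theorem 6.7]{f06:HillSlicePrimer}, and throughout use that $E$ is geometric: $E\simeq\tilde E\mathcal P\otimes E$. Consequently $i_H^\ast E\simeq 0$ for every proper $H$, and maps into $E$ factor through $\tilde E\mathcal P$. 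So everything is controlled by the $G$-geometric fixed points, and $\phi^G E\simeq\Sigma^k\phi^G H\mM$.

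\textbf{Upper bound.} For a generator induced from a proper subgroup $H$, adjunction gives
\[
[G_+\otimesover{H}N^TS^1,E]^G=[N^TS^1,i_H^\ast E]^H=0,
\]
since $i_H^\ast E\simeq 0$. For $H=G$ and $T$ admissible with $|T|\geq km+1$, the geometric-ness yields
\[
[N^TS^1,E]^G\cong[\phi^G N^TS^1,\phi^G E]=[S^{|T/G|},\Sigma^k\phi^GH\mM].
\]
Write $T=\sum_K n_K\,G/K$. Each orbit $G/K$ is admissible, so $[G:K]$ divides $m$, as noted in the proof of Lemma~\ref{f06:lem: Slice Connectivity implies Geometric connectivity}. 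Hence
\[
|T/G|\cdot m\geq |T|>km, \qquad\text{so}\qquad |T/G|>k.
\]
The spectrum $\phi^GH\mM$ is connective with $\pi_0\phi^G H\mM=\mM(G/G)/(\text{transfers})=\mM(G/G)$, using that $\mM$ is geometric. The group $[S^{|T/G|},\Sigma^k\phi^GH\mM]$ vanishes provided $\phi^G H\mM$ is concentrated in degree $0$. This is the crux, and it is where geometric-ness is used again: for geometric $\mM$ we have $\tilde E\mathcal P\otimes H\mM\simeq H\mM$, whose $G$-fixed points are $H\mM(G/G)$. So $\phi^G H\mM\simeq (\tilde E\mathcal P\otimes H\mM)^G=H(\mM(G/G))$ is discrete, and the group vanishes.

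\textbf{Lower bound.} We must show $E\in\tau^{\cO}_{\geq km}$. Choose $T=G/\chi^{\cO}G$, which is admissible and has cardinality $m$. Then $kT=\coprod^k G/\chi^{\cO}G$ is admissible with $|kT|=km$, so $N^{kT}S^1=S^{\mathbb R\cdot kT}\in\tau_{\geq km}^{\cO}$. Since $\tau_{\geq km}^{\cO}$ is a localizing subcategory closed under tensoring with $\tilde E\mathcal P$ (a colimit of orbit spectra), we get $\tilde E\mathcal P\otimes S^{\mathbb R\cdot kT}\otimes H\mM\in\tau^{\cO}_{\geq km}$, using that $H\mM$ is connective and hence built from orbits. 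Finally, $S^{\mathbb R\cdot kT}\otimes\tilde E\mathcal P\simeq S^{k}\otimes\tilde E\mathcal P$, because the $G$-fixed points of $\mathbb R\cdot T$ are one-dimensional and $\tilde E\mathcal P$ only sees the $G$-fixed part of a representation sphere. This identifies the spectrum above with $\Sigma^kH\mM$.

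\textbf{Expected obstacle.} The main point requiring care is the final identification $\tilde E\mathcal P\otimes S^V\simeq\tilde E\mathcal P\otimes S^{V^G}$, together with the claim that tensoring with $\tilde E\mathcal P$ preserves $\tau^{\cO}_{\geq km}$. If closure under $\tilde E\mathcal P$ is not immediate, I would instead write $\tilde E\mathcal P$ as the cofiber of $E\mathcal P_+\to S^0$. The fiber term $E\mathcal P_+\otimes\Sigma^{k}H\mM$ is contractible, since $\mM$ restricts to zero on proper subgroups, so it causes no obstruction.
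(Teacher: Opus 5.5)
Your proposal is correct and follows essentially the same route as the paper. For the lower bound, you identify $\Sigma^k H\underline{M}$ with $N^{T}S^1\otimes H\underline{M}$ for $T=k\,G/\chi^{\mathcal{O}}(G)$ via the fixed-point inclusion $S^{k}\to S^{\mathbb{R}\cdot T}$ (your $\tilde E\mathcal{P}$ phrasing is the same equivalence); for the upper bound, you pass to $\Phi^G$, use $\Phi^G H\underline{M}\simeq H(\underline{M}(G/G))$ and the count $|T'|>k[G:\chi^{\mathcal{O}}(G)]\Rightarrow|T'/G|>k$, and additionally make explicit the vanishing for generators induced from proper subgroups (via $i_H^\ast H\underline{M}\simeq 0$), which the paper leaves implicit.
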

\begin{proof}
    Since \(\mM\) is geometric, we have that for any finite \(G\)-set \(T\), the natural map
   \[
    S^{|T/G|}\hookrightarrow N^T S^1
   \]
   given by the inclusion of fixed points induces an equivalence
   \[
   S^{|T/G|}\otimes H\mM\to N^TS^1\otimes H\mM.
   \]
   We can bound the \(\cO\)-slice connectivity from below by choosing an \(\cO\)-admissible \(T\) with \(|T|\) as large as possible so that \(|T/G|=k\) is fixed. This is again achieved by taking
   \[
   T=k G/\chi^{\cO}(G),
   \]
   since \(\chi^{\cO}(G)\) is the minimal subgroup \(H\) such that \(H\to G\). This shows us that
   \[
   \Sigma^k H\mM\in\tau_{\geq k[G: \chi^{\cO}(G)]}^{\cO}.
   \]
   
   For the upper bound, consider an admissible \(G\)-set \(T\) such that 
   \[
   |T|>k[G:\chi^{\cO}(G)].
   \]
   Since \(k[G:\chi^{\cO}(G)]\) is the largest cardinality of an admissible \(G\)-set with \(k\)-orbits, we deduce that \(|T/G|>k\). Since \(\mM\) is geometric, we therefore deduce
   \[
   [N^TS^1,\Sigma^kH\mM]^G\cong [\Phi^G N^TS^1,\Sigma^k H\mM(G/G)]
   \cong [S^{|T/G|},\Sigma^{k}H\mM(G/G)]=0.
   \]
   This shows that \(H\mM\) is a \(k[G: \chi^{\cO}(G)]\)-slice.
\end{proof}

\paragraph{Rewriting \texorpdfstring{\(\cO\)}{O}-slice connectivity}
Putting these together, we get the full \(\cO\)-slice version of \cite[Theorem 2.5]{f06:HillYarnall}. 

\begin{fileftheorem}\label{f06:thm: GFP Reformulation}
    A \(G\)-spectrum \(E\) is in \(\tau_{\geq n}^{\cO}\) if and only if for all \(H\subset G\),
    \[
    [H:\chi^{\cO}(H)]\cdot\mgconn(E)(H)\geq n.
    \]
\end{fileftheorem}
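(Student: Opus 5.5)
The forward direction is exactly Lemma~\ref{f06:lem: Slice Connectivity implies Geometric connectivity}, so only the converse needs an argument. We follow the Hill--Yarnall strategy and induct on the order of $G$. Assume $E$ is connective with $[H:\chi^{\cO}(H)]\cdot\mgconn(E)(H)\geq n$ for all $H\subseteq G$, and that the theorem holds for all proper subgroups $H<G$, using the restricted transfer system $i_H^\ast\cO$. First we check that restriction is compatible with the data. Geometric fixed points of $i_H^\ast E$ at $K\subseteq H$ agree with $\phi^K E$. The characteristic function of the restricted transfer system at $K$ is still $\chi^{\cO}(K)$, since the relation $L\to K$ only involves subgroups of $K$. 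Finally, $i_H^\ast\tau^{\cO}_{\geq n}\subseteq\tau^{i_H^\ast\cO}_{\geq n}$, and conversely $G_+\otimes_H(-)$ carries $\tau^{i_H^\ast\cO}_{\geq n}$ into $\tau^{\cO}_{\geq n}$. This holds because the generators $H_+\otimes_K N^T S^1$ with $T\in\cO(K)$ induce up to generators of $\tau^{\cO}_{\geq n}$. With these checks, the inductive hypothesis gives $i_H^\ast E\in\tau^{i_H^\ast\cO}_{\geq n}$ for every proper $H$.

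Next we use the isotropy separation sequence $E\mathcal P_+\otimes E\to E\to\tilde E\mathcal P\otimes E$. By Lemma~\ref{f06:lem: Localizing Cats and Families} and the induction compatibility above, the term $E\mathcal P_+\otimes E$ lies in $\tau^{\cO}_{\geq n}$. Since $\tau^{\cO}_{\geq n}$ is closed under extensions (cofibers of maps between its objects, as a localizing subcategory), it suffices to show that $\tilde E\mathcal P\otimes E\in\tau^{\cO}_{\geq n}$.

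The geometric spectrum $X=\tilde E\mathcal P\otimes E$ has $\phi^G X=\phi^G E$, which is $m$-connective with $m=\mgconn(E)(G)$ and $[G:\chi^{\cO}(G)]\,m\geq n$. Geometric spectra are determined by $\phi^G$: the category of geometric $G$-spectra is equivalent to modules over $\phi^G$ of the sphere, in fact to spectra via $\phi^G$. So we would write $X$ as a limit of its Postnikov tower, or better as a colimit built from $\Sigma^k H\mM$ with $\mM$ geometric and $k\geq m$. Concretely, $X$ is the colimit of the Postnikov-type truncations $\tilde E\mathcal P\otimes(\text{cells})$, whose layers are $\Sigma^k H\mM_k$ with $\mM_k$ geometric and $k\geq m$. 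By Lemma~\ref{f06:lem: Slice Connectivity of Geometric Things}, each layer lies in $\tau^{\cO}_{\geq k[G:\chi^{\cO}G]}\subseteq\tau^{\cO}_{\geq n}$.

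The main obstacle is this last step: passing from the layers to $X$ itself. A Postnikov tower is a limit, and localizing subcategories are closed under colimits, not limits. We would circumvent this in either of two ways. The first is to build $X$ directly as a colimit. Since $\phi^G X$ is $m$-connective, it has a cell structure using only spheres $S^k$ with $k\geq m$. Transporting this structure gives $X$ as a colimit of cells $\tilde E\mathcal P\otimes S^k$, and $\tilde E\mathcal P\otimes S^k$ is equivalent to $\tilde E\mathcal P\otimes N^{T}S^1$ with $T=kG/\chi^{\cO}(G)$, because geometric fixed points of $N^T S^1$ are $S^{|T/G|}$. We then need to show that $\tilde E\mathcal P\otimes N^TS^1\in\tau^{\cO}_{\geq n}$. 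For this we run isotropy separation again: $\tilde E\mathcal P\otimes Y$ is the cofiber of $E\mathcal P_+\otimes Y\to Y$ with $Y=N^TS^1\in\tau^{\cO}_{\geq k[G:\chi^{\cO}G]}$. The fiber term $E\mathcal P_+\otimes Y$ is in the subcategory by Lemma~\ref{f06:lem: Localizing Cats and Families}, since its restrictions to proper subgroups are, and cofibers stay inside. The second option, as a fallback, is to use that $\tau^{\cO}_{\geq n}$ is the connective part of a t-structure and check membership by the vanishing of maps into $\cO$-slice $(n-1)$-truncated objects.
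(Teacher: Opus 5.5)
Your proof is correct. Its skeleton matches the paper's. The forward direction is the lemma on generators. The converse runs isotropy separation $E\mathcal{P}_+\otimes E\to E\to \tilde E\mathcal{P}\otimes E$, handles the first term by induction over proper subgroups via the families lemma, and concludes by closure under extensions.

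Where you genuinely differ is the geometric term. The paper simply invokes its lemma on $\Sigma^k H\underline{M}$ for geometric Mackey functors $\underline{M}$. That lemma covers only Eilenberg--MacLane spectra, and the paper leaves implicit how to pass to an arbitrary geometric spectrum $\tilde E\mathcal{P}\otimes E$. This is exactly the limit-versus-colimit problem you flag. Your fallback is one way to close it: objects right-orthogonal to $\tau^{\mathcal{O}}_{\geq n}$ are bounded above, so only finitely many Postnikov layers matter.

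Your first option sidesteps Eilenberg--MacLane spectra entirely. Via the equivalence of geometric $G$-spectra with spectra, $\tilde E\mathcal{P}\otimes E$ is built by colimits from cells $\tilde E\mathcal{P}\otimes S^k$ with $k\geq m$. Each such cell is $\tilde E\mathcal{P}\otimes N^{T}S^1$ with $T=k\,G/\chi^{\mathcal{O}}(G)$. This lies in $\tau^{\mathcal{O}}_{\geq n}$ as the cofiber of $E\mathcal{P}_+\otimes N^TS^1\to N^TS^1$, two objects of the subcategory. This route uses only the lower-bound half of the idea behind the paper's lemma; the upper-bound (slice) half is irrelevant to the theorem. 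It makes the converse self-contained.

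Three minor points:
\begin{itemize}
\item Admissibility of $k\,G/\chi^{\mathcal{O}}(G)$ requires $\chi^{\mathcal{O}}(G)\to G$. This follows from restriction-closure plus transitivity: if $K\to G$ and $K'\to G$, then $K\cap K'\to K'\to G$. The paper uses this tacitly too.
\item Your standing assumption that $E$ is connective is harmless. For $n\geq 0$ the hypothesis already forces every $\Phi^H E$ to be connective.
\item Your parenthetical justification of extension-closure is off. An extension of $B$ by $A$ is the fiber of a map $B\to\Sigma A$, not a cofiber of a map between objects of the subcategory, and closure under colimits does not by itself give closure under extensions. So extension-closure must be taken as part of the definition of a localizing subcategory, as in HHR. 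The paper's proof makes the same tacit assumption.
\end{itemize}
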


\begin{proof}
    The proof is essentially that of \cite[Theorem 2.5]{f06:HillYarnall}. The forward direction is  Lemma~\ref{f06:lem: Slice Connectivity implies Geometric connectivity}. 
    
    For the other direction, let \(E\) be a spectrum with the prescribed geometric connectivities. Consider the isotropy separation sequence
    \[
        EP_+\otimes E\to E\to \tilde{E}\mathcal P\otimes E.
    \]
    By Lemma~\ref{f06:lem: Slice Connectivity of Geometric Things}, the \(\cO\)-slice connectivity of \(\tilde{E}\mathcal P\otimes E\) is at least \(n\). By induction on the subgroup lattice, Lemma~\ref{f06:lem: Localizing Cats and Families} shows that \(EP_+\otimes E\) also has \(\cO\)-slice connectivity \(n\). Since localizing categories are closed under extensions, this implies that \(E\) has \(\cO\)-slice connectivity \(n\).   
\end{proof}

Rewriting this slightly, we have a way to describe the slice connectivity of an arbitrary \(0\)-connective spectrum.

\begin{filefcorollary}\label{f06:cor: Determining Slice Connectivity}
    If \(E\in\Sp^{G}_{\geq 0}\), then let 
    \[
        n=\min_{H\subseteq G}\big\{[H:\chi^{\cO}(H)]\cdot\mgconn(E)(H)\big\}.
    \]
    Then \(E\in\tau_{\geq n}^{\cO}\).
\end{filefcorollary}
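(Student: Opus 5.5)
This follows directly from Theorem~\ref{f06:thm: GFP Reformulation}, applied with the integer \(n\) defined in the statement. The plan is to check that \(n\) is well defined and then verify the hypothesis of that theorem.

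First, \(n\) is well defined. Since \(G\) is finite, \(\Sub(G)\) is a finite set, so the minimum is taken over finitely many values in \(\Z\cup\{\pm\infty\}\). Because \(E\) is connective, each geometric fixed point spectrum \(\phi^H(E)\) is connective. This holds because geometric fixed points preserve connectivity: the connective \(G\)-spectra are generated under colimits and extensions by the suspension spectra \(G/K_+\), and \(\phi^H\) carries these to connective spectra. Hence \(\mgconn(E)(H)\geq 0\) for every \(H\), and each term \([H:\chi^{\cO}(H)]\cdot\mgconn(E)(H)\) lies in \(\Z_{\geq 0}\cup\{+\infty\}\).

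Two conventions need a word. If all the terms are \(+\infty\), then every \(\phi^H E\) vanishes, so \(E\simeq 0\) and lies in every \(\tau_{\geq n}^{\cO}\). Otherwise \(n\) is a finite nonnegative integer. The product \(k\cdot(+\infty)\) is read as \(+\infty\), which is harmless since the index \([H:\chi^{\cO}(H)]\) is at least \(1\).

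By the definition of \(n\) as a minimum, \([H:\chi^{\cO}(H)]\cdot\mgconn(E)(H)\geq n\) for every subgroup \(H\subseteq G\). This is exactly the hypothesis of the ``if'' direction of Theorem~\ref{f06:thm: GFP Reformulation}, which then gives \(E\in\tau_{\geq n}^{\cO}\).

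The only point needing care is the connectivity of geometric fixed points of a connective spectrum, which guarantees the minimum is a nonnegative integer rather than \(-\infty\). This is standard. One could also add that \(n\) is optimal: by Lemma~\ref{f06:lem: Slice Connectivity implies Geometric connectivity}, \(E\notin\tau_{\geq n+1}^{\cO}\) whenever \(n\) is finite.
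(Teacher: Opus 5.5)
Your proposal is correct and is essentially the paper's argument. The paper gives no separate proof: it presents the corollary as a direct rewriting of Theorem~\ref{f06:thm: GFP Reformulation}, applied with $n$ taken to be the minimum. Your extra remarks (geometric fixed points of a connective spectrum are connective, the $+\infty$ convention, and optimality of $n$ via the forward lemma) add useful detail the paper leaves implicit.
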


{\PatchBibSection

\EndPatchBibSection}
\endgroup

\subsection{Question 6: Daniel Spielman}
\textit{Author:} Dan Spielman\par\medskip
\textit{Title:} Light Sets of Vertices\par\medskip
\begingroup
\def\trace#1{\mathrm{Tr} \left(#1 \right)}
\def\calA{\mathcal{A}}
\def\calB{\mathcal{B}}
\def\Ltil{\widetilde{L}}
\def\union{\cup}
\def\intersect{\cap}
\def\floor#1{\left\lfloor #1 \right\rfloor}
\def\norm#1{\left\| #1 \right\|}
\def\ddelta{\boldsymbol{\delta}}
\def\pleq{\preccurlyeq}
\def\pgeq{\succcurlyeq}
\def\defeq{\stackrel{\mathrm{def}}{=}}
\def\setof#1{\left\{#1  \right\}}
\def\sizeof#1{\left|#1  \right|}

\def\trs#1{\mathrm{Tr}_{\sigma} \left(#1 \right)}
\def\Trs{\mathrm{Tr}_{\sigma}}

\newtheorem{filehtheorem}{Theorem}[section]
\newtheorem{filehcorollary}[filehtheorem]{Corollary}
\newtheorem{filehlemma}[filehtheorem]{Lemma}
\newtheorem{filehobservation}[filehtheorem]{Observation}
\newtheorem{filehproposition}[filehtheorem]{Proposition}
\newtheorem{filehdefinition}[filehtheorem]{Definition}
\newtheorem{filehclaim}[filehtheorem]{Claim}
\newtheorem{filehfact}[filehtheorem]{Fact}
\newtheorem{filehassumption}[filehtheorem]{Assumption}
\newtheorem{filehwarning}[filehtheorem]{Warning}
\newtheorem{filehconjecture}[filehtheorem]{Conjecture}


\renewcommand{\dim}[1]{\mathsf{dim}(#1)}
\providecommand{\codegen}[1]{\mathsf{codegen}(#1)}

\providecommand{\disc}[1]{\mathsf{disc}(#1)}
\providecommand{\adj}[1]{\mathsf{adj}(#1)}

\providecommand{\spn}[1]{\mathsf{span}(#1)}
\providecommand{\nul}[1]{\mathsf{null}(#1)}

\setcounter{equation}{0}

Throughout this note, $G = (V,E,w)$ will be a weighted graph with $n$ vertices. 
For an edge $(s,t) \in E$, we let $w(s,t)$ be its weight.
For two vertex sets, $S$ and $T$, the subgraph $G_{S,T}$ of $G$ has vertex set $V$,
  but only the edges going between vertices in $S$ and $T$.
We write $G_{S}$ for the graph that only contains the edges between vertices in $S$.

The matrix $L$ is the Laplacian of $G$, which we recall may be defined by
\[
  L = \sum_{(s,t) \in E} w(s,t) (\ddelta_s - \ddelta_t) (\ddelta_s - \ddelta_t)^T,
\]
where $\ddelta_s$ is the elementary unit vector with a 1 in position $s$.
We let $L_{S}$ denote the Laplacian of $G_{S}$.
As $G_{S}$ and $G$ have been defined to have the same vertex set,
  $L_{S}$ has the same dimension as $L$.

\begin{filehlemma}\label{f08:lem:light}
For every weighted graph $G = (V,E,w)$ with $n$ vertices, and for every $0 < \epsilon < 1$,
  there is an $S \subseteq V$ of size at least $\epsilon n / 42$
  so that
\[
  \epsilon L \pgeq  L_{S} .
\]
\end{filehlemma}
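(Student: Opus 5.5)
The plan is to build $S$ greedily, one vertex at a time, controlling the normalized matrix $A_S := L^{+/2} L_S L^{+/2}$ with an upper-barrier potential in the style of Batson--Spielman--Srivastava. We work on the range of $L$. We may assume $G$ is connected, or else treat each component separately. The goal $\epsilon L \pgeq L_S$ is then equivalent to $\lambda_{\max}(A_S) \le \epsilon$.

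\textbf{Step 1: restrict to low-leverage candidates.} For each vertex $v$ let $L_v$ be the Laplacian of the star of edges at $v$, and put $\ell(v) := \trace{L^+ L_v}$. Since $\sum_v L_v = 2L$, we get $\sum_v \ell(v) = 2(n-1)$. So at least $n/2$ vertices satisfy $\ell(v) \le 4$, and we call these the candidates $C$.

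\textbf{Step 2: the increment when a vertex is added.} Adding $v \notin S$ to $S$ adds exactly the edges between $v$ and $S$. Writing $X_v := L^{+/2} L_{\{v\},S} L^{+/2}$, we have $A_{S \cup \{v\}} = A_S + X_v$. These increments have two useful properties.
\begin{itemize}
\item Each edge between $S$ and $V\setminus S$ is counted once, so $\sum_{v \in C\setminus S} X_v \pleq L^{+/2} L_{S,V} L^{+/2} \pleq I$.
\item Every such edge touches a vertex of $S \subseteq C$, so $\trace{\sum_{v} X_v} \le \sum_{s \in S} \ell(s) \le 4|S|$.
\end{itemize}
The increments are therefore small in two senses: bounded in operator order by $I$, and of total trace only $O(|S|)$. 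The trace bound is the quantitative source of the $\epsilon^2$-type smallness that a random set of size $\epsilon n$ would exhibit.

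\textbf{Step 3: the barrier argument.} We keep an upper barrier $u$ with $A_S \prec uI$ and the potential $\Phi_u(A) = \trace{(uI - A)^{-1}}$, taken on the range of $L$. At each step we shift $u \mapsto u' = u + \delta$. By the BSS upper-barrier lemma, any $v$ satisfying
\[
\frac{\trace{(u'I-A)^{-2} X_v}}{\Phi_u(A)-\Phi_{u'}(A)} + \trace{(u'I-A)^{-1}X_v} \le 1
\]
can be added without increasing the potential. It suffices to show that the average of the left-hand side over the at least $n/2 - |S| \ge n/4$ unused candidates is at most $1$.

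The naive bound $\sum_v X_v \pleq I$ only gives an average of order $\Phi/n \approx 1/u$, which is useless. So the trace bound of Step 2 has to be used. Write $B = (u'I - A)^{-1}$ and $Y = \sum_v X_v$, so that $0 \pleq Y \pleq I$ and $\trace{Y} \le 4|S|$. Then $\trace{BY}$ is at most the sum of the $4|S|$ largest eigenvalues of $B$.

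The main obstacle is controlling these few largest eigenvalues, rather than the full trace. I would first try replacing $\Phi$ by the shifted potential $\Phi_u(A) - (n-1)/u$, which measures only the deviation of $A$ from $0$. This should give
\[
\trace{BY} \lesssim \frac{|S|}{u} + \bigl(\text{potential excess}\bigr).
\]

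\textbf{Step 4: choosing parameters.} Start with $u_0 = \epsilon/2$ and $\delta \asymp \epsilon/|S_{\text{final}}|$ with $|S_{\text{final}}| = \epsilon n/42$, so that $u$ stays at most $\epsilon$ throughout. With these choices the averaged quantity becomes roughly $O(|S|/(nu)) + O(1/(n\delta))$, and the constant $42$ should be what makes it at most $1$ over all $\epsilon n/42$ steps. If the shifted-potential bookkeeping resists, my fallback is a two-stage argument: take a random subset of candidates of density about $\epsilon$, use matrix concentration with the leverage bounds from Step 1 to get $\mathbb{E}\, A_S \pleq O(\epsilon^2) I$ plus a bounded deviation, and then prune with the barrier method.
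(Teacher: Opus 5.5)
\textbf{Verdict: genuine gap.} Your overall architecture matches the paper's: greedy addition of vertices, a BSS-type upper barrier, and leverage scores to control the edges leaving $S$. Your Step 1 pre-filtering to vertices with $\ell(v)\le 4$ is a valid, even simpler, substitute for the paper's dynamically maintained bound $\ell(S)\le 4|S|$. The gap is exactly the point you flag as the main obstacle in Step 3, and it cannot be fixed by bookkeeping alone.

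\textbf{Why the full potential fails.} With $\Phi_u(A)=\mathrm{Tr}\,(uI-A)^{-1}$ on the $(n-1)$-dimensional range of $L$, the BSS lemma only preserves $\Phi_u(A_S)\le \Phi_{u_0}(0)=(n-1)/u_0$. This allows the excess $\Phi_u(A_S)-(n-1)/u$ to be as large as $(n-1)(1/u_0-1/u)$. That quantity is of order $n/\epsilon$ once $u$ has moved by a constant fraction of $\epsilon$, which it does under your choice $\delta\asymp\epsilon/|S_{\mathrm{final}}|$. Inserting this into your bound $\mathrm{Tr}(BY)\lesssim |S|/u+(\text{excess})$ and averaging over $n/4$ candidates gives order $1/\epsilon$, not at most $1$. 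Your Step 4 estimate $O(|S|/(nu))+O(1/(n\delta))$ silently drops the excess term.

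\textbf{Why the shifted potential does not fit the BSS lemma.} The difference $\Psi_u(A)-\Psi_{u+\delta}(A)$, where $\Psi_u(A)=\Phi_u(A)-(n-1)/u$, only involves the nonzero eigenvalues of $A$; in particular it vanishes at $A=0$. So it cannot dominate $\mathrm{Tr}(B^2X_v)$, which includes the part of $X_v$ lying on the kernel of $A$, of size up to $\mathrm{Tr}(X_v)/u'^2$.

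\textbf{The paper's fix.} The missing idea is to truncate the barrier to the top $\sigma=\lfloor \epsilon n/42\rfloor$ eigenvalues, $\Phi^u_\sigma(A)=\sum_{i\le \sigma}1/(u-\lambda_i)$. Here $\sigma$ is the target size, so it bounds the rank of $A_S$ throughout.
\begin{itemize}
\item The Sherman--Morrison--Woodbury step lemma is re-proved for this potential using the Ky Fan trace $\mathrm{Tr}_\sigma$ (sum of the $\sigma$ largest eigenvalues). This trace is subadditive and satisfies $\mathrm{Tr}_\sigma(PQ)=\mathrm{Tr}_\sigma(QP)$ for $P\succeq 0$.
\item The initial potential is now at most $\sigma/u_0\le n/21$, instead of $(n-1)/u_0\approx 2n/\epsilon$. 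Raising $u$ therefore frees only $O(n)$ slack, and the invariant $\Phi^u_\sigma(A_S)\le n/21$ is maintained.
\item In the averaging, write $M=(u+\delta)I-A_S$ and split $M^{p}$ ($p=-1,-2$) along the range of $A_S$ and its complement. The range part sums to at most $\mathrm{Tr}_\sigma(M^p)$, using $\sum_t X_t\preceq I$ and Ky Fan's maximum principle. The complement part is at most $(u+\delta)^p\ell(S)\le 4|S|(u+\delta)^p\le 4\,\mathrm{Tr}_\sigma(M^p)$; this is where your trace bound enters.
\item Together these give $\sum_t U(S,t)\le 5/\delta+5\phi$ with $\delta=21/n$ and $\phi=n/21$. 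The average is then below $1/2$, and $u$ ends at $\epsilon/2+21\sigma/n\le\epsilon$.
\end{itemize}

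\textbf{Other routes.} Your shifted potential can in principle be rescued by an additive-drift analysis instead of monotonicity. Per step the excess grows by $O(|S|/(n\epsilon^2))$, for a total of $O(\sigma^2/(n\epsilon^2))=O(n/C^2)$ when $|S|\le \epsilon n/C$. That is a different lemma, however, and done crudely it needs $C$ larger than $42$. The matrix-concentration fallback involves a non-independent quadratic chaos in the vertex indicators and would lose logarithmic factors, so it does not obviously give an $n$-independent constant.
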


We call such a set of vertices $S$ an $\epsilon$-light set.
A set $S$ is $0$-light if and only if it is independent, and we could view lightness as a qualitative measure of independence.
We might have called it ``spectral independence,'' if that term were not already in use.

This lemma was proved by Daniel Spielman while working on the paper ``Sparsified Cholesky Solvers for SDD linear systems'', written with Richard Peng and Yin-Tat Lee~\cite{f08:LeePengSpielman}. 
We decided not to include the lemma in that paper because, while it could be used to obtain interesting variants of some results, it was not necessary for the main results in that paper.
That paper evolved into the paper ``Sparsified Cholesky and Multigrid Solvers for Connection Laplacians,'' written with Rasmus Kyng, Yin Tat Lee, Richard Peng and Sushant Sachdeva~\cite{f08:kyng2016sparsified}.

\subparagraph{Proof Strategy}
We define $L_{S,T}$ to be the Laplacian of $G_{S,T}$.
For a vertex $t$ and a subset of vertices $S$, we define $L_{S,t}$ to be the Laplacian of $G_{S , \setof{t}}$.

For a matrix $L$, we write its pseudo-inverse as $L^\dagger$. 
We write $L^{\dagger/2}$ for the square root of the pseudo-inverse.
We will prove the following statement that is equivalent to Lemma~\ref{f08:lem:light}
\[
  \norm{ L^{\dagger/2} L_{S} L^{\dagger/2}} \leq \epsilon .
\]

We will find it convenient to multiply all Laplacian matrices on the left and right by $L^{\dagger/2}$. 
So, we define 
\[
  \Ltil_S = L^{\dagger/2} L_{S} L^{\dagger/2},
  \quad 
  \Ltil_{S,T} = L^{\dagger/2} L_{S,T} L^{\dagger/2},
  \quad 
  \Ltil_{S,t} = L^{\dagger/2} L_{S,t} L^{\dagger/2},
\]
and recall that $L^{\dagger/2} L L^{\dagger/2} \defeq \Pi $ 
  is a symmetric projection matrix.

We are going to build up $S$ in a greedy fashion.
We will begin with a singleton set, and then add one vertex at a time.
As we add vertices to $S$, we will need to maintain bounds on two quantities: a modification of the upper barrier function from \cite{f08:BSS} and the sum of the leverage scores of edges between $S$ and $V \setminus S$. 

The leverage score of an edge $(s,t)$ is defined to be $w(s,t)$ times the effective resistance between $s$ and $t$:
\[
  \ell(s,t  ) = 
  w(s,t) (\ddelta_s - \ddelta_t)^T L^{\dagger} (\ddelta_s - \ddelta_t)
  = \trace{w(s,t) (\ddelta_s - \ddelta_t) (\ddelta_s - \ddelta_t)^T L^{\dagger} }
  = \trace{L_{\setof{s}, \setof{t}} L^{\dagger}}.
\]
For vertices $s$ and $t$ for which $(s,t)$ is not an edge, we define $\ell(s,t) = 0$.
For subsets of vertices $S$ and $T$, we define
\[
  \ell(S,T) \defeq \sum_{s \in S} \sum_{t \in T} \ell(s,t)
  = \sum_{s \in S} \sum_{t \in T : (s,t) \in E} \ell(s,t),
\]
and 
\[
  \ell(S) \defeq \ell(S,V - S).
\]

\begin{filehclaim}\label{f08:clm:ellSt}
For $S$ and $T$ subsets of vertices, $\ell(S, T) = \trace{\Ltil_{S,T}}$.
\end{filehclaim}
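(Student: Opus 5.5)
The claim follows by linearity of the trace, after expressing $L_{S,T}$ as a sum of single-edge Laplacians. I first write the Laplacian of $G_{S,T}$ as
\[
  L_{S,T} = \sum_{s \in S} \sum_{t \in T : (s,t) \in E} w(s,t) (\ddelta_s - \ddelta_t)(\ddelta_s - \ddelta_t)^T
  = \sum_{s \in S}\sum_{t \in T} L_{\setof{s},\setof{t}},
\]
where $L_{\setof{s},\setof{t}}$ is zero when $(s,t)$ is not an edge. This uses the convention for $\ell(S,T)$ in which each ordered pair $(s,t)$ with $s\in S$, $t\in T$ is counted. When $S$ and $T$ are disjoint (the case used below, $T = V - S$), each edge is counted exactly once, so this matches the definition of $G_{S,T}$ literally. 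When $S$ and $T$ overlap, an edge with both endpoints in $S\cap T$ is counted twice in the double sum. I would handle this by interpreting $L_{S,T}$ via the same ordered-pair convention, which is consistent with how $\ell(S,T)$ is defined. The overlapping case is not needed for $\ell(S)=\ell(S,V-S)$.

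Next I use cyclicity of the trace together with $L^{\dagger/2}L^{\dagger/2} = L^\dagger$:
\[
  \trace{\Ltil_{S,T}} = \trace{L^{\dagger/2} L_{S,T} L^{\dagger/2}} = \trace{L_{S,T} L^{\dagger}}.
\]
Expanding $L_{S,T}$ by linearity then gives
\[
  \trace{L_{S,T}L^\dagger} = \sum_{s\in S}\sum_{t\in T} \trace{L_{\setof{s},\setof{t}} L^{\dagger}} = \sum_{s\in S}\sum_{t\in T}\ell(s,t) = \ell(S,T),
\]
where the middle equality is the identity $\ell(s,t) = \trace{L_{\setof{s},\setof{t}}L^\dagger}$ recorded just before the claim.

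There is no real obstacle. The only points needing care are the edge-counting convention when $S\cap T\neq\emptyset$ and the fact that $L^{\dagger/2}$ is symmetric, so that $L^{\dagger/2}L^{\dagger/2}=L^\dagger$ holds on the nose. Both follow directly from the definitions.
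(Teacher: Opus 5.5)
Your proof is correct and is essentially the paper's argument. Both write $L_{S,T}=\sum_{s\in S}\sum_{t\in T}L_{\{s\},\{t\}}$, use cyclicity to reduce to $\mathrm{Tr}(L_{S,T}L^{\dagger})$, and conclude by linearity with $\ell(s,t)=\mathrm{Tr}(L_{\{s\},\{t\}}L^{\dagger})$. Your caveat about overlapping $S$ and $T$ is a genuine point the paper passes over: with each edge of $G_{S,T}$ counted once, the identity holds exactly when no edge has both endpoints in $S\cap T$ (for $S=T=V$ with $G$ connected, $\ell(V,V)=2(n-1)$ while $\mathrm{Tr}(\widetilde{L}_{V,V})=n-1$). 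That condition covers every use in the paper, namely $T=V-S$ and also the singleton case $S=\{t\}$ with $t\in T$ in the subsequent claim, which does overlap but harmlessly.
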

\begin{proof}
From the definition of the Laplacian of a graph, we have 
$
  L_{S,T} = \sum_{s \in S} \sum_{t \in T} L_{\setof{s}, \setof{t}}
$.
So, 
\begin{multline*}
  \trace{\Ltil_{S,T}} 
    = \trace{L^{\dagger/2} L_{S,T} L^{\dagger/2} } 
  = \trace{L_{S,T} L^\dagger} \\
= 
\sum_{s \in S} \sum_{t \in T} \trace{ L_{\setof{s}, \setof{t}} L^\dagger }
= 
\sum_{s \in S} \sum_{t \in T} \ell(s,t)
= \ell(S,T).
\end{multline*}
\end{proof}

We modify the BSS barrier function to make it better suited to matrices of rank at most $\sigma$ by only incorporating the largest $\sigma$ eigenvalues of the matrix. 
For a matrix $A$ with eigenvalues $\lambda_1 \geq \lambda_2 \geq \dotsb \geq \lambda_n$, and a $u > \lambda_1$, we define
\[
  \Phi^{u}_{\sigma} (A) \defeq \sum_{i=1}^{\sigma}\frac{1}{u - \lambda_{i}}.
\]
If $u \leq \lambda_1$, we define $\Phi^{u}_{\sigma} (A) = \infty$.
We overload the definition of $\Phi $ by setting
\[
  \Phi^{u}_{\sigma} (S) \defeq \Phi^{u}_{\sigma} (\Ltil_{S}).
\]
Our objective is to find a set $S$ of size $\sigma$ so that $\Phi^{\epsilon}_{\sigma} (S) < \infty$.

We deal with this barrier function by considering a modified trace of a matrix that only sums the largest $\sigma$ eigenvalues of its argument:
\[
  \trs{A} \defeq \sum_{i=1}^{\sigma}\lambda_{i},
\]
where the eigenvalues of $A$ are $\lambda_1 \geq \lambda_2 \geq \dotsb \geq \lambda_n$.
We then have $\Phi^{u}_{\sigma} (A) = \trs{\left(u I - A\right)^{-1}}$.
In all cases we consider, the argument of $\Trs$ is a diagonalizable matrix with real eigenvalues.

For the rest of this note, define 
\[
  \delta \defeq  \frac{21}{n},
\quad 
  \phi \defeq \frac{n}{21},
  \quad \mathrm{and} \quad 
  \sigma \defeq \floor{\epsilon n / 42}.
\]

We will prove Lemma \ref{f08:lem:light} by iteratively applying the following lemma. 

\begin{filehlemma}\label{f08:lem:oneStep}
If $\sizeof{S} \leq \sigma$, $\ell(S) \leq 4 \sizeof{S}$, and
  $\Phi^{u}_{\sigma} (S) \leq \phi$,
  then there is a $t \not \in S$ so that
\[
  \Phi^{u+\delta}_{\sigma} (S \union \setof{t}) \leq \phi 
  \quad \mathrm{and} \quad 
  \ell(S \union \setof{t}) \leq \ell(S) + 4.
\]
\end{filehlemma}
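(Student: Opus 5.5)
The plan is an averaging argument over the candidates $t \in V\setminus S$. I will show that a random $t$, chosen uniformly from $V\setminus S$, satisfies each of the two required conditions with probability strictly greater than $1/2$. Then some $t$ satisfies both.

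\textbf{Leverage condition.} Adding $t$ removes the edges between $S$ and $t$ from the boundary and adds the edges from $t$ to $V\setminus(S\cup\setof{t})$. Hence
\[
\ell(S\union\setof{t}) \leq \ell(S) + \ell(\setof{t}).
\]
The leverage scores of all edges sum to $\trace{\Pi} \leq n-1$, so $\sum_t \ell(\setof{t}) \leq 2n$. Since $\sizeof{S}\leq \sigma\leq n/42$, there are at least $41n/42$ candidates, and the average of $\ell(\setof{t})$ over them is at most about $2.05$. By Markov's inequality, fewer than roughly $52\%$ of the candidates have $\ell(\setof{t})>4$. I will tighten this bookkeeping so that the two failure fractions together are less than $1$. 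If necessary, I will use the slack $n-\sizeof{S}\ge 41n/42$ and put different weights on the two Markov bounds.

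\textbf{Barrier condition.} The update is $\Ltil_{S\union\setof{t}} = \Ltil_S + X_t$, where $X_t = \Ltil_{S,t} \pgeq 0$ has rank up to $\sizeof{S}$, not rank one. I will first observe that $\Ltil_{S'}$ has rank less than $\sizeof{S'}\leq\sigma$. Consequently all its nonzero eigenvalues lie among the top $\sigma$, and $\Trs$ agrees with a full trace restricted to the relevant invariant subspace plus a constant contribution from zero eigenvalues. Write $B=(u+\delta)I-\Ltil_S$. By Sherman--Morrison--Woodbury, if $\trace{B^{-1}X_t}<1$ then $B - X_t \succ 0$ and
\[
\Phi^{u+\delta}_\sigma(\Ltil_S+X_t) \leq \Phi^{u+\delta}_\sigma(\Ltil_S) + \frac{\trace{B^{-2}X_t}}{1-\trace{B^{-1}X_t}}.
\]
The standard BSS rearrangement then shows that it suffices to have
\[
\frac{\trace{B^{-2}X_t}}{\Phi^{u}_\sigma(S)-\Phi^{u+\delta}_\sigma(S)} + \trace{B^{-1}X_t} \leq 1.
\]
To average this, sum over $t$: $\sum_t X_t = \Ltil_{S,V-S} \pleq \Pi$, and $\trace{\Ltil_{S,V-S}} = \ell(S)\le 4\sigma$ by Claim~\ref{f08:clm:ellSt}. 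I will split $B^{-1}$ into its part on the top-$\sigma$ eigenspace and the remainder, which is at most $(u+\delta)^{-1}$.
\begin{itemize}
\item The top part contributes at most $\Phi^{u+\delta}_\sigma(S) \leq \phi$, using the operator bound $\pleq \Pi$.
\item The remainder contributes at most $4\sigma/u$, using the trace bound.
\end{itemize}
The $B^{-2}$ term is handled the same way. Its denominator satisfies $\Phi^u_\sigma - \Phi^{u+\delta}_\sigma \geq \delta \Trs(B^{-2})$, and $\delta\phi = 1$. Dividing by the number of candidates, which is at least $41n/42$, I aim to show that the average of the left-hand side is below $1/2$ or so. Markov's inequality then gives that the barrier condition fails for fewer than half of the candidates.

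\textbf{Main obstacle.} I expect the hardest step to be the barrier bookkeeping. It has to cover the non-rank-one updates $X_t$ and the truncated trace $\Trs$, in particular the contribution of $X_t$ outside the top-$\sigma$ eigenspace. It also needs a lower bound on $u$, presumably $u \gtrsim \epsilon/2$ throughout the iteration given the choices of $\delta,\phi,\sigma$, so that $4\sigma/u \lesssim 4n/21$. The constants $21$ and $42$ look chosen exactly so that both Markov failure fractions sum to less than $1$. I would first check whether the naive Markov split works, and otherwise minimise a weighted sum such as $\ell(\setof{t})/4 + (\text{barrier quantity})$ over $t$ and show its average is below $1$.
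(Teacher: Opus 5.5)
Your plan is essentially the paper's argument. You average over $t\notin S$ and use the Sherman--Morrison--Woodbury/BSS update; using full traces instead of $\mathrm{Tr}_\sigma$ in the update condition is valid and slightly cleaner. You split $M^{-1}$ and $M^{-2}$, where $M=(u+\delta)I-\widetilde{L}_S$, into a part on the range of $\widetilde{L}_S$, controlled by $\widetilde{L}_{S,V-S}\preceq\Pi$, and a part on its kernel, controlled by $\mathrm{Tr}\,\widetilde{L}_{S,V-S}=\ell(S)\le 4|S|$. You also use $\Phi^u_\sigma(S)-\Phi^{u+\delta}_\sigma(S)\ge\delta\,\mathrm{Tr}_\sigma(M^{-2})$. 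Done carefully, this gives the paper's bound $\sum_{t\notin S}U(S,t)\le 5/\delta+5\phi=10n/21$.

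The lower bound on $u$ you expect to need is neither a hypothesis of the lemma nor required. Since $\widetilde{L}_S\succeq 0$, the hypothesis already gives $\sigma/u\le\Phi^u_\sigma(S)\le\phi$, hence $4\sigma/u\le 4\phi$. The paper phrases this as $|S|(u+\delta)^{p}\le\sigma(u+\delta)^{p}\le\mathrm{Tr}_\sigma(M^{p})$ for $p<0$. Importing $u\ge\epsilon/2$ from the outer iteration would prove a weaker statement than the one given, though it would still suffice for Lemma~\ref{f08:lem:light}.

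The real difference is the leverage step, and it changes how the two conditions are combined. The paper uses $\ell(S\cup\{t\})\le\ell(S)+\ell(t,V-S)$ and Claim~\ref{f08:clm:ell}, $\sum_{t\in T}\ell(t,T)\le 2(|T|-1)$ with $T=V-S$. This is normalised by the number of candidates, so the leverage condition holds for more than half of them. The barrier condition holds for at least half, since the average of $U$ is at most $20/41$. Pigeonhole then finishes, with the two sets of constants decoupled.

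Your bound $\sum_t\ell(\{t\})\le 2(n-1)$ is the same estimate applied to all of $V$, so it is normalised by $n$. It only gives a failure fraction below $21/41$, which exceeds $1/2$. You must therefore add the failure counts, or average $\ell(\{t\})/4+U(S,t)$, as you suggest. This does close: there are at most $(n-1)/2+10n/21=(41n-21)/42<41n/42<n-|S|$ failures, using $\sigma\le\epsilon n/42<n/42$.

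The slack is essentially nil, since $21/41+20/41=1$. So ``average below $1/2$ or so'' is not enough; you need the exact $10n/21$. In particular, bound the kernel part of the $M^{-2}$ term by $4|S|(u+\delta)^{-2}\le 4\,\mathrm{Tr}_\sigma(M^{-2})$. A $u^{-2}$ analogue of your $4\sigma/u$ loses a factor depending on $\delta/u$ and can break the count.
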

\begin{proof}
Lemma \ref{f08:lem:ell} says that for more than half the $t \not \in S$, $\ell(S \union \setof{t}) \leq \ell(S) + 4$.
And, under the conditions of the lemma, 
  Lemma~\ref{f08:lem:barrier} says that for at least half the $t \not \in S$, $\Phi^{u}_{\sigma} (S \union \setof{t}) \leq \phi$.
So, there is a $t \not \in S$ that satisfies both conditions.
\end{proof}

\begin{proof}[Proof of Lemma \ref{f08:lem:light}]
Set $u_{0} = \epsilon / 2$ and let $S_0 = \setof{v_0}$ an arbitrary $v_0 \in V$.
As $G_{S_0}$ has no edges, 
\[
  \Phi^{u_{0}}_{\sigma} (S_{0}) = \sigma / u_{0} \leq \frac{n}{21} = \phi .
\]
By applying Lemma~\ref{f08:lem:oneStep} $\sigma$ times, we inductively construct a set $S$
  of $\sigma + 1$ vertices so that $\ell(S) \leq 4 \sigma$ and
  $\Phi^{u_{0} + \sigma \delta}_{\sigma} (S) \leq  \phi$.
This implies that all of the eigenvalues of $\Ltil_{S}$ are at most
\[
 u_{0} + \sigma \delta = \frac{\epsilon}{2} + \sigma \frac{21}{n} 
 \leq \epsilon .
\]

\end{proof}

\subparagraph{Proofs}

\begin{filehlemma}\label{f08:lem:ell}
Let $S \subset V$. 
Then, for more than half the $t$ not in $S$,
\[
 \ell(S \union \setof{t}) \leq \ell(S) + 4.
\]
\end{filehlemma}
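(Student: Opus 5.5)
The plan is to compute the change $\ell(S \union \setof{t}) - \ell(S)$ exactly and then apply a Markov-type averaging argument over the vertices $t \notin S$.

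\emph{The exact change.} Moving $t$ into $S$ has two effects:
\begin{itemize}
\item edges between $t$ and $S$ stop being boundary edges, which removes $\ell(\setof{t}, S)$;
\item edges between $t$ and $V \setminus (S \union \setof{t})$ become boundary edges, which adds $\ell(\setof{t}, V \setminus (S \union \setof{t}))$.
\end{itemize}
Hence
\[
  \ell(S \union \setof{t}) - \ell(S) = \ell(\setof{t}, V\setminus(S\union\setof{t})) - \ell(\setof{t},S) \leq \ell(\setof{t}, V \setminus \setof{t}) ,
\]
where the inequality just discards the nonpositive term and enlarges the first one. The right-hand side is the total leverage of edges incident to $t$; call it $d(t)$.

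\emph{Averaging.} The key global fact is that leverage scores sum to the rank of $L$. This is Claim~\ref{f08:clm:ellSt} with $S = T = V$-type bookkeeping, or directly $\sum_{(s,t)\in E}\ell(s,t) = \trace{L L^\dagger} = \trace{\Pi} \leq n-1$. Every edge is counted twice in $\sum_t d(t)$, so
\[
  \sum_{t \in V} d(t) \leq 2(n-1) < 2n .
\]

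\emph{Markov step.} Let $m = |V \setminus S|$. The number of $t \notin S$ with $d(t) > 4$ is less than $2n/4 = n/2$. This alone gives "more than half" only when $m$ is close to $n$, so the crude bound has to be sharpened when $S$ is large.

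The refinement is to exploit the canceling term $-\ell(\setof{t},S)$ rather than discard it. Summing the exact change over $t \notin S$ gives
\[
  \sum_{t\notin S}\bigl(\ell(\setof{t},V\setminus(S\union\setof{t})) - \ell(\setof{t},S)\bigr) = 2\,\ell(V\setminus S) - \ell(S).
\]
Here $2\ell(V \setminus S) = 2\trace{\Ltil_{V\setminus S}}$ is at most twice the leverage within $V \setminus S$, and $\ell(V\setminus S) \leq |V\setminus S| - 1$. The last inequality holds because the leverage of edges of the induced subgraph on $V \setminus S$ is at most the rank of that subgraph's Laplacian compressed by $L^\dagger$, i.e.\ $\trace{\Ltil_{V\setminus S}} \leq \mathrm{rank}(L_{V\setminus S}) \leq m-1$, since $\Ltil_{V\setminus S} \pleq \Pi$.

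So the sum of the changes is below $2m$, and Markov gives fewer than $m/2$ vertices $t \notin S$ with change $> 4$. The remaining issue is that the changes may be negative, which breaks a naive Markov bound. I would handle this by noting that each change is at most $\ell(\setof{t}, V\setminus(S\union\setof{t}))$, a nonnegative quantity. Its sum over $t \notin S$ is exactly $2\ell(V\setminus S) < 2m$, so Markov on these nonnegative upper bounds yields the claim.

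The main obstacle I expect is justifying $\ell(V \setminus S) \leq m-1$ cleanly. I plan to argue it via $\Ltil_{V\setminus S} \pleq \Pi$ together with the rank bound, so that the trace is at most $\mathrm{rank}(\Ltil_{V\setminus S})$ because all its eigenvalues lie in $[0,1]$.
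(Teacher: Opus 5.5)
Your final argument is correct and is essentially the paper's proof: you bound the change by $\ell(\{t\}, V\setminus S)$ (discarding $-\ell(\{t\},S)$), sum over $t\notin S$ to get $2\,\mathrm{Tr}(L_{V\setminus S}L^{\dagger})\le 2(|V\setminus S|-1)$ via $L_{V\setminus S}\preceq L$ and the rank bound (the paper's Claim~\ref{f08:clm:ell}), and then apply Markov. The detour through the signed sum and the ``canceling term'' is unnecessary, and note a notation clash: in the paper $\ell(V\setminus S)$ denotes the boundary leverage $\ell(V\setminus S,S)=\ell(S)$, so the internal quantity you use should be written $\mathrm{Tr}(\widetilde{L}_{V\setminus S})$.
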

\begin{proof}
Recall $\ell(S \union \setof{t}) 
= \ell(S \union \setof{t}, V - (S \union \setof{t}))$.
For $t \not \in S$, we use the inequality 
\[
\ell(S \union \setof{t}, V - (S \union \setof{t}))
\leq 
\ell(S \union \setof{t}, V - S)
= \ell(S) + \ell(t, V - S).
\]
So, it suffices to show that for more than half the $t \not \in S$, 
$\ell(t, V - S) \leq 4$.  
This follows from the non-negativity of $\ell$ and Claim~\ref{f08:clm:ell} which shows that 
\[
\sum_{t \in V - S} \ell(t, V - S) < 2 \sizeof{V - S} .
\]
\end{proof}

\begin{filehclaim}\label{f08:clm:ell}
For every $T \subset V$, 
\[ \sum_{t \in T} \ell(t, T) \leq 2 (\sizeof{T} - 1). \]
\end{filehclaim}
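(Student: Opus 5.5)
The plan is to rewrite the left-hand side as the trace of a single normalized Laplacian, and then bound that trace using the facts that its eigenvalues are at most $1$ and its rank is at most $\sizeof{T}-1$.

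\textbf{Step 1 (rewrite as a trace).} By definition, $\sum_{t \in T} \ell(t,T) = \sum_{t\in T}\sum_{s \in T} \ell(t,s) = \ell(T,T)$. Since $\ell(t,t)=0$, each edge $(s,t)$ with both endpoints in $T$ appears exactly twice in this double sum, once as $(s,t)$ and once as $(t,s)$. Correspondingly, $L_{T,T} = \sum_{s\in T}\sum_{t\in T} L_{\setof{s},\setof{t}} = 2L_T$. By Claim~\ref{f08:clm:ellSt},
\[
  \sum_{t\in T}\ell(t,T) = \trace{\Ltil_{T,T}} = 2\,\trace{\Ltil_T}.
\]
It therefore suffices to show that $\trace{\Ltil_T} \leq \sizeof{T}-1$.

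\textbf{Step 2 (eigenvalue bound).} The graph $G_T$ is a subgraph of $G$ with nonnegative weights, so $0 \pleq L_T \pleq L$. Conjugating by $L^{\dagger/2}$ preserves the Loewner order, so
\[
  0 \pleq \Ltil_T \pleq L^{\dagger/2} L L^{\dagger/2} = \Pi .
\]
Since $\Pi$ is a projection, every eigenvalue of $\Ltil_T$ lies in $[0,1]$.

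\textbf{Step 3 (rank bound).} The matrix $L_T$ is the Laplacian of a graph whose edges all lie within the $\sizeof{T}$ vertices of $T$. Its range is contained in the span of the vectors $\ddelta_s - \ddelta_t$ with $s,t\in T$, and this span has dimension $\sizeof{T}-1$. Hence $\mathrm{rank}(L_T)\le \sizeof{T}-1$, and therefore $\mathrm{rank}(\Ltil_T) \le \sizeof{T}-1$.

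\textbf{Conclusion.} $\Ltil_T$ is positive semidefinite with at most $\sizeof{T}-1$ nonzero eigenvalues, each at most $1$. So $\trace{\Ltil_T}\le \sizeof{T}-1$, and combining with Step 1 gives the claim. If $T=\emptyset$ the statement reads $0\le -2$; the claim is only used for nonempty $T = V-S$, so this case does not arise.

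The only step that needs care is the bookkeeping factor of $2$ in Step 1, namely checking that $L_{T,T} = 2L_T$ under the paper's definition of $L_{S,T}$ as a sum over ordered pairs $(s,t)$. Steps 2 and 3 are routine.
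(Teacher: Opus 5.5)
Your proof is correct and follows the paper's argument essentially verbatim: the paper also writes $\sum_{t\in T}\ell(t,T)=\sum_{t\in T}\mathrm{Tr}(L_{\{t\},T}L^{\dagger})=2\,\mathrm{Tr}(L_T L^{\dagger})$, and then bounds this trace using $L_T\preceq L$ (so the eigenvalues lie in $[0,1]$) and $\mathrm{rank}(L_T)\le |T|-1$, exactly as in your Steps 2 and 3. One cosmetic point: under the literal graph definition $G_{T,T}=G_T$, so $L_{T,T}=L_T$ rather than $2L_T$, and it is cleaner to read the factor $2$ directly off the definition of $\ell$ (each edge inside $T$ is counted once as $(s,t)$ and once as $(t,s)$), which gives the same identity $\sum_{t\in T}\ell(t,T)=2\,\mathrm{Tr}\bigl(\widetilde{L}_T\bigr)$.
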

\begin{proof}
\begin{align*}
\sum_{t \in T} \ell(t, T) 
= \sum_{t \in T} \trace{L_{\setof{t}, T} L^\dagger} 
= 2 \trace{L_{T} L^\dagger}.
\end{align*}
To show that $\trace{L_T L^\dagger} < \sizeof{T}$, observe that $L_T \pleq L$, so all the eigenvalues of $L_T L^\dagger$ are between 0 and 1.
Because $L_T$ has rank at most $\sizeof{T}-1$, at most $\sizeof{T}-1$ eigenvalues of $L_T L^\dagger$ are non-zero.

\end{proof}

For convenience, we now state a few key properties of the function $\Trs$ of a matrix.
We begin with its defect: it is not additive. 
But, Ky Fan's eigenvalue inequality (see Theorem 4.3.47a of \cite{f08:HornJohnsonMatrix}) tells us that it is subadditive:
\begin{equation}\label{f08:eqn:modtrace:subadditive}
\trs{A + B} \leq \trs{A} + \trs{B}.
\end{equation}

Most of the properties of $\Trs$ that we find helpful follow from the fact that, for matrices $A$ and $B$, $AB$ has the same non-zero eigenvalues as $BA$, counted with multiplicity.

\begin{filehproposition}\label{f08:prop:modtrace}
For symmetric matrices $A$ and $B$,
  \begin{enumerate}[label=\alph*., ref=\alph*]
    \item $\trs{A} = \max_{U} \trace{U A U^T}$, where the maximum is taken over all orthogonal matrices of rank $\sigma$. \label{f08:prop:modtrace:max}
    \item If $A$ is positive semidefinite, then $\trs{AB} = \trs{BA}$. \label{f08:prop:modtrace:rotate}
    \item If $A$ and $B$ are positive semidefinite, then $\trs{AB} \geq 0$. \label{f08:prop:modtrace:positive}
    \item If $A \pleq B$, then $\trs{A} \leq \trs{B}$. \label{f08:prop:modtrace:monotone}
    \item If $C$ is positive semidefinite and $A \pleq B$, then $\trs{AC} \leq \trs{BC}$. \label{f08:prop:modtrace:monotone2}
  \end{enumerate}
\end{filehproposition}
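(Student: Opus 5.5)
We prove the five items in order, using the definition $\trs{A}=\sum_{i\le\sigma}\lambda_i(A)$ and the fact that $AB$ and $BA$ have the same nonzero eigenvalues with multiplicity.

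Part (\ref{f08:prop:modtrace:max}) is Ky Fan's maximum principle. We read ``orthogonal matrices of rank $\sigma$'' as $\sigma\times n$ matrices $U$ with $UU^T=I_\sigma$. Diagonalize $A=\sum_i\lambda_i v_iv_i^T$. Then
\[
\trace{UAU^T}=\sum_i \lambda_i \norm{Uv_i}^2 ,
\]
where the weights $w_i=\norm{Uv_i}^2$ satisfy $0\le w_i\le 1$ and $\sum_i w_i=\sigma$. Such a weighted sum is maximized by putting weight one on the $\sigma$ largest eigenvalues. This bound is attained when the rows of $U$ are the top $\sigma$ eigenvectors.

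Part (\ref{f08:prop:modtrace:rotate}) needs more care, because $AB$ need not be symmetric. Write $A=A^{1/2}A^{1/2}$. Then $AB=A^{1/2}(A^{1/2}B)$, which has the same nonzero eigenvalues, with multiplicity, as $A^{1/2}BA^{1/2}$. The same holds for $BA=(BA^{1/2})A^{1/2}$. So $AB$ and $BA$ are both diagonalizable with real spectrum, since both are similar on the relevant invariant subspaces to the symmetric matrix $A^{1/2}BA^{1/2}$.

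The main obstacle is that ``same nonzero eigenvalues'' does not immediately give the same multiset of all $n$ eigenvalues. Since both matrices are $n\times n$, however, the number of zero eigenvalues also agrees. Hence the full spectra coincide, and so do the top-$\sigma$ sums, even when some of the top $\sigma$ eigenvalues are zero or negative. I would state this counting argument explicitly.

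Part (\ref{f08:prop:modtrace:positive}) follows from (\ref{f08:prop:modtrace:rotate}). The matrix $AB$ has the spectrum of $A^{1/2}BA^{1/2}$, which is PSD. So all its eigenvalues are nonnegative, and the sum of the top $\sigma$ of them is nonnegative.

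Part (\ref{f08:prop:modtrace:monotone}) follows from Courant--Fischer. If $A\pleq B$, then $\lambda_i(A)\le\lambda_i(B)$ for every $i$, and summing over $i\le\sigma$ gives the claim. Alternatively, use (\ref{f08:prop:modtrace:max}): $\trace{UAU^T}\le\trace{UBU^T}$ for every $U$, so the maxima are ordered the same way.

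Part (\ref{f08:prop:modtrace:monotone2}) reduces to the previous parts. By (\ref{f08:prop:modtrace:rotate}), $\trs{AC}=\trs{C^{1/2}AC^{1/2}}$ and $\trs{BC}=\trs{C^{1/2}BC^{1/2}}$. Since $C^{1/2}AC^{1/2}\pleq C^{1/2}BC^{1/2}$, part (\ref{f08:prop:modtrace:monotone}) finishes the argument. Here (\ref{f08:prop:modtrace:rotate}) is applied with the PSD matrix $C$ in the role of $A$, and the equality of spectra is again given by the multiplicity argument above.
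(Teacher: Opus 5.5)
Your proof is correct, and for (b), (c) and (e) it follows the paper's argument. For (b) the paper also relies on $AB$ having real spectrum when $A \succeq 0$, and on $AB$ and $BA$ sharing their nonzero eigenvalues. For (e) it also factors $C = V^T V$ (your $V = C^{1/2}$) and reduces to (d). The differences are minor:
\begin{itemize}
\item In (a) you prove Ky Fan's principle directly via the weights $w_i = \|Uv_i\|^2$, where the paper only cites it.
\item In (d) you use Weyl monotonicity $\lambda_i(A) \le \lambda_i(B)$, which is termwise and slightly stronger. The paper instead deduces (d) from subadditivity, $\mathrm{Tr}_{\sigma}(A) \le \mathrm{Tr}_{\sigma}(B) + \mathrm{Tr}_{\sigma}(A-B) \le \mathrm{Tr}_{\sigma}(B)$, reusing an inequality it needs later in the barrier argument.
\end{itemize}

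One assertion in your (b) is false and should be removed: for $A \succeq 0$ and $B$ merely symmetric, $AB$ need not be diagonalizable. Take $A = \mathrm{diag}(1,0)$ and $B = \left(\begin{smallmatrix} 0 & 1 \\ 1 & 0 \end{smallmatrix}\right)$. Then $AB = \left(\begin{smallmatrix} 0 & 1 \\ 0 & 0 \end{smallmatrix}\right)$ is a nonzero nilpotent, while $A^{1/2} B A^{1/2} = 0$. So there is no similarity ``on the relevant invariant subspace'' at the eigenvalue $0$.

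Nothing downstream uses diagonalizability. What you actually need is equality of characteristic polynomials. This follows from $\det(tI - XY) = \det(tI - YX)$ for square $X, Y$; take $X = A^{1/2}$, $Y = A^{1/2}B$. That gives $AB$ a real spectrum, counted with algebraic multiplicity, equal to that of $A^{1/2}BA^{1/2}$.

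Your ``main obstacle'' about counting zero eigenvalues also disappears. Since $A$ and $B$ are symmetric, $BA = (AB)^T$, so $AB$ and $BA$ have the same characteristic polynomial outright, zero eigenvalues included.
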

\begin{proof}
Part \ref{f08:prop:modtrace:max} is Ky Fan's maximum principle, proved in \cite{f08:KyFanWeyl}. Part \ref{f08:prop:modtrace:rotate} is a direct consequence of the facts that $AB$ has $n$ real eigenvalues if $A$ is positive semidefinite, and $AB$ and $BA$ have the same non-zero eigenvalues. 
Part \ref{f08:prop:modtrace:positive} follows from the fact that all eigenvalues of the product of positive semidefinite matrices are non-negative.
Part \ref{f08:prop:modtrace:monotone} follows from using \eqref{f08:eqn:modtrace:subadditive} to show $\trs{A} \leq \trs{B} + \trs{A - B} \leq \trs{B},$ using the fact that $A - B$ is negative semidefinite and so $\trs{A - B} \leq 0$.  
To derive part \ref{f08:prop:modtrace:monotone2} from part \ref{f08:prop:modtrace:monotone}, let $V$ be a matrix so that $V^T V = C$, and apply \ref{f08:prop:modtrace:rotate} to show the conclusion is equivalent to $\trs{V A V^T} \leq \trs{V B V^T}$, which follows from $V A V^T \pleq V B V^T$.
\end{proof}

Note that $\Ltil_{S \union \setof{t}} = \Ltil_{S} + \Ltil_{S,t}$.
To show that we can choose a $t \not \in S$ that does not increase the barrier function, 
  we employ the following adaptation of 
 Lemma~19 of \cite{f08:CarliSilvaHS}, which in turn is an adaptation of Lemma 3.3 from \cite{f08:BSS}.
We include a proof for completeness.

\begin{filehlemma}\label{f08:lem:upperBarrier}
Let $A$ and $B$ be positive semidefinite matrices, $\delta > 0$, and let $M = (u+\delta)I - A$.
If $\Phi^{u}_{\sigma} (A) < \infty$ and
\begin{equation}\label{f08:eqn:upperBarrier}
\frac{\trs{M^{-2} B}}
     {\Phi^{u}_{\sigma} (A) - \Phi^{u+\delta }_{\sigma} (A) }
+ 
\trs{M^{-1} B}
< 1, 
\end{equation}
then $\Phi^{u+\delta}_{\sigma} (A + B) \leq \Phi^{u}_{\sigma} (A)$.
\end{filehlemma}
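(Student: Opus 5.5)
The plan is to follow the original Batson--Spielman--Srivastava argument, with the ordinary trace replaced by $\Trs$. Every step where BSS used linearity of the trace is replaced by subadditivity \eqref{f08:eqn:modtrace:subadditive} or by Proposition~\ref{f08:prop:modtrace}.

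\textbf{Step 1: basic facts about $M$.} Since $\Phi^{u}_{\sigma}(A)<\infty$, we have $u>\lambda_1(A)$. Hence $M=(u+\delta)I-A$ is positive definite, and $\Phi^{u+\delta}_\sigma(A)=\trs{M^{-1}}$. Moreover $\Phi^{u}_{\sigma}(A)-\Phi^{u+\delta}_{\sigma}(A)>0$ term by term, as long as $\sigma\ge 1$. Both summands in \eqref{f08:eqn:upperBarrier} are nonnegative by part~(\ref{f08:prop:modtrace:positive}) of Proposition~\ref{f08:prop:modtrace}, so the hypothesis forces $\trs{M^{-1}B}<1$.

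\textbf{Step 2: $u+\delta$ still exceeds the top eigenvalue.} Set $X=B^{1/2}M^{-1}B^{1/2}$, which is PSD. By the rotation property (\ref{f08:prop:modtrace:rotate}), $\trs{X}=\trs{M^{-1}B}$. Because $X$ is PSD and $\sigma\ge1$, we get $\lambda_{\max}(X)\le\trs{X}<1$. Equivalently, $\|M^{-1/2}BM^{-1/2}\|<1$, so $B\prec M$ and $(u+\delta)I-(A+B)=M-B$ is positive definite. Thus $\Phi^{u+\delta}_\sigma(A+B)=\trs{(M-B)^{-1}}$ is finite.

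\textbf{Step 3: Sherman--Morrison--Woodbury.} Write
\[
(M-B)^{-1}=M^{-1}+M^{-1}B^{1/2}(I-X)^{-1}B^{1/2}M^{-1}.
\]
Since $(I-X)^{-1}\pleq (1-\lambda_{\max}(X))^{-1}I\pleq (1-\trs{M^{-1}B})^{-1}I$, the second term is bounded in the Loewner order:
\[
M^{-1}B^{1/2}(I-X)^{-1}B^{1/2}M^{-1}\pleq \frac{M^{-1}BM^{-1}}{1-\trs{M^{-1}B}}.
\]

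\textbf{Step 4: combining.} Apply subadditivity to the Woodbury identity, then monotonicity (\ref{f08:prop:modtrace:monotone}) to the bound from Step~3. Finally, the rotation property gives $\trs{M^{-1}BM^{-1}}=\trs{M^{-2}B}$; one writes it as $\trs{(B^{1/2}M^{-1})^T(B^{1/2}M^{-1})}$ and rotates. Altogether,
\[
\Phi^{u+\delta}_\sigma(A+B)\le \Phi^{u+\delta}_\sigma(A)+\frac{\trs{M^{-2}B}}{1-\trs{M^{-1}B}}.
\]
Rearranging \eqref{f08:eqn:upperBarrier}, using the positivity from Step~1, shows the last fraction is less than $\Phi^{u}_{\sigma}(A)-\Phi^{u+\delta}_{\sigma}(A)$. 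This yields the claim.

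\textbf{Main obstacle.} The delicate point is that $\Trs$ is not linear, so each manipulation has to be checked against Proposition~\ref{f08:prop:modtrace}. In particular:
\begin{itemize}
\item Subadditivity is only used in the safe direction, applied to the PSD Woodbury term.
\item Rotation is only applied to products in which one factor is PSD.
\item The spectral bound $\lambda_{\max}(X)\le\trs{X}$ is needed to replace the operator norm. This is what makes Step~2, and hence Step~3, go through.
\end{itemize}
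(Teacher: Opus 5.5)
Your proposal is correct and follows the paper's proof essentially step for step. Both arguments use the Sherman--Morrison--Woodbury expansion of $((u+\delta)I-A-B)^{-1}$, the bound $\|C^T M^{-1} C\| \le \mathrm{Tr}_\sigma(C^T M^{-1} C) < 1$, subadditivity, and the estimate $(I - C^TM^{-1}C)^{-1} \preceq (1-\mathrm{Tr}_\sigma(M^{-1}B))^{-1} I$. The only cosmetic difference is in the order of two steps. You take $C = B^{1/2}$ and apply the estimate by congruence in the Loewner order before taking $\mathrm{Tr}_\sigma$, so monotonicity is only ever used on symmetric matrices. The paper instead rotates the Woodbury term first and then bounds the product $\mathrm{Tr}_\sigma\bigl((I - C^TM^{-1}C)^{-1}C^TM^{-2}C\bigr)$.
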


\begin{proof}
Our assumption that $\Phi^{u}_{\sigma} (A) < \infty$ implies that 
  $M$, $M^{-1}$, and $M^{-2}$ are all positive definite.
Thus, Proposition~\ref{f08:prop:modtrace}\ref{f08:prop:modtrace:positive} implies that both terms in \eqref{f08:eqn:upperBarrier} are non-negative.
Let $C$ be a matrix for which  $B = C C^T$,
  and so by
 Proposition~\ref{f08:prop:modtrace}\ref{f08:prop:modtrace:rotate}
   $\trs{M^{-1} B} = \trs{C^T M^{-1} C} < 1$.

Recall $\Phi^{u+\delta}_{\sigma} (A + B) 
=  
\trs{(M - C C^T)^{-1}}.$
By the Sherman-Morrison-Woodbury formula, 
\[ 
(M - C C^T)^{-1} 
= 
M^{-1} + M^{-1} C (I - C^T M^{-1} C)^{-1} C^T M^{-1}.
\]
As $\norm{C^T M^{-1} C} \leq \trs{C^T M^{-1} C} < 1$, we know that right-hand term is positive definite, and thus all eigenvalues of $A+B$ are less than $u + \delta$.
Now, \eqref{f08:eqn:modtrace:subadditive} implies 
\[
\Phi^{u+\delta}_{\sigma} (A + B) 
\leq 
\trs{M^{-1}} + \trs{M^{-1} C (I - C^T M^{-1} C)^{-1} C^T M^{-1}}.
\]
By Propositon~\ref{f08:prop:modtrace}\ref{f08:prop:modtrace:rotate},
\[
\trs{M^{-1} C (I - C^T M^{-1} C)^{-1} C^T M^{-1}} 
= 
\trs{ (I - C^T M^{-1} C)^{-1} C^T M^{-2} C}
\]
As $\norm{C^T M^{-1} C} \leq \trs{C^T M^{-1} C} < 1$,  
$(I - C^T M^{-1} C)^{-1} \pleq  (1 - \trs{C^T M^{-1} C})^{-1} I$, and 
by Proposition~\ref{f08:prop:modtrace}\ref{f08:prop:modtrace:monotone}, 
\[
\trs{ (I - C^T M^{-1} C)^{-1} C^T M^{-2} C}
\leq 
\frac{\trs{C^T M^{-2} C}}{1 - \trs{C^T M^{-1} C}}.
\]
Writing $\trs{M^{-1}} = \Phi^{u}_{\sigma} (A) - (\Phi^{u}_{\sigma} (A) - \Phi^{u+\delta}_{\sigma} (A))$, 
we obtain 
\[
\Phi^{u+\delta}_{\sigma} (A + B) 
\leq 
\Phi^{u}_{\sigma} (A) - (\Phi^{u}_{\sigma} (A) - \Phi^{u+\delta}_{\sigma} (A)) 
+ 
\frac{\trs{C^T M^{-2} C}}{1 - \trs{C^T M^{-1} C}},
\]
which \eqref{f08:eqn:upperBarrier} and Proposition~\ref{f08:prop:modtrace}\ref{f08:prop:modtrace:rotate} imply is at most $\Phi^{u}_{\sigma} (A)$.
\end{proof}

We will apply this result with $A = \Ltil_{S}$ and $B = \Ltil_{S,t}$.
When these terms, along with $u$ and $\delta$ are given, 
it will be convenient to write 
\[
U (S,t) \defeq 
\frac{\trs{M^{-2} \Ltil_{S,t}}}
     {\Phi^{u}_{\sigma} (S) - \Phi^{u+\delta }_{\sigma} (S) }
     + \trs{M^{-1} \Ltil_{S,t}}.
\]

\begin{filehlemma}\label{f08:lem:barrier}
If $\sizeof{S} \leq \sigma$, $\Phi^{u}_{\sigma} (S) \leq \phi$, and $\ell(S) \leq 4 \sizeof{S}$, then for at least half the $t \not \in S$, 
\[
U (S,t)
<
1
\]
\end{filehlemma}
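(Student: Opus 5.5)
The plan is an averaging argument: I will show that
\[
\sum_{t\in V\setminus S} U(S,t) < \tfrac12\,\sizeof{V\setminus S},
\]
and then, since $U(S,t)\ge 0$ by Proposition~\ref{f08:prop:modtrace}\ref{f08:prop:modtrace:positive}, Markov's inequality shows that fewer than half the $t\notin S$ have $U(S,t)\ge 1$. Throughout, write $A=\Ltil_S$, $M=(u+\delta)I-A$, and let $\lambda_1\ge\lambda_2\ge\dotsb$ be the eigenvalues of $A$. Since $\Phi^u_\sigma(S)<\infty$, all $\lambda_i<u$.

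\textbf{Step 1 (remove $\Trs$ so the sum is additive).} For each $t$, the matrix $\Ltil_{S,t}$ is PSD with rank at most $\sizeof{S}\le\sigma$, since $L_{S,t}$ is a sum of $\sizeof{S}$ rank-one terms. Hence $M^{-1/2}\Ltil_{S,t}M^{-1/2}$ has at most $\sigma$ nonzero eigenvalues. By Proposition~\ref{f08:prop:modtrace}\ref{f08:prop:modtrace:rotate},
\[
\trs{M^{-1}\Ltil_{S,t}}=\trace{M^{-1}\Ltil_{S,t}},
\]
and the same holds with $M^{-2}$. Summing over $t\notin S$ and using $\sum_{t\notin S}\Ltil_{S,t}=\Ltil_{S,V-S}=:X$, the quantity to bound becomes
\[
\frac{\trace{M^{-2}X}}{\Phi^u_\sigma(S)-\Phi^{u+\delta}_\sigma(S)}+\trace{M^{-1}X}.
\]

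\textbf{Step 2 (properties of $X$ and the spectrum of $M$).} We have $0\pleq X\pleq \Pi$ because $L_{S,V-S}\pleq L$, and $\trace{X}=\ell(S)\le 4\sizeof{S}\le 4\sigma$ by Claim~\ref{f08:clm:ellSt}. For any PSD $N$ commuting with $\Pi$, this gives $\trace{NX}$ at most the sum of the $\lceil 4\sigma\rceil$ largest eigenvalues of $N$, with the last weighted fractionally. The spectrum of $A$ has two parts.
\begin{itemize}
\item Its top $\sigma$ eigenvalues are controlled by $\Phi^u_\sigma(S)\le\phi$.
\item Since $\mathrm{rank}(A)\le\sizeof{S}-1<\sigma$, every eigenvalue beyond the $\sigma$-th is $0$. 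The corresponding eigenvalues of $M^{-1}$ are therefore $1/(u+\delta)\le 2/\epsilon$, using $u\ge u_0=\epsilon/2$.
\end{itemize}
So I split the bound into ``top $\sigma$'' plus ``$4\sigma$ further eigenvalues''.

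\textbf{Step 3 (the $M^{-1}$ term).} The top $\sigma$ eigenvalues of $M^{-1}$ sum to $\Phi^{u+\delta}_\sigma(S)\le\phi=n/21$. The remaining contribution is at most $4\sigma\cdot 2/\epsilon\le 8n/42$. So this term is at most $10n/42$.

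\textbf{Step 4 (the $M^{-2}$ term; main obstacle).} This term requires comparing the numerator with the barrier gap. For $\lambda<u$,
\[
\frac{1}{u-\lambda}-\frac{1}{u+\delta-\lambda}=\frac{\delta}{(u-\lambda)(u+\delta-\lambda)}\ge\frac{\delta}{(u+\delta-\lambda)^2}.
\]
Summing over the top $\sigma$ eigenvalues gives
\[
D:=\Phi^u_\sigma(S)-\Phi^{u+\delta}_\sigma(S)\ge\delta\,\trs{M^{-2}}.
\]
The top-$\sigma$ part of $\trace{M^{-2}X}$ is at most $\trs{M^{-2}}\le D/\delta$, contributing at most $1/\delta$ after division by $D$. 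The remaining part is at most $4\sigma/(u+\delta)^2$. Each of the top $\sigma$ eigenvalues of $M^{-2}$ is at least $1/(u+\delta)^2$, so $D\ge\delta\sigma/(u+\delta)^2$, and this part contributes at most $4/\delta$. Hence the first term is at most $5/\delta=5n/21=10n/42$. I expect the care here to lie in justifying the ``top eigenvalues'' bound for $\trace{NX}$ when $X\pleq\Pi$ rather than $X\pleq I$. One also needs that $M$ commutes with $\Pi$, which holds since the range of $A$ lies in the range of $\Pi$.

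\textbf{Conclusion.} Adding the two bounds gives $\sum_{t\notin S}U(S,t)\le 20n/42$. Since $\sizeof{V\setminus S}\ge n-\sigma\ge 41n/42$, the average of $U(S,t)$ over $t\notin S$ is at most $20/41<1/2$. Markov's inequality then yields the claim.
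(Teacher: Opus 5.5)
Your proof is correct in substance and is essentially the paper's argument. You reach the same bound $\sum_{t\notin S}U(S,t)\le 5/\delta+5\phi=20n/42$ and finish with the same Markov step using $|V\setminus S|\ge 41n/42$.

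The differences are presentational:
\begin{itemize}
\item You remove $\mathrm{Tr}_\sigma$ at the outset using $\mathrm{rank}(\widetilde{L}_{S,t})\le |S|\le\sigma$. The paper instead splits $M^p$ along the range of $\widetilde{L}_S$ and its complement. It then uses subadditivity of $\mathrm{Tr}_\sigma$, Ky Fan for the range part, and leverage scores for the complement.
\item You bound $\mathrm{Tr}(M^pX)$ directly in an eigenbasis of $M^p$.
\end{itemize}
Both routes yield $\mathrm{Tr}_\sigma(M^p)+4|S|(u+\delta)^p$. Your concern about $X\preceq\Pi$ versus $X\preceq I$ is moot, since $\Pi\preceq I$. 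Commutation of $M$ with $\Pi$ is not needed either.

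One step does not prove the lemma as stated. In Step 3 you use $u\ge u_0=\epsilon/2$, which is not a hypothesis of the lemma. It holds only along the iteration in the proof of the main light-set lemma, where the barrier starts at $\epsilon/2$ and only increases.

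The fix is the device you already use in Step 4, which the paper uses for both terms. Since $\widetilde{L}_S$ is positive semidefinite, each of the top $\sigma$ eigenvalues of $M^{-1}$ is at least $1/(u+\delta)$. Hence $\sigma/(u+\delta)\le \mathrm{Tr}_\sigma(M^{-1})=\Phi^{u+\delta}_\sigma(S)\le \Phi^{u}_\sigma(S)\le\phi$. The tail contribution in Step 3 is then at most $4\sigma/(u+\delta)\le 4\phi=8n/42$. This is the same number you obtained, now derived from the stated hypotheses alone.
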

\begin{proof}
We will prove that 
\[
\sum_{t \not \in S} U(S,t) \leq \frac{5}{\delta} + 5 \phi.
\]
As $U(S,t)$ is non-negative, this implies that for at least half the $t \not \in S$, 
\[
U(S,t) \leq \frac{2}{n - \sizeof{S}} \left( \frac{5}{\delta} + 5 \phi \right)
\leq \frac{2}{n} \frac{42}{41} \left( \frac{5 n}{21} + \frac{5 n}{21} \right)
< 1.
\]  

We need to upper bound the terms $\trs{M^{p} \Ltil_{S,t}}$ for $p \in \setof{-1,-2}$.
We do this by breaking each term into two parts.
Let $\Pi_{S}$ be the symmetric  projection onto the span of $\Ltil_{S}$
  and let $\Pi_{T} = I - \Pi_{S}$.
As $M = (u + \delta) (\Pi_{S} + \Pi_{T}) - \Ltil_{S}$, 
  $\Pi_{T} \Pi_{S} = \Pi_{T} \Ltil_{S} = 0$,
  and $\Pi_{S}^p = \Pi_{S}$, 
\[ 
  M^p = (u + \delta)^p \Pi_{T} +  
    \left( (u + \delta) \Pi_{S} - \Ltil_{S} \right)^p.
\]  
By the subadditivity of $\Trs$ 
 we conclude
\[ 
\trs{M^{p} \Ltil_{S,t}}
\leq 
\trs{ (u + \delta)^p \Pi_{T} \Ltil_{S,t}}
+
\trs{ \left( (u + \delta) \Pi_{S} - \Ltil_{S} \right)^p \Ltil_{S,t}}.
\]
The term invovling $\Pi_{S}$ is addressed by Claim~\ref{f08:clm:barrierSPart}, which says
\[
  \sum_{t \not \in S} 
  \trs{\left( (u+\delta)\Pi_{S} - \Ltil_{S} \right)^{p} \Ltil_{S,t}}
\leq
  \trs{M^{p}}.
\]

For the other term, we recall that $\Pi_{T}$ and $\Ltil_{S,t}$ are positive semidefinite and so their product has only non-negative eigenvalues to show
\[
\trs{ (u + \delta)^p \Pi_{T} \Ltil_{S,t}}
\leq 
\trace{ (u + \delta)^p \Pi_{T} \Ltil_{S,t}}
\\ = 
(u + \delta)^p \trace{ \Pi_{T} \Ltil_{S,t}}
\leq 
(u + \delta)^p \trace{\Ltil_{S,t}}.
\]
Claim~\ref{f08:clm:ellSt} tells us that this equals $(u + \delta)^p \ell(S,t)$, giving
\[
\sum_{t \not \in S} \trs{ (u + \delta)^p \Pi_{T} \Ltil_{S,t}}
\leq 
(u + \delta)^p \sum_{t \not \in S} \ell(S,t) 
= 
(u + \delta)^p \ell(S)
\leq 
(u + \delta)^p 4 \sizeof{S}.
\]

To combine these terms, note that all the eigenvalues of $M$ are at most $(u+\delta)$, and thus for $p < 0$ all the eigenvalues of $M^p$ are at least $(u+\delta)^p$. 
This tells us that $\trs{M^p} \geq \sigma (u+\delta)^p \geq \sizeof{S} (u+\delta)^p$. 
We conclude that
\[
\sum_{t \not \in S} \trs{M^p \Ltil_{S,t}}
\leq 
5 \trs{M^p}.
\]

To finish, we return to  
\[
\sum_{t \not \in S}
U (S,t)
= 
\sum_{t \not \in S}
\frac{\trs{M^{-2} \Ltil_{S,t}}}
     {\Phi^{u}_{\sigma} (S) - \Phi^{u+\delta }_{\sigma} (S) }
+ 
\sum_{t \not \in S}
\trs{M^{-1} \Ltil_{S,t}}
\leq
\frac{5 \trs{M^{-2}}}
     {\Phi^{u}_{\sigma} (S) - \Phi^{u+\delta }_{\sigma} (S) }
+ 
5 \trs{M^{-1}}.
\]
The right-hand term is at most $5 \Phi^{u+\delta}_{\sigma} (S)$,
  and Claim~\ref{f08:clm:barrierPhi} shows that
  the left-hand term is at most $\frac{5}{\delta}$.
Summing these together gives the result.
\end{proof}

\begin{filehclaim}\label{f08:clm:barrierSPart}
Assume that $\sizeof{S} \leq  \sigma$. 
For $M = (u+\delta)I - \Ltil_{S}$, and nonzero real  $p$,
\[
  \sum_{t \not \in S} 
  \trs{\left( (u+\delta)\Pi_{S} - \Ltil_{S} \right)^{p} \Ltil_{S,t}}
\leq
  \trs{M^{p}}.
\]
\end{filehclaim}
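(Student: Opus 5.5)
The approach is to move everything onto the range of $\Pi_{S}$, where the matrix $N \defeq (u+\delta)\Pi_{S} - \Ltil_{S}$ lives, and then to trade the modified traces for ordinary traces. We interpret $N^{p}$ as the power taken on the range of $\Pi_{S}$, extended by zero elsewhere; this is how it arises in the decomposition of $M^p$ in the proof of Lemma~\ref{f08:lem:barrier}.

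\textbf{Step 1: $N^p$ is positive semidefinite.} The nonzero eigenvalues of $N^p$ are $(u+\delta-\lambda)^{p}$, where $\lambda$ ranges over the eigenvalues of $\Ltil_{S}$ on the range of $\Pi_{S}$. The hypothesis $\Phi^{u}_{\sigma}(S) < \infty$, which is in force where the claim is applied, gives $\lambda < u$. Hence each $(u+\delta-\lambda)^p$ is positive, and $N^p$ is positive semidefinite with $N^p = \Pi_{S} N^p \Pi_{S}$.

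\textbf{Step 2: pass to ordinary traces.} By Proposition~\ref{f08:prop:modtrace}\ref{f08:prop:modtrace:rotate},
\[
\trs{N^p \Ltil_{S,t}} = \trs{N^{p/2}\Ltil_{S,t}N^{p/2}}.
\]
The matrix $N^{p/2}\Ltil_{S,t}N^{p/2}$ is positive semidefinite, so its $\Trs$ is at most its full trace. Summing over $t \not\in S$ and using linearity of the ordinary trace,
\[
\sum_{t\notin S}\trs{N^p\Ltil_{S,t}} \le \trace{N^{p/2}\,\Pi_{S}\Ltil_{S,V-S}\Pi_{S}\,N^{p/2}}.
\]
Here we used $\sum_{t\notin S} L_{S,t} = L_{S,V-S}$ and $N^{p/2} = N^{p/2}\Pi_{S}$.

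\textbf{Step 3: remove $\Ltil_{S,V-S}$.} Since $G_{S,V-S}$ is a subgraph of $G$, we have $L_{S,V-S} \pleq L$, and therefore $\Ltil_{S,V-S} \pleq \Pi$. The range of $\Ltil_{S}$ lies in the range of $\Pi$, so $\Pi_{S} \pleq \Pi$. It follows that
\[
\Pi_{S}\Ltil_{S,V-S}\Pi_{S} \pleq \Pi_{S}\Pi\Pi_{S} = \Pi_{S}.
\]
Conjugating by $N^{p/2}$ preserves this order, so the right-hand side of Step 2 is at most $\trace{N^{p}}$.

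\textbf{Step 4: compare $\trace{N^p}$ with $\trs{M^p}$.} The matrix $\Ltil_{S}$ has rank at most $\sizeof{S}-1 < \sigma$. Hence $N^p$ has at most $\sigma$ nonzero eigenvalues, and each is an eigenvalue of $M^p$: $M$ agrees with $N$ on the range of $\Pi_{S}$, and that range is invariant under $M$. All eigenvalues of $M^p$ are positive, so the sum of at most $\sigma$ of them is at most the sum of the $\sigma$ largest. This gives $\trace{N^p} \le \trs{M^p}$, which proves the claim.

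\textbf{Main obstacle.} The delicate point is Step 2. Subadditivity of $\Trs$ runs the wrong way for bounding a sum of $\Trs$ terms, so one cannot simply sum inside $\Trs$. The way around this is to switch to ordinary traces, which is legitimate because each summand is similar to a positive semidefinite matrix. The argument then relies on the rank bound $\sizeof{S} \le \sigma$ in Step 4 to return to $\Trs$ at no cost.
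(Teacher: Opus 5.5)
Your proof is correct and follows the paper's route. You bound each $\mathrm{Tr}_\sigma$ term by an ordinary trace, using that the product of the two positive semidefinite factors has nonnegative eigenvalues; you sum over $t$ and use $\widetilde{L}_{S,V-S} \preceq \Pi \preceq I$ to reduce to $\mathrm{Tr}(N^p) = \mathrm{Tr}(\Pi_S M^p \Pi_S)$; and you use that $\Pi_S$ has rank below $\sigma$ to compare with $\mathrm{Tr}_\sigma(M^p)$. The differences are cosmetic: you symmetrize with $N^{p/2}$, you restrict $M$ to the invariant subspace $\mathrm{range}(\Pi_S)$ where the paper invokes Ky Fan's maximum principle, and you state explicitly the hypothesis $\Phi^u_\sigma(S)<\infty$ (needed for $N^p \succeq 0$), which the paper uses only implicitly.
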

\begin{proof}
Because both $\Ltil_{S,t}$ and $\left( (u+\delta)\Pi_{S} - \Ltil_{S} \right)^{p}$ are positive semidefinite, the eigenvalues of their product are nonnegative, and so 
\[
  \trs{\left( (u+\delta)\Pi_{S} - \Ltil_{S} \right)^{p} \Ltil_{S,t}}
\leq 
  \trace{\left( (u+\delta)\Pi_{S} - \Ltil_{S} \right)^{p} \Ltil_{S,t}}.
\] 
As $\sum_{t \not \in S} \Ltil_{S,t} = \Ltil_{S,T} \pleq I$, 
  Proposition~\ref{f08:prop:modtrace}\ref{f08:prop:modtrace:monotone} implies
\begin{multline*}
  \sum_{t \not \in S} 
  \trace{\left( (u+\delta)\Pi_{S} - \Ltil_{S} \right)^{p} \Ltil_{S,t}}
=
  \trace{\left( (u+\delta)\Pi_{S} - \Ltil_{S} \right)^{p} \Ltil_{S,T}}
\\
  \leq 
  \trace{\left( (u+\delta)\Pi_{S} - \Ltil_{S} \right)^{p}}
  = 
    \trace{\Pi_{S} \left( (u+\delta) I - \Ltil_{S} \right)^{p} \Pi_{S}}
    = 
        \trace{\Pi_{S} M^{p} \Pi_{S}}.
\end{multline*}
By Ky Fan's maximum principle (Proposition~\ref{f08:prop:modtrace}\ref{f08:prop:modtrace:max})
  this latter term is at most $\trs{M^{p}}$.
\end{proof}

\begin{filehclaim}\label{f08:clm:barrierPhi}
\[
\Phi^{u}_{\sigma} (S) - \Phi^{u+\delta }_{\sigma} (S) 
\geq 
\delta \trs{M^{-2}}.
\]
\end{filehclaim}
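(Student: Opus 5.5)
The plan is to reduce the claim to a scalar inequality for each eigenvalue of $\Ltil_S$, and then sum over the $\sigma$ largest ones. Let $\lambda_1 \geq \dotsb \geq \lambda_n$ be the eigenvalues of $\Ltil_S$. We work under the standing assumption $\Phi^{u}_{\sigma}(S) < \infty$; otherwise the left-hand side is $+\infty$ and the claim is trivial. Then $u > \lambda_1$, so $M = (u+\delta)I - \Ltil_S$ is positive definite with eigenvalues $u+\delta-\lambda_i$.

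Because $M^{-2}$ is a function of $\Ltil_S$, it shares its eigenvectors, and its eigenvalues are $(u+\delta-\lambda_i)^{-2}$. The map $x \mapsto (u+\delta-x)^{-2}$ is increasing on $x < u+\delta$. Hence the $\sigma$ largest eigenvalues of $M^{-2}$ are exactly those indexed by $i = 1,\dots,\sigma$, which gives
\[
\trs{M^{-2}} = \sum_{i=1}^{\sigma} \frac{1}{(u+\delta-\lambda_i)^2}.
\]
By the definition of $\Phi$, the difference we need to bound is
\[
\Phi^{u}_{\sigma}(S) - \Phi^{u+\delta}_{\sigma}(S) = \sum_{i=1}^{\sigma}\left(\frac{1}{u-\lambda_i} - \frac{1}{u+\delta-\lambda_i}\right) = \sum_{i=1}^{\sigma} \frac{\delta}{(u-\lambda_i)(u+\delta-\lambda_i)}.
\]

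For each $i \leq \sigma$, set $a = u - \lambda_i > 0$. Since $a(a+\delta) \leq (a+\delta)^2$, we have
\[
\frac{\delta}{a(a+\delta)} \geq \frac{\delta}{(a+\delta)^2}.
\]
Summing this over $i = 1,\dots,\sigma$ gives exactly $\Phi^{u}_{\sigma}(S) - \Phi^{u+\delta}_{\sigma}(S) \geq \delta\,\trs{M^{-2}}$.

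The only step needing care is checking that $\Trs$ of $M^{-2}$ picks out the same indices $1,\dots,\sigma$ as $\Phi$. This follows from the monotonicity noted above. Ties among eigenvalues do not matter, since only the multiset of the top $\sigma$ values enters.
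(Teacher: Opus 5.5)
Your proof is correct and is essentially the paper's argument. Like the paper, you write the difference of barrier functions as $\sum_{i\le\sigma}\delta/\bigl((u-\lambda_i)(u+\delta-\lambda_i)\bigr)$ and bound each term below by $\delta/(u+\delta-\lambda_i)^2$, and you also spell out why the top $\sigma$ eigenvalues of $M^{-2}$ correspond to the top $\sigma$ eigenvalues of $\widetilde{L}_S$, which the paper leaves implicit (your aside about the infinite-barrier case is slightly loose when $u+\delta\le\lambda_1$, but that case never arises where the claim is used).
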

\begin{proof}
Let $\lambda_{1}, \dotsc , \lambda_{\sigma}$
  be the largest $\sigma$ eigenvalues of $\Ltil_{S}$.
Then,
\begin{align*}
\Phi^{u}_{\sigma} (S) - \Phi^{u+\delta }_{\sigma} (S)
& =
  \sum_{i=1}^{\sigma} \frac{1}{u - \lambda_{i}}
- 
  \sum_{i=1}^{\sigma} \frac{1}{u + \delta - \lambda_{i}}
\\
& =
  \sum_{i=1}^{\sigma} \frac{\delta }{(u - \lambda_{i}) (u + \delta - \lambda_{i})}
\\
& \geq 
  \sum_{i=1}^{\sigma} \frac{\delta }{(u + \delta - \lambda_{i})^{2}}.
\\
& = 
 \delta \trs{M^{-2}}.
\end{align*}
\end{proof}

\providecommand{\etalchar}[1]{$^{#1}$}

{\PatchBibSection

\EndPatchBibSection}
\endgroup

\subsection{Question 7: Shmuel Weinberger}
\textit{Authors:} Sylvain Cappell, S. Weinberger, and M. Yan\par\medskip
\textit{Title:} Fowler's theorem for involutions\par\medskip
\begingroup

\setcounter{equation}{0}
Fowler, in his Ph.D. thesis, proved that if $\Gamma$ is a uniform lattice in a real semisimple group with odd torsion in $\Gamma$ then there is no compact closed manifold $M$ whose universal cover is rationally acyclic. A proof can be found in [W2]. We show that the same is true for $\Gamma$ with 2-torsion.

Without loss of generality (by considering a normal subgroup of finite index), it suffices to prove this for the special case where $\Gamma = \pi \rtimes \mathbb{Z}_{2}$ for a torsion free group $\pi$, a lattice in $G$, for which there is an involution on $M = K\backslash G/\pi$ (by isometries with the locally symmetric metric) whose fixed set $F$ is not empty. ($F$ might be disconnected; for simplicity we will write what follows just for the connected case -- there are no differences in the general case.)

Now suppose that $X^{m}$ is a manifold with fundamental group $\Gamma$, $Y$ its 2-fold cover, and suppose that the universal cover of $X$ (and therefore $Y$) are rationally acyclic. We will consider the symmetric signatures of $Y$ in the (symmetric = quadratic L-group) $L(\mathbb{R}\pi)$, where $\mathbb{R}$ is the real numbers. There is an equivalence $f:Y \to M$ which (while not degree one) gives an equivalence of symmetric signatures (because over $\mathbb{R}$, all degrees have square roots, so the symmetric signature is only sensitive to the sign of the degree of the map). Since the Novikov conjecture is true for $\pi$, the assembly map from $H_{m}(B\pi; L(\mathbb{R})) \to L_{m}(\mathbb{R}\pi)$ is injective, and this detects in the degree $m$ piece $H_{m}(B\pi; \mathbb{Z})$ the class that these manifolds represent in group homology. It follows that this map is degree one. $f_{*}[Y] = [M]$.

Now we use a cobordism argument from [W1]. We now consider the image of the fundamental class of any manifold $Z$ with fundamental group $\pi$ involution inducing this automorphism of $\pi$ and the image of $[Z]$ in $H_{m}(B\Gamma; \mathbb{Z}_{2})$. It follows from standard equivariant homotopy theory that $Z$ has an equivariant map, $g$, to $M$, and thus there is a map from its fixed set $Z^{\mathbb{Z}_2} \to F$. We claim that $g_{*}[Z] = g_{*}[Z^{\mathbb{Z}_2}]$ where we make use of the map from $\mathbb{Z}_{2} \times \pi_{1}F \to \Gamma$ (and the periodicity on the group homology of $\mathbb{Z}_{2}$ to raise the dimension from that of $F$ to $\dim M$).

This cobordism is between $Z$ and a projective space bundle over $Z^{\mathbb{Z}_2}$ -- namely the projectivized normal bundle to $Z^{\mathbb{Z}_2}$. (The fundamental class of the latter is the desired element by the Leray-Hirsch theorem.) It is explicitly $Z \times [0,1]$ and on $Z \times \{1\}$ mod out in the complement of the equivariant regular neighborhood of $Z^{\mathbb{Z}_2}$ the $\mathbb{Z}/2$ action.

Thus for $Y$, this image is 0, since the action is free. For $M$ however, this is always nonzero. The action by $\mathbb{Z}_{2}$ by isometries has fixed set which is aspherical and indeed the Borel construction for the action on $M$ shows that $\mathbb{Z}_{2}\times F \to \Gamma$ induces an injection on homology in dimension $\dim(M/\mathbb{Z}_{2})$ (and an isomorphism in higher dimensions, see [B]). Since the fundamental class of an aspherical manifold is always nontrivial in its group homology, we have a contradiction.

\subsubsection*{References}
\begin{description}[leftmargin=2.5em,style=nextline]
\item[[B]] A. Borel, A seminar on transformation groups, Princeton University Press 1960
\item[[W1]] S. Weinberger, Group actions and higher signatures II, CPAM 1987
\item[[W2]] S. Weinberger, Variations on a theorem of Borel, Cambridge University Press 2022
\end{description}
\endgroup

\subsection{Question 8: Mohammed Abouzaid}
\textit{Author:} Mohammed Abouzaid\par\medskip
\textit{Title:} Smoothing Lagrangian Surface\par\medskip
\begingroup
\providecommand{\bR}{\mathbb{R}}
\providecommand{\co}{\colon}
\providecommand{\scrS}{\mathscr{S}}

\theoremstyle{definition}
\newtheorem{fileadefinition}{Definition}
\newtheorem{filealemma}{Lemma}
\newtheorem{fileaproposition}{Proposition}
\newtheorem{fileacorollary}{Corollary}
\newtheorem{filearemark}{Remark}

\setcounter{equation}{0}
\begin{filearemark}
  This note is expanded from a short motivating discussion in a research paper that is supposed to develop a theory of polyhedral Lagrangian submanifolds for the purpose of being able to use computers to explore conjectures in symplectic topology. It includes some details that would normally be omitted (e.g. the proof of Lemma \ref{f01:lem:local_triviality}, which is a linear algebra exercise, and much of the explanation about closed $1$-forms). The paper does not cite any references as the reader is assumed to be able to deduce all asserted results from standard references, e.g. \cite{f01:Hirsch, f01:MDS}.

  I would like to thank Kyler Siegel and Umut Varolgunes for helpful discussions around this circle of ideas.
\end{filearemark}

For the purpose of this note, we equip $\bR^4$ with coordinates $(q_1, q_2, p_1, p_2)$, and with the standard symplectic form $\omega = dp_1 \wedge dq_1 + dp_2 \wedge dq_2$.
\begin{fileadefinition}
  A polyhedral Lagrangian surface in $\bR^4$ is a finite polyhedral complex all of whose faces are Lagrangians, and which is a topological submanifold of $\bR^4$.
\end{fileadefinition}

\begin{fileaproposition} \label{f01:prop:existence_Hamiltonian_family}
If $K$ is a polyhedral Lagrangian surface with the property that exactly $4$ faces meet at every vertex, then there is a Hamiltonian isotopy $K_t$ of smooth Lagrangian submanifolds, parameterised by $(0,1]$, extending to a topological isotopy, parametrised by $[0,1]$, with endpoint $K_0 = K$.
  \end{fileaproposition}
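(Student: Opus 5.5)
My plan is to smooth $K$ in two stages: first along the open edges, then near the vertices. Since these two operations interact, I will set up a single framework in which all local smoothings are encoded by closed $1$-forms on a fixed smooth reference surface. The key linear-algebra input, which I will isolate as a lemma on local triviality, is that when exactly four Lagrangian faces meet at a vertex $v$, the tangent cone of $K$ at $v$ is linearly symplectomorphic to a standard model. Concretely, the four face planes $F_1,\dots,F_4$ are Lagrangian planes, and consecutive ones share an edge line. Four such sectors determine four edge lines $\ell_1,\dots,\ell_4$. I will show that $\ell_1,\ell_3$ span a plane on which $\omega$ restricts nondegenerately, or else $\ell_1,\ell_2,\ell_3,\ell_4$ admit coordinates in which the cone is the graph of a piecewise-linear function's differential over $\bR^2$. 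That is, the cone is $\{(q, dh(q))\}$ for a piecewise-linear, homogeneous function $h\colon\bR^2\to\bR$ with four linear pieces. I expect this to be the crux. With four faces, the Lagrangian condition on each face together with the edge-sharing gives enough constraints to produce a common transverse Lagrangian plane $\Lambda$ (a ``projection direction''), such that each face projects diagonally onto the complementary plane $\bR^2_q$. I will find $\Lambda$ by a dimension count: the Lagrangian planes transverse to all four $F_i$ form an open dense subset of the Lagrangian Grassmannian. The real content is to ensure that the projection of the cone to $\bR^2_q$ is a homeomorphism, i.e.\ that the four sectors project to sectors of $\bR^2_q$ that tile it without folding. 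Here the count ``$4$'' is used: I will argue that folding would force two opposite faces to be non-transverse to every Lagrangian, contradicting the topological-manifold condition. If this fails in general, I will instead choose $\Lambda$ adapted to the pair of opposite edges.

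Granting this, near each vertex $K$ is the graph $\Gamma_{dh_v}$ of a piecewise-linear function in Darboux-type coordinates. Near each open edge, $K$ is similarly the graph of $dh_e$ with $h_e$ having two linear pieces. Any such graph can be smoothed by replacing $h$ with a convolution $h*\rho_t$ by a mollifier of width $t$. The result $\Gamma_{d(h*\rho_t)}$ is an exact, hence Hamiltonian-isotopic, family of smooth Lagrangians, and it converges in $C^0$ to $\Gamma_{dh}$ as $t\to 0$.

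To globalize, I will avoid patching local coordinate charts against one another, which is where compatibility becomes painful. Instead I will build a Weinstein-type neighbourhood of all of $K$ at once. I will pick a smooth surface $\Sigma$ that is $C^0$-close to $K$; for instance, a topological smoothing whose tangent planes are transverse to the chosen $\Lambda$'s. I will then identify a neighbourhood of $K$ with a neighbourhood of the zero section in a (possibly non-symplectic) disc bundle over $\Sigma$ in such a way that $K$ becomes the graph of a ``piecewise-smooth closed $1$-form''. I would rather do this locally: cover $K$ by vertex stars and edge neighbourhoods, represent $K$ in each piece by a piecewise-linear primitive $h_i$, and observe that on overlaps the generating functions differ by a smooth function. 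Transition maps between the charts are linear symplectic near the overlaps, since both charts are linear near the faces, which are flat. Then I can mollify all the $h_i$ simultaneously, using a partition of unity on the base, and use that the differences are already smooth, so mollification commutes with the gluing up to $C^\infty$-small errors. These errors can be absorbed because they are exact.

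Finally, I will check the required properties. The family obtained is generated by a time-dependent Hamiltonian, since the variation is exact in each chart and the primitives glue. It is smooth for $t>0$. It extends continuously to $t=0$ with $K_0=K$, giving a topological isotopy after reparametrising by a homeomorphism of $K$ onto $K_t$ obtained from the graph projections. Smoothness of the Hamiltonian in $t$ on $(0,1]$ is routine. The main obstacle remains the local-triviality lemma at the vertices, namely showing that four Lagrangian faces always admit a single transverse Lagrangian plane giving a non-folded projection.
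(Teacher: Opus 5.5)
\textbf{Globalization is the main gap.} You claim that on overlaps the local generating functions ``differ by a smooth function'' because the transition maps are linear symplectic. This is false in general. Two linear charts with different projection planes induce different smooth structures on $K$ near an edge.

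Already for the planar corner $\{q=0,\,p\ge0\}\cup\{p=0,\,q\ge0\}$ (times a line, to get an edge in $\bR^4$), projecting along $(1,1)$ and along $(1,2)$ gives base coordinates $s=p-q$ and $s'=2p-q$. Along $K$ one has $s'=2s$ on one ray and $s'=s$ on the other. So there is no common base on which to mollify, compare primitives, or apply a partition of unity. Since the vertex planes at the two ends of an edge differ in general, no single linear edge chart is compatible with both ends.

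Handling exactly this is the content of the paper's proof. It builds a conormal fibration: a smoothly varying family of affine Lagrangian planes $L_z$, $z\in K$. The family equals translates of the chosen vertex plane near each vertex and is interpolated along edges, which is possible because the admissible planes near an edge form a contractible set. Matched collars give $K$ a single smooth structure. Smoothings are then graphs of $df$ for \emph{global} $C^1$ functions $f$ with $f+q_\Sigma$ locally smooth. $C^1$-small ones are produced by a partition of unity, using that neighbouring local smoothing functions agree to first order along edges.

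The same issue undermines your claim that the isotopy is Hamiltonian because ``the variation is exact in each chart and the primitives glue.'' Local exactness does not give zero flux, and naive mollification really does create flux. Take $K=P_1\times P_2$, a product of convex polygons, and mollify with a nonnegative mollifier in product charts. You get $K_t=P_1^{(t)}\times P_2^{(t)}$ with every corner rounded inward. Hence $\int_{P_1\times\{\mathrm{pt}\}}\lambda$, for a primitive $\lambda$ of $\omega$, drops by a positive multiple of $t^2$, and $t\mapsto K_t$ is not a Hamiltonian isotopy. In the paper this cannot happen, because every $K_t$ is the graph of the differential of one global function.

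\textbf{The local part also needs repair.} The cone at a vertex is not the graph of $dh$ for a piecewise-\emph{linear} $h$; such a graph is piecewise horizontal and disconnected. It is the graph of $dq_\Sigma$ for a $C^1$ piecewise-\emph{quadratic} $q_\Sigma$, homogeneous of degree $2$, whose pieces agree to first order along the edges. Only then does mollification converge to $K$. The same correction applies to your edge model.

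The vertex lemma, which you identify as the crux, is not proved. Transversality of $\Lambda$ to the four faces is open dense, but the non-folding condition is open and \emph{not} dense: for a planar corner only lines through the open quadrant work. So a dimension count cannot give it. The claim that folding forces opposite faces to be non-transverse to every Lagrangian cannot be right, since every Lagrangian plane is transverse to an open dense set of Lagrangian planes.

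The missing argument is your first alternative, and it always holds. With edge vectors $v_1,v_2,u_1,u_2$ in cyclic order, $\omega(v_i,u_i)\neq0$; otherwise $\omega$ would vanish on the $3$-dimensional span of three consecutive edges. After swapping and rescaling, $(v_1,v_2,u_1,u_2)$ is a symplectic basis and the cone is the product of two planar corners. Then $L=\mathrm{span}(v_1+u_1,\,v_2+u_2)$ is a Lagrangian plane along which the projection is a homeomorphism.
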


  In order to prove this result, we need two preliminary results: a local statement asserting triviality near each vertex, and a global statement implying the compatibility of these local trivialisations.
  \begin{filealemma} \label{f01:lem:local_triviality}
    For each embedding $\bR^2 \to \bR^4$ which is linear on the four quadrants with Lagrangian image, and whose image $\Sigma$ is not contained in a plane, there is a linear symplectic transformation of $\bR^4$ which maps $\Sigma$ to the product of the union of the positive coordinate axes in $\bR^2_{p_1q_1}$ and  $\bR^2_{p_2q_2}$.
  \end{filealemma}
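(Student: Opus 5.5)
The plan is to reduce everything to linear algebra on the four edge rays. Let $v_1, v_2, v_3, v_4 \in \bR^4$ be the images of $e_1, e_2, -e_1, -e_2$ under the given embedding. Then $\Sigma$ is the union of the four cones $\mathrm{cone}(v_i, v_{i+1})$, with indices mod $4$. Injectivity of the embedding forces consecutive vectors $v_i, v_{i+1}$ to be linearly independent. The Lagrangian condition on the faces says exactly that
\[
\omega(v_1,v_2)=\omega(v_2,v_3)=\omega(v_3,v_4)=\omega(v_4,v_1)=0 .
\]
The target set is $L_1 \times L_2$, where $L_i$ is the union of the positive $q_i$- and $p_i$-axes. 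This set is the union of the cones $\mathrm{cone}(u, w)$ with $u \in \{\partial_{q_1}, \partial_{p_1}\}$ and $w \in \{\partial_{q_2}, \partial_{p_2}\}$, which form a cyclic chain of four quadrants. I will therefore send $v_1, v_3$ to positive multiples of the two generators of $L_1$, and $v_2, v_4$ to positive multiples of the two generators of $L_2$. Such a map is a linear isomorphism carrying the cyclic chain of cones onto the cyclic chain of cones, hence $\Sigma$ onto $L_1 \times L_2$.

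\textbf{Step 1: the $v_i$ form a basis.} This is the step I expect to be the main obstacle, and it is where the hypothesis that $\Sigma$ is not contained in a plane enters. Let $W = \mathrm{span}(v_1,\dots,v_4)$. Its dimension is at least $3$, since it is not $2$ by hypothesis. Suppose $\dim W = 3$. Then $\omega|_W$ has rank $2$, with a kernel line $\ell$. Since $W/\ell$ is a symplectic plane, any isotropic $2$-plane in $W$ must contain $\ell$. Hence all four face planes $P_{i,i+1} = \mathrm{span}(v_i, v_{i+1})$ contain $\ell$. The faces are embedded and distinct, so $P_{12} \cap P_{23} = \mathrm{span}(v_2)$, and therefore $\ell = \mathrm{span}(v_2)$. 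The same argument applied to $P_{41} \cap P_{12}$ gives $\ell = \mathrm{span}(v_1)$. This contradicts the independence of $v_1$ and $v_2$, so $\dim W = 4$.

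\textbf{Step 2: the Gram matrix.} In the basis $v_1, \dots, v_4$, the only possibly nonzero pairings are $a = \omega(v_1, v_3)$ and $b = \omega(v_2, v_4)$. Nondegeneracy of $\omega$ forces $a \neq 0$ and $b \neq 0$. So the plane $\mathrm{span}(v_1, v_3)$ is symplectic and $\omega$-orthogonal to the symplectic plane $\mathrm{span}(v_2, v_4)$.

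\textbf{Step 3: the map.} Define $T$ on the first plane as follows, recalling $\omega(\partial_{q_1}, \partial_{p_1}) = -1$ with the paper's convention.
\begin{itemize}
\item If $a < 0$, set $T v_1 = \partial_{q_1}$ and $T v_3 = |a|\, \partial_{p_1}$.
\item If $a > 0$, set $T v_1 = \partial_{p_1}$ and $T v_3 = a\, \partial_{q_1}$.
\end{itemize}
In either case $T$ preserves $\omega$ on this plane and sends $v_1, v_3$ to positive multiples of the two rays of $L_1$; the swap is harmless because $L_1$ is symmetric under exchanging the two axes. Do the same for $v_2, v_4$ using $b$ and the $(q_2, p_2)$ plane. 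The two planes are $\omega$-orthogonal both in the source and in the target, so $T$ is symplectic. By the opening remark, $T$ carries each face $\mathrm{cone}(v_i, v_{i+1})$ onto the corresponding product quadrant, so $T(\Sigma) = L_1 \times L_2$.
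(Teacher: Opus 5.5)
\textbf{Gap in Step 1.} Your Steps 2 and 3 are essentially the paper's argument. The paper also notes that only the pairings of opposite edge vectors can be nonzero, swaps and rescales them to $1$, and reads off a symplectic basis. The gap is in Step 1, which the paper simply takes for granted: its remark that otherwise $\omega$ would vanish on a $3$-dimensional subspace presupposes that three consecutive edge vectors are independent.

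Your inference ``the faces are embedded and distinct, so $P_{12}\cap P_{23}=\mathrm{span}(v_2)$'' does not follow. Two distinct faces can span the same plane, namely when $v_1$ and $v_3$ lie in one plane on opposite sides of the line $\mathbb{R}v_2$. This cannot be repaired, because the conclusion itself fails under the stated hypotheses. Take $v_1=\partial_{q_1}$, $v_2=\partial_{q_2}$, $v_3=-\partial_{q_1}$, $v_4=\partial_{p_2}$, i.e.\ the map $(x,y)\mapsto(q_1,q_2,p_1,p_2)=(x,\max(y,0),0,\max(-y,0))$. This map has the following properties:
\begin{itemize}
\item it is a topological embedding, linear on each quadrant;
\item each face lies in one of the Lagrangian planes $\mathrm{span}(\partial_{q_1},\partial_{q_2})$ or $\mathrm{span}(\partial_{q_1},\partial_{p_2})$;
\item its image is not contained in a plane.
\end{itemize}
But $\Sigma\subset\{p_1=0\}$, whereas $L_1\times L_2$ spans $\mathbb{R}^4$, so no linear map at all carries $\Sigma$ onto $L_1\times L_2$. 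The statement, and the paper's proof, need an extra nondegeneracy hypothesis.

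\textbf{What your argument actually shows.} Your own Step 1 reasoning shows this is the only exception. Suppose $W$ is $3$-dimensional. At each edge $v_i$ where the two adjacent face planes differ, you get $\ell=\mathbb{R}v_i$. Since consecutive edge vectors are independent, this can happen at no two consecutive edges. The face planes also cannot coincide at two consecutive edges, since then $\Sigma$ would be planar. So they coincide exactly at $v_2$ and $v_4$, or exactly at $v_1$ and $v_3$. In the first case $v_3\in\mathbb{R}_{<0}v_1$; in the second, $v_4\in\mathbb{R}_{<0}v_2$. Either way $\Sigma$ is the union of two Lagrangian half-planes along a line, i.e.\ a point on an edge rather than a genuine vertex.

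\textbf{The correct hypothesis.} The right hypothesis is that no two adjacent faces are coplanar, equivalently that $v_1,\dots,v_4$ span $\mathbb{R}^4$. Under it, your Step 1 goes through verbatim: $P_{41}\cap P_{12}=\mathbb{R}v_1$ and $P_{12}\cap P_{23}=\mathbb{R}v_2$ give the contradiction. Steps 2--3 then complete a correct proof. In the global application, such degenerate vertices are locally just edges and can be handled by the edge analysis instead.
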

  \begin{proof}
Let $(v_1,v_2,u_1,u_2)$ denote tangent vectors at the origin to the edges of $\Sigma$, ordered so that cyclically adjacent vectors span the faces of $\Sigma$. The pairings $\omega(v_i,u_i)$ cannot vanish, for otherwise $\omega$ would identically vanish on a $3$-dimensional linear subspace. By swapping the pair of coordinates $(v_i,  u_i) $ if necessary, we may assume that both pairings are strictly positive, and by rescaling we may assume that they are $1$. We conclude that the vectors $(v_1,v_2,u_1,u_2)$ form a standard symplectic basis for $\bR^4$, and that the mapping $\partial_{p_i} \to v_i$ and $\partial_{q_i} \to u_i$ is the desired linear transformation.
  \end{proof}
  In the plane $\bR^2_{pq}$, the symplectic pairing projects the union of the positive axes homeomorphically to the dual of the line $p=q$. Taking the product, and applying the previous Lemma, we conclude:
  \begin{fileacorollary} \label{f01:cor:dual_plane_local}
    There exists a linear Lagrangian plane $L \subset \bR^4$ so that the symplectic pairing $\bR^4 \to L^\vee$ defines a homeomorphism $\Sigma \to L^\vee$. \qed
  \end{fileacorollary}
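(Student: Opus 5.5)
The plan is to reduce to the standard model supplied by Lemma~\ref{f01:lem:local_triviality}, verify the claim there by a one-dimensional computation in each symplectic factor, and then transport the result back by the symplectic transformation. Throughout, for a linear Lagrangian plane $L$, write $\omega^\flat_L \co \bR^4 \to L^\vee$ for the linear map $x \mapsto \omega(x,\cdot)|_L$.

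\textbf{Step 1: reduction to the model.} Let $T$ be the linear symplectic transformation from Lemma~\ref{f01:lem:local_triviality}, so that $T(\Sigma) = \Sigma_0 := A \times A$. Here $A \subset \bR^2_{pq}$ is the union of the nonnegative $p$-axis and the nonnegative $q$-axis. Suppose $L_0$ is a Lagrangian plane for which $\omega^\flat_{L_0}|_{\Sigma_0}$ is a homeomorphism onto $L_0^\vee$. Set $L := T^{-1}(L_0)$, which is again Lagrangian. Since $T$ preserves $\omega$, for $y \in L$ we have
\[
\omega(x, y) = \omega(Tx, Ty).
\]
Hence $\omega^\flat_L = (T|_L)^\vee \circ \omega^\flat_{L_0} \circ T$. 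This is a composition of the homeomorphism $T|_\Sigma \co \Sigma \to \Sigma_0$, the assumed homeomorphism $\Sigma_0 \to L_0^\vee$, and the linear isomorphism $(T|_L)^\vee \co L_0^\vee \to L^\vee$. So it suffices to treat $\Sigma_0$.

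\textbf{Step 2: the one-dimensional computation.} In $\bR^2_{pq}$ with $\omega = dp \wedge dq$, let $\ell$ be the line $p = q$, spanned by $e = \partial_p + \partial_q$. The linear functional $x \mapsto \omega(x,e)$ takes the value $s$ at the point $s\,\partial_p$ of the positive $p$-axis. It takes the value $-s$ at the point $s\,\partial_q$ of the positive $q$-axis. Therefore its restriction to $A$ is a continuous bijection $A \to \bR \cong \ell^\vee$. Its inverse, $s \mapsto s\,\partial_p$ for $s \ge 0$ and $s \mapsto -s\,\partial_q$ for $s \le 0$, is visibly continuous, so the restriction is a homeomorphism.

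\textbf{Step 3: taking products.} Put $L_0 := \ell \times \ell \subset \bR^2_{p_1q_1} \times \bR^2_{p_2q_2}$. It is Lagrangian: $\omega$ is the sum of the two factor forms, each factor form vanishes on the line $\ell$, and the two factors are $\omega$-orthogonal. For the same reason, under the identification $L_0^\vee = \ell^\vee \times \ell^\vee$, the map $\omega^\flat_{L_0}$ is the product of the two one-dimensional maps from Step~2. Restricted to $\Sigma_0 = A \times A$, it is therefore a product of homeomorphisms, hence a homeomorphism onto $L_0^\vee$. Combined with Step~1, this gives the corollary.

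No step is a real obstacle. The only points requiring care are the naturality identity in Step~1 and the splitting of $\omega$ in Step~3; both are immediate from linearity and from $T$ being symplectic.
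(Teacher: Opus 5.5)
Your proposal is correct and follows the same route as the paper: it reduces via Lemma~\ref{f01:lem:local_triviality} to the model $A\times A$, uses the fact that pairing with the line $p=q$ maps the union of positive axes in $\bR^2_{pq}$ homeomorphically onto its dual, and takes the product. Your version also spells out two points the paper's one-line argument leaves implicit: the naturality of the pairing under the symplectic transformation, and the splitting of $\omega^\flat_{L_0}$ as a product map.
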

  The previous corollary in particular equips $\Sigma$ with a smooth structure arising from its projection to $L^\vee$. This smooth structure will be fixed for the remainder of the discussion.

 Given a choice of plane $L$, we say that a Lagrangian $\Lambda \subset \bR^4$ is \emph{graphical} if the symplectic pairing defines a diffeomorphism $ \Lambda \cong L^\vee$. If $\Sigma$ were smooth, the standard description of Lagrangians in cotangent bundles would imply that such Lagrangians bijectively correspond to smooth closed $1$-forms, which, because $\Sigma$ is contractible and hence every closed form on it is exact, can be identified with smooth functions modulo addition of constants. We shall formulate a replacement for this correspondence that accounts for the singularities of $\Sigma$.
  
To this end, let us choose further a Lagrangian splitting of the projection $\bR^4 \to L^\vee$; we shall later see that our constructions are independent of this choice. The splitting gives a direct sum decomposition $\bR^4 \cong L \oplus L^\vee$  (polarization), with respect to which the image of each quadrant is graphical over $L^\vee$. Graphical (linear) Lagrangians bijectively correspond to quadratic forms, so we obtain quadratic forms $\{ q_{ij}\}_{i,j \in \pm}$ on $L^\vee$ whose graphs contain the corresponding faces of $\Sigma$. The restriction of the quadratic forms associated to any two faces agree to first order along the images in $L^\vee$ of the edges of $\Sigma$. Via the identification $\Sigma \cong L^\vee$ from the previous corollary, we write $q_{\Sigma}$ for the $C^1$-function on $\Sigma$ whose restriction to each face is given by the composition of $q_{ij}$ with the projection to $L^\vee$.   We use this to obtain an explicit description of the desired local smoothings, which will be essential in establishing the required global smoothability:
  \begin{fileadefinition} \label{f01:def:smoothing_function}
The space $\scrS(\Sigma)$ of  \emph{smoothing functions} for $\Sigma$ is the space of $C^1$ functions $f \co \Sigma \to \bR$ satisfying the property that the function on $f + q_{\Sigma}$  is infinitely differentiable.
\end{fileadefinition}
It follows immediately from the definition that  $\scrS(\Sigma)$ is invariant under addition of smooth functions, which will be used in the next result:
\begin{filealemma} \label{f01:lem:independence_dual_Lagrangian}
    The space of smoothing functions  $\scrS(\Sigma)$ depends only on $L$ (and not on the splitting of the projection $\bR^4 \to L^\vee$).
  \end{filealemma}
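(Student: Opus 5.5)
The plan is to compare two Lagrangian splittings of $\bR^4 \to L^\vee$ and show that the piecewise quadratic functions $q_\Sigma$ and $q'_\Sigma$ they produce differ by a smooth function on $\Sigma$. Granting this, the lemma follows from the observation made just before it: $\scrS(\Sigma)$ is invariant under addition of smooth functions, so $f + q_\Sigma$ is smooth if and only if $f + q'_\Sigma$ is.

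\textbf{Step 1: how splittings differ.} Fix one Lagrangian complement $L'$ of $L$, so that $\bR^4 \cong L \oplus L^\vee$. Any other Lagrangian splitting of the projection $\bR^4 \to L^\vee$ is the graph, over this one, of a symmetric linear map $L^\vee \to L$. Equivalently, it is the image of the first splitting under the linear symplectic shear $(a,\xi) \mapsto (a + B\xi, \xi)$, where $B$ is symmetric. Let $b(\xi) = \tfrac12 \langle B\xi, \xi\rangle$ be the associated quadratic form on $L^\vee$.

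\textbf{Step 2: effect on each face.} A graphical Lagrangian plane over $L^\vee$ has the form $\{(dq(\xi),\xi)\}$ in one polarization. Re-expressed in the sheared coordinates, it becomes the graph of $d(q - b)$ (up to the sign convention for the shear). Hence for every face, $q'_{ij} = q_{ij} - b$, with the same $b$ for all four faces. The sign is immaterial to the argument.

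\textbf{Step 3: conclusion.} Since the identification $\Sigma \cong L^\vee$ uses only the projection to $L^\vee$, it does not depend on the splitting. Therefore $q'_\Sigma = q_\Sigma - b\circ \mathrm{pr}$, and $b\circ\mathrm{pr}$ is a globally smooth (quadratic) function on $\Sigma$ in its fixed smooth structure. The invariance of $\scrS(\Sigma)$ under smooth additions then gives $\scrS(\Sigma)$ for one splitting equal to $\scrS(\Sigma)$ for the other.

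The main obstacle is the bookkeeping in Step 2. One must check that changing the complement really shifts every face's generating quadratic form by the same form $b$, rather than by something face-dependent, and that the projection coordinates on $L^\vee$ are unchanged. I would verify this with the explicit shear matrix, using that the symplectic pairing with $L$ is preserved by the shear.
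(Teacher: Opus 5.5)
Your proof is correct and follows the same route as the paper. A change of Lagrangian splitting shifts every face's quadratic form $q_{ij}$ by one common quadratic form on $L^\vee$, so $q_\Sigma$ changes by a smooth function, and the invariance of smoothing functions under addition of smooth functions finishes the argument; your Steps 1--2 (two splittings differ by a symmetric map $L^\vee \to L$, and the projection to $L^\vee$ is intrinsic) simply make explicit what the paper's one-line proof leaves implicit.
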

  \begin{proof}
A different choice of complementary subspaces correspond to adding a quadratic form $q'$ to $q_{ij}$, and the corresponding smooth function on $\Sigma$ to $q_{\Sigma}$.
\end{proof}

 We shall now associate a graphical Lagrangian to each smoothing function: the construction relies on the fact that the union of all translates of $L$ passing through a face of $\Sigma$ is canonically symplectomorphic to the cotangent bundle of $\Sigma$, with the cotangent fibre at $z \in \Sigma$ corresponding to the translate of $L$ passing through $z$. In this way, a smoothing function $f$ determines a Lagrangian $\Lambda_{df} \subset \bR^4$, piecewise as the graph of the restriction of the differential $df$ to each face. 
  \begin{filealemma} \label{f01:lem:functions_with_corners}
    The assignment $f \mapsto \Lambda_{df} $  determines a bijective correspondence between graphical Lagrangians and smoothing functions on $\Sigma$ up to addition of constants.
  \end{filealemma}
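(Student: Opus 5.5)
The plan is to reduce everything to the standard correspondence between exact graphs in $T^*L^\vee$ and smooth functions on $L^\vee$. I will show that, after fixing a Lagrangian splitting, $\Lambda_{df}$ is literally the graph of $d(f+q_\Sigma)$ over $L^\vee$.

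\emph{Coordinates.} Fix a Lagrangian splitting $\bR^4\cong L\oplus L^\vee$ of the projection $\bR^4\to L^\vee$ given by the symplectic pairing. Lemma~\ref{f01:lem:independence_dual_Lagrangian} shows that $\scrS(\Sigma)$ does not depend on this choice, and the definition of $\Lambda_{df}$ (via translates of $L$) does not involve it, so fixing it costs nothing. The kernel of the projection is $L$, so in this splitting the projection is just $(y,x)\mapsto x$. Linear symplectic algebra identifies $L$ with $(L^\vee)^*$, and under this identification $L\oplus L^\vee$ becomes $T^*L^\vee$ with its canonical form, up to a fixed sign convention.

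In these coordinates a face of $\Sigma$ lying over a sector $C_{ij}\subset L^\vee$ is $\{(dq_{ij}(x),x): x\in C_{ij}\}$. Via the homeomorphism $\Sigma\cong L^\vee$ of Corollary~\ref{f01:cor:dual_plane_local}, I regard $f$ and $q_\Sigma$ as functions on $L^\vee$; the smooth structure on $\Sigma$ was defined through exactly this projection, so differentials match.

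\emph{Key step: the translate construction is fibrewise addition.} The canonical symplectomorphism between the union of translates of $L$ through a face $F$ and $T^*F$ sends $z+\ell$ to the covector $\ell\in L\cong T^*_zF$. I will verify that this agrees with $(y,x)\mapsto (x,\,y-dq_{ij}(x))$, i.e.\ fibrewise translation of the cotangent bundle by the closed form $-dq_{ij}$, which is a symplectomorphism. Consequently the piece of $\Lambda_{df}$ over $C_{ij}$ is
\[
\{(dq_{ij}(x)+df(x),\,x) : x\in C_{ij}\},
\]
the graph of $d(q_\Sigma+f)$ over $C_{ij}$. Since $q_\Sigma$ and $f$ are $C^1$, these pieces agree along the edges, so $\Lambda_{df}$ is the graph of the continuous $1$-form $d(q_\Sigma+f)$ over all of $L^\vee$. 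When $f\in\scrS(\Sigma)$ the function $g:=q_\Sigma+f$ is smooth, so $\Lambda_{df}$ is the graph of the exact smooth form $dg$. It is therefore a smooth Lagrangian, and the projection $\Lambda_{df}\to L^\vee$ is a diffeomorphism, so $\Lambda_{df}$ is graphical.

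\emph{Inverse map.} Given a graphical Lagrangian $\Lambda$, the projection to $L^\vee$ is a diffeomorphism, so $\Lambda$ is the graph of a smooth $1$-form $\beta$ on $L^\vee$. The Lagrangian condition is $d\beta=0$, and since $L^\vee$ is contractible, $\beta=dg$ with $g$ smooth and unique up to a constant. Set $f:=g-q_\Sigma$. This is $C^1$ because $q_\Sigma$ is, and $f+q_\Sigma=g$ is smooth, so $f\in\scrS(\Sigma)$. By the key step, $\Lambda_{df}=\Lambda$. Injectivity modulo constants follows because $\Lambda_{df}=\Lambda_{df'}$ forces $d(f-f')=0$ on the connected space $L^\vee$.

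\emph{Expected obstacle.} The only real point is the key step: the intrinsic cotangent identification on each face must coincide with "subtract $dq_{ij}$" in the chosen polarization, with consistent signs and pairing conventions. This is what makes the face-by-face pieces glue into a single graph. I expect it to be a short linear-algebra check using that translates of $L$ are the fibres in both descriptions and that each face is the graph of $dq_{ij}$.
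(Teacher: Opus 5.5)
Your proposal is correct and follows the paper's proof. You fix a polarization, observe that $\Lambda_{df}$ is the graph of $d(f+q_\Sigma)$ over $L^\vee$ because each face is the graph of $dq_{ij}$, and conclude from the standard correspondence between graphical Lagrangians over $L^\vee$ and smooth functions modulo constants. You also spell out the fibrewise-translation check and the inverse map and injectivity, which the paper leaves implicit.
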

  \begin{proof}
    In terms of the polarization from the discussion preceding Definition \ref{f01:def:smoothing_function}, the Lagrangian $\Lambda_{df}$ corresponds to the graph of the differential of the function $f + q_{\Sigma}$ considered as a function on $L^\vee$ via the projection map, because each face of $\Sigma$ is the graph of $dq_{ij}$. The result now follows from the fact that graphical Lagrangians over $L^\vee$ are graphs of differentials of  smooth functions.
  \end{proof}
Note that while the proof uses the polarization, the construction does not. As in Lemma \ref{f01:lem:independence_dual_Lagrangian}, we conclude that this bijection depends only on the choice of Lagrangian $L$.

 The above completes our local analysis near vertices. Near edges, the analysis is much simpler:
 \begin{filealemma} \label{f01:lem:transverse_plane_near_edge}
   If $\Sigma$ consists of a pair of linear Lagrangian half-planes in $\bR^4$ meeting along a line $\ell$, then the space of Lagrangian subspaces $L$, satisfying the property that the symplectic pairing $\Sigma \to L^\vee$ is a homeomorphism, is contractible.
 \end{filealemma}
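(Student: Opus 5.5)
The plan is to put $\Sigma$ in a linear normal form, write the homeomorphism condition as explicit inequalities on $L$, and show that in a suitable affine chart of the Lagrangian Grassmannian these inequalities cut out a convex set.

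\textbf{Degenerate case.} If the two half-planes lie in a common Lagrangian plane $P$, then $\Sigma = P$. The pairing $P \to L^\vee$ is linear, and it is a homeomorphism iff it is injective, i.e.\ iff $P \cap L = 0$ (here $L^\omega = L$). Lagrangians transverse to a fixed Lagrangian $P$ form an affine space, modelled on symmetric forms, so this set is contractible.

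\textbf{Normal form in the main case.} Otherwise the half-planes span distinct Lagrangian planes $P_1=\mathrm{span}(e,a)$ and $P_2=\mathrm{span}(e,b)$. Here $e$ spans $\ell$, and the half-planes are $\{se+ta : t\ge 0\}$ and $\{se+tb: t\ge 0\}$.

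\begin{itemize}
\item As in the proof of Lemma~\ref{f01:lem:local_triviality}, $\omega(a,b)\neq 0$, since otherwise $\omega$ would vanish on the $3$-dimensional space $\mathrm{span}(e,a,b)$.
\item Choose $f$ with $\omega(e,f)=1$ and $f$ $\omega$-orthogonal to $a$ and $b$; this is possible because $\ell=\mathrm{span}(e,a,b)^\omega$.
\end{itemize}

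In this basis $\Sigma$ is the product of $\ell$ with the union of two rays in the symplectic plane $\mathrm{span}(a,b)$.

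\textbf{Translating the homeomorphism condition.} The pairing $x\mapsto \omega(x,\cdot)|_L$ is linear on each half-plane. So it is a homeomorphism $\Sigma\to L^\vee$ iff two conditions hold.
\begin{enumerate}
\item[(i)] $L$ is transverse to $P_1$ and to $P_2$, so that each half-plane maps injectively.
\item[(ii)] The images of $a$ and $b$ lie strictly on opposite sides of the line spanned by the image of $e$, so that the two image half-planes glue along the image of $\ell$ to give all of $L^\vee$ bijectively.
\end{enumerate}
A piecewise linear map that is injective on each of the two closed half-planes, with images on opposite sides of a common line, is a bijection onto the plane and hence a homeomorphism. Conversely, a homeomorphism forces (i) and (ii).

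\textbf{Affine chart and convexity.} By (i), every admissible $L$ is transverse to $P_1$. So it lies in the affine chart of Lagrangians transverse to $P_1$. Write such $L$ as the graph of a symmetric map $S$ over a fixed Lagrangian complement of $P_1$; equivalently, choose a basis $(l_1,l_2)$ of $L$ depending affinely on $S$ and dual to $(e,a)$ under $\omega$.

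In this normalization:
\begin{itemize}
\item the images of $e$ and $a$ in $L^\vee$ are the fixed dual basis vectors, independent of $S$;
\item the image of $b$ has coordinates $(\omega(b,l_1),\omega(b,l_2))$, which are affine functions of $S$.
\end{itemize}
Condition (ii) then says that the $a$-coordinate $\omega(b,l_2)$ of the image of $b$ is negative. This is a strict affine inequality on $S$, so it defines an open half-space in the space of symmetric forms.

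Condition (ii) also implies transversality to $P_2$. Indeed, if the images of $e$ and $b$ are linearly independent, the restriction to $P_2$ is injective. Hence the admissible set is exactly this open half-space, which is convex and therefore contractible.

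\textbf{Main obstacle.} The delicate step is the equivalence in the translation step between ``homeomorphism'' and conditions (i)–(ii), especially the opposite-sides criterion. One must also check carefully that $\omega(b,l_2)$ is genuinely affine and nonconstant in $S$, so that the half-space is nonempty. Nonemptiness also follows directly from Corollary~\ref{f01:cor:dual_plane_local}-type examples, for instance $L=\mathrm{span}(f,\,a-b)$ after normalizing.
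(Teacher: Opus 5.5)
Your route differs from the paper's and is, in outline, cleaner. The paper passes to the symplectic reduction along $\ell$. It sends $L$ to the line $(L\cap\ell^{\omega})/\ell$ in $\ell^{\omega}/\ell\cong\bR^2$, and admissibility becomes the condition that this line meet the open quadrant spanned by the two rays, which is an interval of lines. Contractibility of the whole space then rests on the fibres (the Lagrangian lifts of a fixed reduced line) being contractible. You instead observe that every admissible $L$ lies in the single affine chart of Lagrangians transverse to $P_1$, where admissibility is one strict affine inequality. This gives an explicit convex model with no base-and-fibre argument. It also covers the coplanar case, which the paper's normal form tacitly excludes.

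There is, however, a gap at exactly the point that makes the conclusion true. The set $\{S:\omega(b,l_2(S))<0\}$ is contractible only if it is nonempty, i.e.\ only if this affine function is not a nonnegative constant. You flag nonconstancy as something to check but do not check it, and the example you offer in its place fails. For $L=\mathrm{span}(f,a-b)$, both $a$ and $b$ pair trivially with $f$, and $\omega(a,a-b)=-\omega(a,b)=\omega(b,a-b)$. So $a$ and $b$ have the \emph{same} image in $L^\vee$, and the pairing folds $\Sigma$ onto a closed half-plane; no positive rescaling of $a$ and $b$ changes this.

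The correct example is $L=\mathrm{span}(f,a+b)$, the analogue of ``the dual of the line $p=q$'' in the vertex corollary. It is Lagrangian, and $\omega(a,a+b)=\omega(a,b)=-\omega(b,a+b)$, so the images of $a$ and $b$ lie on opposite sides.

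Alternatively, do the check directly. Let $(q_1,q_2)$ be the basis of your reference complement $Q$ dual to $(e,a)$, and let $L$ be the graph of $A\colon Q\to P_1$ with $S(y,y')=\omega(Ay,y')$. Then $l_2=q_2+Aq_2$ with $Aq_2=xe+ya$, and pairing with $q_2$ gives $y=S(q_2,q_2)$. Hence $\omega(b,l_2)=\omega(b,q_2)+S(q_2,q_2)\,\omega(b,a)$, which is nonconstant because $\omega(a,b)\neq 0$. With this line added your argument is complete.
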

 \begin{proof}
   The submanifold $\Sigma$ is equivalent by (affine) linear symplectic transformations to the symplectic product of the real axis in an $\bR^2$ factor with the piecewise Lagrangian consisting of the positive axes in another. If the projection $\Sigma \to L^\vee$ is a homeomorphism, then $L$ must be transverse to both Lagrangian half-planes comprising $\Sigma$. This implies that the symplectic reduction of $L$ along  $\ell$ (i.e. the image under the quotient by $\ell$ of the intersection of $L$ with the symplectic annihilator $\ell^\perp$) is a line transverse to two coordinate lines in $\ell^\perp/\ell \cong \bR^2 $, and $\Sigma$ projects homeomorphically to $L^\vee$ if and only if this reduction intersects the interior of the positive quadrant, which is a contractible condition. The argument is completed by noting that the space of Lagrangian lifts of a line $\ell'$ in $\bR^2$ is contractible: any two lifts to $\ell^\perp$ differ by the graph of a map from $\ell'$ to $\ell$, and $L$ is determined up to contractible choice by $L \cap \ell^\perp$, since it must lie in the symplectic orthogonal of this line, and the space of planes in $\bR^3$ containing a given line (in this case $L \cap \ell^\perp$) and avoiding another line (in this case $\ell$) is contractible.
 \end{proof}
 Extending Definitions \ref{f01:def:smoothing_function} and \ref{f01:def:conormal_fibration_local} verbatim to the case of a pair of edges, we obtain the analogue of Lemma \ref{f01:lem:functions_with_corners}, using a splitting into factors as in the above proof.

In the global setting, we cannot work with translates with a single Lagrangian, so we need to consider a family $L_z$ of Lagrangian planes, passing through each point $z \in \Sigma$, which are not necessarily translates of each other. We shall require four properties of such a family, the first three of which are easy to state:
\begin{enumerate}
\item $L_z$ consists of translates of a single Lagrangian near the origin.
\item $L_z$ varies smoothly along the edges.
\item $L_z$ varies smoothly along the faces.
\end{enumerate}
To formulate the last property, say that $\sigma$ and $\sigma'$ are faces meeting along an edge $\tau$, and let $z$ be a point on $\tau$. The choice of $L_z$ determines an identification
\begin{equation*}
  T_z \sigma \cong L_z^\vee \cong T_z \sigma'
\end{equation*}
which is compatible with the inclusion of $T_z \tau$ on both sides. A \emph{matched normal field along $\tau$} is a choice of sections of $T \sigma|_{\tau}$ and $T \sigma'|_{\tau} $ which are inward pointing, and are opposite vectors under the above identification. For simplicity, we require this normal field, at the origin $\tau$, to point along the direction of the edge of $\sigma$ (or $\sigma'$) which meets $\tau$. Because the faces of $\Sigma$ are flat, this choice therefore determines an embedding $\tau \times [0,\epsilon) \to \sigma$, which is a collar neighbourhood (and similarly for $\sigma'$). 
   \begin{fileadefinition} \label{f01:def:conormal_fibration_local}
   A \emph{conormal fibration dual to $\Sigma$} is a family $L_z$ of (affine)-linear Lagrangian planes in $\bR^4$, parametrised by $z \in \Sigma$, satisfying the above three properties and so that, in a collar of each edge, the Lagrangians in the normal direction are translates of the Lagrangians along the edge. 
 \end{fileadefinition}
 The choice of collars in the above construction determines a smooth structure on $\Sigma$ by using negative coordinates on one of the collars as well as the identification $(-\epsilon,0] \cup [0,\epsilon) \cong (-\epsilon,\epsilon) $. This is an a priori different way of constructing a smooth structure than our earlier formulation, and the next result asserts the compatibility of these contructions; in this setting, we choose an affine-linear Lagrangian $\Lambda_z$ passing through $z$, which is transverse to $L_z$, and consider the (locally defined) map from $\Sigma$ to $\Lambda_z$ which assigns to $z' \in \Sigma$ near $z$ the intersection points  $L_{z'} \cap \Lambda_z$ which is unique because $L_z$ is close to $L_{z'}$.
 \begin{filealemma}
The projection map to $\Lambda_z$ is a local diffeomorphism.
 \end{filealemma}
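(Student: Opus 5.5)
We need to show that the projection $z' \mapsto L_{z'}\cap\Lambda_z$ is a local diffeomorphism near $z$, for the smooth structure on $\Sigma$ coming from the collars (or from the faces, in the interior). I would argue case by case according to where $z$ sits: the interior of a face, an edge away from the vertex, or the vertex.

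\textbf{Interior of a face and the vertex.} If $z$ is in the interior of a face $\sigma$, the family $L_{z'}$ varies smoothly in $z'\in\sigma$ by property (3). The intersection point $L_{z'}\cap\Lambda_z$ is then obtained by solving a linear system that depends smoothly on $z'$; this system is uniquely solvable near $z$ because $L_z$ is transverse to $\Lambda_z$. Hence the map is smooth. Its differential at $z$ is the linear projection $T_z\sigma \to \Lambda_z$ along $L_z$. Since $\sigma$ is Lagrangian, the symplectic pairing with $L_z$ identifies $T_z\sigma$ with $L_z^\vee$, so $T_z\sigma$ is transverse to $L_z$ and this projection is an isomorphism. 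The inverse function theorem then gives the claim.

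Near the vertex, property (1) says the $L_{z'}$ are translates of a single $L$. The map is then the affine projection of $\Sigma$ along $L$ onto $\Lambda_z$. After identifying $\Lambda_z$ with $L^\vee$ via the symplectic pairing, this is exactly the homeomorphism of Corollary~\ref{f01:cor:dual_plane_local}. Since the smooth structure near the vertex was defined by that projection, it is a diffeomorphism by construction. What remains is to check that the collar smooth structure agrees with this one near the origin. Along each edge through the origin, the matched normal field points along the adjacent edge, and the $L_{z'}$ are translates of $L$ in the collar, so the collar charts are affine in the $L^\vee$-coordinates.

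\textbf{Points on an edge.} This is the main case. Take $z\in\tau$, with $\sigma,\sigma'$ the two faces meeting along $\tau$. Parametrise a neighbourhood of $z$ by $(s,r)\in\tau\times(-\epsilon,\epsilon)$, with $r\ge0$ going into $\sigma$ and $r\le 0$ going into $\sigma'$ along the matched normal fields $\nu,\nu'$. In the collar, $L_{(s,r)}=L_{(s,0)}+(\text{point})$ is a translate, so the map becomes
\[
(s,r)\mapsto \bigl(L_{s}+c(s)+r\,n_\pm(s)\bigr)\cap\Lambda_z,
\]
where $n_+=\nu$, $n_-=-\nu'$, and $c(s)\in\tau$ is the edge point. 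Projection along $L_s$ only sees $n_\pm$ modulo $L_s$, that is, its class in $\mathbb{R}^4/L_s\cong L_s^\vee$. The matched condition says precisely that $\nu$ and $\nu'$ are opposite in $L_s^\vee$, so $n_+\equiv n_-$ modulo $L_s$. Hence the two one-sided formulas coincide, and the map is given for all $r$ by a single expression, affine in $r$ with coefficients smooth in $s$ (using property (2)). So it is smooth in the collar smooth structure. Its differential at $z$ sends $\partial_s$ to the projection of the edge direction and $\partial_r$ to the projection of $n_+$. These are independent, since $T_z\sigma$ is transverse to $L_z$, so the differential is invertible.

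\textbf{Expected obstacle.} The delicate point is the bookkeeping showing that projection along $L_s$ kills exactly the discrepancy between $\nu$ and $-\nu'$. This amounts to verifying that the identification $T_z\sigma\cong L_z^\vee$ used in the definition of a matched normal field is the same as projection along $L_z$. I would verify this directly from the definition of the symplectic pairing: $v\mapsto\omega(v,\cdot)|_{L_z}$ vanishes exactly on $L_z$, because $L_z$ is Lagrangian.
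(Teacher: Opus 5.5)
Your proposal is correct and follows essentially the same route as the paper. The paper also reduces to the edge case, writes the projection on each collar as $(t,s)\mapsto\gamma(t)+s\,\nu_\sigma(t)$, and uses the matching condition $\nu_\sigma=-\nu_{\sigma'}$ (after projecting along $L_z$) to glue the two one-sided formulas into a single smooth one. Your version additionally makes explicit what the paper leaves implicit: that the matching holds modulo $L_s$, which the projection kills; that the differential at $z$ is invertible; and that the face-interior and vertex cases are routine.
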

 \begin{proof}
The only case that needs to be discussed is when $z$ lies on an edge $\tau$. The condition that $L_{z'}$ be given by translates along the collar direction implies that this map may be written along the collar of $\tau$ in a face $\sigma$ as $(t,s) \mapsto \gamma(t) + s \cdot \nu_{\sigma}(t)$, where $t$ is the coordinate along $\tau$ and $s \in [0,\epsilon)$ is the coordinate in the normal direction. The requirement that the normal fields are matched is equivalent to the condition that $\nu_{\sigma} = - \nu_{\sigma'} $ if $\sigma$ and $\sigma'$ are the two faces meeting  along $\tau$. The smoothness of the map is immediate from this description.
 \end{proof}

 Whenever the family $L_z$ does not consist of translates, the Lagrangians $L_z$ will have non-empty intersections. However, such intersections always take place outside some open neighbourhood $\nu \Sigma$ of $\Sigma$, which we now fix. As before, the fibration $L_z$ determines a projection $\nu \Sigma \to \Sigma$.  We say that a Lagrangian is \emph{graphical} with respect to $L_{z}$ if it is contained in this neighbourhood, and its projection to $\Sigma$ is a diffeomorphism. 
 \begin{filealemma} \label{f01:lem:graphical_Lagrangians_local}
   Every graphical Lagrangian with respect to $L_{z}$ arises as the graph of a smoothing function. Moreover, any smoothing function whose differential is sufficiently small defines a graphical Lagrangian.
  \end{filealemma}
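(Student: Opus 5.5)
The plan is to reduce the statement to the local results already proved (Lemmas \ref{f01:lem:functions_with_corners} and \ref{f01:lem:transverse_plane_near_edge}) by showing that, in a neighbourhood of any point $z \in \Sigma$, the conormal fibration $L_{z'}$ can be straightened to the translates of a single Lagrangian by a symplectomorphism. In those coordinates, graphicality over $\Sigma$ becomes graphicality over $L^\vee$. First, near a vertex, property (1) of the conormal fibration says that $L_{z'}$ is literally the family of translates of a fixed plane $L$. Near a vertex the neighbourhood $\nu\Sigma$ is therefore the region of $\bR^4 \cong \Sigma \times L$ swept out by translates, which is canonically $T^*\Sigma$ restricted to a neighbourhood. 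So this case is exactly Lemma \ref{f01:lem:functions_with_corners}: a graphical Lagrangian is $\Lambda_{df}$ for a smoothing function $f$, unique up to constants. Near an edge $\tau$ I would use the collar condition: in the collar $\tau \times (-\epsilon,\epsilon)$ the planes are translates in the normal direction of the planes along $\tau$.

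The second step is to build, on a neighbourhood $U$ of $\Sigma$, a symplectic identification $\nu\Sigma|_U \cong T^*\Sigma|_U$ that carries each $L_{z'}$ to the cotangent fibre over $z'$ and carries the flat faces to the graphs of the differentials $dq_\sigma$. On each face the family $L_{z'}$ is a smooth Lagrangian fibration transverse to the face, because $L_{z'}$ is close to $L_z$ and $L_z$ is transverse to both adjacent faces (Lemma \ref{f01:lem:transverse_plane_near_edge}). By the Weinstein neighbourhood theorem in its fibred version (a Lagrangian section of a Lagrangian fibration), the union of the $L_{z'}$ over the face is symplectomorphic to $T^*\sigma$, with the face going to the zero section. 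The point is then to check that the identifications coming from the two faces meeting along $\tau$ agree to first order along $\tau$. This should follow because the matched normal fields make the chart $(t,s) \mapsto \gamma(t) + s\nu_\sigma(t)$ smooth across $\tau$ (the previous lemma), and the fibres over $\tau$ are shared. With this in hand, the correspondence between graphical Lagrangians and $C^1$ functions $f$ on $\Sigma$ becomes fibrewise addition of $df$ to the piecewise "zero section" $\Sigma$. Smoothness of the Lagrangian is then equivalent to smoothness of $f + q_\Sigma$, where $q_\Sigma$ is the $C^1$ piecewise function encoding the faces, computed in the local straightening charts.

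The third step is the existence and converse assertion. If $f$ is a smoothing function with $\|df\|_{C^0}$ small, then the piecewise graph $\Lambda_{df}$ lies in $\nu\Sigma$, and its projection to $\Sigma$ is a bijection by construction. It is smooth and Lagrangian because, locally, it is the graph of $d(f + q_\Sigma)$ in a genuine cotangent chart. Conversely, a graphical Lagrangian is locally the graph of a closed $1$-form in each chart. Since $\Sigma$ is a topological disc near each point, this form is exact, and subtracting $q_\Sigma$ yields $f$. The local primitives, defined up to constants, patch together because the overlaps of the charts are connected.

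The main obstacle is the edge analysis in step two: showing that the non-translate family $L_{z'}$ along an edge still straightens to a cotangent fibration in which both faces appear as graphs of $C^1$-matched quadratic-type functions. If the fibred Weinstein argument gets messy there, I would instead exploit the collar structure directly. Along the collar, the family is invariant under translation in $s$, so one can reduce symplectically along $\tau$ to the two-dimensional picture of Lemma \ref{f01:lem:transverse_plane_near_edge}, where everything is explicit, and then reintroduce the $t$-dependence as a smooth parameter.
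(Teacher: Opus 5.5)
Your vertex case (the fibres are translates there, so Lemma~\ref{f01:lem:functions_with_corners} applies verbatim) and your third step are fine. The edge step, which you rightly single out, fails as designed.

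Write $\pi\colon\nu\Sigma\to\Sigma$ for the projection along the fibres. Suppose on each face you apply the fibred Weinstein theorem with the face itself as zero section. You get symplectic maps $\Psi_\sigma\colon T^*\sigma\to\pi^{-1}(\sigma)$ that agree with $\Psi_{\sigma'}$ on $\pi^{-1}(\tau)$. But they can never agree to first order there: a $C^1$ glued map would carry the zero section to a $C^1$ surface, whereas $\Sigma$ has two distinct tangent half-planes along $\tau$.

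Concretely, take $\Sigma=\{p_1=p_2=0,\ q_2\ge 0\}\cup\{p_1=q_2=0,\ p_2\ge 0\}$, fibres the translates of $\mathrm{span}(\partial_{p_1},\partial_{q_2}+\partial_{p_2})$, and base coordinates $(x,y)=(q_1,q_2-p_2)$. Your charts are $(x,y,a\,dx+c\,dy)\mapsto(x,y+c,a,c)$ for $y\ge 0$ and $\mapsto(x,c,a,c-y)$ for $y\le 0$. Both are symplectic and they coincide at $y=0$, but their $y$-derivatives there are $\partial_{q_2}$ and $-\partial_{p_2}$.

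The matched normals make the fibration $\{L_{z'}\}$ and $\pi$ smooth. They say nothing about the fibre component of $\Sigma$, which is exactly where the corner lives. So near an edge you never obtain the ``genuine cotangent chart'' in which the faces are graphs of $dq_\sigma$; your construction makes them graphs of $0$, contradicting your own stated goal. The criterion ``$f+q_\Sigma$ smooth'' used in your second and third steps therefore has no smooth chart to be evaluated in. The symplectic-reduction fallback is too vague to repair this.

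The missing idea, which is the whole of the paper's proof, is to take as reference section not $\Sigma$ but a smooth Lagrangian plane $L_z^\vee$ through $z$. Choose it transverse to $L_z$, hence to $L_{z'}$ for $z'$ near $z$. The local fibration is smooth (this is what the collar condition and the preceding lemma on the projection to $\Lambda_z$ give you). So Weinstein's theorem identifies a neighbourhood of $z$ in $\nu\Sigma$ with a neighbourhood in $T^*L_z^\vee$, by a smooth symplectomorphism taking fibres to cotangent fibres.

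In that chart:
\begin{itemize}
\item $\Sigma$ is the graph of the differential of a function $q_\Sigma$ that is only $C^1$. In the example, $L_z^\vee=\mathrm{span}(\partial_{q_1},\partial_{q_2}-\partial_{p_2})$ gives $q_\Sigma=y|y|/4$.
\item A smooth graphical Lagrangian is the graph of $dg$ with $g$ smooth, and $f=g-q_\Sigma$ is the required smoothing function.
\item Conversely, small $df$ keeps the graph of $d(f+q_\Sigma)$ inside the chart.
\end{itemize}

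This handles vertices, edges and face interiors uniformly. Neither your separate vertex case nor any gluing along edges is needed, and your patching of local primitives then finishes the argument.
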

  \begin{proof}
The correspondence between graphical Lagrangians and smoothing functions is local on $\Sigma$. It thus suffices to consider a point $z \in \Sigma$, and observe that a Lagrangian plane $L^\vee_z$ which is transverse to $L_z$ at $z$ will also be transverse to nearby fibres, so that a neighbourhood of $z$ in $\nu \Sigma$ is modelled after the conormal bundle of $L^\vee_z$, by Weinstein's tubular neighbourhood theorem. The result then follows by the standard construction of Lagrangians as graphs of closed $1$-forms.
  \end{proof}

  In order for the previous result to be helpful, we need to be able to produce the desired functions; this is not completely obvious because the space of smoothing functions is not invariant under rescaling:
  \begin{filealemma} \label{f01:lem:local_approximate_Sigma}
    There exist smoothing functions of arbitrarily small $C^1$-norm.
  \end{filealemma}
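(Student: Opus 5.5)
We need smoothing functions f on Σ (the local vertex model, or the edge model) with f + q_Σ smooth and ‖f‖_{C^1} arbitrarily small.

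\textbf{Reduction to one piece.} The plan is to first produce a single smoothing function $f_0$, and then shrink its support rather than rescale it. Work in the polarization $\bR^4 \cong L \oplus L^\vee$ from the discussion before Definition~\ref{f01:def:smoothing_function}. There $q_\Sigma$ is a $C^1$ function on $L^\vee$ which is piecewise quadratic on the four sectors, the images of the faces. Let $g$ be any smooth function that agrees with $q_\Sigma$ outside a compact set. Then $f_0 := g - q_\Sigma$ is $C^1$ and compactly supported, and $f_0 + q_\Sigma = g$ is smooth, so $f_0 \in \scrS(\Sigma)$.

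Such a $g$ exists for a convenient reason. On the complement of the edges $q_\Sigma$ is already smooth, and near each edge ray it is a $C^1$ piecewise quadratic. So we can take $g := \rho * q_\Sigma$ near the origin and patch. The cleaner route is to mollify only in a bounded region: choose a cutoff $\chi$ equal to $1$ near $0$ and set $g = \chi\,(\rho_\delta * q_\Sigma) + (1-\chi) q_\Sigma$. This $g$ fails to be smooth where $1-\chi \neq 0$ meets the edges, which is a problem to handle.

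\textbf{Homogeneity.} Instead, use the homogeneity of $q_\Sigma$: $q_\Sigma(\lambda x) = \lambda^2 q_\Sigma(x)$. We would like the singularities along edges to be smoothed everywhere, but a smoothing function need not have compact support. We only need $\|f\|_{C^1}$ small on the region of interest, near the vertex, since $\Sigma$ is being used as a local model. Concretely, smooth $q_\Sigma$ near the unit circle by mollifying in the angular variable at scale $\delta$, to get a smooth $h$ on $L^\vee \setminus \{0\}$ with $\|h - q_\Sigma\|_{C^1(S^1\text{-nbhd})} = o(1)$ as $\delta \to 0$. This uses that $q_\Sigma$ is $C^1$, so mollification converges in $C^1$. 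Extend $h$ homogeneously of degree $2$ and set $g := \beta(|x|)\, \tilde g + (1-\beta(|x|))\, h$, where $\tilde g$ is any smooth function near $0$ agreeing with $q_\Sigma$ up to the required order. The difference $f = g - q_\Sigma$ is then $C^1$ with $C^1$ norm on bounded sets controlled by $\delta$ away from the origin.

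\textbf{The origin and the expected obstacle.} The main obstacle is the origin, where the four quadratic pieces meet and no smooth function agrees with $q_\Sigma$ to first order in a full neighbourhood. The first attempt would use scaling. If $f \in \scrS(\Sigma)$, then $f_\lambda(x) := \lambda^{2} f(x/\lambda)$ is again a smoothing function, because $q_\Sigma$ is $2$-homogeneous, so $f_\lambda + q_\Sigma = \lambda^2 (f+q_\Sigma)(\cdot/\lambda)$ is smooth. Moreover $\|f_\lambda\|_{C^0} = \lambda^2\|f\|_{C^0}$ and $\|df_\lambda\|_{C^0} = \lambda \|df\|_{C^0}$. So if $f_0$ has $df_0$ bounded, for instance because $f_0$ has compact support as in the first step, then $f_\lambda$ has arbitrarily small $C^1$ norm as $\lambda \to 0$, with support shrinking to the vertex.

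Thus the argument reduces to the first step: finding one compactly supported $f_0$. For this I would construct $g$ by taking $g = q_\Sigma$ outside the unit ball and, inside, mollifying radially. I would check that the edge singularities of $q_\Sigma$ away from $0$ can be smoothed by adding a compactly supported correction along each edge ray within a wedge, using the one-dimensional edge model of Lemma~\ref{f01:lem:transverse_plane_near_edge}. The remaining verification, and the step I expect to be hardest, is that these edge corrections can be chosen to vanish outside a bounded set while remaining $C^1$-compatible.
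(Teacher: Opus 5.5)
\textbf{There is a genuine gap: the compactly supported $f_0$ you reduce to does not exist.}

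Your final route reduces everything to finding one compactly supported smoothing function $f_0$. That step is not merely the hardest one; it is impossible. Each edge $e$ of $\Sigma$ is a ray to infinity. Let $\sigma,\sigma'$ be the two faces meeting along $e$, with quadratic forms $q_\sigma, q_{\sigma'}$, and let $\ell_e$ be a linear form on $L^\vee$ vanishing on $e$ and positive on $\sigma'$. Near any point of $e$ other than the vertex one has $q_\Sigma=q_\sigma+c_e\,(\ell_e)_+^2$. Here $c_e\neq 0$ because $\sigma$ and $\sigma'$ are not coplanar.

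So $q_\Sigma$ fails to be $C^2$ at every point of every edge. Any $f$ with $f+q_\Sigma$ smooth must carry the opposite second-derivative jump there, so the support of every smoothing function contains all four edge rays. In particular, no smooth $g$ agrees with $q_\Sigma$ off a compact set. The edge corrections ``vanishing outside a bounded set'' that you plan to construct cannot exist.

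You have also located the difficulty in the wrong place. The vertex is harmless: any quadratic form differs from $q_\Sigma$ by $O(r^2)$ in $C^0$ and $O(r)$ in $C^1$ on the ball of radius $r$. The real issue is that the edge singularities are non-compact. Your homogeneity paragraph could be completed with $\tilde g=0$ and a cutoff $\beta$ of shrinking radius. But that would give smallness only on bounded sets, because $d(h-q_\Sigma)$ is homogeneous of degree one. You also do not carry it through.

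\textbf{The repair, which is the paper's construction.} Your scaling observation $f_\lambda=\lambda^2 f_0(\cdot/\lambda)$ never needed compact support. It only needs $\sup|f_0|<\infty$ and $\sup|df_0|<\infty$. Such an $f_0$ exists, supported in fixed-width strips around the edges plus a ball around the vertex. The paper builds essentially this:
\begin{itemize}
\item Take a partition of unity $\{\chi_\sigma\}$ indexed by the strata (vertex, edges, faces), with uniformly bounded derivatives.
\item For each stratum, choose a Lagrangian plane $\Lambda_\sigma\supset\sigma$ transverse to $L$, with quadratic form $Q_\sigma$.
\item Put $f_\sigma=Q_\sigma-q_\Sigma$ and $f^\epsilon=\sum_\sigma\chi_\sigma(\cdot/\epsilon)\,f_\sigma$.
\end{itemize}
Then $f^\epsilon+q_\Sigma=\sum_\sigma\chi_\sigma(\cdot/\epsilon)\,Q_\sigma$ is smooth. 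Since every $f_\sigma$ is $2$-homogeneous, $f^\epsilon=\epsilon^2 f^1(\cdot/\epsilon)$, which is exactly your rescaling applied to $f^1$.

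The paper phrases the estimate directly. The term $d\chi^\epsilon_\sigma=O(1/\epsilon)$ is supported within $O(\epsilon)$ of $\sigma$. There the relevant $f_\sigma$ agree to first order along common strata, so they differ by $O(\epsilon^2)$, and the product-rule term is $O(\epsilon)$. Replacing ``compactly supported'' by ``bounded in $C^1$'' and supplying this construction closes your argument.
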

  \begin{proof}
As a preliminary step, choose a partition of unity $\sum_\sigma \chi_\sigma =1 $ on $\Sigma$, of bounded $C^k$-norms for all $k$, indexed by the strata of $\Sigma$, so that $ \chi_\sigma$ vanishes outside a small neighbourhood of $\sigma$ and its restriction to $\sigma$ is identically $1$ in the complement of a small neighbourhood of the boundary of $\sigma$. If $ \chi^\epsilon_\sigma  $ is the composition of $\chi_\sigma$ with the dilation of the plane by $1/\epsilon$, we obtain a family of partitions of unity which are uniformly bounded, and whose $C^1$-norms are bounded by a constant multiple of $1/\epsilon$.

We now choose a Lagrangian plane $\Lambda_\sigma$ which contains each stratum $\sigma \subset \Sigma$, and which is transverse to $L$, and let $f_\sigma$ denote the corresponding smoothing function. Note that the tangency conditions imply that the functions $f_\sigma$ and $f_{\sigma'}$ agree to first order along $\sigma \cap \sigma'$. Let $f^\epsilon$ denote the function $\sum \chi^\epsilon_\sigma f_\sigma $. The fact that $ f^\epsilon_\sigma$ is a family of smoothing functions follows from the partition of unity, and the fact that the $C^1$-norm is bounded follows from the product rule and the observation that, while the norm of the gradient of $\chi^\epsilon_\sigma $ grows like $1/\epsilon$, it is supported in a region where the difference between $f_\sigma$ and $f_{\sigma'}$ is bounded by a constant multiple of $\epsilon^2$.
\end{proof}

We now proceed with the global part of the argument, and thus return to the setting where $K$ is a polyhedral Lagrangian surface in $\bR^4$. The first step is to globalise the choice of $L$:
 \begin{fileadefinition}
   A \emph{conormal fibration dual to $K$} is a smoothly varying family $L_z$ of (affine)-linear Lagrangian planes in $\bR^4$, parametrised by $z \in K$, which locally satisfies the properties from Definition \ref{f01:def:conormal_fibration_local}. 
 \end{fileadefinition}
\begin{filealemma}
The surface $K$ admits a dual conormal fibration which, near vertex, agrees with the choice given by Corollary \ref{f01:cor:dual_plane_local}.
 \end{filealemma}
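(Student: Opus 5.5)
We build the family $L_z$ stratum by stratum, vertices first, then edges, then faces, using at each stage only that the relevant space of choices is nonempty and contractible, and that compatibility conditions are local and convex in suitable charts.

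\emph{Vertices.} At each vertex $v$ of $K$, exactly four faces meet. Since $K$ is a topological submanifold, the link is a circle, so the four faces near $v$ form the image of an embedding of $\bR^2$ that is linear on quadrants. We claim this image is not contained in a plane. If it were, all four faces would lie in a single Lagrangian plane, $v$ would not really be a vertex, and we can simply delete it from the complex. Lemma \ref{f01:lem:local_triviality} and Corollary \ref{f01:cor:dual_plane_local} then provide a Lagrangian plane $L_v$. On a small ball $B_v$ around $v$ we set $L_z := z + L_v$, the translates of this single plane. This gives property (1), and it matches the choice of Corollary \ref{f01:cor:dual_plane_local}, as the statement demands.

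\emph{Edges.} For each edge $\tau$ with endpoints $v,w$, the two adjacent faces give, at every interior point of $\tau$, the same configuration of two half-planes along a line; since the faces are flat, this configuration is constant along $\tau$. Lemma \ref{f01:lem:transverse_plane_near_edge} says the space $\mathcal{L}_\tau$ of admissible Lagrangian planes for this configuration is contractible. The vertex choices $L_v$ and $L_w$ lie in $\mathcal{L}_\tau$: $\Sigma$ projects homeomorphically to $L_v^\vee$, so in particular the two faces meeting along $\tau$ do. Contractibility, and in particular path-connectedness, lets us choose a smooth path in $\mathcal{L}_\tau$ from $L_v$ to $L_w$, constant near the ends. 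We then define $L_z$ for $z\in\tau$ by moving along this path and translating to pass through $z$. This gives property (2) and agrees with the vertex choices inside $B_v$ and $B_w$.

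\emph{Collars and faces.} We next choose matched normal fields along each $\tau$. Near the vertices they are forced to point along the adjacent edges. Away from the vertices the matching condition (the inward normals in $\sigma,\sigma'$ are opposite under $T_z\sigma\cong L_z^\vee\cong T_z\sigma'$) is a convex condition in $z$: fix an inward normal in $\sigma$ and take its negative in $\sigma'$. It therefore extends over $\tau$ by a partition of unity, interpolating between the prescribed end values. Using these we build collars $\tau\times[0,\epsilon)\to\sigma$, and on each collar we set $L_{(t,s)}$ to be the translate of $L_{\gamma(t)}$ through the point. This is exactly the collar condition of Definition \ref{f01:def:conormal_fibration_local}.

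Finally, on each face $\sigma$, all of $L_z$ is now defined on a neighbourhood of $\partial\sigma$. Each such $L_z$ is transverse to $\sigma$, because $\sigma$ projects diffeomorphically to $L_z^\vee$. The space of Lagrangian planes transverse to a fixed Lagrangian plane $T\sigma$ is an affine space: the graphs of symmetric forms over a complement. We therefore extend into the interior of $\sigma$ by a partition of unity in this affine chart, which gives property (3).

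\emph{Main obstacle.} The hard step is the corner region where the vertex neighbourhood $B_v$ meets two edge collars. There the three prescriptions (constant translates of $L_v$, the edge path, and collar translation) must agree. We would handle this by making the edge path constant equal to $L_v$ on a segment of $\tau$ longer than the radius of $B_v$, and by taking the collar width $\epsilon$ smaller than that segment. Then near $v$ every prescription reduces to translates of $L_v$, and the constructions glue. Smoothness in $z$ everywhere then follows because each piece is smooth and the pieces agree on open overlaps.
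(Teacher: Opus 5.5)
Your proposal is correct and follows the same route as the paper's proof. You take the planes from Corollary~\ref{f01:cor:dual_plane_local} near vertices, extend along each edge using the contractibility in Lemma~\ref{f01:lem:transverse_plane_near_edge}, get matched normals by choosing an inward normal in one face and negating it in the other, and extend over faces using that Lagrangian planes transverse to a fixed one form a contractible (indeed affine) space. Your extra care at the corners---keeping each edge path equal to $L_v$ on a segment longer than both the vertex radius and the collar width, so that every prescription there reduces to translates of $L_v$---fills in a compatibility detail the paper leaves implicit.
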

 \begin{proof}
 Lemma \ref{f01:lem:transverse_plane_near_edge} implies that the choices near the vertices may be extended to the edges. Choosing a normal vector field to one of the faces that meets along an edge determines matched normals, and the extension to the interior of the faces is then standard, as the space of Lagrangian planes transverse to a given one is contractible.
 \end{proof}
 The conormal fibration determines a subset $\scrS(K)$ of the space of $C^1$-functions consisting of those functions which are smooth in the interior of each face, and which are smoothing functions in the sense of Definition \ref{f01:def:smoothing_function} near each edge and vertex.
 \begin{filealemma} \label{f01:lem:global_smoothing_function}
   There exist smoothing functions for $K$ of arbitrarily small $C^1$-norm.
 \end{filealemma}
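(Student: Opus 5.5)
The plan is to globalise the proof of Lemma \ref{f01:lem:local_approximate_Sigma} using a partition of unity on $K$ subordinate to a cover by vertex, edge and face neighbourhoods.

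\textbf{Local smoothing functions.} Fix the dual conormal fibration $L_z$ constructed above. It agrees with translates of a single Lagrangian near each vertex, and near each edge it is given by translates in the collar direction.

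For each vertex $v$, Lemma \ref{f01:lem:local_approximate_Sigma} applied to the local model $\Sigma_v$ provides a smoothing function $f_v$ on a neighbourhood $U_v$ of $v$ with arbitrarily small $C^1$-norm.

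For each edge $\tau$, I would use the edge analogue of Lemma \ref{f01:lem:functions_with_corners}, which exists by Lemma \ref{f01:lem:transverse_plane_near_edge} and the remarks following it. In the collar coordinates $(t,s)$, $s\in(-\epsilon,\epsilon)$, the fibration is translation-invariant in $s$. The two faces are then graphs of closed $1$-forms that agree to first order along $s=0$. The same construction as in Lemma \ref{f01:lem:local_approximate_Sigma}, now with a cutoff in the $s$ variable only at scale $\epsilon$, gives a smoothing function $f_\tau$ on a neighbourhood $U_\tau$ of $\tau$ minus the vertex neighbourhoods. Its $C^1$-norm is $O(\epsilon)$, since the gradient of the cutoff, of size $1/\epsilon$, multiplies a difference of size $O(\epsilon^2)$.

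On face interiors, take $f_\sigma=0$. This is admissible because faces are flat and graphical with respect to $L_z$, so the zero function is smooth there.

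\textbf{Gluing.} Choose a partition of unity $\{\chi_v,\chi_\tau,\chi_\sigma\}$ on $K$ subordinate to this cover, with each function smooth for the smooth structure on $K$ fixed earlier (via projections to $L^\vee$ near vertices and collars near edges). Set $f=\sum\chi_\alpha f_\alpha$. Since $\scrS$ is invariant under adding smooth functions, a combination $\sum\chi_\alpha f_\alpha$ of smoothing functions is again a smoothing function wherever all the participating $f_\alpha$ are. Indeed, on an overlap one can write it as $f_\beta+\sum\chi_\alpha(f_\alpha-f_\beta)$, and each difference $f_\alpha-f_\beta$ is smooth because both $f_\alpha+q$ and $f_\beta+q$ are smooth. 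Hence $f\in\scrS(K)$.

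\textbf{Main obstacle: the $C^1$ bound on overlaps.} The estimate
\[
\|f\|_{C^1}\le \sum\|f_\alpha\|_{C^1}+\sum\|d\chi_\alpha\|_{C^0}\,\|f_\alpha-f_\beta\|_{C^0}
\]
shows the key point. The partition of unity must be fixed independently of $\epsilon$, so that $\|d\chi_\alpha\|_{C^0}$ is bounded, while each local $f_\alpha$ is made small in $C^1$. Since the overlaps are then of fixed size, I only need each $f_\alpha$ to be small in $C^1$ and to be normalised by an additive constant so that its $C^0$-norm is small on the compact overlaps. This is possible because each neighbourhood is connected, so $C^0$-size is controlled by the $C^1$-size of the function times the diameter.

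This works provided the vertex and edge smoothing functions are built with respect to the same fibration $L_z$. This is the step that needs care. Near vertices, Lemma \ref{f01:lem:independence_dual_Lagrangian} and Lemma \ref{f01:lem:functions_with_corners} show that the notion of smoothing function depends only on $L$, which matches $L_z$ there. Near edges, the collar compatibility built into Definition \ref{f01:def:conormal_fibration_local} ensures the smooth structures agree, by the lemma on local diffeomorphism of the projection to $\Lambda_z$. Letting the local norms tend to zero then yields smoothing functions for $K$ of arbitrarily small $C^1$-norm.
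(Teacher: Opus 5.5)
Your proposal is correct and takes essentially the same approach as the paper. The paper also uses a partition of unity on $K$ indexed by strata, supported in their open stars, with local smoothing functions of small $C^1$-norm supplied by Lemma~\ref{f01:lem:local_approximate_Sigma} (and its edge analogue), glued as $\sum_\alpha \rho_\alpha f_\alpha$. Your extra details are welcome elaborations of points the paper leaves implicit: the glued function is again a smoothing function because differences of local smoothing functions relative to the same fibration are smooth, and the $C^1$ bound survives because the cutoffs are fixed while the local pieces are small in both $C^0$ and $C^1$.
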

 \begin{proof}
Choose a partition of unity $\sum_\alpha \rho_{\alpha} =1 $ on $K$, indexed by the strata of $K$, so that $\rho_\alpha$ is supported in the open star of $\alpha$ (the union of all strata adjacent to it). Lemma \ref{f01:lem:local_approximate_Sigma} asserts the existence of smoothing functions $f_\alpha$ of arbitrarily small $C^1$-norm defined on the open star of $\alpha$. The function $\sum_\alpha \rho_{\alpha} f_\alpha$ satisfies the desired property.  
\end{proof}

We now arrive at the proof of the main result, which mostly consists of assembling together all the previous steps:
\begin{proof}[Proof of Proposition \ref{f01:prop:existence_Hamiltonian_family}]
  We have a neighbourhood $\nu K$ of $K$ in $\bR^4$ in which the conormal fibres $L_z$ are disjoint. The statement of Lemma \ref{f01:lem:graphical_Lagrangians_local} and its proof apply verbatim to this space, replacing $K$ by $\Sigma$. The existence of sufficiently many global smoothing functions is guaranteed by Lemma \ref{f01:lem:global_smoothing_function}.

  As a consequence, we obtain a sequence $K_i$ of smooth embedded Lagrangians, which are all isotopic to $K$ by a piecewise smooth isotopy and converge to it, that are moreover graphs of differentials of smooth functions (over each other) with respect to the fibration $\{ L_z \}$. This graphical description yields a smooth Hamiltonian path of graphical Lagrangians connecting $K_i$ to $K_{i+1}$, and smoothing the concatenation of these paths yields the desired result.
\end{proof}

{\PatchBibSection

\EndPatchBibSection}
\endgroup

\subsection{Question 9: Joe Kileel}
\textit{Authors:} Work by D.~Miao, G.~Lerman, J.~Kileel\par\medskip
\begingroup
\newtheorem{fileelemma}{Lemma}
\setcounter{equation}{0}
Yes, such algebraic relations do exist.  
Assemble the various tensors $\{Q^{(\alpha \beta \gamma \delta)} : \alpha, \beta, \gamma, \delta \in [n] \}$ into one tensor $\mathbf{Q} \in \mathbb{R}^{3n \times 3n \times 3n \times 3n}$, thought of as an $n \times n \times n \times n$ block tensor where the $(\alpha, \beta, \gamma, \delta)$-block is $Q^{(\alpha \beta \gamma \delta)} \in \mathbb{R}^{3 \times 3 \times 3 \times 3}$. 
Let $\mathbf{F}$ be the polynomial map sending $\{Q^{(\alpha \beta \gamma \delta)} : \alpha, \beta, \gamma, \delta \in [n] \}$ to the $5 \times 5$ minors of the four $3n \times 27n^3$ matrix flattenings of $\mathbf{Q}$.
We will prove that $\mathbf{F}$ satisfies the desired properties.

A key point is to discover the following algebraic identity.

\begin{fileelemma}
\label{f05:lem:identity}
Consider $\mathbf{Q} \in \mathbb{R}^{3n \times 3n \times 3n \times 3n}$ as above.  
It admits a Tucker tensor decomposition
\begin{equation}\label{f05:eq:tucker}
\mathbf{Q} = \mathcal{C} \times_1 \mathbf{A} \times_2 \mathbf{A} \times_3 \mathbf{A} \times_4 \mathbf{A},
\end{equation}
for $\mathcal{C} \in \mathbb{R}^{4 \times 4 \times 4 \times 4}$ and $\mathbf{A} \in \mathbb{R}^{3n \times 4}$.  
Explicitly, we can take 
\begin{equation*}
\mathcal{C}_{abcd} = 
\begin{cases}
\operatorname{sgn}(abcd) & \text{ if } a,b,c,d \in [4] \text{ are distinct} \\
0 & \text{ otherwise},
\end{cases}
\end{equation*}
where sgn is parity of a permutation, and $\mathbf{A}$ to be the vertical concatenation $[A^{(1)}; \dots ; A^{(n)}]$.
\end{fileelemma}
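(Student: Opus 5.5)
The identity is multilinear expansion of a $4\times 4$ determinant, so I would verify it entrywise. Index the rows of $\mathbf{A}$ by pairs $(\alpha,i)$ with $\alpha\in[n]$, $i\in[3]$, so that row $(\alpha,i)$ of $\mathbf{A}$ is $A^{(\alpha)}(i,:)$. By definition of the Tucker (multilinear) product, the $((\alpha,i),(\beta,j),(\gamma,k),(\delta,\ell))$ entry of the right-hand side of \eqref{f05:eq:tucker} is
\[
\sum_{a,b,c,d\in[4]} \mathcal{C}_{abcd}\, A^{(\alpha)}_{ia} A^{(\beta)}_{jb} A^{(\gamma)}_{kc} A^{(\delta)}_{\ell d}.
\]
The left-hand side entry is, by the block structure of $\mathbf{Q}$, the entry $Q^{(\alpha\beta\gamma\delta)}_{ijk\ell}$, i.e.\ the determinant of the $4\times4$ matrix with rows $A^{(\alpha)}(i,:)$, $A^{(\beta)}(j,:)$, $A^{(\gamma)}(k,:)$, $A^{(\delta)}(\ell,:)$.

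The key step is then to invoke the Leibniz formula: for a $4\times 4$ matrix $R$ with rows $r_1,\dots,r_4$, one has $\det R = \sum_{\sigma\in S_4}\operatorname{sgn}(\sigma)\prod_{m} (r_m)_{\sigma(m)}$. Writing $\sigma=(a,b,c,d)$ as the tuple of column indices, the sum over permutations is exactly the sum over $(a,b,c,d)\in[4]^4$ with distinct entries, weighted by $\operatorname{sgn}(abcd)$. Since $\mathcal{C}_{abcd}$ vanishes when the indices are not distinct, the restricted sum equals the full sum over $[4]^4$ displayed above. This gives the entrywise equality for every choice of blocks and inner indices, and hence \eqref{f05:eq:tucker}.

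Equivalently, $\mathcal{C}$ is the Levi-Civita tensor, so $\mathcal{C}(x,y,z,w)=\det[x;y;z;w]$, and the Tucker product evaluates this alternating form on rows of $\mathbf{A}$. There is no real obstacle here. The only care needed is bookkeeping: the row ordering of the vertical concatenation must match the block indexing of $\mathbf{Q}$, which holds because $\mathbf{A}=[A^{(1)};\dots;A^{(n)}]$ places block $\alpha$ in rows $3(\alpha-1)+1,\dots,3\alpha$. One should also confirm that the sign convention $\operatorname{sgn}(abcd)$ refers to the permutation $m\mapsto(a,b,c,d)_m$, which is consistent with the Leibniz formula applied with rows fixed.
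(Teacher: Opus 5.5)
Your proposal is correct and follows the same route as the paper: expand the Tucker product entrywise, use that $\mathcal{C}_{abcd}$ vanishes unless $(a,b,c,d)$ is a permutation, and recognize the remaining sum as the Leibniz expansion of $\det[A^{(\alpha)}(i,:);A^{(\beta)}(j,:);A^{(\gamma)}(k,:);A^{(\delta)}(\ell,:)]=Q^{(\alpha\beta\gamma\delta)}_{ijk\ell}$. Your displayed sum, with factors $A^{(\gamma)}_{kc}A^{(\delta)}_{\ell d}$, is the correct one; the paper's chain of equalities has a typo, writing $A^{(\alpha)}$ in those two slots.
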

\begin{proof}
Let $[n] \times [3]$ stand for the indices of $\mathbf{Q}$ in each mode and for the row indices of $\mathbf{A}$.
By definition of Tucker product, for all $(\alpha, i), (\beta, j), (\gamma, k), (\delta, \ell) \in [n] \times [3]$ we have
\begin{align*}
  &\left(\mathcal{C} \times_1 \mathbf{A} \times_2 \mathbf{A} \times_3 \mathbf{A} \times_4 \mathbf{A}\right)_{(\alpha, i), (\beta, j), (\gamma, k), (\delta, \ell)}  \\[0.25em] 
  &= \sum_{a,b,c,d \in [4]} \mathcal{C}_{abcd} \mathbf{A}_{(\alpha, i), a} \mathbf{A}_{(\beta, j), b} \mathbf{A}_{(\gamma, k), c} \mathbf{A}_{(\delta, \ell), d} \\[0.25em]
  &= \sum_{a,b,c,d \in [4] \text{ distinct}} \operatorname{sgn}(abcd) A^{(\alpha)}_{ia} A^{(\beta)}_{jb} A^{(\alpha)}_{kc} A^{(\alpha)}_{\ell d} \\[0.25em]
  &= \det [A^{(\alpha)}(i, :); A^{(\beta)}(j, :); A^{(\gamma)}(k, :); A^{(\delta)}(\ell, :)] \\[0.25em]
  &= Q^{(\alpha \beta \gamma \delta)}_{i j k \ell} = \mathbf{Q}_{(\alpha, i), (\beta, j), (\gamma, k), (\delta, \ell)}. \qedhere
\end{align*} 
\end{proof}

The lemma explains why $\mathbf{F}$ captures algebraic relations between the tensors $\{Q^{(\alpha \beta \gamma \delta)} : \alpha, \beta, \gamma, \delta \in [n] \}$.  
Indeed, the block tensor $\mathbf{Q}$ has multilinear rank bounded by $(4,4,4,4)$ due to the Tucker decomposition in \eqref{f05:eq:tucker}.
Therefore, all $5 \times 5$ minors in $\mathbf{F}$ vanish.

Below we break up the proof of the third property into two directions.  
The other properties are clear.   
Throughout the proof, for $\lambda \in \mathbb{R}^{n \times n \times n \times n}$ we let $\lambda \odot_{b} \mathbf{Q} \in \mathbb{R}^{3n \times 3n \times 3n \times 3n}$ denote blockwise scalar multiplication, i.e., 
the $(\alpha, \beta, \gamma, \delta)$-block of $\lambda \odot_{b} \mathbf{Q}$ is $\lambda_{\alpha \beta \gamma \delta} Q^{(\alpha \beta \gamma \delta)} \in \mathbb{R}^{3 \times 3 \times 3 \times 3}$.
Roughly speaking, we need to show that a blockwise scaling of $\mathbf{Q}$ preserves multilinear rank if and only if the scaling is a rank-1 tensor off the diagonal.

\medskip

\paragraph*{``If" Direction}
This follows easily from Lemma~\ref{f05:lem:identity}.
Assume $\lambda \in \mathbb{R}^{n \times n \times n \times n}$ agrees off-diagonal with $u \otimes v \otimes w \otimes x$ for $u,v,w,x \in (\mathbb{R}^{*})^n$ and is $0$ on the diagonal.
Then
\begin{equation*}
\lambda \odot_b \mathbf{Q} = (u \otimes v \otimes w \otimes x) \odot_b \mathbf{Q},
\end{equation*}
because the diagonal blocks of $\mathbf{Q}$ vanish.  That is,
$
Q^{(\alpha \alpha \alpha \alpha)} = 0
$
since each entry of $Q^{(\alpha \alpha \alpha \alpha)}$ is the determinant of a matrix with a repeated row.
Note that blockwise scalar product with a rank-1 tensor with nonzero entries is equivalent to Tucker product with invertible matrices:
\begin{equation*}
(u \otimes v \otimes w \otimes w) \odot_b \mathbf{Q} = \mathbf{Q} \times_1 D_u \times_2 D_v \times_3 D_w \times_4 D_x.
\end{equation*}
Here $D_u \in \mathbb{R}^{3n \times 3n}$ is the diagonal matrix triplicating the entries of $u$ and likewise for $D_v, D_w, D_x$.
Thus $\lambda \odot_b \mathbf{Q}$ and $\mathbf{Q}$ have the same multilinear rank, and from the lemma $\mathbf{F}(\lambda_{\alpha \beta \gamma \delta} Q^{(\alpha \beta \gamma \delta)} : \alpha, \beta, \gamma, \delta \in [n]) = 0$.

\medskip

\paragraph*{``Only If" Direction}

The converse takes more work. 
Let $\lambda \in \mathbb{R}^{n \times n \times n \times n}$ have nonzero entries precisely off the diagonal and assume $\mathbf{F}(\lambda_{\alpha \beta \gamma \delta} Q^{(\alpha \beta \gamma \delta)} : \alpha, \beta, \gamma, \delta \in [n]) = 0$.
We further assume $\lambda_{\alpha 1 1 1 } = \lambda_{1 \beta 1 1} = \lambda_{1 1 \gamma 1} = \lambda_{1 1 1 \delta}=1$ for all $\alpha, \beta, \gamma, \delta \in \{2, \ldots, n\}$.
We reduce to this case by replacing $\lambda$ by its entrywise product with $\bar{u} \otimes \bar{v} \otimes \bar{w} \otimes \bar{x}$, where 
\begin{equation*}
\bar{u}_{\alpha} = \begin{cases} 
1 & \text{for } \alpha = 1\\
\lambda_{\alpha 1 1 1}^{-1} & \text{for } \alpha \in \{2, \dots, n\}, 
\end{cases}
\end{equation*}
and $\bar{v}, \bar{w}, \bar{x}$ are defined similarly using the second, third and fourth modes respectively. 
The replacement preserves the multilinear rank of $\lambda \odot_b \mathbf{Q}$ and whether or not $\lambda$ agrees off-diagonal with a rank-$1$ tensor.  Hence it is without loss of generality.

Through some explicit calculations, we will prove there exists $c \in \mathbb{R}^*$ such that
\begin{itemize}
    \item $\lambda_{\alpha \beta \gamma \delta} = c$ if exactly two of $\alpha, \beta, \gamma, \delta$ equal $1$
    \item $\lambda_{\alpha \beta \gamma \delta} = c^2$ if exactly one of $\alpha, \beta, \gamma, \delta$ equals $1$
    \item $\lambda_{\alpha \beta \gamma \delta} = c^3$ if none of $\alpha, \beta, \gamma, \delta$ equal $1$ and $\alpha, \beta, \gamma, \delta$ are not identical.
\end{itemize}
This will establish the ``only if" direction, as setting $u = v= w= (1, c, \ldots, c)$ and $x= (\tfrac{1}{c}, 1, \ldots, 1)$ gives $\lambda_{\alpha \beta \gamma \delta} = u_{\alpha} v_{\beta} w_{\gamma} x_{\delta}$ whenever $\alpha, \beta, \gamma, \delta$ are not identical.
Our proof strategy is to examine appropriate coordinates 
of $\mathbf{F}(\lambda_{\alpha \beta \gamma \delta} Q^{(\alpha \beta \gamma \delta)} : \alpha, \beta, \gamma, \delta \in [n]) = 0$ in order to constrain $\lambda$.
Equivalently, we will consider the vanishing of the determinants of certain well-chosen $5 \times 5$ submatrices of the flattenings of $\lambda \odot_b \mathbf{Q}$.  
Write $\mathbf{Q}_{(1)}$ and $(\lambda \odot_b \mathbf{Q})_{(1)}$ for mode-1 flattenings in $\mathbb{R}^{3n \times 27n^3}$.  
Rows correspond to the first tensor mode and are indexed by $(\alpha, i) \in [n] \times [3]$, while columns correspond to the other  modes and are indexed by $((\beta, j), (\gamma, k), (\delta, \ell)) \in ([n] \times [3])^{3}$.

\medskip

\underline{{Step 1}}:
The first submatrix of $(\lambda \odot_b \mathbf{Q})_{(1)}$ we consider has column indices
$((\alpha,1),(1,3),(1,2))$, 
$((1,2),(\beta,2),(1,1))$,
$((1,2),(\beta,3),(1,1))$,
$((1,3),(\beta,3),$ $(1,2))$,
$((1,1),(\beta,1),(1,3))$
and row indices 
$(1,1)$,
$(1,2)$,
$(1,3)$,
$(\alpha,1)$,
$(\alpha,2)$, 
where $\alpha, \beta \in \{2, \ldots, n\}$.
Explicitly, the submatrix \nolinebreak is  
\begin{equation*}
\begin{bmatrix}
{Q}^{(1\alpha11)}_{1132} & {Q}^{(11\beta1)}_{1221} & {Q}^{(11\beta1)}_{1231} & {Q}^{(11\beta1)}_{1332} & {Q}^{(11\beta1)}_{1113} \\[5pt] 
{Q}^{(1\alpha11)}_{2132} & {Q}^{(11\beta1)}_{2221} & {Q}^{(11\beta1)}_{2231} & {Q}^{(11\beta1)}_{2332} & {Q}^{(11\beta1)}_{2113} \\[5pt]
{Q}^{(1\alpha11)}_{3132} & {Q}^{(11\beta1)}_{3221} & {Q}^{(11\beta1)}_{3231} & {Q}^{(11\beta1)}_{3332} & {Q}^{(11\beta1)}_{3113} \\[4pt]
\lambda_{\alpha\alpha11}{Q}^{(\alpha\alpha11)}_{1132} & \lambda_{\alpha1\beta1}{Q}^{(\alpha1\beta1)}_{1221} & \lambda_{\alpha1\beta1}{Q}^{(\alpha1\beta1)}_{1231} & \lambda_{\alpha1\beta1}{Q}^{(\alpha1\beta1)}_{1332} & \lambda_{\alpha1\beta1}{Q}^{(\alpha1\beta1)}_{1113} \\[4pt]
\lambda_{\alpha\alpha11}{Q}^{(\alpha\alpha11)}_{2132} & \lambda_{\alpha1\beta1}{Q}^{(\alpha1\beta1)}_{2221} & \lambda_{\alpha1\beta1}{Q}^{(\alpha1\beta1)}_{2231} & \lambda_{\alpha1\beta1}{Q}^{(\alpha1\beta1)}_{2332} & \lambda_{\alpha1\beta1}{Q}^{(\alpha1\beta1)}_{2113}
\end{bmatrix},
\end{equation*}
which we abbreviate as 
\begin{equation} \label{f05:eq:my-det1}
\begin{bmatrix}
\ast & \ast & \ast & \ast & \ast \\
\ast & \ast & \ast & \ast & \ast \\
\ast & \ast & \ast & \ast & \ast \\
\lambda_{\alpha\alpha11}\ast & \lambda_{\alpha1\beta1}\ast & \lambda_{\alpha1\beta1}\ast & \lambda_{\alpha1\beta1}\ast & \lambda_{\alpha1\beta1}\ast \\
\lambda_{\alpha\alpha11}\ast & \lambda_{\alpha1\beta1}\ast & \lambda_{\alpha1\beta1}\ast & \lambda_{\alpha1\beta1}\ast & \lambda_{\alpha1\beta1}\ast \\
\end{bmatrix},
\end{equation}
with asterisk denoting the corresponding entry in $\mathbf{Q}_{(1)}$. 
We view the determinant of \eqref{f05:eq:my-det1} as a polynomial with respect to $\lambda$.
It has degree $\leq 2$ in the variables $\lambda_{\alpha \alpha 11}, \lambda_{\alpha 1 \beta 1}$. 
Observe that if $\lambda_{\alpha 1 \beta 1} = 0$, the bottom two rows of the matrix are linearly independent.  
Also if $\lambda_{\alpha 1 \beta 1} - \lambda_{\alpha \alpha 11} = 0$, then 
\eqref{f05:eq:my-det1} equals a $5 \times 5$ submatrix of $\mathbf{Q}_{(1)}$ with rows operations performed; therefore \eqref{f05:eq:my-det1} is rank-deficient.
It follows that the determinant of \eqref{f05:eq:my-det1} takes the form
\begin{equation*}
s \lambda_{\alpha 1 \beta 1}(\lambda_{\alpha 1 \beta 1} - \lambda_{\alpha \alpha 1 1}).
\end{equation*}
Here the scale $s = s(A^{(1)}, A^{(\alpha)}, A^{(\beta)})$ is a polynomial in the $A$-matrices. 
Due to polynomiality, $s$ is nonzero Zariski-generically if we can exhibit a \textit{single} instance of matrices $A^{(1)}, A^{(\alpha)}, A^{(\beta)}$ where the determinant of \eqref{f05:eq:my-det1} does not vanish identically for all $\lambda_{\alpha 1 \beta 1}, \lambda_{\alpha \alpha 1 1}$.
Furthermore, we just need an instance with $\alpha = \beta$, as this corresponds to a specialization of the case $\alpha \neq \beta$.  
Computational verification with a random numerical instance of $A^{(1)}, A^{(\alpha)}$ proves the non-vanishing (see attached code).  
Recalling the standing assumptions, we deduce 
$\lambda_{\alpha 1 \beta 1} = \lambda_{\alpha \alpha 1 1}$.

We apply the same argument to modewise permutations of $\lambda \odot_b \mathbf{Q}$ and $\mathbf{Q}$, and obtain
\begin{equation*}\label{f05:eq:nice-1}
\lambda_{\pi(\alpha 1 \beta 1)} = \lambda_{\pi(\alpha \alpha 1 1)} \quad \text{for all } \alpha, \beta \in \{2, \dots, n\} \text{ and permutations } \pi.
\end{equation*}
The argument goes through as $\pi \cdot \mathbf{Q}$ and $\pi \cdot (\lambda \odot_b \mathbf{Q})$ have multilinear ranks bounded by $(4,4,4,4)$ and $\pi \cdot \mathbf{Q} = \operatorname{sgn}(\pi) \mathbf{Q}$.  
So \eqref{f05:eq:my-det1} looks the same but with indices permuted and possibly a sign flip.

We now see that $\lambda$-entries with  two $1$-indices agree.
Indeed, taking $\alpha = \beta$ above gives $\lambda_{\pi_1(\alpha 1 \alpha 1)} = \lambda_{\pi_2(\alpha \alpha 1 1)}$ for all $\pi_1$ and $\pi_2$ that fix $(\alpha \alpha 1 1)$ and $(\alpha 1 \alpha 1)$ respectively.  
So, $\lambda_{\alpha \alpha 1 1} = \lambda_{\pi(\alpha \alpha 1 1)}$ for all $\pi$.  
Taking $\alpha \neq \beta$ gives $\lambda_{\alpha \alpha 1 1} = \lambda_{\pi(\alpha 1 \beta 1)} = \lambda_{\beta \beta 1 1}$ for all $\pi$.  
Together, there exists $c \in \mathbb{R}^*$ such that 
$c = \lambda_{\pi(\alpha \beta 1 1)}$ for all $\alpha, \beta \in \{2, \dots, n\}$ and permutations \nolinebreak $\pi$.

\medskip 

\underline{Step 2}:  
Next we consider the submatrix of $(\lambda \odot_b \mathbf{Q})_{(1)}$ with column indices  
$((\beta,1),(\gamma,3),(1,2))$, 
$((1,2),(\beta,2),(1,1))$,
$((1,2),(\beta,3),(1,1))$,
$((1,3),(\beta,3),$ $(1,2))$,
$((1,1),(\beta,1),(1,3))$
and row indices 
$(1,1)$,
$(1,2)$,
$(1,3)$,
$(\alpha,1)$,
$(\alpha,2)$, 
where $\alpha, \beta, \gamma \in \{2, \ldots, n\}$.
It looks like 
\begin{equation} \label{f05:eq:my-det2}
\begin{bmatrix}
c\ast & \ast & \ast & \ast & \ast \\
c\ast & \ast & \ast & \ast & \ast \\
c\ast & \ast & \ast & \ast & \ast \\
\lambda_{\alpha \beta \gamma 1}\ast & c\ast & c\ast & c\ast & c\ast \\
\lambda_{\alpha \beta \gamma 1}\ast & c\ast & c\ast & c\ast & c\ast \\
\end{bmatrix},
\end{equation}
where asterisks denote corresponding entries in $\mathbf{Q}_{(1)}$.
As a polynomial in $c$ and $\lambda_{\alpha \beta \gamma 1}$, the determinant of \eqref{f05:eq:my-det2} is a scalar multiple of $c(c^2 - \lambda_{\alpha \beta \gamma 1})$.
This is because the polynomial has degree $\leq 3$, if $c=0$ then the bottom two rows of \eqref{f05:eq:my-det2} are linearly dependent, and if $c^2 = \lambda_{\alpha \beta \gamma 1}$ then \eqref{f05:eq:my-det2} is a $5 \times 5$ submatrix of $\mathbf{Q}_{(1)}$ with row and column operations performed.  
The scale is a polynomial in $A^{(1)}, A^{(\alpha)}, A^{(\beta)}, A^{(\gamma)}$.
It is Zariski-generically nonzero if we exhibit one instance of $A$-matrices such that the determinant of \eqref{f05:eq:my-det1} does not vanish for all $c, \lambda_{\alpha \beta \gamma 1}$.
Further, it suffices to find an instance where $\alpha = \beta = \gamma$, as all other cases specialize to this.  Computational verification with a random numerical instance of $A^{(1)}, A^{(\alpha)}$ proves the non-vanishing.  
It follows that $c^2 = \lambda_{\alpha \beta \gamma 1}$. 
Appealing to symmetry like before, 
$
c^2 = \lambda_{\pi(\alpha \beta \gamma 1)}
$
for all $\alpha, \beta, \gamma \in \{2, \dots, n\}$ and permutations $\pi$.  
Summarizing, all $\lambda$-entries with a single $1$-index equal $c^2$.

\medskip

\underline{Step 3}:
Consider  the submatrix of $(\lambda \odot \mathbf{Q})_{(1)}$ with columns 
$((\beta,1),(\gamma,3),$ $(\delta, 2))$, 
$((1,2),(\alpha,2),$ $(1,1))$,
$((1,2),(\alpha,3),(1,1))$,
$((1,3),(\alpha,3),(1,2))$,
$((1,1),$ $(\alpha,1),(1,3))$
and rows 
$(1,1)$,
$(1,2)$,
$(1,3)$,
$(\alpha,1)$,
$(\alpha,2)$, 
where $\alpha, \beta, \gamma, \delta \in \{2, \ldots, n\}$ and $\alpha, \delta$ are distinct.
The submatrix looks like 
\begin{equation} \label{f05:eq:my-det3}
\begin{bmatrix}
c^2\ast & \ast & \ast & \ast & \ast \\
c^2\ast & \ast & \ast & \ast & \ast \\
c^2\ast & \ast & \ast & \ast & \ast \\
\lambda_{\alpha \beta \gamma \delta}\ast & c\ast & c\ast & c\ast & c\ast \\
\lambda_{\alpha \beta \gamma \delta}\ast & c\ast & c\ast & c\ast & c\ast \\
\end{bmatrix}.
\end{equation}
The determinant of \eqref{f05:eq:my-det3} is $c(c^3 - \lambda_{\alpha \beta \gamma \delta})$ multiplied by a polynomial in $A^{(1)}, A^{(\alpha)}, A^{(\beta)}, A^{(\gamma)}, A^{(\delta)}$.
The most specialized case is $\alpha=\beta=\gamma$.  Computer verification with a random numerical instance proves the polynomial is not identically zero.
We deduce that $c^3 = \lambda_{\alpha \beta \gamma \delta}$.  
By symmetry, $c^3 = \lambda_{\pi(\alpha \beta \gamma \delta)}$ for all $\alpha, \beta, \gamma, \delta \in \{2, \dots, n\}$ with $\alpha, \delta$  distinct and all permutations $\pi$.
In other words, $\lambda$-entries with no $1$-indices and non-identical indices equal $c^3$.  

\medskip

Steps 1, 2 and 3 show that  $\lambda$ takes the announced form.  So, $\lambda$ is rank-$1$ off the diagonal.  
This finishes the ``only if" direction.
Overall, we have proven that the $5 \times 5$ minors of the $3n \times 27n^3$ flattenings of $\mathbf{Q}$ give algebraic relations on $\{Q^{(\alpha \beta \gamma \delta)} : \alpha, \beta, \gamma, \delta \in [n] \}$ with the desired \nolinebreak properties.
\endgroup

\subsection{Question 10: Tammy Kolda}
\textit{Authors:} Johannes Brust and Tamara G. Kolda\par\medskip
\textit{Title:} Fast and Accurate CP-HIFI Solution\par\medskip
\begingroup
\providecommand{\vecop}{\operatorname{vec}}
\providecommand{\diag}{\operatorname{diag}}
\theoremstyle{plain}
\newtheorem{fileglemma}{Lemma}
\setcounter{equation}{0}

The system to be solved is
\begin{equation}\label{f07:eq:unaligned-infinite}
\left[
(Z \otimes K)^T S
S^T (Z \otimes K)
+ \lambda (I_r \otimes K)
\right] \vecop(W)
= (I_r \otimes K) \vecop(B).
\end{equation}

We consider several approaches for solving \cref{f07:eq:unaligned-infinite} in the remainder of this subsection.
We present a direct method for the symmetric linear system in \cref{f07:sec:direct-ui-sym}, using an additional regularization term.
In \cref{f07:sec:transformed-ui}, we present a transformation of the symmetric system based on the eigendecomposition of $K$.
In \cref{f07:sec:pcg-ui}, we present an iterative method based on the transformed symmetric system, adding some regularization akin to the symmetric direct method.
In \cref{f07:tab:unaligned-infinite-cost-comparison} and \cref{f07:sec:cost-compare-ui}, we provide an accounting of the costs and comparison of direct and iterative methods.

\subsubsection{Direct Solution of UI Subproblem (Symmetric Form)}
\label{f07:sec:direct-ui-sym}

\Cref{f07:eq:unaligned-infinite} is an indefinite symmetric linear system of size $rn \times rn$.
Since it is indefinite, we add a regularization term parameterized by $\rho > 0$ to ensure positive definiteness.
The modified system is
\begin{equation}\label{f07:eq:unaligned-infinite-1}
[ F^T F + \lambda (I_r \otimes K) + \rho I_{rn} ] \vecop(W) = \vecop(KB),
\end{equation}
where $F = S^T(Z \otimes K)$.
Observe that we have pulled $K$ inside the vectorization on the right-hand side.

To compute $F$, we want to avoid forming the $N \times nr$ Kronecker product $Z \otimes K$ explicitly.
Instead, we create two special matrices: $\hat{K} \in \mathbb{R}^{q \times n}$ and $\hat{Z} \in \mathbb{R}^{q \times r}$.
Each index $\ell \in [q]$ corresponds to a known entry index that we denote as $(i_1^{(\ell)}, i_2^{(\ell)}, \dots, i_d^{(\ell)}) \in \Omega$.
Then, for each $\ell \in [q]$, we let
\begin{align}
\label{f07:eq:zhat}
\hat{Z}(\ell,:) &= \left(
A_{d}(i_d^{(\ell)},:) \ast \cdots \ast A_{k+1}(i_{k+1}^{(\ell)},:) \ast
A_{k-1}(i_{k-1}^{(\ell)},:) \ast \cdots \ast A_{1}(i_1^{(\ell)},:)
\right)^T,
\text{ and}
\\
\label{f07:eq:khat}
\hat{K}(\ell,:) &= K(i_k^{(\ell)},:).
\end{align}
Here, $\ast$ represents elementwise multiplication.
In other words, $\hat{Z}$ and $\hat{K}$ represent the subset of rows of $Z$ and $K$, respectively, that corresponds to the known entries of $\mathcal{T}$.
Then, row $\ell$ of $F$ is given by
\begin{equation}
F(\ell,:) = \hat{Z}(\ell,:) \otimes \hat{K}(\ell,:).
\end{equation}

\subsubsection{Transforming the UI Subproblem}
\label{f07:sec:transformed-ui}

We can exploit a factorization of $K$ to transform \cref{f07:eq:unaligned-infinite} into an equivalent but potentially better conditioned system.
Assuming we have the eigendecomposition $K = U D U^T$, we can rewrite \cref{f07:eq:unaligned-infinite} by factoring out $(I_r \otimes U)$ to obtain
\begin{equation}
\left[
\underbrace{(Z \otimes UD)^T S}_{\bar{F}^T}
\underbrace{S^T (Z \otimes UD)}_{\bar{F}}
+ \lambda (I_r \otimes D)
\right] \vecop(\underbrace{U^TW}_{\bar{W}})
= \vecop(\underbrace{DU^TB}_{\bar{B}}).
\end{equation}
Now we have a transformed system in the variable $\bar{W} = U^TW$, and we can solve for $W$ via $W = U\bar{W}$ after solving the system.
Note that we cannot pull $D$ into the definition of $\bar{W}$ because it is indefinite.
We define $\bar{F} := S^T(Z \otimes UD) \in \mathbb{R}^{q \times rn}$, which is analogous to $F$ with $K$ replaced by $UD$.
We define $\bar{B} := DU^TB \in \mathbb{R}^{n \times r}$.
Adding a regularization term as before, we obtain the modified system
\begin{equation}
\label{f07:eq:ui-transformed}
\left[
\bar{F}^T \bar{F} + \lambda (I_r \otimes D) + \rho I_{rn}
\right] \vecop(\bar{W}) = \vecop(\bar{B}).
\end{equation}

\subsubsection{Key Lemmas for PCG Solution of UI Subproblem}
\label{f07:app:unaligned-lemmas}

Before we continue to the details of solving \cref{f07:eq:ui-transformed} via PCG, we present some key lemmas about working with matrices where each row is a Kronecker product of rows of two other matrices.
These lemmas are important for efficiently computing the matrix-vector products and a preconditioner needed for PCG. We state these generically here so they can be reused in other contexts.

Let $A \in \mathbb{R}^{q \times r}$ and $B \in \mathbb{R}^{q \times n}$. Define the $q \times rn$ matrix $C$ row-wise as
\begin{equation}\label{f07:eq:FAB}
C(\ell,:) = A(\ell,:) \otimes B(\ell,:), \quad \text{for} \quad \ell=1,\ldots,q.
\end{equation}
Recall that for the Kronecker product of an $n$-vector and an $r$-vector, or for the vectorization of an $n \times r$ matrix, there is a correspondence between $k \in [rn]$ and the pair $(i,j)$ with $i \in [n]$ and $j \in [r]$ such that $k = i + (j-1)n$.
For the Kronecker product, this means $C_{\ell k} = B_{\ell i} A_{\ell j}$.
For a vectorized matrix, we have $(\vecop(X))_k = X_{ij}$.

\Cref{f07:lem:F} shows how to compute the matrix-vector product $Cx$ efficiently.
This would normally cost $\mathcal{O}(qrn)$ if we formed $C$ explicitly.
However, using the structure of $C$, we can compute it using only $\mathcal{O}(q(r+n))$ operations. Moreover, we avoid forming $C$ explicitly, which reduces the memory from $\mathcal{O}(qrn)$ to $\mathcal{O}(q(r+n))$.

\begin{fileglemma}\label{f07:lem:F}
Given the setup in \cref{f07:eq:FAB}, let $X \in \mathbb{R}^{n \times r}$ be a matrix and define $x = \vecop(X)$.
Then we have
\[
Cx = (A \ast BX)1_r.
\]
Here $1_r$ denotes the $r$-vector of all ones.
\end{fileglemma}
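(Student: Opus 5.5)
The identity is checked entrywise, using the index correspondence $k = i + (j-1)n$ fixed just before the statement. For each row $\ell \in [q]$ we expand the $\ell$-th entry of $Cx$ as a double sum over $i \in [n]$ and $j \in [r]$. We then regroup it so that the inner sum over $i$ becomes an entry of the matrix product $BX$, and the outer sum over $j$ becomes a row sum of a Hadamard product.

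Concretely, the first step is to write
\[
(Cx)_\ell = \sum_{k=1}^{rn} C_{\ell k} x_k = \sum_{j=1}^{r}\sum_{i=1}^{n} B_{\ell i} A_{\ell j} X_{ij}.
\]
This uses $C_{\ell k} = B_{\ell i}A_{\ell j}$, which holds because row $\ell$ of $C$ is $A(\ell,:)\otimes B(\ell,:)$. It also uses $x_k = X_{ij}$, which holds because $x = \vecop(X)$. The map $(i,j)\mapsto k$ is a bijection $[n]\times[r]\to[rn]$, so the reindexing of the sum is legitimate.

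Next, since $A_{\ell j}$ does not depend on $i$, we pull it out of the inner sum and recognize
\[
\sum_{i=1}^{n} B_{\ell i}X_{ij} = (BX)_{\ell j}.
\]
This gives
\[
(Cx)_\ell = \sum_{j=1}^{r} A_{\ell j}(BX)_{\ell j} = \sum_{j=1}^{r} (A \ast BX)_{\ell j} = \bigl((A\ast BX)1_r\bigr)_\ell.
\]
The shapes are consistent: $BX$ is $q\times r$, the same shape as $A$, so the Hadamard product is defined.

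There is no real obstacle here. The only point requiring care is that the Kronecker ordering convention matches the column-stacking convention of $\vecop$. With $A(\ell,:)\otimes B(\ell,:)$, the index of the $B$ factor varies fastest, and in column-stacking the row index $i$ of $X$ also varies fastest. Both are therefore encoded by $k=i+(j-1)n$.

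For the complexity remark, forming $BX$ costs $\mathcal{O}(qnr)$ in general. When $B=\hat K$ is a row-selection of $K$, one instead computes $KX$ at cost $\mathcal{O}(n^2r)$ and then gathers rows in $\mathcal{O}(qr)$. The Hadamard product and the row sums then cost $\mathcal{O}(qr)$.
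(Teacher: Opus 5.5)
Your proof is correct and follows essentially the same route as the paper: expand $(Cx)_\ell$ entrywise via the correspondence $k=i+(j-1)n$, recognize the inner sum over $i$ as $(BX)_{\ell j}$, and sum over $j$ to obtain the row sums of $A\ast BX$. Your side remark that forming $BX$ generically costs $\mathcal{O}(qnr)$ is also sound; it agrees with the per-matvec cost used later in the paper's cost comparison, and is more careful than the $\mathcal{O}(q(r+n))$ figure stated just before the lemma.
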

\begin{proof}
For all $\ell = 1,\ldots,q$ we have
\[
(Cx)_{\ell} = \sum_{k=1}^{rn} C_{\ell k}x_k = \sum_{j=1}^r \sum_{i=1}^n B_{\ell i}X_{ij}A_{\ell j} = \sum_{j=1}^r (BX)_{\ell j}A_{\ell j}.
\]
\end{proof}

\Cref{f07:lem:Ft} shows how to compute the matrix-vector product $C^Tv$ without forming $C$ explicitly. The cost is unchanged at $\mathcal{O}(qrn)$, but the memory is reduced from $\mathcal{O}(qrn)$ to $\mathcal{O}(q(r+n))$.

\begin{fileglemma}\label{f07:lem:Ft}
Given the setup in \cref{f07:eq:FAB}, let $v \in \mathbb{R}^{q}$.
Then we have
\[
C^Tv = \vecop(B^T\diag(v)A).
\]
\end{fileglemma}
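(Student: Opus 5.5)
The identity is a direct entrywise computation, in the same spirit as the proof of Lemma~\ref{f07:lem:F}. I would fix an index $k \in [rn]$ and write it as $k = i + (j-1)n$ with $i \in [n]$ and $j \in [r]$. Then I would compare the $k$-th entry of each side.

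For the left-hand side, the $k$-th entry of $C^Tv$ is
\[
(C^Tv)_k = \sum_{\ell=1}^{q} C_{\ell k} v_\ell = \sum_{\ell=1}^{q} B_{\ell i} A_{\ell j} v_\ell ,
\]
using the convention recalled before Lemma~\ref{f07:lem:F} that $C_{\ell k} = B_{\ell i} A_{\ell j}$.

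For the right-hand side, I would expand the matrix product $B^T \diag(v) A$, which is $n \times r$. Its $(i,j)$ entry is
\[
(B^T\diag(v)A)_{ij} = \sum_{\ell=1}^{q} (B^T)_{i\ell}\, v_\ell\, A_{\ell j} = \sum_{\ell=1}^{q} B_{\ell i} v_\ell A_{\ell j}.
\]
By the vectorization convention $(\vecop(X))_k = X_{ij}$, this is exactly the $k$-th entry of $\vecop(B^T\diag(v)A)$.

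The two expressions agree for every $k$, which proves the identity.

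There is no real obstacle here. The only point requiring care is the index bookkeeping: $k = i + (j-1)n$ must put $i$ (from $B$, dimension $n$) as the fast index and $j$ (from $A$, dimension $r$) as the slow one. This matches column-major vectorization of the $n \times r$ matrix and the ordering of $A(\ell,:) \otimes B(\ell,:)$.

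I would also note the cost claimed in the paper. Forming $\diag(v)A$ takes $O(qr)$ operations, and multiplying by $B^T$ takes $O(qrn)$. The memory is $O(q(r+n))$, since $C$ is never formed.
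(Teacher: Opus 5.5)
Your proposal is correct and is essentially identical to the paper's proof: both fix $k = i + (j-1)n$ and verify entrywise that $(C^Tv)_k = \sum_{\ell=1}^{q} B_{\ell i}A_{\ell j}v_\ell = (B^T\diag(v)A)_{ij}$, using the same Kronecker-product and column-major vectorization conventions. Your cost remark ($\mathcal{O}(qrn)$ operations, $\mathcal{O}(q(r+n))$ memory) also matches the paper's discussion.
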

\begin{proof}
Define $k = i + (j-1)n$ for $i=1,\ldots,n$ and $j=1,\ldots,r$. Then we have
\[
(C^Tv)_k = \sum_{\ell=1}^q C_{\ell k}v_{\ell} = \sum_{\ell=1}^q B_{\ell i}A_{\ell j}v_{\ell} = (B^T\diag(v)A)_{ij} = \left(\vecop(B^T\diag(v)A)\right)_k.
\]
\end{proof}

\Cref{f07:lem:FtF} shows how to compute the diagonal of $C^TC$ efficiently.
We reduce the computation from $\mathcal{O}(qr^2n^2)$ to $\mathcal{O}(q(r^2+n^2))$ operations.
Again, we avoid forming $C$ explicitly, which reduces the memory from $\mathcal{O}(qrn)$ to $\mathcal{O}(q(r+n))$.

\begin{fileglemma}\label{f07:lem:FtF}
Given the setup in \cref{f07:eq:FAB}. Then
\[
\diag(C^TC) = \vecop\left((B \ast B)^T(A \ast A)\right).
\]
\end{fileglemma}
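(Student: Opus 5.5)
The plan is to compute the diagonal entries of $C^TC$ one at a time, directly from the row structure \eqref{f07:eq:FAB}. As in the proofs of \cref{f07:lem:F} and \cref{f07:lem:Ft}, I will use the index correspondence $k = i + (j-1)n$ with $i \in [n]$ and $j \in [r]$. Under this correspondence the entries of $C$ are $C_{\ell k} = B_{\ell i} A_{\ell j}$.

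The first step is to write the $k$th diagonal entry as a column norm:
\[
(C^TC)_{kk} = \sum_{\ell=1}^q C_{\ell k}^2 = \sum_{\ell=1}^q B_{\ell i}^2 A_{\ell j}^2 .
\]
The next step is to recognize this sum as a matrix product entry. Since $(B \ast B)_{\ell i} = B_{\ell i}^2$ and $(A \ast A)_{\ell j} = A_{\ell j}^2$, the sum equals
\[
\sum_{\ell=1}^q (B\ast B)_{\ell i}(A \ast A)_{\ell j} = \bigl((B\ast B)^T (A\ast A)\bigr)_{ij}.
\]
Finally, by the vectorization convention $(\vecop(X))_k = X_{ij}$, this is exactly the $k$th entry of $\vecop\bigl((B\ast B)^T(A\ast A)\bigr)$. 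Since this holds for every $k$, the two vectors agree entrywise.

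After the identity, I would briefly record the cost claim stated before the lemma. Forming $A\ast A$ and $B\ast B$ costs $\mathcal{O}(q(r+n))$. The product $(B\ast B)^T(A\ast A)$ is then a $q$-term sum for each of the $rn$ entries, and this is the dominant term.

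The only real obstacle is bookkeeping: keeping the Kronecker index convention consistent with column-major $\vecop$. In particular, the factor $B$ has to supply the fast index $i$ and $A$ the slow index $j$. Once that matches the convention already used in \cref{f07:lem:F}, the rest is a one-line computation.
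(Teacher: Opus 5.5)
Your proposal is correct and is essentially the paper's proof: the paper likewise writes $(C^TC)_{kk}=\sum_{\ell}C_{\ell k}^2=\sum_{\ell}B_{\ell i}^2A_{\ell j}^2$ under $k=i+(j-1)n$ and recognizes this as the $(i,j)$ entry of $(B\ast B)^T(A\ast A)$. Your $\mathcal{O}(qrn)$ cost remark for the product is consistent with the paper's stated $\mathcal{O}(q(r^2+n^2))$, and in fact slightly sharper, since $rn\le (r^2+n^2)/2$.
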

\begin{proof}
Define $k = i + (j-1)n$ for $i=1,\ldots,n$ and $j=1,\ldots,r$. Then we have
\begin{align*}
(C^TC)_{kk} &= \sum_{\ell=1}^q C_{\ell k}^2 = \sum_{\ell=1}^q B_{\ell i}^2A_{\ell j}^2 \\
&= \left[(B \ast B)^T(A \ast A)\right]_{ij} = \left[\vecop\left((B \ast B)^T(A \ast A)\right)\right]_k.
\end{align*}
\end{proof}

We apply these results in the next section.

\subsubsection{PCG Solution of Transformed UI Subproblem}
\label{f07:sec:pcg-ui}

We can form $\bar{F}$ similarly to how we formed $F$.
We define $H = UD \in \mathbb{R}^{n \times n}$ and $\hat{H} \in \mathbb{R}^{q \times n}$ such that $\hat{H}(\ell,:) = H(i_k^{(\ell)},:)$ for each $\ell \in [q]$.
Then, for each $\ell \in [q]$, we let
\begin{equation}
\bar{F}(\ell,:) = \hat{Z}(\ell,:) \otimes \hat{H}(\ell,:).
\end{equation}
Let $x \in \mathbb{R}^{rn}$ be an arbitrary vector, and let $X \in \mathbb{R}^{n \times r}$ be its matrix representation so that $\vecop(X) = x$.
From \cref{f07:lem:F,f07:lem:Ft} in \cref{f07:app:unaligned-lemmas}, we can compute $\bar{F}^T\bar{F}x$ as
\[
\vecop\left(\hat{H}^T\diag\left((\hat{Z} \ast \hat{H}X)1_r\right)\hat{Z}\right).
\]
Then, we can compute the matrix-vector products for the conjugate gradient iterations without forming any Kronecker products using
\begin{equation}\label{f07:eq:pcg-matvec-ui}
\left(\bar{F}^T\bar{F} + \lambda (I_r \otimes D) + \rho I_{rn}\right)x =
\vecop\left(
\hat{H}^T\diag\left((\hat{Z} \ast \hat{H}X)1_r\right)\hat{Z} + \lambda DX + \rho X
\right).
\end{equation}

We propose a diagonal preconditioner of the form
\[
\bar{D} = \diag(\diag(\bar{F}^T\bar{F})) + \lambda (I_r \otimes D) + \rho I_{rn}.
\]
Observe that $\bar{d} := \diag(\bar{D})$ is easy to compute since
\begin{equation}\label{f07:eq:preconditioner-ui}
\begin{aligned}
\bar{d}
&= \diag\bigl(\diag(\bar{F}^T\bar{F})) + \lambda (I_r \otimes D) + \rho I_{rn}\bigr) \\
&= \diag(\bar{F}^T\bar{F}) + \lambda (1_r \otimes \diag(D)) + \rho 1_{rn} \\
&= \vecop\bigl((\hat{H} \ast \hat{H})^T(\hat{Z} \ast \hat{Z})\bigr) + \lambda (1_r \otimes \diag(D)) + \rho 1_{rn}.
\end{aligned}
\end{equation}
The last step comes from \cref{f07:lem:FtF} in \cref{f07:app:unaligned-lemmas}.

\subsubsection{Comparison of Costs}
\label{f07:sec:cost-compare-ui}

A comparison of the direct solution of the original symmetric problem \cref{f07:eq:unaligned-infinite-1} and PCG iterative solutions of the transformed problem \cref{f07:eq:ui-transformed} are shown in \cref{f07:tab:unaligned-infinite-cost-comparison}.
For PCG, we let $p$ denote the number of iterations needed for convergence.
Recall that $d$ is the order of the tensor, $n$ is the size of mode $k$, $r$ is the target rank, and $q$ is the number of known entries.
In general, we do not make assumptions about the relative sizes of $n$ and $r$. We do assume, however, that $d < n,r \ll q$.
Because we are working with an incomplete tensor, the MTTKRP is relatively cheap and never dominates the cost.

\begin{table}[ht]
\centering
\caption{Comparison of costs to solve the mode-$k$ unaligned infinite-dimensional subproblem \cref{f07:eq:unaligned-infinite} of size $nr \times nr$ where $n$ is the size of mode $k$ and $r$ is the target tensor decomposition rank.
The variable $q$ is the number of known entries in the observed tensor $\mathcal{T}$.
For the PCG iterative method, $p$ is the number of iterations.}
\label{f07:tab:unaligned-infinite-cost-comparison}
\renewcommand{\arraystretch}{1.5}
\pgfplotstabletypeset[
col sep=&,row sep=\\,
string type,
columns/Description/.style={column type=c},
columns/DS/.style={column type=c, column name={Direct Symmetric}},
columns/PCG/.style={column type=c, column name={PCG Iterative}},
every head row/.style={before row={\toprule}, after row=\midrule},
every last row/.style={after row=\bottomrule},
every even row/.style={before row={\rowcolor[gray]{0.9}}},
font=\footnotesize,
assign column name/.style={/pgfplots/table/column name={\sffamily\textbf{#1}}},
]{
Description & DS & PCG \\
Factorize $K = U D U^T$ \emph{one-time cost!} & --- & $\mathcal{O}(n^3)$ \\
Compute $\hat{Z}$ and MTTKRP $B := \mathcal{T} Z$ & $\mathcal{O}(qrd)$ & $\mathcal{O}(qrd)$ \\
Form $F$ (and $G$) or $H$ & $\mathcal{O}(qrn)$ & $\mathcal{O}(n^2)$ \\
Form matrix for linear solve & $\mathcal{O}(qr^2n^2)$ & --- \\
Form right-hand side & $\mathcal{O}(n^2r)$ & $\mathcal{O}(n^2r)$ \\
Form preconditioner ($\bar{d}$) & --- & $\mathcal{O}(qn^2 + qr^2)$ \\
Solve system & $\mathcal{O}(r^3n^3)$ & $\mathcal{O}(pnqr)$ \\
Recover $W$ & --- & $\mathcal{O}(n^2r)$ \\
Total cost & $\mathcal{O}(qn^2r^2 + n^3r^3)$ & $\mathcal{O}(qn^2 + qr^2 + qnrp)$ \\
Storage & $\mathcal{O}(qnr + r^2n^2)$ & $\mathcal{O}(qn + qr)$ \\
}
\end{table}

\paragraph{Factorizing the kernel matrix $K$ for the transformed system}
The eigendecomposition of $K$ costs $\mathcal{O}(n^3)$ flops.
We stress once again that this is only done \emph{one time} before the outermost alternating optimization iterations begin.
In the methods we compare here, this is needed only for the PCG iterative method.

\paragraph{Shared costs of all methods}
The $q \times r$ matrix $\hat{Z}$ defined in \cref{f07:eq:zhat} is used by both methods.
Likewise, the $n \times r$ MTTKRP $B = \mathcal{T}Z$ is used by all methods.
The cost to compute $\hat{Z}$ is $\mathcal{O}(qrd)$.
Computing $B$ is an MTTKRP with an incomplete tensor (Ballard and Kolda, \emph{Tensor Decompositions for Data Science}, Cambridge University Press, 2025, with PDF available freely online). This would normally cost $\mathcal{O}(qrd)$ operations, but we can use $\hat{Z}$ to reduce the cost to $\mathcal{O}(qr)$ operations.

\paragraph{Direct solve of symmetric regularized system}
We first analyze the cost to form and solve the system as discussed in \cref{f07:sec:direct-ui-sym}.
We have to explicitly form $F$ to form the system in \cref{f07:eq:unaligned-infinite-1}.
The cost to compute the $q \times rn$ matrix $F$ is $\mathcal{O}(qrn)$ and requires $\mathcal{O}(qrn)$ storage.
Forming the $rn \times rn$ matrix $F^TF + \lambda (I_r \otimes K) + \rho I_{rn}$ is dominated by the cost to compute $F^TF$, which costs $\mathcal{O}(qr^2n^2)$ operations.
We also have to compute the right-hand side $\vecop(KB)$, which costs $\mathcal{O}(n^2r)$ operations.
Finally, using a direct method such as Cholesky to solve the system costs $\mathcal{O}((rn)^3)$ operations.
The storage is either dominated by storing $F$ or the system matrix, which is $\mathcal{O}(rnq + r^2n^2)$.

\paragraph{PCG iterative solve of transformed system}
We now analyze the cost of using PCG to solve the transformed system \cref{f07:eq:ui-transformed} as discussed in \cref{f07:sec:pcg-ui}.
The right-hand side $\vecop(\bar{B}) = \vecop(DU^TB)$ can be computed at a cost of $\mathcal{O}(n^2r)$ operations.
We first have to compute the $n \times n$ matrix $H := UD$, which costs $\mathcal{O}(n^2)$ operations.
Forming the diagonal preconditioner, the $rn$-vector $\bar{d}$ in \cref{f07:eq:preconditioner-ui}, costs $\mathcal{O}(qn^2 + qr^2)$ operations.
We never form $\bar{F}$ explicitly, which saves both computation and storage.
Each matrix-vector product is computed as in \cref{f07:eq:pcg-matvec-ui} at a cost of $\mathcal{O}(qnr)$ operations.
Each preconditioner application costs $\mathcal{O}(rn)$ operations.
Assuming that PCG converges in $p$ iterations, the total cost for the PCG iterations is $\mathcal{O}(pqnr)$ operations.
Finally, after solving for $\bar{W}$, we have to recover $W = U\bar{W}$, which costs $\mathcal{O}(n^2r)$ operations.
The storage needed for PCG is dominated by storing $\hat{Z}$ and $\hat{H}$, which is $\mathcal{O}(qn + qr)$.

\paragraph{Summary and Comparison}
The direct method is cubic in the size of the unknown matrix $W$.
In contrast, the PCG iterative method has a cost that is orders of magnitude lower, depending on the number of iterations $p$ needed for convergence and the relative sizes of $n$, $r$, and $p$. In general, we expect the problem to be well conditioned so that $p$ is not too large.
The PCG method also has significantly lower storage requirements. Assuming $r < n < rn < q$, we have $qrn$ storage for the direct methods versus $qn$ storage for PCG.
\endgroup

\end{document}